\documentclass[11pt]{article} 

\usepackage[letterpaper,margin=1in]{geometry}

\usepackage[T1]{fontenc}
\usepackage{microtype}
\usepackage{wrapfig}

\usepackage{bm}
\usepackage{times}
\usepackage[linesnumbered,ruled]{algorithm2e}
\usepackage{algorithmic}
\usepackage{multirow}
\usepackage{diagbox,xcolor}
\usepackage{titlesec}
\titleformat*{\paragraph}{\bfseries}

\usepackage{amsthm,amsmath,amssymb,amsfonts,amssymb,mathtools}
\usepackage{amsmath,amssymb,amsfonts,amssymb,mathtools}
\usepackage{xcolor}
\usepackage{empheq}
\usepackage{dsfont}
\usepackage{mathrsfs}
\usepackage{graphicx}
\usepackage{enumitem}
\usepackage{verbatim}
\usepackage{aliascnt} 
\usepackage{xspace}
\usepackage{booktabs}
\usepackage{bbm}

\usepackage{caption}
\usepackage{tikz}
\usetikzlibrary{patterns}
\usetikzlibrary{arrows,shapes,automata,backgrounds,petri,positioning}
\usetikzlibrary{shadows}
\usetikzlibrary{calc}
\usetikzlibrary{spy}
\usetikzlibrary{angles, quotes}
\usetikzlibrary{matrix}
\usepackage{pgf,pgfplots}
\usepackage{pgfmath,pgffor}
\pgfplotsset{compat=1.17}

\usepackage{framed}

\definecolor[named]{ACMBlue}{cmyk}{1,0.1,0,0.1}
\definecolor[named]{ACMYellow}{cmyk}{0,0.16,1,0}
\definecolor[named]{ACMOrange}{cmyk}{0,0.42,1,0.01}
\definecolor[named]{ACMRed}{cmyk}{0,0.90,0.86,0}
\definecolor[named]{ACMLightBlue}{cmyk}{0.49,0.01,0,0}
\definecolor[named]{ACMGreen}{cmyk}{0.20,0,1,0.19}
\definecolor[named]{ACMPurple}{cmyk}{0.55,1,0,0.15}
\definecolor[named]{ACMDarkBlue}{cmyk}{1,0.58,0,0.21}
\newcommand{\psign}{p_{\mathrm{sign}}}
\usepackage[colorlinks,citecolor=blue,linkcolor=magenta,bookmarks=true]{hyperref}
\usepackage[nameinlink]{cleveref}
\crefname{ineq}{Inequality}{Inequality}
\creflabelformat{ineq}{#2{\upshape(#1)}#3}
\crefname{sub}{Subsection}{Subsection}
\creflabelformat{Subsection}{#2{\upshape(#1)}#3}
\crefname{sdp}{SDP}{SDP}
\creflabelformat{sdp}{#2{\upshape(#1)}#3}
\crefname{lp}{LP}{LP}
\creflabelformat{lp}{#2{\upshape(#1)}#3}
\usepackage{aliascnt}
\usepackage{cleveref}
\crefname{ineq}{Inequality}{Inequality}
\creflabelformat{ineq}{#2{\upshape(#1)}#3}
\crefname{sub}{Subsection}{Subsection}
\creflabelformat{Subsection}{#2{\upshape(#1)}#3}
\crefname{sdp}{SDP}{SDP}
\creflabelformat{sdp}{#2{\upshape(#1)}#3}
\crefname{lp}{LP}{LP}
\creflabelformat{lp}{#2{\upshape(#1)}#3}



\newcommand{\samp}{\mathsf{Samp}}
\newcommand{\invsamp}{\mathsf{InvSamp}}
\newtheorem{theorem}{Theorem}[section]

\newtheorem{lemma}[theorem]{Lemma}
\newtheorem{informal theorem}[theorem]{Theorem (informal statement)}

\newtheorem{proposition}[theorem]{Proposition}
\newtheorem{corollary}[theorem]{Corollary}
\newtheorem{claim}[theorem]{Claim}
\newtheorem{fact}[theorem]{Fact}

\newtheorem{remark}[theorem]{Remark}

\newtheorem{assumptions}[theorem]{Assumption}
\newtheorem{definition}[theorem]{Definition}

\newcommand\twonorm[1]{\|#1\|_2}
\newcommand\norm[1]{\left\| #1 \right\|}

\renewcommand\vec[1]{\mathbf{#1}}

\DeclareMathOperator*{\E}{\mathbb{E}}

\def\multichoose#1#2{\ensuremath{\left(\kern-.3em\left(\genfrac{}{}{0pt}{}{#1}{#2}\right)\kern-.3em\right)}}

\newcommand{\poly}{\mathrm{poly}}

\newcommand{\cube}[1]{\{\pm 1\}^{#1}}

\newcommand{\sign}{\mathrm{sign}}

\newcommand{\opt}{\mathrm{opt}}

\newcommand{\Ind}{\mathds{1}}
\newcommand{\1}{\Ind}
\newcommand{\matr[1]}{\boldsymbol{#1}}

\newcommand{\vw}{\vec w}

\newcommand{\citet}{\cite}
\newcommand{\citep}{\cite}

\usepackage{titlesec}

\newtheorem{quest}{Question}
\newtheorem{observ}{Observation}

\crefname{equation}{}{}
\crefrangeformat{equation}{(#3#1#4) --~(#5#2#6)}
\crefmultiformat{equation}{(#2#1#3)}{, (#2#1#3)}{, (#2#1#3)}{, (#2#1#3)}
\crefname{lem}{Lemma}{Lemmas}
\crefname{section}{Section}{Sections}
\crefname{subsubsubsection}{Section}{Sections}
\crefname{rem}{Remark}{Remarks}
\crefname{cor}{Corollary}{Corollaries}
\crefname{figure}{Figure}{Figures}
\crefname{table}{Table}{Tables}
\Crefname{lem}{Lemma}{Lemmas}
\Crefname{line}{Line}{Lines}
\Crefname{fact}{Fact}{Facts}
\crefname{thm}{Theorem}{Theorems}
\crefname{def}{Definition}{Definitions}
\crefname{assumption}{Assumption}{Assumptions}

\titlespacing*{\paragraph}
  {0pt}   
  {1ex plus 0.6ex minus 0.2ex}   
  {1em}   

\usepackage{enumitem}

\setlist[enumerate]{itemsep=0.2em, topsep=0.2em, parsep=0pt, partopsep=0pt}
\setlist[itemize]{itemsep=0.2em, topsep=0.2em, parsep=0pt, partopsep=0pt}

\title{Learning $\mathsf{AC}^0$ Under Graphical Models}
\author{
Gautam Chandrasekaran\footnote{University of Texas at Austin. Email: \texttt{gautamc@cs.utexas.edu}} 
 \and
Jason Gaitonde\footnote{Duke University. Email: \texttt{jason.gaitonde@duke.edu}} 
\and
Ankur Moitra\footnote{Massachusetts Institute of Technology. Email: \texttt{moitra@mit.edu}}
\and
Arsen Vasilyan\footnote{University of Texas at Austin. Email: \texttt{arsenvasilyan@gmail.com}} 
}
\begin{document}
\date{}
\maketitle
\thispagestyle{empty}

\begin{abstract}
In a landmark result, Linial, Mansour and Nisan (J. ACM 1993) gave a quasipolynomial-time algorithm for learning constant-depth circuits given labeled i.i.d. samples under the uniform distribution. Their work has had a deep and lasting legacy in computational learning theory, in particular introducing the \emph{low-degree algorithm}. However, an important critique of many results and techniques in the area is the reliance on product structure, which is unlikely to hold in realistic settings. Obtaining similar learning guarantees for more natural correlated distributions has been a longstanding challenge in the field.

 In particular, we give quasipolynomial-time algorithms for learning $\mathsf{AC}^0$ substantially beyond the product setting, when the inputs come from any graphical model with polynomial growth that exhibits strong spatial mixing. The main technical challenge is in giving a workaround to Fourier analysis, which we do by showing how new sampling algorithms allow us to transfer statements about low-degree polynomial approximation under the uniform setting to graphical models. Our approach is general enough to extend to other well-studied function classes, like monotone functions and halfspaces. 
\end{abstract}

\newpage

\setcounter{page}{1}
\section{Introduction}
In a landmark result, Linial, Mansour and Nisan (LMN)~\cite{lmn} gave a quasipolynomial-time algorithm for learning constant-depth circuits given labeled i.i.d. samples under the uniform distribution. Their work has had a deep and lasting legacy in computational learning theory. First, only a handful of concept classes {\em from the book} are known to be efficiently PAC learnable. This is still true even if we allow ourselves to make distributional assumptions on the inputs or permit quasipolynomial running time and/or sample complexity. Their work added $\mathsf{AC}^0$, a rich and expressive family that plays a central role in complexity theory, to that list. 

But just as importantly, \cite{lmn} introduced the \emph{low-degree algorithm}, which over time has become the Swiss army knife of computational learning theory. This method — based on low-degree polynomial regression — bridges computational learning theory with polynomial approximation theory.
 The low-degree algorithm, and its extensions \cite{kalai2008agnostically}, are the driving force behind a wide variety of learning algorithms including agnostically learning halfspaces \cite{kalai2008agnostically, blais_polynomial_2010,  daniely_ptas_2015, kou_smoothed_2025}, intersections of halfspaces \cite{klivans2004learning, klivans2008learning,kane2013learning, kane_average_2014, chandrasekaran_smoothed_2024} and other hypothesis classes \cite{kane_gaussian_2011, DBLP:journals/jacm/BshoutyT96, klivans2008learning, feldman_tight_2017, blais_learning_2015}. Furthermore, the low-degree algorithm has been used as a crucial ingredient in learning decision trees \cite{odonnell_learning_2007}, estimation from truncated data \cite{kontonis_efficient_2019, lee_efficient_2024} and proper learning \cite{diakonikolas_agnostic_2021, lange_properly_2022, lange_agnostic_2025}, among many others.

A pointed but important critique of many results and techniques in the area is the reliance on product structure, which is unlikely to hold in realistic settings. This limitation has been well-discussed in the three decades since LMN~\cite{lmn,furst1991improved,DBLP:journals/talg/Wimmer16,DBLP:conf/colt/KanadeM15}; obtaining similar strong learning guarantees for more natural correlated distributions, through the low-degree algorithm or otherwise, has thus been a longstanding challenge.
Note that some sort of distributional assumption seems necessary as efficient learning under \emph{arbitrary data distributions} is likely intractable (see \Cref{sec: related work}).
While there has been some progress for learning under other stringent assumptions, like permutation-invariance~\cite{DBLP:journals/talg/Wimmer16} or in continuous settings under log-concavity~\cite{kane2013learning,lange2025robust}, we broadly lack techniques that extend to other natural discrete distributions. To make any progress, there are two related questions we must answer:
\begin{quest}
\label{question:lmn1}
What are reasonable and well-motivated distributional assumptions that might enable 
efficient learning?
\end{quest}

\begin{quest}
\label{question:lmn2}
How can we prove guarantees on the accuracy of the low-degree algorithm without Fourier analysis?
\end{quest}

In this work, we provide a path forward towards addressing these questions: we give quasipolynomial-time algorithms for learning $\mathsf{AC}^0$ when the inputs come from any \emph{graphical model} that exhibits strong spatial mixing. These are highly expressive families of distributions that are widely-studied across computer science, economics, probability theory, physics, and statistics. We prove this result by showing the existence of low-degree approximations over these distributions, despite their lack of product structure, ensuring the success of the low-degree algorithm.

\subsection{Key Challenge: Avoiding Fourier}

The major technical challenge in studying \Cref{question:lmn1} and \Cref{question:lmn2} beyond product distributions is that the key argument of Linial, Mansour and Nisan \cite{lmn} fundamentally revolves around Fourier analysis. Any function $f: \{\pm 1\}^n \rightarrow \mathbb{R}$ can be written as
$$f(\bm{x}) = \sum_{S \subseteq [n]} \widehat{f}_S \chi_S(\bm{x})$$
where $\chi_S(\bm{x}) = \prod_{i \in S} x_i$ is the parity function on the coordinates in $S$ and the $\widehat{f}_S$'s are called the Fourier coefficients. Under the uniform distribution, the polynomials $\{\chi_S(\bm{x})\}_S$ are orthogonal, which yields many convenient algorithmic and analytical properties. In particular, showing that any function in some given concept class admits a low-degree approximator thus amounts to showing that the high degree Fourier coefficients decay quickly, which can be studied via an impressively broad set of techniques~\cite{o2021analysis}. 

The point is that for a myriad of well-studied concept classes — including constant-depth circuits and monotone functions — efficient algorithms are known only for distributions with Fourier-like orthogonal bases. In their early work extending LMN to general product distributions, Furst, Jackson, and Smith~\cite{furst1991improved} nonetheless conjecture that the low-degree algorithm may extend to natural distributions.
But how can we analyze the low-degree algorithm when there is no hope of finding closed-form expressions for orthogonal polynomials? With precious few exceptions~\cite{DBLP:conf/stoc/KoehlerLMM24,DBLP:conf/colt/HuangM25}, there has  
been limited progress in obtaining a useful theory of Boolean functions for natural correlated distributions. The lack of an explicit orthogonal basis is well-understood in the literature as a major barrier towards developing such a theory~\cite{DBLP:conf/colt/KanadeM15,DBLP:journals/corr/abs-2506-10748, DBLP:conf/stoc/KoehlerLMM24}. The work of Kanade and Mossel~\cite{DBLP:conf/colt/KanadeM15} was the first to attempt such a generalization of the low-degree algorithm for \emph{Markov random fields}.
However, their algorithm assumes existence of (and computational access to) a Fourier-like functional basis, and it is unclear when this is possible outside of highly structured settings like product distributions.

We overcome these challenges.  We show that a fine-grained analysis of \emph{tailored sampling algorithms}, leveraging an array of old and new techniques from the probability and sampling literatures, enables \emph{transference} from the uniform distribution to graphical models. We further show that our techniques are flexible enough to extend to other well-studied function classes, like monotone functions and halfspaces.

\subsection{Our Results}

We now describe our results in more detail. We first study the problem of learning $\mathsf{AC}^0$  when their inputs come from a graphical model \---- or learning $\mathsf{AC}^0$ under graphical models for short. Graphical models are a rich language for defining high-dimensional distributions in terms of their dependence structure. The prototypical example is called the Ising model~\cite{ising} which defines a distribution on $\{\pm 1\}^n$ according to the equation
$$\mu(\bm{\sigma}) = \frac{\exp\left(-\frac{1}{2} \bm{\sigma}^\top A \bm{\sigma} + \bm{h}^\top \bm{\sigma}\right)}{Z}$$
Here $A$ is a symmetric matrix called the interaction matrix, $\bm{h}$ is a vector of external fields, and $Z$ is a scalar called the partition function and ensures that the distribution is properly normalized. We will often think of such models through their associated \emph{dependence graph} $G = ([n], E)$ where $(i, j) \in E$ if and only if $A_{i,j} \neq 0$, so that variables interact with each other directly through edges. Most of our results will also extend to higher-order models, where the interactions can be viewed as hyperedges. 

Graphical models have a long and storied history within machine learning, physics, statistics, and data science: indeed, there are many classic textbooks and surveys describing their properties and applications~\cite{lauritzen1996graphical, jordan1999learning, koller2009probabilistic}. And while one may have hoped to $\mathsf{AC}^0$ under general distributions, there is strong evidence this is not possible (c.f. \Cref{sec: related work}). It is nonetheless of significant interest to learn from models that have wide-ranging uses in statistical physics~\cite{ising,friedli_velenik_2017}, computer vision \cite{geman1984stochastic}, causal inference \cite{pearl2022reverend}, computational biology \cite{friedman2000using,felsenstein}, coding theory \cite{berrou1993near}, game theory~\cite{BLUME1993387,kandori,young,ellison}, and social networks \cite{hoff2002latent}, among other areas. In almost every corner of science and engineering, they are used as tractable, but realistic models for all sorts of data. The Ising model alone has been an enormously influential model in the physics literature that is infeasible to review here; but for this reason, the algorithmic problem of learning the \emph{distribution} from samples has been the object of intense study in the computer science and machine learning communities in the last decade (see e.g.~\cite{chow_liu,ravikumar2010high, DBLP:journals/siamcomp/BreslerMS13,DBLP:conf/stoc/Bresler15,DBLP:conf/focs/KlivansM17, hamilton2017information, DBLP:conf/nips/WuSD19, gaitonde2025better} among many others).

Graphical models, at least in full generality, can model any distribution. We will instead work with a naturally arising structural assumption called strong spatial mixing \cite{weitz2004mixing}. Informally, a graphical model exhibits strong spatial mixing if pinning two sets of variables in different ways according to $\sigma$ and $\tau$ has a negligible effect on the marginal distribution of variables that are far away from the disagreement set $\Lambda_{\sigma, \tau} = \{u | \sigma(u) \neq \tau(u)\}$. It is known quite generally that such structural properties emerge at high-temperature \---- i.e. when the interactions are somewhat weak \cite{weitz2004mixing}. We will also assume that the underlying graph has only polynomial growth of neighborhoods. In this setting, strong spatial mixing is known to be essentially equivalent to optimal \emph{temporal} mixing of the discrete-time Glauber dynamics~\cite{DBLP:journals/rsa/DyerSVW04}.

It is important to emphasize that the case of product distributions corresponds to a trivial case of graphical models because the interactions are not just weak, but rather identically zero, and neighborhoods do not grow at all because there are no edges. In contrast, graphical models, even ones at high-temperature that exhibit strong spatial mixing, can be quite far from product distributions and model all sorts of interesting generative models. Moreover there would appear to be no closed-form expression for their orthogonal polynomials. Nevertheless we prove:

\begin{theorem}[\Cref{thm:pac_learning_ac0_full}, Informal]
\label{thm:lmn}
    Suppose that a graphical model $\mu$ has a dependency graph with polynomial growth that satisfies strong spatial mixing and bounded marginals.\footnote{See \Cref{sec:ising} for precise definitions of growth and bounded marginals.} Then there is a constant $C>0$ such that given $\varepsilon>0$, there is an algorithm $\mathcal{A}$ that given $N=n^{\log^{Cd}(n/\varepsilon)}$ samples $(\bm{x}_i,f(\bm{x}_i))$ where $\bm{x}_i\sim \mu$ and $f$ is a circuit of size $\mathsf{poly}(n)$ and depth $d$, runs in $\mathsf{poly}(N)$ time and outputs a hypothesis $h:\{-1,1\}^n\to \{-1,1\}$ such that 
    \begin{equation*}
        \Pr_{\bm{x}\sim\mu}(h(\bm{x}))\neq f(\bm{x}))\leq \varepsilon.
    \end{equation*}
\end{theorem}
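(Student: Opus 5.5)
By the standard $L_1$-regression analysis of the low-degree algorithm \cite{kalai2008agnostically}, it suffices to show the following. For every depth-$d$, size-$\mathsf{poly}(n)$ circuit $f$ there is a polynomial $p$ of degree $k=\log^{O(d)}(n/\varepsilon)$ with $\E_{\bm{x}\sim\mu}|f(\bm{x})-p(\bm{x})|\le\varepsilon$. Given this, we run $L_1$ polynomial regression over all monomials of degree at most $k$. There are $n^{O(k)}$ such monomials, so $N=n^{\log^{Cd}(n/\varepsilon)}$ samples suffice for uniform convergence, and a threshold-rounding step gives a Boolean hypothesis with error $O(\varepsilon)$. All the work therefore lies in proving that low-degree $L_1$ approximators exist under $\mu$.

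To get approximators without Fourier analysis, I would transfer from a product distribution via a sampler. Concretely, I would write $\mu$ as the pushforward of a product distribution $\pi$ on random seeds $\bm{r}\in\{\pm1\}^m$, with $m=\mathsf{poly}(n)$, under a map $\samp$ satisfying two properties.
\begin{enumerate}
\item Each output coordinate $\samp(\bm{r})_i$ is computed by an $\mathsf{AC}^0$ circuit of depth $O(1)$ and size $\mathsf{poly}(n)$, possibly after allowing error $\varepsilon/2$ in total variation.
\item The law of $\samp(\bm{r})$ is within $\varepsilon/2$ of $\mu$.
\end{enumerate}
Then $f\circ\samp$ is an $\mathsf{AC}^0$ circuit of depth $d+O(1)$ over a product distribution with bounded marginals, and LMN/Håstad together with Furst--Jackson--Smith give a polynomial $q(\bm{r})$ of degree $\log^{O(d)}(n/\varepsilon)$ approximating it in $L_2$.

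The sampler I would use is a blocked or local version of Glauber dynamics, or equivalently a coupling-from-the-past style construction. Strong spatial mixing together with polynomial growth implies that the spin at $i$ depends, up to error $\delta$, only on the seeds in a ball of radius $R=O(\log(n/\delta))$ around $i$. By polynomial growth this ball has size $\mathrm{polylog}(n/\delta)$. Any function of $\mathrm{polylog}$ many bits is a DNF of quasipolynomial size, and this only costs a polylog factor in the exponent of the degree bound, which fits the $\log^{Cd}$ budget. For the seeds, I would discretize each uniform $[0,1]$ choice into $O(\log(n/\varepsilon))$ bits, so that bounded marginals let conditional spin updates be written as small circuits.

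The main obstacle is converting the polynomial $q$ in the seeds back into a polynomial in $\bm{x}$. We need $p(\bm{x})=\E[q(\bm{r})\mid \samp(\bm{r})=\bm{x}]$, and conditioning on $\bm{x}$ can destroy low degree. My first attempt would be to design the sampler to be \emph{invertible and local}: an $\invsamp$ that, given $\bm{x}\sim\mu$ and fresh independent randomness, produces seeds $\bm{r}$ with the correct joint law, such that each seed bit depends only on $\bm{x}$ restricted to a polylog-size ball. Then each monomial of $q$, which involves $k$ seed bits, depends on at most $k\cdot\mathrm{polylog}(n)$ coordinates of $\bm{x}$. Its conditional expectation is therefore a function of that many $\bm{x}$-bits, hence a polynomial of degree at most $k\,\mathrm{polylog}(n)$, which still lies within the $\log^{Cd}$ exponent. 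By Jensen, $\E_\mu|f-p|\le \E_\pi|f\circ\samp-q|+\varepsilon$. The delicate part is constructing this local inverse, meaning a local resampling of the dynamics' randomness consistent with its output, while controlling the coupling errors through strong spatial mixing.
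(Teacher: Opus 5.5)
Your outer reduction matches the paper: regression, LMN/Tal applied to $f\circ\samp$ viewed as a depth-$(d+2)$ quasipolynomial-size circuit, and pulling $q$ back through a local randomized inverse. But the two steps that carry the real content are not established.

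First, you never construct the local inverse, and the samplers you suggest (blocked Glauber, coupling from the past) are exactly the kind for which it breaks down. Their output passes through latent intermediate configurations, so the posterior of the seed given the final configuration has non-local, non-Markovian dependencies. Reversibility of the dynamics also holds only in law, not seed by seed. The missing idea is a sampler with no latent state.
\begin{itemize}
\item Color $V$ into $K\le C_{\mathsf{GR}}r^{\Delta}+1$ classes $S_1,\dots,S_K$ whose members are pairwise at distance $>r$, where $r=O(\log(n/(\eta\varepsilon))/\delta)$.
\item In round $k$, set $y_v$ for each $v\in S_k$ by thresholding its own seed block $\bm z_v$ against $\Pr_\mu(X_v=1\mid X_{T_v}=\bm y_{T_v})$, where $T_v=S_{<k}\cap B_r(v)$.
\end{itemize}
Since $y_v$ depends only on $\bm z_v$ and previously computed outputs, conditioned on $\bm y$ the seed blocks are independent and uniform on the individual preimages $\samp_v^{-1}(y_v;\bm y_{T_v})$. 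Hence per-coordinate rejection sampling inverts \emph{exactly}, and $\bm z_v$ is a function of $(y_v,\bm y_{T_v})$ alone, of degree at most $K$. Locality of $\samp$ then follows structurally from having $K$ rounds of radius $r$: output $v$ depends only on seeds in $B_{r(K-1)}(v)$. No decay-of-influence estimate for a dynamics is needed.

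Second, the step $\E_\mu|f-p|\le\E|f\circ\samp-q|+\varepsilon$ does not follow from a total-variation bound. You are changing measure on $|f(\bm x)-q(\invsamp(\bm x,\rho))|$, and $q$ is a real polynomial whose sup norm is only bounded by roughly $m^{k/2}=n^{\mathsf{polylog}(n)}$. TV closeness therefore controls the discrepancy only by $d_{\mathsf{TV}}\cdot\sup|f-q|$. Making that small requires $\log(1/d_{\mathsf{TV}})\gtrsim k\log m$, so the seed length and radius, hence the locality, hence $k$, would have to grow with $k$ itself, which is circular. Moreover, with only TV control, $\mu$ may charge configurations outside the image of $\samp$, where no preimage exists.

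What is needed is a pointwise likelihood-ratio bound $\mu(\bm\sigma)\le C\cdot\Pr[\samp(\mathcal U)=\bm\sigma]$. Then the pushforward $\nu$ of the exact inverter satisfies $\mathrm{d}\nu/\mathrm{d}\mathcal U\le C$, and the change of measure is valid for nonnegative unbounded integrands. The paper gets this in two steps: SSM makes each thresholded conditional additively $\eta\varepsilon/(4n)$-accurate, and $\eta$-bounded marginals upgrade this to a multiplicative $(1+\varepsilon/n)$ factor per coordinate, so $C\le e^{\varepsilon}$. TV errors are harmless only on the seed side, e.g.\ when comparing the Boolean functions $f\circ\samp$ and $f\circ\samp'$ for a local truncation $\samp'$, as in the paper's tree construction. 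They are not harmless on the $\mu$ side.
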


Our results are based on a surprising new connection between learning and sampling. In recent years there has been exciting progress on 
sampling from graphical models \cite{DBLP:journals/siamcomp/AnariLG24,eldan2022spectral,DBLP:journals/siamcomp/ChenLV23,DBLP:conf/stoc/AnariJKPV22,DBLP:conf/focs/ChenE22}. We now know many powerful structural properties of high-temperature distributions, and how to use them to prove bounds on the mixing time of Markov chains. It turns out that strong spatial mixing allows us to build new kinds of samplers \---- not ones that are designed to mix faster or work under a wider range of parameters or even be implementable in parallel \---- but rather that allow us to transfer statements about low-degree approximation from one distribution to another. This works at the level of mapping low-degree polynomials to slightly higher degree polynomials.

We also revisit the classic problem of learning monotone functions. The uniform distribution version of this problem and its variants have seen tremendous study \cite{DBLP:journals/jacm/BshoutyT96,blum_monotone,amano2006learning,jackson2008learning,lange_properly_2022,lange_agnostic_2025}. We show that this versatile class can be learned even over high-temperature graphical models. More generally, our result also extends to all bounded influence functions (for a generalized notion of influence). 
\begin{theorem}[\Cref{thm:pac_learning_monotone_full}, Informal]
\label{thm:monotone}
    Suppose that an graphical model $\mu$ has a dependency graph with polynomial growth that satisfies strong spatial mixing and bounded marginals. Then given $\varepsilon>0$, there is an algorithm $\mathcal{A}$ that given $N=n^{\tilde{O}(\sqrt{n})/\varepsilon}$ samples $(\bm{x}_i,f(\bm{x}_i))$ where $\bm{x}_i\sim \mu$ and $f$ is a monotone function, runs in time $\poly(N)$ and outputs a hypothesis $h:\{-1,1\}^n\to \{-1,1\}$ such that 
    \begin{equation*}
        \Pr_{\bm{x}\sim\mu}(h(\bm{x})\neq f(\bm{x}))\leq \varepsilon.
    \end{equation*}
\end{theorem}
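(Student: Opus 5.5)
We plan to follow the same template as for \Cref{thm:lmn}: show that every monotone function admits a good $L_1$ polynomial approximation under $\mu$ of degree $D=\tilde{O}(\sqrt{n})/\varepsilon$, and then run the $L_1$ low-degree regression algorithm of \cite{kalai2008agnostically} over monomials of degree at most $D$. There are $n^{D}$ such monomials, which gives the stated sample complexity and running time $N=n^{\tilde O(\sqrt n)/\varepsilon}$ once the approximation error is at most $\varepsilon/2$.

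\textbf{Transference via a sampler.} The existence of a low-degree approximator is proved through the sampling-based transference. We write $\bm{x}\sim\mu$ as $\bm{x}=\samp(\bm{z})$, where $\bm{z}$ is a product-distributed random seed (for example uniform bits or uniform reals, discretized), and $\samp$ is a sampler built from strong spatial mixing on a polynomial-growth graph. Each output coordinate $x_i$ should depend only on seed variables in a ball of radius $r=\mathrm{polylog}(n/\varepsilon)$ around $i$, up to error $\varepsilon/\poly(n)$. For instance, we can simulate a systematic-scan or coupling-from-the-past Glauber dynamics and truncate it to a local block. Then $g=f\circ\samp$ is a function on a product space, and any polynomial $p$ in the seed that approximates $g$ pulls back to a polynomial in $\bm{x}$. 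Conversely, we need that a low-degree approximator $q(\bm z)$ of $g$ yields a low-degree approximator in $\bm x$. As for $\mathsf{AC}^0$, we use the step stated earlier in the paper that maps degree-$k$ polynomials in the seed to degree $k\cdot\mathrm{polylog}$ polynomials in $\bm{x}$, via the inverse sampler $\invsamp$ and conditional expectations $\E[q(\bm z)\mid \bm x]$. We assume this result is available.

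\textbf{Bounding influence.} The monotone-specific part is to bound the total influence of $g$ under the product seed measure. For product measures, total influence $I$ gives $L_2$ concentration on degree at most $I/\varepsilon$, so it suffices to show $\mathrm{I}(g)=\tilde O(\sqrt n)$. Flipping a seed coordinate $z_j$ changes only the outputs $x_i$ with $i$ in the local ball around $j$. So $\mathrm{Inf}_j(g)$ is at most the probability that $f$ changes when the local block of $\bm x$ is resampled. We then use a generalized influence under $\mu$, namely the probability that $f$ changes when a radius-$r$ block is rerandomized from its conditional law. To bound the sum of these block influences for monotone $f$ by $\tilde O(\sqrt n)$, we adapt the classical argument $\sum_i \mathrm{Inf}_i(f)=\E[f(\bm x)\sum_i x_i]\le \sqrt{\mathrm{Var}(\sum_i x_i)}$. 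Under $\mu$, monotonicity and bounded marginals give that a block influence is comparable to $\E[f\cdot(\text{block sum}-\text{conditional mean})]$. Strong spatial mixing on polynomial-growth graphs gives $\mathrm{Var}_\mu(\sum_i x_i)=O(n\,\mathrm{polylog})$, since correlations decay exponentially and balls have polynomial size. Summing over blocks, with overlaps costing a $\mathrm{polylog}$ factor, gives $\tilde O(\sqrt n)$.

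\textbf{Main obstacle.} The hardest step is this last inequality. With correlations, monotonicity of $f$ does not directly give $\mathrm{Inf}_i(f)\propto \E[f\,(x_i-\E[x_i\mid x_{-i}])]$ at the block level. To make the comparison rigorous I would first try conditioning on the outside configuration, using that the conditional measure on a bounded block has marginals bounded away from $0$ and $1$. Within the block, I would use a monotone coupling, namely the Glauber update fixing everything else. If that fails, the fallback is to prove the influence bound directly through the sampler, by choosing $\samp$ monotone in the seed, as the standard grand coupling of heat-bath dynamics is. Then $g$ is itself monotone on a product space, and the classical $\sqrt{n\cdot\mathrm{polylog}}$ bound on its total influence applies directly.
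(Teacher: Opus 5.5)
\textbf{There is a genuine gap in the monotone-specific step.} Your outer skeleton matches the paper: reduce the influence of a seed bit of $f\circ\samp$ to resampling a fixed block $B$ from $\mu(\cdot\mid\bm x_{-B})$, bound the sum of these block influences, and transfer back via the sampler-inverter pair. But the step that uses monotonicity fails as stated. You claim a block influence is controlled by $\E[f\cdot(S_B-\E[S_B\mid\bm x_{-B}])]$, where $S_B=\sum_{i\in B}x_i$. This is false for non-attractive models. Take the hardcore model with $\lambda=1$ on disjoint copies of the star $K_{1,3}$; it has bounded marginals, and SSM and polynomial growth hold trivially. Let $f(\bm x)=x_c$ for a center $c$, and let $B$ be the star containing $c$. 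Then $\Pr(f(\bm x)\neq f(\tilde{\bm x}))=16/81$, while $\E[f\cdot(S_B-\E[S_B\mid\bm x_{-B}])]=\mathrm{Cov}(x_c,S_B)=-16/81$. The reason is that resampling a block moves coordinates in both directions, so monotonicity gives no signed handle on it. For the same reason, $\mathrm{Var}_\mu(\sum_i x_i)$ is not the quantity that controls influence once coordinates are correlated. Your proposed repair, conditioning on the outside and using bounded marginals on the block, does not work either. The blocks have size $\mathrm{polylog}(n)$, and bounded marginals alone only give block constants exponential in $|B|$, which is quasipolynomial and destroys the $\tilde O(\sqrt n)$ degree.

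\textbf{The missing ingredient} is a Poincar\'e inequality for every pinning, with a constant $C_{\mathsf{PI}}$ independent of $n$. This is where SSM and polynomial growth enter, via spectral independence. It converts block resampling into single-site Glauber influences, $\E[\mathrm{Var}(f\mid\bm x_{-B})]\le C_{\mathsf{PI}}\sum_{j\in B}\mathsf{I}_{j,\mu}[f]$, as in \Cref{thm:influence-transfer}. Monotonicity is then used only at single sites, where it is exact for any $\mu$: $\mathsf{I}_{j,\mu}[f]=\E[f(X)(X_j-\E[X_j\mid X_{-j}])]$. Cauchy--Schwarz then needs the second moment of $\sum_j(X_j-\E[X_j\mid X_{-j}])$, not of the magnetization. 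This second moment is $O(n)$ at any temperature on bounded-degree graphs, which polynomial growth implies. The paper gets this by exchangeable pairs (\Cref{prop:monotone-influence}); equivalently, the residuals at non-adjacent sites are orthogonal.

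\textbf{Your fallback does not cover the theorem.} $\mathsf{SSMSamp}$ is monotone in its seed (after flipping seed bits) only when $p_v(\bm y_{T_v})$ is nondecreasing in $\bm y_{T_v}$, that is, for attractive models. It fails for the hardcore and antiferromagnetic Ising models. For attractive models it is a valid and simpler route, giving $\mathsf{I}[f\circ\samp]\le\sqrt{sn}$ with no Poincar\'e inequality at all.

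\textbf{Two smaller points.} First, the transference needs the paper's local iterative sampler, which supplies both the multiplicative density bound and the degree-$K$ inverter; a truncated Glauber or CFTP sampler supplies neither. Second, the stated $1/\varepsilon$ dependence comes from $L_2$ regression (\Cref{thm:pac_learning_l2}). An influence bound gives $L_2$ error $\varepsilon$, not $L_1$ error $\varepsilon$, at degree $\mathsf{I}/\varepsilon$, so going through $L_1$ regression would cost a factor of $1/\varepsilon$ in the degree.
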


Our argument is based on two new results of independent interest. First, we prove that the influence of monotone functions on $n$ variables, defined with respect to a sparse graphical model $\mu$ at any temperature, remains $O(\sqrt{n})$ (c.f. \Cref{prop:monotone-influence}). This qualitatively matches a classic result that for the uniform distribution the influence of a monotone function is at most that of the majority function~\cite{o2021analysis}. To obtain our learning guarantee, we prove a general transference result (c.f. \Cref{thm:influence-transfer}) for influences to move to the uniform distribution, where classical machinery furnishes low-degree approximation. Our proof of transference relies crucially on structural properties of our specialized samplers as well as the \emph{Poincar\'e inequality} for high-temperature distributions, which are widely studied to prove rapid mixing of local Markov chains in the sampling literature.

Our final result is on learning the class of halfspaces in the presence of label noise (a.k.a agnostic learning \cite{kearns_agnostic}). Agnostic learning halfspaces is widely believed to be computationally hard for worst case distributions \cite{GR_agnostic_hardness}. However, there has been substantial progress under particular distributional assumptions, dating back to the seminal work of \cite{kalai2008agnostically}, who showed learnability of this class over the uniform distribution. Following this work, there has been great progress on learning this class efficiently over various continuous distributions that have strong concentration and anti-concentration properties \cite{kane2013learning, daniely_ptas_2015,diakonikolas_agnostic_2021}.

Unfortunately, progress on discrete distributions has been quite limited beyond product distributions, apart from recent work in the smoothed analysis framework \cite{kou_smoothed_2025}. Existing approaches either critically leverage Fourier analysis via noise sensitivity~\cite{klivans2004learning,klivans2008learning,odonnell_learning_2007}, or use strong anti-concentration properties to approximate the sign function directly \cite{diakonikolas2010bounded}. This approach has been successful for continuous distributions like Gaussians or more general log-concave distributions, and surprisingly works for the uniform distribution as well thanks to the central limit theorem. But the main technical barrier for richer discrete distributions has been in establishing analogous anti-concentration properties. However, we prove the following result: 

\begin{theorem}[\Cref{thm:dobrushin_halfspaces_full}, Informal]
\label{theorem:halfspaces}
Suppose $\mu$ is a high-temperature Ising model with bounded marginals. Let $D$ be a labeled distribution on $\{\pm 1\}^n\times \{\pm 1\}$ with marginal $\mu$ and let $\mathcal{F}$ be the class of halfspaces. Then, for any $\varepsilon>0$, there is an algorithm $\mathcal{A}$ that given $N=n^{O(\log^2(1/\epsilon)/\epsilon^2)}$ samples $(\bm{x}_i,y_i)$, where $(\bm{x_i},y_i)\sim D$, runs in $\poly(N)$ time and outputs a hypothesis $h:\cube{n}\to \{\pm 1\}$ such that 
\[
\Pr_{\bm{x}\sim \mu}(h(\bm{x})\neq f(\bm{x}))\leq \mathsf{\opt}+\varepsilon
\]  where $\mathsf{opt}\coloneq \min_{g\in \mathcal{F}}\Pr_{(\bm{x},y)\sim D}(g(\bm{x}\neq y))$.
\end{theorem}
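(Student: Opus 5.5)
The plan is to reduce \Cref{theorem:halfspaces} to a statement about low-degree $L_1$ approximation under $\mu$, and then invoke the $L_1$ polynomial regression framework of \cite{kalai2008agnostically}. That framework says the following. Suppose every halfspace $g$ admits a polynomial $p$ of degree $k$ with $\E_{\bm{x}\sim\mu}|g(\bm{x})-p(\bm{x})|\le\varepsilon$. Then $L_1$ regression over degree-$k$ polynomials on $n^{O(k)}$ samples, followed by thresholding at a randomly chosen level, outputs $h$ with error at most $\mathsf{opt}+O(\varepsilon)$, in time polynomial in the sample size. So the whole task is to show that every halfspace has degree $k=\tilde O(1/\varepsilon^2)$ approximators in $L_1(\mu)$ when $\mu$ is a high-temperature Ising model with bounded marginals. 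Up to a $\log^2(1/\varepsilon)$ factor, that matches the claimed sample complexity $N=n^{O(\log^2(1/\varepsilon)/\varepsilon^2)}$.

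To build the approximators I would follow the Diakonikolas et al.\ route~\cite{diakonikolas2010bounded}, which replaces Fourier analysis by anti-concentration together with a univariate approximation of $\sign$. Write $g(\bm{x})=\sign(\langle \bm{w},\bm{x}\rangle-\theta)$ with $\|\bm{w}\|_2=1$. I would proceed in three steps.

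\textbf{Step 1 (concentration).} Under Dobrushin-type high temperature, linear forms are sub-Gaussian: $\langle\bm{w},\bm{x}\rangle-\E\langle\bm{w},\bm{x}\rangle$ has $O(1)$-sub-Gaussian tails. This should follow from the Poincar\'e/log-Sobolev inequality or from standard Dobrushin concentration.

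\textbf{Step 2 (anti-concentration).} Split on whether $\bm{w}$ is regular.
\begin{itemize}
\item If $\bm{w}$ is $\delta$-regular ($\max_i|w_i|\le\delta$), I would prove a central-limit-type bound $\Pr_\mu(|\langle\bm{w},\bm{x}\rangle-t|\le\eta)\le O(\eta+\delta^{c})$ for every $t$. The approach is to use the Glauber/sampler coupling or Stein's method for weakly dependent spins. The bounded-marginals assumption ensures that the conditional variance of each $x_i$ given the rest is bounded below, so $\mathrm{Var}_\mu\langle\bm{w},\bm{x}\rangle=\Omega(1)$.
\item If $\bm{w}$ is not regular, use the critical-index argument. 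Condition on the $O(\log(1/\varepsilon)/\varepsilon^2)$ heaviest coordinates. Conditioning an Ising model still gives a high-temperature Ising model (with modified fields) that has bounded marginals, so the tail is regular under the conditional law. Alternatively, if the weights decay geometrically, the head determines $g$ up to $\varepsilon$ error. Each branch is then a junta of the head times a regular halfspace, which adds only $O(\log(1/\varepsilon)/\varepsilon^2)$ to the degree.
\end{itemize}

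\textbf{Step 3 (approximation).} Take a univariate polynomial $q$ of degree $\tilde O(1/\varepsilon^2)$ that approximates $\sign(z-\theta)$ to within $\varepsilon$ outside an $\varepsilon$-window. Its growth at large $|z|$ is controlled by the sub-Gaussian tails from Step 1. Set $p(\bm{x})=q(\langle\bm{w},\bm{x}\rangle)$. The $L_1$ error then splits into three parts: the window contribution, bounded by Step 2; the bulk contribution, at most $\varepsilon$; and the tail contribution, bounded by Step 1 together with a Cauchy--Schwarz bound on the polynomial's moments.

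The main obstacle is the anti-concentration in Step 2 for regular $\bm{w}$ under correlated spins. I would first try a martingale CLT based on Dobrushin uniqueness. Failing that, I would use an exchangeable-pair Stein argument, with Glauber dynamics supplying the pair: the influence matrix of norm $<1$ controls the conditional-variance error term, and regularity controls the third-moment term. This gives a Berry--Esseen bound $O(\delta^{c})$, and that bound suffices.
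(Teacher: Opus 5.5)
Your overall architecture matches the paper's. You use $L_1$ regression via \Cref{thm:agnostic_learning_l1} and the Diakonikolas et al.\ construction, driven by subgaussianity, anti-concentration of regular forms and the critical-index split, all required under every pinning; you are right that pinning preserves bounded width. The genuine gap is Step~2 for regular $\bm w$, which is the real content of the theorem, and neither route you sketch delivers it.

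\textbf{Exchangeable pairs.} For the Glauber pair, $\mathbb{E}[W-W'\mid\bm x]=\frac1n\sum_i w_i\bigl(x_i-\tanh(\sum_{j\neq i}A_{ij}x_j+h_i)\bigr)$. Stein's method needs this to equal $\lambda(W+R)$ with $\mathbb{E}|R|\to 0$. But $R$ contains $\sum_i w_i\tanh(A_i\bm x+h_i)$, which fluctuates at the same order as $W$ unless $\bm w$ is essentially an eigenvector of $A$. Even for $\bm w=\mathbf 1/\sqrt n$ on a $3$-regular graph with $A_{ij}=\beta/3$, the $\tanh$ nonlinearity leaves a residual proportional to $n^{-1/2}\sum_i x_{a_i}x_{b_i}x_{c_i}$ (the product of the three neighbours of $i$), which has constant size. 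Bounding the influence matrix controls the conditional-variance term, not this one.

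\textbf{Martingale CLT.} This route is plausible, but it needs concentration of $\sum_i\mathbb{E}[D_i^2\mid\mathcal F_{i-1}]$, a second-order Dobrushin estimate you do not supply. Martingale Berry--Esseen bounds also typically give $O(\delta^c)$ with $c<1$, and that does not suffice for the stated $N$. Anti-concentration $O(|I|+\delta^c)$ forces regularity $\delta=\varepsilon^{1/c}$, so the critical-index threshold, and hence the degree, becomes $\tilde O(\varepsilon^{-2/c})$. You need an error linear in $\delta$.

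\textbf{Variance lower bound.} Your claim is a non sequitur. Having $\mathrm{Var}(x_i\mid\bm x_{-i})\ge c$ for each $i$ does not by itself give $\mathrm{Var}\langle\bm w,\bm x\rangle\ge c'\|\bm w\|_2^2$. Single-site contributions only add under some conditional independence.

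\textbf{The paper's missing idea} is the Hubbard--Stratonovich decomposition (\Cref{prop:hs}). After shifting the diagonal so that $A\succeq 0$, one has $\mu_{A,\bm h}=\mathbb{E}_{\bm z}[\mu_{0,\bm z}]$ with $\bm z=A\bm\sigma+A^{1/2}\bm g+\bm h$.
\begin{itemize}
\item We have $\|A\bm\sigma+\bm h\|_\infty\le\lambda$, and $\bm g\mapsto\sum_i w_i^2|(A^{1/2}\bm g)_i|$ is $\delta\|A\|_{\mathrm{op}}^{1/2}$-Lipschitz for $\delta$-regular $\bm w$. Hence with probability $1-\delta$, $\sum_i w_i^2|z_i|=O(\lambda+\sqrt\lambda)$, and by Markov half the $\bm w^2$-mass sees bounded fields.
\item This gives the variance lower bound for free.
\item The classical Berry--Esseen theorem for the product measure $\mu_{0,\bm z}$ then gives Kolmogorov error $O(\sum_i|w_i|^3)=O(\delta)$, linear in the regularity, which is what \Cref{thm:anti_lin} supplies.
\end{itemize}

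Two smaller points. First, in the large-critical-index branch, the head determines the sign only because every fixed assignment to $t$ head coordinates has conditional probability at most $(1-\eta)^t$. You should state this, since it is where bounded marginals enter. Second, a Poincar\'e inequality alone gives only exponential tails, so Step~1 needs the MLSI/Herbst route of \Cref{fact:dobrushin}.
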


We prove \Cref{theorem:halfspaces} using measure decompositions recently studied for the Ising model in the sampling literature, specifically the Hubbard-Stratonovich transform~\cite{hubbard,bauerschmidt,DBLP:conf/focs/ChenE22,DBLP:conf/focs/LiuMRW24}. We show this decomposition implies sufficient anti-concentration properties for agnostically learning halfspaces over Ising models in high-temperature settings, e.g. under the well-known Dobrushin condition~\cite{dobrushin}. We further remark that agnostically learning halfspaces has historically been a stepping stone towards richer geometric concepts including intersections of halfspaces and polynomial threshold functions~\cite{klivans2008learning,kane2013learning,diakonikolas2018learning,chandrasekaran_smoothed_2024}. It would be interesting to see if the analytic results that we obtained while proving \Cref{theorem:halfspaces} can be strengthened towards learning these stronger classes.\\

\textbf{Organization.} In \Cref{sec:tech-overview}, we provide an overview of our main results and techniques, as well as further related work in \Cref{sec: related work}. After providing definitions and notation in \Cref{sec:preliminaries}, we prove a general reduction from low-degree approximation for a given distribution and function class to the existence of special kinds of samplers in \Cref{sec:low-deg-sampler-reduction}. Our main result, constructing these samplers under strong spatial mixing and polynomial growth, is in \Cref{sec:local-samplers-main}. We then apply these transference results from to obtain quasipolynomial learning guarantees for $\mathsf{AC}^0$, and moreover, prove new transference statements to capture monotone functions in \Cref{sec:low-deg-sampler-apps}. In \Cref{sec:halfspaces}, we prove our results for halfspaces of high-temperature Ising models.\\

\noindent\textbf{Acknowledgments.} JG thanks Elchanan Mossel for very helpful discussions on influence bounds for monotone functions at high-temperature.

GC and AV are supported by the NSF AI Institute for Foundations of Machine Learning (IFML). Much of this work was completed while JG was at the MIT Department of Mathematics, supported by Elchanan Mossel's Vannevar Bush Faculty Fellowship ONR-N00014-20-1-2826 and Simons Investigator Award 622132. AM is supported in part by a Microsoft Trustworthy AI Grant, NSF-CCF 2430381, an ONR grant, and a David
and Lucile Packard Fellowship.

\section{Technical Overview}
\label{sec:tech-overview}

In this section, we provide an overview of our main results and techniques. In \Cref{sec:poly_approx}, we present our main results on low-degree approximations for high-temperature graphical models compared to the best-known bounds for the uniform distribution in different concept classes. In \Cref{sec:sampler_to_learning}, we present the general reduction from low-degree approximation to the existence of suitable kinds of samplers and inversion algorithms. We then turn to explaining the main ideas behind our construction of these samplers in \Cref{sec:constructing-sampler-overview} for  graphical models. Finally, we discuss extensions of our framework to other well-studied concept classes in \Cref{sec:beyond_low_depth}.

\subsection{Polynomial Approximators: Old and New}
\label{sec:poly_approx}

Our primary technical contribution is the construction of new low-degree polynomial approximators for the classes of constant depth circuits, monotone functions and halfspaces that achieve low error over a large class of non-product distributions. Formally, a function $f$ has a degree-$t$ $\ell_p$-approximator with error $\varepsilon$ under $\mu$ if there exists a polynomial $q$ of degree $t$ such that $\E_{\mu}[|f(\bm{x})-q(\bm{x})|^p]\leq \varepsilon$.  As described in the introduction, our results (excluding those on halfspaces) broadly apply to the class of graphical models satisfying strong spatial mixing (with polynomial growth for the underlying graph and having bounded marginals). 

The upper bounds on degree that we achieve are qualitatively comparable, up to poly-logarithmic factors and distribution dependent constants, to the best known bounds for approximation over the uniform distribution. 
We construct these polynomials using a new connection that allows us to transfer polynomial approximation results from the uniform distribution to graphical models, using specially designed samplers (we discuss this further in \Cref{sec:sampler_to_learning}).
\begin{table}[ht]
\centering
\renewcommand{\arraystretch}{1.4}
\begin{tabular}{|c|c|c|}
\hline
Function Class & {Uniform} & Graphical Models (\textbf{this work}) \\
\hline
Polysize Constant-Depth Circuits & $O(\log^{d-1}(n)\cdot \log(1/\varepsilon))$ \cite{tal_ac0} & $O(\log^{Cd}(n)\cdot \log(1/\varepsilon))$ \\
\hline
Monotone Functions & $O( \sqrt{n}/\varepsilon)$ \cite{DBLP:journals/jacm/BshoutyT96} & $O(\log^C(n)\cdot\sqrt{n}/\varepsilon)$ \\
\hline
\end{tabular}
\caption{Comparison of $\ell_2$-approximation degree between uniform distribution and graphical models satisfying small spatial mixing, polynomial neighborhood growth and bounded marginals. The constant $C$ in the second column is a distribution dependent quantity and $d$ is the depth of the circuit. We refer to \Cref{thm:low-deg-ac0,thm:low-deg-low-influence} for the precise statements.}
\label{tab:approx-degree}
\end{table}

We also give polynomial approximators for the class of halfspaces with respect to high-temperature Ising models with ``bounded width,'' for instance those satisfying the Dobrushin condition~\cite{dobrushin}. This construction proceeds more directly, by analyzing concentration and anti-concentration properties of such distributions. 
\begin{table}[ht]
\centering
\renewcommand{\arraystretch}{1.4}
\begin{tabular}{|c|c|c|}
\hline
Function Class & {Uniform} & Dobrushin Ising Model (\textbf{this work}) \\
\hline
Halfspaces & $O(\log(1/\varepsilon)/\varepsilon^2)$ \cite{feldman_tight_2017} & $O(\log^2(1/\varepsilon)/\varepsilon^2)$\\
\hline
\end{tabular}
\caption{Comparison of $\ell_1$-approximation degree for halfspaces between uniform distribution and Ising models under Dobrushin's condition. We refer to \Cref{thm:halfspace_approx_dobrushin} for a precise statement. }
\label{tab:approx-degree-hs}
\end{table}
Once we construct low-degree approximators for these classes, our main learning results, \Cref{thm:lmn,thm:monotone,theorem:halfspaces}, follow immediately from the low-degree algorithm \cite{lmn,kalai2008agnostically}. Moreover, the existence of such low-degree approximators also allows us to strengthen both \Cref{thm:lmn} and \Cref{thm:monotone} to agnostic learning results immediately \cite{kalai2008agnostically}.
\subsection{Invertible Samplers and their application to learning}
\label{sec:sampler_to_learning}

In this section we briefly describe how samplers with low complexity and strong invertibility properties imply transference theorems for low-degree approximation. We will need the following definition. 
\begin{definition}[Sampler-Inverter]
\label{def:samp-invsamp-overview}
  We say that $(\samp,\invsamp)$ is a $(C_{\mathsf{samp}},C_{\mathsf{inv}})$-approximate sampler-inverter pair for a distribution $\mu$ if: 
    \begin{enumerate}
        \item $\samp$ is a deterministic function such that $\underset{\bm{y}\sim \mu}{\Pr}[\bm{y}=\bm{z}]\leq C_{\mathsf{samp}}\cdot \Pr[\samp(\mathsf{\mathcal{U}})=\bm{z}]$ for any $\bm{z}$, and
        \item $\invsamp$ is a randomized function with auxiliary seed $\bm{r}$ such that $\invsamp(\bm{y},\bm{r})\in \mathsf{Samp}^{-1}(\bm{y})$ for any $\bm{y}\in \mathsf{supp}(\mu)$, and moreover, $\Pr_{\bm{r}}[\invsamp(\bm{y},\bm{r})=\bm{x}]\leq C_{\mathsf{inv}}\cdot (|\samp^{-1}(\bm{y})|)^{-1}$ for any $\bm{x}\in \samp^{-1}(\bm{y})$.
    \end{enumerate}
\end{definition}
In the above definition, $\mathcal{U}$ is the uniform distribution over the hypercube of some dimension. The randomized function $\invsamp$ is interpreted as the output of a deterministic function $\invsamp(\cdot,\bm{r})$ where $\bm{r}$ is independently sampled from some auxiliary randomness.  We show the following theorem.

\begin{theorem}[informal, see \Cref{thm:low_deg_sampler}]
\label{thm:sampler_approx_reduction_informal}
Let $\mu$ be a distribution with a $(C_{\mathsf{samp}},C_{\mathsf{inv}})$-approximate sampler-inverter pair $(\samp,\invsamp)$. Let $\mathcal{F}$ be a concept class. Suppose the following holds: 
\begin{enumerate}
    \item $\mathcal{F}\circ\samp$ has $\varepsilon$-approximators of degree $\ell$ over the uniform distribution, 
    and
    \item $\invsamp_{\bm{r}}$ depends on at most $k$ input bits, for any seed $\bm{r}$.
\end{enumerate}
Then, there exists a $C_{\mathsf{Samp}}\cdot C_{\mathsf{inv}}\cdot\varepsilon$-approximator of degree $k\ell$ for $\mathcal{F}$ over $\mu$. 
\end{theorem}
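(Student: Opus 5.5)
The plan is to build the approximator over $\mu$ by composing the uniform-distribution approximator of $f\circ\samp$ with $\invsamp$, and then average over the seed. Fix $f\in\mathcal{F}$ and let $p$ be a degree-$\ell$ polynomial on $\{\pm1\}^m$ with $\E_{\bm{x}\sim\mathcal{U}}[|f(\samp(\bm{x}))-p(\bm{x})|^p]\le\varepsilon$; for clarity write the exponent as $q\ge 1$ and treat $q\in\{1,2\}$ uniformly. For each seed $\bm{r}$, the map $\bm{y}\mapsto\invsamp(\bm{y},\bm{r})\in\{\pm1\}^m$ has output bits that are Boolean functions of $\bm{y}$. Each output bit depends only on the at most $k$ inputs that $\invsamp_{\bm{r}}$ reads, so it is a multilinear polynomial of degree at most $k$ in $\bm{y}$. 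Substituting into $p$ therefore gives a polynomial $P_{\bm{r}}(\bm{y})=p(\invsamp(\bm{y},\bm{r}))$ of degree at most $k\ell$. The candidate approximator is $P(\bm{y})=\E_{\bm{r}}[P_{\bm{r}}(\bm{y})]$, which still has degree at most $k\ell$.

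For the error, the key identity is that $\invsamp(\bm{y},\bm{r})\in\samp^{-1}(\bm{y})$ for $\bm{y}\in\mathsf{supp}(\mu)$. Hence $f(\bm{y})=f(\samp(\invsamp(\bm{y},\bm{r})))$, and so
\[
|f(\bm{y})-P(\bm{y})|^q\le \E_{\bm{r}}\big[|f\circ\samp(\bm{x}_{\bm{r}})-p(\bm{x}_{\bm{r}})|^q\big],\qquad \bm{x}_{\bm{r}}=\invsamp(\bm{y},\bm{r}),
\]
by Jensen's inequality. Writing $g(\bm{x})=|f(\samp(\bm{x}))-p(\bm{x})|^q\ge0$, the error $\E_{\bm{y}\sim\mu}|f-P|^q$ is at most $\sum_{\bm{y}}\mu(\bm{y})\sum_{\bm{x}\in\samp^{-1}(\bm{y})}\Pr_{\bm{r}}[\invsamp(\bm{y},\bm{r})=\bm{x}]\,g(\bm{x})$.

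The two approximation constants are then applied in turn. First, the inverter condition bounds the inner probability by $C_{\mathsf{inv}}/|\samp^{-1}(\bm{y})|$. Second, the sampler condition gives $\mu(\bm{y})\le C_{\mathsf{samp}}\Pr[\samp(\mathcal{U})=\bm{y}]=C_{\mathsf{samp}}|\samp^{-1}(\bm{y})|/2^m$. Multiplying, the sizes of the fibers cancel and each $\bm{x}$ receives weight at most $C_{\mathsf{samp}}C_{\mathsf{inv}}2^{-m}$. Since the fibers partition the cube, the total error is at most $C_{\mathsf{samp}}C_{\mathsf{inv}}\E_{\mathcal{U}}[g]\le C_{\mathsf{samp}}C_{\mathsf{inv}}\varepsilon$.

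The main obstacle I anticipate is bookkeeping rather than depth, and it has two parts. The first is justifying the degree bound. This needs $\invsamp_{\bm{r}}$ to be defined on all of $\{\pm1\}^n$ as a $k$-junta, or at least to be extendable from $\mathsf{supp}(\mu)$ to all of $\{\pm1\}^n$; then, since $\mu$ only sees its support, the polynomial identity off the support is irrelevant. The second is ensuring that the averaging over $\bm{r}$ is a finite or otherwise legitimate convex combination, so that $P$ is a genuine polynomial of degree at most $k\ell$. If the seed space is infinite, I would note that the coefficients are bounded expectations, so the averaged polynomial is still well defined.
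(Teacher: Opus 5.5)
Your proposal is correct and follows the paper's proof. Your fiber-by-fiber weight bound $C_{\mathsf{samp}}C_{\mathsf{inv}}2^{-m}$ is exactly the paper's likelihood-ratio bound $\mathrm{d}\nu/\mathrm{d}\mathcal{U}_m\le C_{\mathsf{samp}}C_{\mathsf{inv}}$, and the degree-$k\ell$ composition is the same. The one difference is that you average $P_{\bm r}$ over the seed and apply Jensen, which needs the exponent to be at least $1$ plus a measurability/integrability remark; the paper instead fixes a single seed $\bm r^*$ whose error is at most the average, which avoids both issues and works for any exponent.
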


The proof of \Cref{thm:sampler_approx_reduction_informal} is based on an elementary change of measure argument. Suppose $f\circ \mathsf{Samp}$ has a degree $\ell$ approximator $g$ under the uniform distribution. 
Then the guarantees of the sampler-inverter pair let us move from the pushforward distribution of $\invsamp$ under $\bm{y}\sim \mu$ and the auxiliary seed $\bm{r}$ to uniform up to small multiplicative error. It will follow that there \emph{exists} a fixing of the auxiliary randomness $\bm{r}^*$ such that the composition $g(\mathsf{InvSamp}_{\bm{r}^*}(\bm{y}))$ inherits the original approximation guarantee of $g$ under uniform, and it is easy to verify this composite function has the desired degree.

That there is a connection between sampler-inverter pairs and learning beyond the uniform distribution has been known since the work of Furst, Jackson, and Smith~\cite{furst1991improved}, where they give learning algorithms for $\mathsf{AC}^0$ over arbitrary product distributions under the name "indirect learning" through such a reduction. 
Though we are interested in formally stronger objects in low-degree approximation, we stress that this reduction itself is not difficult; the primary challenge is to actually construct the sampler-inverter pairs for natural families of distributions. In particular it is a priori not obvious why sampler-inverter pairs ought to exist for for important distributions beyond product measures. We provide a more detailed comparison to~\cite{furst1991improved} in \Cref{sec: related work}.

\subsection{Constructing Samplers for Graphical Models with SSM and polynomial growth}
\label{sec:constructing-sampler-overview}
We now turn to our main result: constructing sampler-inverter pairs satisfying the preconditions of \Cref{thm:sampler_approx_reduction_informal} for graphical models. These distributions are defined as follows:

\begin{definition}[Undirected Graphical Models]
    \label{def:graphical-models}
    An undirected graphical model\footnote{This definition is often used instead for ``Markov random fields,'' which by Hammersley-Clifford is equivalent for positive distributions to the alternative definition via clique potentials~\cite{DBLP:journals/ftml/WainwrightJ08}. We use the present terminology to emphasize the graph dependence structure directly.} $\mu$ with dependence graph $G=([n],E)$ is a distribution on $\{\pm 1\}^n$ such that for any disjoint subsets $A,B,C\subseteq [n]$, the random variables $X_A$ and $X_B$ are conditionally independent given $X_C$ if $C$ disconnects $A$ and $B$ in the graph $G$.
\end{definition}
Our results will apply to graphical models whose neighborhood sizes grow polynomially with graph distance in $G$ (``polynomial growth,'' c.f. \Cref{def:poly-growth}) and a natural high-temperature condition known as strong spatial mixing (c.f. \Cref{def:ssm}), which ensures variable influences decay exponentially with their graph distance.

Recall the requirements on our samplers from the previous section:

\begin{enumerate}
\item \emph{($\mathsf{Samp},\mathsf{InvSamp}$) must satisfy \Cref{def:samp-invsamp-overview} while the latter remains \emph{low-degree}.}\label{item:upper-approx-overview}

\item \emph{ $\mathcal{F}\circ \mathsf{Samp}$ admits low-degree approximating polynomials over the uniform distribution.}
\label{item:composition-overview}
\end{enumerate}

We remark that even when $\mathcal{F}$ itself is known to admit low-degree approximations over the uniform distribution and $\mathsf{Samp}$ is low-degree, establishing Condition~\ref{item:composition-overview} is nontrivial.\footnote{For instance, if $\mathsf{Samp}$ outputs all monomials of degree at most $d$ and $\mathcal{F}$ contains linear threshold functions, one obtains arbitrary polynomial threshold functions (PTFs) of degree $d$. The best known approximations for PTFs have degree exponential in $d$~\cite{DBLP:journals/cc/Kane14}. Improving this dependence is a longstanding open problem in the analysis of Boolean functions~\cite{o2021analysis}} We defer more discussion on this point until the next section, but for now, the following condition will prove sufficient for our applications:

\begin{enumerate}
    \setcounter{enumi}{2}
    \item \emph{Each output bit $\mathsf{Samp}_i(\cdot)$ depends only on $\mathsf{polylog}(n)$ input seed bits.}
    \label{item:locality-overview}
\end{enumerate}

\noindent\textbf{Insufficiency of Existing Samplers}: To demonstrate the difficulty of Conditions~\ref{item:upper-approx-overview} and~\ref{item:locality-overview}, as well as to motivate our construction, it is illustrative to consider why existing samplers for graphical models do not seem to suffice for our applications. Consider the \emph{Glauber dynamics}, arguably the most well-studied sampler for these models. This is the discrete-time, \emph{local} Markov chain that starts at an initial state $X^0\in \{\pm 1\}^n$, and at each time $t=1,\ldots,T$ chooses a random index $i_t$ and sets $X^t$ by re-randomizing $X^{t-1}_{i_t}$ according to the conditional law of $\mu$ given $X^{t-1}_{\mathcal{N}(i_t)}$. Here, $\mathcal{N}(i_t)$ denotes the neighbors of $i_t$ in the dependence graph $G$. It is well-known that under a wide variety of high-temperature conditions, the mixing time such that $X_T\sim \mu$ is $T=O(n \log n)$. Under the assumption that $G$ has bounded degree $d$, each update depends only on few local variables, so is a promising start towards constructing appropriate low-complexity samplers.

However, the major issue is in controlling the complexity of a randomized inverter $\mathsf{InvSamp}$ as needed in Condition~\ref{item:upper-approx-overview}. A natural thought would be to exploit the \emph{reversibility} of the dynamics to perform inversion: if $X^0\sim \mu$, the law of the trajectory $X^0\to X^1\to\ldots\to X^T$ is well-known to be equivalent to the time-reversed process. Therefore, when $X^0\sim \mu$,
\begin{equation}
\label{eq:law-overview}
    \mathrm{Law}\left((X^0,\mathsf{Glauber}(\mathcal{U}_m;X^0))\right) \overset{d}{\approx} \mathrm{Law}\left((\mathsf{Glauber}(\mathcal{U}_m;X^T),X^T)\right),
\end{equation}
where the only error is from discretization. 
However, there are multiple insurmountable issues with this approach: the first is that this equivalence holds only at the level of \emph{distributions}. We stress that our applications minimally require the \emph{much stronger property} that almost surely over $X^T\sim \mu$:
\begin{equation*}
    \mathsf{Samp}(\mathsf{InvSamp}(\cdot;X^T);X^0) = X^T,
\end{equation*}
but this is not necessarily true since the pointwise function of seed bits in Glauber may not be reversible, i.e.
\begin{equation*}
    \mathsf{Glauber}(\bm{r};X^0)=X^T\not\Rightarrow \mathsf{Glauber}(\bm{r};X^T) = X^0.
\end{equation*}
The explicit mapping from $\bm{r}$ to the outputs is difficult to reason about, and even worse, reversibility itself also required the initial state $X^0\sim \mu$ rather than a fixed $X^0$. But this defeats the purpose of needing to construct the sampler!  $X^0$ needs to be deterministic, and so an inverter for random seeds $\bm{r}$ satisfying $X^T = \mathsf{Glauber}(X^0,\bm{r})$ must implicitly deal with the law of unobserved paths reaching $X^T$ \emph{conditioned} on starting at $X^0$. This conditional distribution has complex non-local and non-Markovian dependencies preventing low-degree seed inversion as a function of $X^T$ as in Condition~\ref{item:upper-approx-overview}.\footnote{We remark that it is not even clear that this inversion is possible in sub-exponential time; for instance, a natural approach like rejection sampling to find a valid seed will take exponential time in any nontrivial model as configurations have exponentially small probability in $n$. This rejection sampling also depends on all output bits rather than a small subset.} The main takeaway is:

\begin{observ}
\label{obs:no-latent}
    To satisfy Condition~\ref{item:upper-approx-overview}, it is necessary to \emph{have simple mappings from inputs to outputs that minimize latent dependencies in $\mathsf{Samp}$}. 
\end{observ}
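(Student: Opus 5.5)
Since the observation is a heuristic design principle rather than a quantitative theorem, I would prove it by formalizing ``latent dependency'' and showing that its presence is incompatible with Condition~\ref{item:upper-approx-overview}. The plan has three parts: first turn the inverter guarantee into a statement about conditional laws, then show that $k$-locality of $\invsamp_{\bm{r}}$ forces every seed coordinate's conditional law to be determined locally, and finally show that Glauber-type samplers violate this. The first observation is that the guarantee in \Cref{def:samp-invsamp-overview} pins down the law of $\invsamp(\bm{y},\cdot)$ quite rigidly. The probabilities $\Pr_{\bm{r}}[\invsamp(\bm{y},\bm{r})=\bm{x}]$ sum to one over $\bm{x}\in\samp^{-1}(\bm{y})$, and each is at most $C_{\mathsf{inv}}/|\samp^{-1}(\bm{y})|$. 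Hence the output law has density at most $C_{\mathsf{inv}}$ with respect to the uniform law on the fiber $\samp^{-1}(\bm{y})$. That uniform law is exactly the conditional law of the seed $\bm{U}$ given $\samp(\bm{U})=\bm{y}$. So any valid inverter must produce a random preimage whose law is $C_{\mathsf{inv}}$-dominated by $\mathrm{Law}(\bm{U}\mid \samp(\bm{U})=\bm{y})$. By a standard averaging argument, every event of conditional probability $\delta$ then receives inverter probability at most $C_{\mathsf{inv}}\delta$.

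The second step uses locality. Suppose each coordinate of $\invsamp_{\bm{r}}$ reads at most $k$ bits of $\bm{y}$. Fix a seed coordinate $j$, and call the coordinate \emph{latent} if its conditional law $\Pr[U_j=1\mid \samp(\bm{U})=\bm{y}]$ cannot be written, even approximately, as a function of a bounded number of output bits. For instance, it might vary by a constant amount as a function of bits of $\bm{y}$ at large graph distance. Averaging over $\bm{r}$, the inverter's marginal $\Pr_{\bm{r}}[\invsamp_{\bm{r}}(\bm{y})_j=1]$ is a mixture, over $\bm{r}$, of $k$-juntas of $\bm{y}$. Two constraints then pull against each other. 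The domination bound from the first step forces this marginal to agree closely with the true conditional marginal on a large-probability set of $\bm{y}$. But the conditional marginal changes as $\bm{y}$ moves outside any fixed $k$-set, and the fiber constraint $\samp(\invsamp(\bm{y}))=\bm{y}$ must hold pointwise. I would derive a contradiction by exhibiting two outputs $\bm{y},\bm{y}'$ that agree on the relevant window but whose fibers force different values of $U_j$ with constant probability. A $k$-local inverter cannot distinguish them for most seeds $\bm{r}$, so it either violates the fiber constraint or violates domination.

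The third step instantiates this for the Glauber dynamics started from a fixed $X^0$. Here a seed bit governing an update at time $t\ll T$ at site $i$ is latent. Its conditional law given $X^T$ is the law of an unobserved path bridging $X^0$ to $X^T$. Through subsequent updates, that law depends on coordinates of $X^T$ throughout a light-cone of radius growing with $T-t$, which for $T=\Theta(n\log n)$ is essentially the whole graph. The converse direction of the observation is that when each output bit of $\samp$ is a simple, local function of disjoint seed blocks, the fiber factorizes, and then a local inverter is possible. This is what motivates the construction in \Cref{sec:local-samplers-main}.

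I expect the main obstacle to be the second step. The domination bound is only one-sided, and the inverter is only required to hit \emph{some} preimage. One must therefore rule out clever inverters that output atypical preimages, rather than merely showing that the canonical conditional law is non-local. My first attempt would be to choose $\bm{y},\bm{y}'$ differing only on a set far from the window of coordinate $j$, with fibers for which $U_j$ is forced to opposite values with constant conditional probability. I would then apply the $C_{\mathsf{inv}}$ domination on each fiber separately to bound how much inverter mass can sit on the wrong value.
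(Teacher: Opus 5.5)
The paper has no formal proof of this observation. It is the takeaway of the informal Glauber discussion just before it: reversibility holds only in law, the seed-to-output map is not pointwise reversible, and with $X^0$ fixed an inverter must sample the law of the unobserved path bridging $X^0$ to $X^T$. The positive counterpart is \Cref{lem:LI-inverse-degree}. Your Step 1 is correct, but turning the observation into a theorem runs into two genuine gaps.

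\textbf{Step 2.} The set $W_{\bm r}$ of output bits read by $\invsamp(\cdot,\bm r)_j$ may depend on $\bm r$, since only each fixed seed is constrained. So $P_j(\bm y):=\Pr_{\bm r}[\invsamp(\bm y,\bm r)_j=1]$ is a mixture of $k$-juntas over varying windows, and it can depend on every output bit. There is then no ``relevant window'' on which to make $\bm y,\bm y'$ agree.

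For example, take the seed $(R,b,\bm w)$ with $R$ uniform in $[n]$ (say $n$ a power of two), and output $y_R=b$ and $y_i=w_i$ for $i\neq R$.
\begin{itemize}
\item Then $\Pr[b=1\mid \samp(\bm U)=\bm y]$ is the fraction of $+1$'s in $\bm y$. It changes by a constant when $\Theta(n)$ bits outside any fixed $k$-set are flipped.
\item Yet drawing $R,w_R$ from $\bm r$ and setting $b=y_R$, $\bm w_{-R}=\bm y_{-R}$ is an exactly uniform, $1$-local inverter.
\end{itemize}
So your notion of ``latent'' is not an obstruction. What locality really gives is $|P_j(\bm y)-P_j(\bm y')|\le \Pr_{\bm r}[W_{\bm r}\cap D\neq\emptyset]$ for the disagreement set $D$, together with $\sum_i \Pr_{\bm r}[i\in W_{\bm r}]\le k$. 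Since domination is one-sided, it pins $P_j$ down only where $\Pr[U_j=1\mid \bm y]$ is within $1/(4C_{\mathsf{inv}})$ of $0$ or $1$. A provable necessary condition is therefore: no seed coordinate has more than $2k$ output coordinates at which a single flip inside the support moves $\Pr[U_j=1\mid\bm y]$ from below $1/(4C_{\mathsf{inv}})$ to above $1-1/(4C_{\mathsf{inv}})$.

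\textbf{Step 3.} The Glauber instantiation does not produce a latent coordinate even in your sense. Lying in the light cone of distant outputs is not the same as depending on them non-negligibly. Suppose $T$ exceeds the uniform mixing time, as one may arrange. For a seed bit $U_j$ used at time $t$ with $T-t$ beyond that mixing time,
\[
\Pr[X^T=\bm y\mid U_j=b]=\sum_x \Pr[X^t=x\mid U_j=b]\,\mathsf{P}_{\mathsf{GD}}^{T-t}(x,\bm y)=(1\pm o(1))\,\mu(\bm y)
\]
for all $\bm y$ and $b$. So the conditional law of $U_j$ given $X^T$ is essentially its prior, a constant function. The early seed bits you single out therefore do not witness the obstruction through their marginals.

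The obstruction the paper points to lives in the \emph{joint} conditional law of the seed (the bridge). For each fixed $\bm r$ the inverter must output an exact preimage, which forces it to coordinate seed coordinates through correlations not mediated by the outputs. The quantity to formalize is the failure of conditional independence of seed blocks given $\samp(\bm U)$. This is precisely what \Cref{lem:LI-inverse-degree} eliminates.

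\textbf{The converse.} Your converse is misstated: outputs that are functions of disjoint seed blocks would force the sampled law to be a product. The factorization in \Cref{lem:LI-inverse-degree} instead uses $y_i=\samp_i(\bm z_i,\bm y_{T_i})$, so each seed affects other outputs only through already-computed outputs. This holds even though, as functions of the seed, the outputs of $\mathsf{SSMSamp}$ share many blocks.
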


Turning now to Condition~\ref{item:locality-overview}, an additional challenge is to control the influence of input bits in this stochastic process. At least na\"ively, the highly sequential nature of the Glauber dynamics may cause updates to propagate significantly across $G$ throughout this process. Tracing the dynamics backward in time, it does not appear possible to argue that the random seed bits used in intermediate transitions can only affect a fixed set of output bits. This issue only compounds with the use of random seed bits to select $i_t$ at each step.

However, there is a natural fix to this latter issue that will prove informative. Consider instead the \emph{sequential scan} dynamics, which updates as before except the sequence $i_t$ cycles according to a fixed permutation on $[n]$. It is similarly known that in many high-temperature settings, $T=O(n \log n)$ steps suffices to sample from $\mu$. But now, the structured organization of the scan is much more promising. Observe that one can update any \emph{independent set} in $G$ in parallel by the Markov property, and so one can choose the permutation so that independent sets may simultaneously update in a single ``meta-step." Since $G$ has degree $d=O(1)$, a standard argument (c.f. \Cref{lem:partitioning}) implies one can always do $\approx n/d=\Omega(n)$ updates in parallel, and therefore only $O(\log(n))$ meta-steps before mixing. 

In this process, it is now much easier to trace the influence of an individual site update at some $i\in [n]$ and time $t$: this update can only directly affect the update of a neighbor in $G$ in a subsequent meta-step by locality of this Markov chain. This effect may again percolate along an edge in future steps of the dynamics, but crucially, \emph{there are only $O(\log(n))$ meta-steps total}. It follows that the effect of any particular update can only propagate at distance $O(\log(n))$ in $G$. So long as balls in $G$ grow at most polynomially fast, this implies $\mathsf{polylog}(n)$ influences as in Condition~\ref{item:locality-overview}. While the scan is still a local Markov chain and so the barrier of \Cref{obs:no-latent} still applies, we can still conclude that:

\begin{observ}
\label{obs:limit-dependencies}
    To satisfy Condition~\ref{item:locality-overview}, it is sufficient to \emph{organize the sampling process using the graph structure to limit dependencies}.
\end{observ}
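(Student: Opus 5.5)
The plan is to make the observation precise by exhibiting a concrete sampler and counting dependencies. Take $\mathsf{Samp}$ to be the parallel systematic-scan dynamics described above. Fix a proper coloring $V_1,\dots,V_\chi$ of the dependence graph $G$ into independent sets, with $\chi\le \Delta(G)+1$ (a partition as in \Cref{lem:partitioning}). One \emph{round} updates $V_1$, then $V_2$, and so on up to $V_\chi$. The sampler starts from a fixed deterministic configuration $X^0$ and runs $T=O(\log n)$ rounds. For the update of site $i$ in round $t$ we reserve a private block $\bm{r}_{i,t}$ of $b=O(\log(n/\varepsilon))$ seed bits. The new spin is a deterministic threshold function: it compares the dyadic number encoded by $\bm{r}_{i,t}$ with the conditional probability $\mu(\sigma_i=+1\mid \sigma_{\mathcal{N}(i)})$, evaluated at the current neighbor values. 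By the Markov property of \Cref{def:graphical-models}, updating all sites of one color class simultaneously is the same as updating them sequentially, so this is a legitimate implementation of the scan.

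The key step is a light-cone induction. Let $X^{t,c}$ denote the configuration after color $c$ of round $t$. The claim is that $X^{t,c}_i$ is a function only of the bits $\bm{r}_{j,s}$ with $s\le t$ and $\mathrm{dist}_G(i,j)\le (t-1)\chi+c$. The base case is immediate since $X^0$ is fixed.

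For the inductive step there are two cases.
\begin{itemize}
\item If $i\notin V_c$, the value $X_i$ is unchanged and the claim passes through.
\item If $i\in V_c$, the new value is a function of $\bm{r}_{i,t}$ and of the values $X_j$ for $j\in\mathcal{N}(i)$. Each such neighbor value depends only on seeds within distance one less than the bound for $i$, so the radius grows by at most one per color step.
\end{itemize}
Since there are $T\chi$ color steps in total, output bit $i$ depends only on seeds attached to the ball $B(i,T\chi)$ across the $T$ rounds. Polynomial growth (\Cref{def:poly-growth}) gives $|B(i,r)|\le C r^{\kappa}$. The number of relevant seed bits is therefore at most
\[
T\cdot b\cdot C (T\chi)^{\kappa} = O\!\left(\log^{\kappa+2}(n/\varepsilon)\right)=\mathsf{polylog}(n),
\]
which is exactly Condition~\ref{item:locality-overview}.

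I expect the main obstacle to be not the counting but justifying that $T=O(\log n)$ rounds actually suffice for the output to approximate $\mu$; the dependency bound is only useful if the number of rounds stays logarithmic. Here I would invoke the known equivalence, under polynomial growth, between strong spatial mixing and $O(\log n)$-round mixing of the scan (\cite{DBLP:journals/rsa/DyerSVW04}). I would then absorb the $2^{-b}$ discretization error per update with a union bound over the $nT$ updates. Note also that the observation asserts sufficiency only for Condition~\ref{item:locality-overview}. Invertibility (Condition~\ref{item:upper-approx-overview}) is not claimed here, and by \Cref{obs:no-latent} it will require modifying the construction.
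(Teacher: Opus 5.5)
Your argument is essentially the paper's own justification for this observation: update the color classes of the scan in parallel, note that there are only $O(\log n)$ meta-steps so each output's light cone has radius $O(\log n)$, and use polynomial growth to turn that radius into a $\mathsf{polylog}(n)$ count of seed bits. Your induction is the same light-cone induction the paper uses formally in \Cref{prop:sampler-variable-dependence}. The one soft spot is the mixing input, which the paper also only asserts as known; moreover, \cite{DBLP:journals/rsa/DyerSVW04} is, as far as I know, stated for random-update Glauber and block dynamics on lattices rather than a fixed-order color-class scan on general polynomial-growth graphs. The paper's rigorous realization, $\mathsf{SSMSamp}$, sidesteps temporal mixing entirely by using SSM directly, with radius-$r$ conditioning over $K=\mathsf{polylog}(n)$ rounds, so the same light-cone count applies with radius $r(K-1)$.
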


To summarize, \Cref{obs:no-latent} and \Cref{obs:limit-dependencies} imply that our construction of $\mathsf{Samp}$ should organize the randomness in the sampling process by carefully exploiting the graph structure, while minimizing the dependence on latent variables generated in this process for computationally simple inversion. Since our considerations are very distinct from algorithms in the sampling community, we are not aware of an existing approach that directly imposes these desiderata.\\ 

\noindent\textbf{Iterative Samplers}: These ideas directly motivate our new sampler-inverter construction under \emph{strong spatial mixing} (c.f. \Cref{def:ssm}) and \emph{polynomial growth} of neighborhoods.
Our main result is the following:

\begin{theorem}[From SSM to Sampler-Inverter Pairs]
\label{thm:main-ssmsampler-overview}
    Suppose that a marginally bounded graphical model $\mu$ satisfies strong spatial mixing (c.f. \Cref{def:ssm}) and the dependence graph $G$ has polynomial growth. Then there exists a $(2,1)$-approximate sampler-inverter pair $(\mathsf{Samp},\mathsf{InvSamp})$ as in \Cref{def:samp-invsamp-overview}. 
    
    Moreover, $\mathsf{Samp}_i$ depends on a fixed set of $\mathsf{polylog}(n)$ fixed input bits, and each $\mathsf{InvSamp}_j$ depends on a fixed set of $\mathsf{polylog}(n)$ output bits.
\end{theorem}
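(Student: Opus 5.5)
The plan is to build $\mathsf{Samp}$ as an \emph{iterative block sampler}, organized by a coloring of a coarsened graph, in which every output spin is produced by inverse-CDF sampling from an explicit local conditional law. The inverter can then read off the seed deterministically from the output, up to the randomness inside each CDF interval.

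\textbf{Step 1 (partition).} Fix a radius $L=C\log(n)$. Using polynomial growth, partition $[n]$ into $K=\mathsf{polylog}(n)$ classes $V_1,\dots,V_K$, in the spirit of \Cref{lem:partitioning} applied to the power graph $G^{2L}$. Polynomial growth gives $G^{2L}$ maximum degree $\mathsf{polylog}(n)$, so greedy coloring suffices. Within each class, distinct sites are at graph distance $>2L$.

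\textbf{Step 2 (define $\mathsf{Samp}$).} Process the sites in the order $V_1,\dots,V_K$. A site $i\in V_k$ is assigned using fresh seed bits $\bm{r}_i$ of length $b=O(\log n)$. We set $x_i=+1$ iff the dyadic number $\bm{r}_i\in[0,1)$ is below $p_i$, the conditional marginal of $x_i$ under $\mu$. This marginal is conditioned \emph{only} on the already-assigned spins inside the ball $B(i,L)$, with all other spins left free; equivalently, we use the marginal in the truncated model on $B(i,L)$.

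By strong spatial mixing, $p_i$ differs from the true conditional $\mu(x_i=+1\mid x_{<i})$ by at most $e^{-\Omega(L)}\le n^{-10}$. By the Markov property, sites in the same class do not interact through their balls, so each site can be treated exactly as in a sequential chain-rule sampler. Bounded marginals give $p_i\in[\delta,1-\delta]$. Rounding $p_i$ to $b$ bits then makes the probability of each output $\bm{z}$ equal to
\[
\prod_i \bigl(\mu(z_i\mid z_{<i})\pm n^{-10}\bigr).
\]
This is within a factor $(1\pm n^{-9}/\delta)^n\ge 1/2$ of $\mu(\bm{z})$, which yields $C_{\mathsf{samp}}=2$.

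\textbf{Step 3 (locality of $\mathsf{Samp}$).} Output $x_i$ depends on $\bm{r}_i$ and on earlier outputs in $B(i,L)$. Unrolling through the $K$ rounds, $x_i$ depends only on seeds within distance $KL=\mathsf{polylog}(n)$. By polynomial growth, that region contains $\mathsf{polylog}(n)$ sites, each contributing $O(\log n)$ seed bits.

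\textbf{Step 4 (inverter).} Given $\bm{y}$, for each $i$ recompute $p_i$ from $y$ restricted to $B(i,L)$; this is a fixed set of $\mathsf{polylog}(n)$ output bits. Then draw $\bm{r}_i$ uniformly from the dyadic strings in $[0,p_i)$ if $y_i=+1$, and from those in $[p_i,1)$ otherwise. The preimage $\samp^{-1}(\bm{y})$ is exactly the product of these intervals, because the thresholds are determined by $\bm{y}$ itself. Uniform sampling from the product is therefore exactly uniform on the preimage, which gives $C_{\mathsf{inv}}=1$.

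\textbf{Main obstacle.} The delicate point is the SSM comparison in Step 2. The threshold used by $\mathsf{Samp}$ must be a deterministic function of \emph{local} outputs only, so that inversion stays local and exact, yet it must still track the true chain-rule conditional to within $1/\mathrm{poly}(n)$ multiplicatively. This is exactly where strong spatial mixing (rather than weak spatial mixing) is needed: the pinned set consists of previously sampled sites near $i$, so the boundary conditions are arbitrary and not just at the outer radius. The errors must also be taken multiplicatively, using bounded marginals, and summed over all $n$ sites; the constant $C$ in $L$ is chosen to make this sum small.
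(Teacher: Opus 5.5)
Your proposal is correct and is essentially the paper's proof. It colors a power of $G$ using polynomial growth (\Cref{lem:partitioning}), then sets each site in round $k$ by thresholding its local seed against $\Pr_\mu(X_v=1\mid X_{T_v}=\bm{y}_{T_v})$, where $T_v$ is the set of earlier-round sites in the ball. SSM plus marginal boundedness give the per-coordinate multiplicative bound (\Cref{thm:ssm_tv}, \Cref{lem:LI-multiplicative-approx}). Inversion is exact because the preimage is the product of per-site threshold intervals determined by $\bm{y}$ (\Cref{lem:LI-inverse-degree}); sampling each interval directly instead of by rejection sampling is only a cosmetic difference. Locality follows by unrolling $K$ rounds of radius-$L$ dependence (\Cref{prop:sampler-variable-dependence}).

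Two small corrections. First, the paper's marginal boundedness allows hard constraints, so it does not give $p_i\in[\eta,1-\eta]$; it only gives $\mu(z_i\mid \bm{z}_{<i})\ge\eta$ at supported $\bm{z}$, which is all your multiplicative step uses (your $\delta$ here also clashes with the SSM rate). Second, the threshold must be the $\mu$-conditional given the pinned spins in the ball, not the free-boundary truncated model on $B(i,L)$: these are not equivalent, and SSM only controls the former. Relatedly, earlier same-class sites are absorbed by that same SSM comparison, not by the Markov property.
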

\Cref{thm:main-ssmsampler-overview} thus shows the existence of sampler satisfying Condition~\ref{item:upper-approx-overview} and Condition~\ref{item:locality-overview}, enabling our transference results for low-degree approximations via \Cref{claim:LI-inverse-overview}. We will explain why Condition~\ref{item:locality-overview} is indeed sufficient for important families $\mathcal{F}$ shortly.

From the previous discusion, the most subtle challenge was ensuring the inversion map depends on few output bits. To address this, we define a general class of \emph{local iterative} samplers (c.f. \Cref{defn:local_samplers} with quantitative parameters). In words, we obtain the sample $\bm{y}=\mathsf{Samp}(\bm{x})$ as follows:
\begin{enumerate}
    \item First, partition the output variables into a careful choice of subsets $S_1,\ldots, S_K$, and then
    \item For each $k=1,\ldots,K$ in order, approximately sample each $y_i$ for $i\in S_k$ \emph{in parallel} from $\mu_i(\cdot\vert \bm{y}_{T_i})$, where $T_i$ is a subset of at most $\mathsf{polylog}(n)$ previously sampled outputs, using a local random seed $\bm{z}_i$.
\end{enumerate} 
Note that we have left unspecified the exact sampling procedure in the second step; as we will explain later, we only require Condition~\ref{item:locality-overview}, and so the \emph{existence} of such a local algorithm is sufficient for our applications without needing to specify the precise computational mapping.

The key intuition behind local iterative samplers is that they have computationally straightforward inversion since they completely localize the correlations in the seed by design:

\begin{claim}[informal, \Cref{lem:LI-inverse-degree}]
\label{claim:LI-inverse-overview}
    For any local iterative sampler $\mathsf{Samp}$, there exists a randomized inversion map $\mathsf{InvSamp}$ that \emph{exactly} samples uniformly from $\mathsf{Samp}^{-1}(\bm{y})$ for any $\bm{y}\in \mathsf{supp}(\mu)$, and moreover, each $\bm{z}_i=\mathsf{InvSamp}_i(\bm{y})$ depends only on $\bm{y}_{T_i}$.
\end{claim}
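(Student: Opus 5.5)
The plan is to use the product structure of the seed, $\bm{x} = (\bm{z}_1,\ldots,\bm{z}_n)$, where the seeds $\bm{z}_i$ occupy disjoint blocks of input bits. By definition of a local iterative sampler, $y_i = \phi_i(\bm{z}_i;\bm{y}_{T_i})$ for some deterministic local map $\phi_i$. Here $T_i$ is contained in $S_1\cup\cdots\cup S_{k-1}$ when $i\in S_k$. Fix an output $\bm{y}\in\mathsf{supp}(\mu)$. The first step is to show that the preimage factorizes:
\begin{equation*}
\samp^{-1}(\bm{y}) \;=\; \prod_{i=1}^n A_i(\bm{y}),\qquad A_i(\bm{y}) \coloneqq \{\bm{z}_i : \phi_i(\bm{z}_i;\bm{y}_{T_i}) = y_i\}.
\end{equation*}
I would prove this by induction over the stages $k=1,\ldots,K$.

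For the inclusion $\supseteq$, take $\bm{x}$ with each $\bm{z}_i\in A_i(\bm{y})$. By induction, the outputs of $\samp(\bm{x})$ on $S_1\cup\cdots\cup S_{k-1}$ agree with $\bm{y}$. Hence for $i\in S_k$, the sampler evaluates $\phi_i$ on exactly the conditioning values $\bm{y}_{T_i}$, and so outputs $y_i$.

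For the inclusion $\subseteq$, suppose $\samp(\bm{x})=\bm{y}$. Then the conditioning values actually used at each site are $\bm{y}_{T_i}$, so each $\bm{z}_i$ lies in $A_i(\bm{y})$.

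The key point is that $A_i(\bm{y})$ depends on $\bm{y}$ only through $y_i$ and $\bm{y}_{T_i}$. I will therefore refer to "$\bm{y}_{T_i}$" as shorthand for this pair, as in the claim. Each $A_i(\bm{y})$ is also nonempty, since $\bm{y}\in\mathsf{supp}(\mu)$ must be reachable. This is where the preceding exactness requirement enters: the support of $\samp$ must contain $\mathsf{supp}(\mu)$, and I would assume this as part of \Cref{defn:local_samplers}.

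Next, I would define $\invsamp$ using the auxiliary seed $\bm{r}=(\bm{r}_1,\ldots,\bm{r}_n)$ with independent blocks. Set $\invsamp_i(\bm{y},\bm{r}) \coloneqq$ a uniformly random element of $A_i(\bm{y})$, generated from $\bm{r}_i$, for instance by indexing into a fixed enumeration of $A_i(\bm{y})$. Because the blocks $\bm{r}_i$ are independent and the preimage is a product set, the output is uniform on $\prod_i A_i(\bm{y})=\samp^{-1}(\bm{y})$. This gives $C_{\mathsf{inv}}=1$ with equality in \Cref{def:samp-invsamp-overview}, and it is always a valid preimage. Locality also follows, since the $i$-th block depends only on $\bm{y}_{T_i}$ (and $y_i$).

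I expect the main obstacle to be a technical one: exact uniform sampling from a finite set using random bits. If $|A_i(\bm{y})|$ is not a power of two, a finite uniform bit string cannot hit each element with exactly equal probability. I would handle this by letting $\bm{r}_i$ range over an arbitrary finite uniform space, such as $[\mathrm{lcm}]$ of all possible set sizes, which is allowed because the claim only requires the auxiliary randomness to be independent. Alternatively, I would accept a bounded $C_{\mathsf{inv}}$ rather than exactly $1$.

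The rest is bookkeeping, which consists of checking that the induction over the stages respects the ordering $T_i\subseteq S_{<k}$. The product structure is the whole content of the argument: it is precisely the "no latent dependencies" design of \Cref{obs:no-latent}.
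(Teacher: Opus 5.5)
Your proof is correct and follows essentially the same approach as the paper. Both arguments factor the preimage $\samp^{-1}(\bm y)$ as $\prod_i A_i(\bm y)$ and then have $\invsamp$ draw each block independently and uniformly from $A_i(\bm y)$; you prove the factorization as a set identity by induction over stages, whereas the paper uses a conditional-independence (Markov chain) argument. Two small remarks: nonemptiness of $A_i(\bm y)$ needs no extra assumption, because item 2 of \Cref{defn:local_samplers} has a strictly positive left-hand side for $\bm y\in\mathsf{supp}(\mu)$; and the paper avoids your power-of-two issue by rejection sampling, taking $\bm z_i$ to be the first of infinitely many i.i.d.\ uniform $s$-bit blocks that lands in $A_i(\bm y)$, which is exactly uniform on $A_i(\bm y)$ and terminates almost surely.
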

While the formal proof of \Cref{claim:LI-inverse-overview} is tedious, the intuition is straightforward. We claim that the uniform distribution over the preimage $\mathsf{Samp}^{-1}(\bm{y})$ is \emph{precisely} the product of the uniform distributions of individual preimages $\mathsf{Samp}_i^{-1}(y_i;\bm{y}_{T_i})$, where we view $\mathsf{Samp}_i$ here as the restricted function that fixes $\bm{y}_{T_i}$. In particular, an efficient inverter $\mathsf{InvSamp}(\bm{y})$ simply performs rejection sampling independently for each ouput variable until finding an input $\bm{z}_i$ for each $i\in [n]$ satisfying
\begin{equation*}
    y_i = \mathsf{Samp}_i(\bm{z}_i; \bm{y}_{T_i}).
\end{equation*}

The key observation underlying this claim is that conditioned on the output $\bm{y}=\mathsf{Samp}(\bm{x})$, all of the local seeds become \emph{conditionally independent}. Indeed, once an output $y_i$ is obtained using the local seed $\bm{z}_i$ and $\bm{y}_{T_i}$, a local iterative sampler by design prevents any further dependence on the actual value of the local seed $\bm{z}_i$ since all subsequent correlations factor only through the output $y_i$. Therefore, the rejection sampling inversion as described above succeeds, and depends only on the outputs $\bm{y}_{T_i}$. Thus, this construction immediately resolves the issue of latent correlations from \Cref{obs:no-latent}.

Given \Cref{claim:LI-inverse-overview}, our task reduces to constructing a local iterative sampler, which amounts to choosing a careful partition $S_1\sqcup \ldots\sqcup S_K$, along with the subsets $T_1,\ldots,T_n$, such that the above construction indeed satisfies \Cref{def:samp-invsamp-overview} while also satisfying the Condition~\ref{item:locality-overview}. 

To do this, we heavily exploit strong spatial mixing to construct this partition and argue about the accuracy: informally, strong spatial mixing (c.f. \Cref{def:ssm}) asserts that the effect of variables on the conditional law of a variable $y_i$ \emph{decays exponentially} with the distance to $i$ for any choice of conditioning. For our purposes, the important point is that under strong spatial mixing, for any variable $y_i$ and set of already sampled nodes $\Lambda$, the conditional distribution of $y_i$ depends mostly on the values on $\Lambda \cap B_r(i)$, where $B_r(i)$ is the set of variables with distance at most $r$ from $i$ in $G$ and $r=O(\log(n))$. In particular, we have the following straightforward observation:

\begin{claim}[SSM to Parallel Sampling, e.g. \Cref{eq:SSSamp-additive}]
\label{claim:ssm-to-parallel}
    Suppose $\mu$ satisfies strong spatial mixing. Then for any subset $\Lambda$, and any subset of variables $U$ such that the pairwise distance of any $i,j\in U$ is at least $r=O(\log(n))$, the variables $(y_i)_{i\in U}$ are nearly conditionally independent given any configuration $\bm{y}_{\Lambda}$, and moreover, the conditional law of each $y_i$ given $\bm{y}_{\Lambda}$ is approximately the conditional law of each $y_i$ given just $\bm{y}_{\Lambda \cap B_r(i)}$.
\end{claim}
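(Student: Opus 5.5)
The plan is to derive both conclusions of \Cref{claim:ssm-to-parallel} from the defining inequality of strong spatial mixing. For a vertex $i$, a set $\Lambda$, and two pinnings $\sigma,\tau$ of $\Lambda$, SSM gives
\[
d_{TV}\bigl(\mu_i(\cdot\mid\sigma),\mu_i(\cdot\mid\tau)\bigr)\le C e^{-c\,\mathrm{dist}(i,\Lambda_{\sigma,\tau})}.
\]
We apply this with $r=C'\log n$, where $C'$ is chosen so that $Ce^{-cr}\le n^{-10}$. Let $\delta$ denote this bound. Every error term below is then at most $\mathrm{poly}(n)\cdot\delta$, which is negligible.

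\textbf{Approximate locality.} Fix $i\notin\Lambda$ and a configuration $\bm{y}_\Lambda$. The conditional law $\mu_i(\cdot\mid \bm{y}_{\Lambda\cap B_r(i)})$ is an average of the laws $\mu_i(\cdot\mid \bm{y}_{\Lambda\cap B_r(i)},\tau_{\Lambda\setminus B_r(i)})$, where $\tau$ is drawn from the conditional law of the far-away part. Each term in this average is a pinning of the full set $\Lambda$. It disagrees with $\bm{y}_\Lambda$ only on $\Lambda\setminus B_r(i)$, and that set is at distance greater than $r$ from $i$. By SSM and convexity of total variation,
\[
d_{TV}\bigl(\mu_i(\cdot\mid\bm{y}_\Lambda),\,\mu_i(\cdot\mid\bm{y}_{\Lambda\cap B_r(i)})\bigr)\le\delta.
\]
Bounded marginals then upgrade this additive bound to a multiplicative $(1\pm O(\delta))$ bound on point probabilities.

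\textbf{Near conditional independence.} Order $U=\{u_1,\dots,u_m\}$ and use the chain rule:
\[
\mu(\bm{y}_U\mid\bm{y}_\Lambda)=\prod_j \mu_{u_j}\bigl(y_{u_j}\mid \bm{y}_\Lambda,\bm{y}_{u_1},\dots,\bm{y}_{u_{j-1}}\bigr).
\]
The added pinnings $y_{u_1},\dots,y_{u_{j-1}}$ all lie at distance at least $r$ from $u_j$. Comparing the $j$-th factor with $\mu_{u_j}(y_{u_j}\mid\bm{y}_\Lambda)$, the two conditionings differ only on far vertices, so the same averaging argument with SSM applies. Each factor changes by a multiplicative $1\pm O(\delta)$. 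Over $m\le n$ factors this gives
\[
\mu(\bm{y}_U\mid\bm{y}_\Lambda)=(1\pm O(n\delta))\prod_j\mu_{u_j}\bigl(y_{u_j}\mid\bm{y}_{\Lambda\cap B_r(u_j)}\bigr),
\]
using the locality bound from the previous step. This proves both parts of the claim at once.

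\textbf{Main obstacle.} The delicate step is converting the TV bounds from SSM into multiplicative point-probability bounds that stay uniform over all pinnings. This is needed because later the sampler must satisfy a ratio bound such as the constant $C_{\mathsf{samp}}=2$. Bounded marginals, meaning every conditional probability is at least some constant $b>0$, make the conversion work with a loss of only a factor $1/b$. We would also need to check that SSM as defined in \Cref{def:ssm} allows pinnings of arbitrary sets $\Lambda$, including sets containing the other vertices of $U$, so that the averaging argument is legitimate.
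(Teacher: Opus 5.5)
Your argument is correct and is essentially the paper's. Averaging over pinnings of the far part of the boundary and then applying \Cref{def:ssm} is exactly how \Cref{eq:SSSamp-additive} is justified in the proof of \Cref{thm:ssm_tv}. Likewise, the chain rule combined with marginal boundedness (to pass from additive to multiplicative error) is how the paper obtains near-independence through \Cref{defn:local_samplers} and \Cref{lem:LI-multiplicative-approx}.

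There are only two cosmetic differences. First, the paper handles your two comparisons (dropping the same-round pinnings and localizing to $B_r$) with a single SSM application. Second, the paper's marginal-boundedness assumption allows zero conditionals, but your conversion handles this anyway: two $\mu$-conditionals that are within your $\delta<\eta$ of each other either both vanish or are both at least $\eta$.
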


In light of \Cref{claim:ssm-to-parallel} we define the partition $S_1\sqcup \ldots\sqcup S_K$ to be any minimal partition such that within each subset $S_i$, all variables are $r=O(\log(n))$-separated in $G$.\footnote{In other words, the partition forms a minimal coloring in $G^r$.} Under the polynomial growth condition, it is straightforward (c.f. \Cref{lem:partitioning}) to see that one can ensure $K:=n/\mathsf{polylog}(n)$ since the balls of radius $r$ contain at most $\mathsf{polylog}(n)$ variables by assumption. With our previous notation, if $i\in S_j$, we define $T_i:=S_{<j}\cap B_r(i)$. By construction, each $T_i$ is of size at most $\vert B_r(i)\vert\leq \mathsf{polylog}(n)$. The full sampler appears as \Cref{alg:ssm-sampler}.
That this local iterative sampler succeeds in approximately sampling from $\mu$ is an immediate consequence of \Cref{claim:ssm-to-parallel}:
\begin{enumerate}
    \item Strong spatial mixing ensures that the parallel sampling at each time $t$ is nearly exact since the variables in $S_t$ are almost conditionally independent, and

    \item \Cref{claim:ssm-to-parallel} again implies that instead of conditioning on the entire past, one can condition just on the ball as in $T_i$.
\end{enumerate} 
This sampling process can thus can be coupled to an \emph{exact} iterative sampler for $\mu$  up to small error (c.f. \Cref{lem:LI-multiplicative-approx}).

Finally, it remains to argue about the locality of $\mathsf{Samp}_i$. For this, a simple induction (c.f. \Cref{prop:sampler-variable-dependence}) on $k=1,\ldots,K$ ensures that each output bit $y_i=\mathsf{Samp}_i(\bm{x})$ depends only on at most $\mathsf{polylog}(n)$ input bits; intuitively, an output bit $y_i$ directly depends on variables that are $r$-close in $G$, which may in turn directly depend on variables $r$-close in $G$ to them, and so on. However, since $\mathsf{Samp}$ has only $K=\mathsf{polylog}(n)$ parallel rounds, it follows that these dependencies can only traverse graph distance at most $r\cdot (K-1)=\mathsf{polylog}(n)$. Since $G$ has polynomial growth, we conclude that there are at most $\mathsf{polylog}(n)$ such seed variables that can affect $y_i=\mathsf{Samp}_i(\bm{x})$. By \Cref{claim:LI-inverse-overview}, this completes the overview of the proof of \Cref{thm:main-ssmsampler-overview}.

\Cref{thm:main-ssmsampler-overview} almost immediately implies the existence of low-degree polynomial approximators for $\mathsf{AC}^0$:

\begin{corollary}[informal, \Cref{thm:low-deg-ac0}]
\label{cor:low-deg-ac0-overview}
Suppose that a marginally bounded graphical model $\mu$ satisfies strong spatial mixing and the dependence graph $G$ has polynomial growth. Let $\mathcal{F}=\mathsf{AC}^0(d)$, the class of $\mathsf{poly}(n)$ size circuits of depth at most $d$. Then for all $f\in \mathcal{F}$, and any $\varepsilon>0$, there exists a polynomial $p:\{\pm 1\}^n\to \mathbb{R}$ of degree at most $O(\log(n))^{O(d)}\cdot \log(1/\varepsilon)$ such that
\begin{equation*}
        \underset{\bm{y}\sim \mu}{\mathbb{E}}\left[\left(f(\bm{y})-p(\bm{y})\right)^2\right]\leq \varepsilon,
    \end{equation*}
\end{corollary}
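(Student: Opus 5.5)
We prove \Cref{cor:low-deg-ac0-overview} by combining \Cref{thm:main-ssmsampler-overview} with \Cref{thm:sampler_approx_reduction_informal}. Let $(\samp,\invsamp)$ be the $(2,1)$-approximate sampler-inverter pair from \Cref{thm:main-ssmsampler-overview}. Write $m$ for the seed length. Each output bit $\samp_i$ depends on a fixed set of at most $k_1=\mathsf{polylog}(n)$ seed bits, and each coordinate $\invsamp_j(\cdot,\bm{r})$ depends on at most $k_2=\mathsf{polylog}(n)$ output bits for every seed $\bm{r}$. We may take $m\le \poly(n)$, since each of the $n$ outputs reads only $\mathsf{polylog}(n)$ seed bits and unused seed bits can be discarded. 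It then suffices to verify the two hypotheses of \Cref{thm:sampler_approx_reduction_informal}: low-degree approximability of $f\circ\samp$ under the uniform distribution, and locality of $\invsamp$. The second hypothesis is given directly by the theorem with $k=k_2$.

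For the first hypothesis, fix $f\in\mathsf{AC}^0(d)$ of size $s=\poly(n)$. Each $\samp_i$ is a Boolean function of $k_1$ bits, so it can be written as a DNF (or CNF) with at most $2^{k_1}=2^{\mathsf{polylog}(n)}$ terms, each of width $k_1$. This size is quasipolynomial rather than polynomial, so I would avoid substituting the DNFs naively and instead track the parameters carefully. Substituting these DNFs at the inputs of $f$ gives a circuit for $f\circ\samp$ on $m$ variables of depth $d+2$; merging the new top layer with $f$'s bottom layer where possible keeps the depth at $d+1$. Its size is $s' \le s + n2^{k_1} = 2^{\mathsf{polylog}(n)}$, and its bottom fan-in is at most $k_1$.

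We then apply Tal's bound \cite{tal_ac0}, or more generally the LMN/H\aa stad switching-lemma bound. A depth-$D$ circuit of size $s'$ with bottom fan-in $w$ has Fourier tail $\sum_{|S|>t}\widehat{g}_S^2\le \varepsilon$ once $t\ge O(\log s')^{D-2}\cdot w\cdot\log(1/\varepsilon)$, or $O(\log s')^{D-1}\log(1/\varepsilon)$ in Tal's form. Since $\log s'=\mathsf{polylog}(n)$, this gives $t=\mathsf{polylog}(n)^{O(d)}\log(1/\varepsilon)=O(\log n)^{O(d)}\log(1/\varepsilon)$. Truncating the Fourier expansion at degree $t$ yields an $\ell_2$ approximator $g$ of $f\circ\samp$ under the uniform distribution with error $\varepsilon$.

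Finally, \Cref{thm:sampler_approx_reduction_informal} gives a polynomial $p=g\circ\invsamp_{\bm{r}^*}$ for some fixing $\bm{r}^*$ of the auxiliary seed. Its degree is at most $k_2 t=O(\log n)^{O(d)}\log(1/\varepsilon)$, and its error under $\mu$ is at most $C_{\mathsf{samp}}C_{\mathsf{inv}}\varepsilon=2\varepsilon$. Rescaling $\varepsilon$ by $2$ gives the claimed bound. The degree accounting works because each seed coordinate is a function of $k_2$ output bits, hence a multilinear polynomial of degree at most $k_2$, and composing with a degree-$t$ polynomial multiplies the degrees.

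The main obstacle is the size blowup in the first hypothesis: the circuit for $f\circ\samp$ has quasipolynomial rather than polynomial size. I would handle this by checking that the switching-lemma bound depends on $s'$ only through $\log s'$, and then absorbing the resulting extra polylog factors into the $O(d)$ exponent. A secondary point is that \Cref{thm:sampler_approx_reduction_informal} is stated for the $\ell_2$ error measure $\E[(f-p)^2]$. This is fine because the change of measure is a pointwise domination of densities, which holds for any nonnegative loss.
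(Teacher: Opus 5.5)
Your proposal is correct and follows the paper's proof of \Cref{thm:low-deg-ac0} essentially step for step. You hard-code each $\samp_i$, which reads $\mathsf{polylog}(n)$ seed bits, as a depth-2 circuit of size $2^{\mathsf{polylog}(n)}$, apply Tal's bound to the resulting quasipolynomial-size circuit for $f\circ\samp$ (the paper simply uses depth $d+2$ without merging layers), and then compose with the $\mathsf{polylog}(n)$-local inverter via \Cref{thm:sampler_approx_reduction_informal}, using $C_{\mathsf{samp}}C_{\mathsf{inv}}\le 2$.
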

See \Cref{thm:low-deg-ac0} for a precise statement with all hidden dependencies on the spatial mixing, growth, and marginal boundedness parameters. The proof of \Cref{cor:low-deg-ac0-overview} is nearly immediate from \Cref{thm:sampler_approx_reduction_informal} and \Cref{thm:main-ssmsampler-overview}, except for the reduction from Condition~\ref{item:low-deg-composition} to Condition~\ref{item:locality-overview} that we deferred. The key observation is the following: since $\mathsf{Samp}_i$ depends only on $\chi=\mathsf{polylog}(n)$ input bits, $\mathsf{Samp}$ can be trivially represented as a depth \emph{two} circuit of size $n\cdot 2^{\chi}$ by hard-coding the function. The resulting composition $f\circ \mathsf{Samp}$ is now depth $d+2$ with a blowup to quasipolynomial $n^{\mathsf{polylog}(n)}$ size. However, it is well-known (c.f. \Cref{fact:lmn_polynomials}) that such circuits admit $\ell_2$-approximations of degree at most 
\begin{equation*}
    O\left(\log^{d+1}\left(n^{\mathsf{polylog}(n)}\right)\right)\cdot \log(1/\varepsilon)=O\left(\log(n)\right)^{O(d)}\cdot \log(1/\varepsilon).
\end{equation*}
\Cref{thm:lmn} is an immediate consequence of standard learning reductions.

While it is not clear how to fully generalize the construction of \Cref{thm:main-ssmsampler-overview} to hold under weaker conditions, particularly polynomial growth, we provide evidence that such an extension may be possible. We show that these samplers can be constructed in any \emph{tree-structured graphical model} under marginal boundedness, but no high-temperature or growth restriction:

\begin{theorem}[informal, \Cref{thm:ac0-trees-approx}]
    Suppose that $\mu$ is a marginally bounded tree-structured graphical model. Let $\mathcal{F}=\mathsf{AC}^0(d)$, the class of $\mathsf{poly}(n)$ size circuits of depth at most $d$. Then for all $f\in \mathcal{F}$, and any $\varepsilon>0$, there exists a polynomial $p:\{\pm 1\}^n\to \mathbb{R}$ of degree at most $O(\log(n))^{2(d+1)}\cdot \log(1/\varepsilon)$ such that
\begin{equation*}
        \underset{\bm{y}\sim \mu}{\mathbb{E}}\left[\left(f(\bm{y})-p(\bm{y})\right)^2\right]\leq \varepsilon,
    \end{equation*}
\end{theorem}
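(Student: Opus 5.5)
Via the sampler-to-approximation reduction of \Cref{thm:sampler_approx_reduction_informal}, the goal is a local iterative sampler-inverter pair for a marginally bounded tree-structured $\mu$ with three properties: each $\mathsf{Samp}_i$ depends on $O(\log n)^2$ seed bits, each $\mathsf{InvSamp}_j$ depends on $O(\log n)^2$ output bits, and the pair is $(O(1),1)$-approximate. SSM is unavailable on trees at low temperature, so the construction will use exact conditional sampling along the tree, with the depth of dependence controlled by a balanced decomposition instead of decay of correlations.

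\textbf{Ordering.} Take a centroid decomposition of the tree $T$. Removing a centroid vertex $c$ leaves components of size at most $n/2$, so the recursion has depth $O(\log n)$. Let $S_k$ be the set of centroids at recursion level $k$, for $k=1,\ldots,K=O(\log n)$, and sample level by level. By the Markov property on a tree, conditioned on all previously sampled vertices (the centroids of the ancestor levels), the law of a centroid $c$ at level $k$ depends only on the boundary vertices of its current component. The component is connected, so these are at most the $\le k$ ancestor centroids adjacent to it. Vertices at the same level lie in different components separated by sampled vertices, so they are exactly conditionally independent and can be sampled in parallel. Set $T_c$ to be the adjacent ancestor centroids, giving $|T_c|\le O(\log n)$.

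\textbf{Seeds and exactness.} Sample each $y_c$ from $\mu_c(\cdot\mid \bm y_{T_c})$ by thresholding a seed $\bm z_c$ of $m=O(\log n)$ uniform bits, i.e.\ by comparing the dyadic number $\bm z_c$ to the conditional probability rounded to precision $2^{-m}$. Marginal boundedness says every conditional probability is at least some $b>0$. The rounding therefore incurs multiplicative error $1\pm 2^{-m}/b$ per step, and over $n$ steps the product is $(1+1/n^2)^n\le 2$ for $m=O(\log(n/b))$. This yields $C_{\mathsf{samp}}\le 2$. \Cref{claim:LI-inverse-overview} then gives an exact uniform inverter with $C_{\mathsf{inv}}=1$, where $\bm z_c$ depends only on $\bm y_{T_c}\cup\{y_c\}$, i.e.\ $O(\log n)$ output bits. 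Each $\mathsf{InvSamp}_j$ therefore has degree $O(\log n)$.

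\textbf{Locality of $\mathsf{Samp}$.} This is the main obstacle. By induction over levels, $y_c$ depends on its own seed and on the seed-sets of $y_{T_c}$. Naively this gives a recursion $D(k)\le m+k\,D(k-1)$, which is superpolynomial. The fix is that each vertex in $T_c$ is an ancestor of $c$ in the centroid tree. The dependency set of $y_c$ is therefore contained in the seeds of the chain of centroid-tree ancestors of $c$, which has only $K=O(\log n)$ members. This gives $O(\log n)\cdot m=O(\log n)^2$ seed bits. I would verify the chain claim by induction: everything in $T_c$ is an ancestor, and the ancestors of an ancestor are ancestors.

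\textbf{Degree bound.} Each $\mathsf{Samp}_i$ is a function of $\chi=O(\log^2 n)$ bits, so it is computable by a depth-2 circuit of size $2^{\chi}=n^{O(\log n)}$. Hence $f\circ\mathsf{Samp}$ has depth $d+2$ and size $n^{O(\log n)}$, and by \Cref{fact:lmn_polynomials} it has $\ell_2$ approximators of degree $O(\log(n^{O(\log n)}))^{d+1}\log(1/\varepsilon)=O(\log n)^{2(d+1)}\log(1/\varepsilon)$. Composing with $\mathsf{InvSamp}$ multiplies the degree by $O(\log n)$, and the error by at most $2$, which is absorbed by rescaling $\varepsilon$. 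The total degree is therefore $O(\log n)^{2(d+1)+1}\log(1/\varepsilon)$, one logarithm more than the stated $O(\log n)^{2(d+1)}\log(1/\varepsilon)$. To recover the stated bound I would try to bound $\mathsf{InvSamp}$ more cleverly, for instance by noting that the inverter's coordinates are Boolean functions of few bits that the $\ell_2$ approximator only needs to see restricted, or by accounting the exponents differently.
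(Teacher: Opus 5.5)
Your construction is sound as far as it goes. The centroid levels give exact conditional independence within a level by the Markov property. Thresholded seeds of length $O(\log(n/\eta))$ give $C_{\mathsf{samp}}\le 2$ and $C_{\mathsf{inv}}=1$. The ancestor-chain induction correctly gives locality $\chi=O(\log n\cdot\log(n/\eta))$. What it proves, though, is degree $O(\log n)^{2d+3}\log(1/\varepsilon)$, and the extra $\log n$ cannot be removed by bookkeeping. In \Cref{thm:low_deg_sampler} the final degree is $k\cdot t$, where $g$ is an arbitrary degree-$k$ approximator and $t$ is the degree of $\bm y\mapsto \mathsf{InvSamp}(\bm y,\bm r^*)$; here $t=|T_c|+1$. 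In a centroid decomposition a component can stay adjacent to a centroid from every earlier level, so $|T_c|$ is genuinely of order $\log n$. Moreover, the conditional law of $X_c$ in general depends on all of those boundary spins. The missing idea is to make the set the inverter reads have constant size while keeping the dependence depth $O(\log n)$.

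The paper achieves this by decoupling the two roles.
\begin{itemize}
\item It applies \Cref{thm:low_deg_sampler} to the top-down $\mathsf{TreeSamp}$. There $T_u=\{\mathrm{Pa}(u)\}$, so the inverter has constant degree, even though $\mathsf{TreeSamp}$ itself is not local.
\item Locality is needed only for condition 1, which is an average-case $\ell_2$ statement under uniform seeds and so tolerates rare disagreement. $\mathsf{LocalTreeSamp}$ recomputes $y_u$ from the nearest of its $r=O(\log(n/\varepsilon)/\eta)$ ancestors $w$ with $[.\bm z_w]_2<\eta$. Such a seed forces $y_w=\sigma^*_w$ regardless of the parent; this is where the second, fixed-sign marginal-boundedness condition enters.
\item By \Cref{prop:tree-coupling}, $\mathsf{LocalTreeSamp}$ agrees with $\mathsf{TreeSamp}$ except with probability $\varepsilon'$. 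Hence the bound of \Cref{fact:lmn_polynomials} for $f\circ\mathsf{LocalTreeSamp}$, with $\chi=O(\log^2(n/\varepsilon)/\eta)$, transfers to $f\circ\mathsf{TreeSamp}$.
\end{itemize}

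Alternatively, you can repair your own route with a bounded-boundary separator hierarchy.
\begin{itemize}
\item If a cluster has at most two already-sampled neighbors, remove its centroid.
\item If it has three, remove the median of their three attachment points in the cluster. Each resulting sub-cluster then has at most two sampled neighbors.
\end{itemize}
This keeps $|T_c|\le 3$. The depth stays $O(\log n)$, since sizes halve at least every other level. Your ancestor-chain argument is unchanged, because every sampled neighbor of a cluster is still the separator of an ancestor cluster. This yields the stated exponent. It also keeps $\chi$ free of $\varepsilon$ and avoids the fixed-$\sigma^*$ assumption; the paper's coupling argument needs both.
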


The construction is based on similar principles (c.f. \Cref{alg:tree-sampler}): one may actually define the partition to be the variables at each distance from some fixed root. Note however that there are technical subtleties in implementing this directly since locality of $\mathsf{Samp}$ does not hold; we defer further discussion to \Cref{sec:local_trees}. 

\subsection{Low-Degree Approximations Beyond Low-Depth Circuits}
\label{sec:beyond_low_depth}
A natural question is whether one can obtain learning results, or more generally low-degree approximations, under graphical models for other classes apart from $\mathsf{AC}^0$. Indeed, a key feature of the previous section was the closure property that $\mathcal{F}\circ \mathsf{Samp}$ nearly belongs to $\mathcal{F}$, thus enabling reducing to the uniform distribution. We show that  low-degree approximation results extend more generally to the class of \emph{low-influence} functions, defined with respect to $\mu$, by carefully leveraging functional inequalities from rapid mixing. Finally, we show that by leveraging the analysis of recent algorithms for sampling from Ising models, one can extend low-degree approximation results for linear threshold functions from the uniform distribution to a large class of Ising models at high-temperature.

\subsubsection{Low Influence Functions}
Over the uniform distribution, it is well-known that low-degree approximability is intimately related to probabilistic notions like influence and noise-sensitivity~\cite{o2021analysis}. Recall that the \emph{(uniform) influence} $\mathsf{I}[f]$ of a function $f:\{\pm 1\}^n\to \{\pm 1\}$ is the expected number of \emph{pivotal} coordinates that would change the value of $f(\bm{x})$ at a uniform $\bm{x}\sim \{\pm 1\}^n$. It is well-known and elementary~\cite{o2021analysis} (c.f. \Cref{prop:influence-markov}) that every Boolean function $f:\{\pm 1\}^n\to \{\pm 1\}$ has an $\ell_2$ approximating polynomial of degree at most $\mathsf{I}[f]/\varepsilon$, implying learning for any class with low Boolean influence. Formally, this connection follows from the special analytic fact that the polynomial basis is an eigenbasis for the simple random walk on the hypercube (i.e. Glauber dynamics), and the higher-order terms decay rapidly under this random walk. 

For any distribution $\mu$, one can define the corresponding notion of $\mu$-influence (also known as the \emph{Dirichlet form} of Glauber dynamics) as
\begin{equation*}
    \mathsf{I}_{\mu}[f]=n\cdot \underset{\bm{y}\sim \mu}{\Pr}\left(f(\bm{y})\neq f(\bm{y}')\right),
\end{equation*}
where $\bm{y}'$ is obtained by applying a Glauber step from $\bm{y}\sim \mu$. It is natural to wonder whether functions with \emph{low $\mu$-influence} similarly admit low-degree approximations. However, an immediate challenge is that the monomials are no longer eigenvectors of the Glauber dynamics. These eigenvectors are now exponential-size objects that admit no simple characterization, and we are not aware of any known results of the higher-order spectrum of the Glauber dynamics beyond the second eigenvalue (which implies rapid mixing) to otherwise match the spectral behavior of the Boolean hypercube.

Nevertheless we show that low $\mu$-influence functions still admit low-degree polynomial approximations over graphical models satisfying the previous conditions.

\begin{theorem}[informal, \Cref{thm:low-deg-low-influence}]
\label{thm:low-influence-overview}
    Suppose that a marginally bounded graphical model $\mu$ satisfies strong spatial mixing and the dependence graph $G$ has polynomial growth. Let $\mathcal{F}$ be a concept class of Boolean functions such that all $f\in \mathcal{F}$ satisfy $\mathsf{I}_{\mu}[f]\leq \Lambda$. Then for all $f\in \mathcal{F}$, and any $\varepsilon>0$, there exists a polynomial $p:\{\pm 1\}^n\to \mathbb{R}$ of degree at most $\mathsf{polylog}(n)\cdot \Lambda/\varepsilon$ such that
\begin{equation*}
        \underset{\bm{y}\sim \mu}{\mathbb{E}}\left[\left(f(\bm{y})-p(\bm{y})\right)^2\right]\leq \varepsilon.
    \end{equation*}
\end{theorem}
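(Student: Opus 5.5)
We prove \Cref{thm:low-influence-overview} by transferring influence to the uniform distribution through the sampler of \Cref{thm:main-ssmsampler-overview}, then invoking \Cref{thm:sampler_approx_reduction_informal}. Concretely, let $(\samp,\invsamp)$ be the $(2,1)$-approximate sampler-inverter pair, with each $\samp_i$ depending on a fixed set of $\chi=\mathsf{polylog}(n)$ seed bits and each $\invsamp_j$ depending on $\mathsf{polylog}(n)$ output bits. For $f\in\mathcal{F}$, consider $g=f\circ\samp$ on the seed cube $\{\pm1\}^m$. If we show the uniform influence satisfies $\mathsf{I}[g]\le \mathsf{polylog}(n)\cdot \mathsf{I}_\mu[f]$, then \Cref{prop:influence-markov} gives a degree $\mathsf{I}[g]/\varepsilon$ $\ell_2$-approximator of $g$ under uniform. \Cref{thm:sampler_approx_reduction_informal} then multiplies the degree by $k=\mathsf{polylog}(n)$ and the error by $C_{\mathsf{samp}}C_{\mathsf{inv}}=2$. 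Rescaling $\varepsilon$ yields degree $\mathsf{polylog}(n)\cdot\Lambda/\varepsilon$.

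The heart of the argument is this influence transfer. Write $\mathsf{I}[g]=\sum_{j\le m}\Pr_{\bm{x}}[g(\bm{x})\ne g(\bm{x}^{\oplus j})]$. Flipping seed bit $j$ changes $\samp(\bm{x})$ only on the output coordinates $D_j=\{i: \samp_i \text{ depends on } x_j\}$. By the locality analysis (\Cref{prop:sampler-variable-dependence}) together with polynomial growth, $D_j$ lies in a graph ball of radius $\mathsf{polylog}(n)$, so $|D_j|\le\mathsf{polylog}(n)$. Moreover, each output coordinate lies in $D_j$ for at most $\mathsf{polylog}(n)$ indices $j$. Setting $\bm{y}=\samp(\bm{x})$ and $\bm{y}'=\samp(\bm{x}^{\oplus j})$, the event $f(\bm{y})\ne f(\bm{y}')$ is a change of $f$ under a resampling of a local block. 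We want to bound this by Glauber-type changes. The plan is to compare with the block dynamics that resamples $\bm{y}_{B}$ from $\mu(\cdot\mid \bm{y}_{B^c})$ for a ball $B\supseteq D_j$ of polylog size. Since $\bm{y}$ is approximately $\mu$-distributed (with multiplicative factor $2$), we can change measure so that the pair $(\bm{y},\bm{y}')$ is dominated, up to a constant, by a pair differing only on $B$ with both marginals near $\mu$.

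The probability that $f$ changes under a block resample of $B$ should then be bounded by $|B|$ times, roughly, the single-site Dirichlet form restricted to $B$. We get this by walking from $\bm{y}_B$ to $\bm{y}'_B$ through single-site Glauber moves inside $B$ (a path/canonical-path argument). Alternatively, we can use the local Poincaré/approximate tensorization available under SSM, which bounds the block Dirichlet form by $\mathsf{polylog}(n)$ times $\sum_{i\in B}\E_\mu[\mathrm{Var}_i f]$. Marginal boundedness makes each single-site conditional probability at least a constant, so path arguments lose only $\exp(O(|B|))$ factors. That is too much for $|B|=\mathsf{polylog}(n)$, so the Poincaré route is preferred. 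Summing over $j$ and using the bounded overlap of the $D_j$ gives $\mathsf{I}[g]\le\mathsf{polylog}(n)\cdot\sum_i\E_\mu[\mathrm{Var}_i f]\lesssim \mathsf{polylog}(n)\cdot\mathsf{I}_\mu[f]$.

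\textbf{Main obstacle.} I expect the main difficulty to be exactly this step: controlling the joint law of $(\samp(\bm{x}),\samp(\bm{x}^{\oplus j}))$, which is not a reversible $\mu$-chain, in terms of the $\mu$-Dirichlet form without incurring exponential-in-block-size losses. My first attempt would couple each flip with a block heat-bath update using the inverter, since seeds on $D_j$ given outputs are uniform and conditionally independent by \Cref{claim:LI-inverse-overview}, and then apply a block-to-site Poincaré comparison justified by SSM within polylog-size balls.
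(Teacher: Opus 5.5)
Your outer architecture matches the paper's: run $\mathsf{SSMSamp}$, bound the uniform influence of $f\circ\samp$, apply \Cref{prop:influence-markov}, then \Cref{thm:sampler_approx_reduction_informal}. However, you never carry out the influence-transfer step, which you flag as the main obstacle, and the mechanism you sketch for it would fail.

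First, a constant-factor domination of the joint law of $(\samp(\bm x),\samp(\bm x^{\oplus j}))$ by a block heat-bath pair on $B\supseteq D_j$ is false. A seed flip changes $\bm y$ sparsely and almost deterministically; for a product measure only $y_u$ changes. So the flip puts constant mass on a single configuration of $\bm y'_B$, whereas $\mu(\cdot\mid\bm y_{B^c})$ can give every configuration of $B$ mass at most $(1-\eta)^{|B|}$.

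Second, the $(2,1)$ guarantee reads $\mu\le 2\,\mathrm{Law}(\samp(\mathcal U_m))$. That is the wrong direction for converting an expectation under the sampler's law into one under $\mu$.

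The missing idea is that the joint law of the pair never needs to be controlled. Let $\widetilde{\bm y}$ agree with $\bm y$ off the fixed set $D_j$, with $\widetilde{\bm y}_{D_j}$ drawn fresh from $\mu(\cdot\mid\bm y_{-D_j})$. Pointwise, $\mathbf 1\{f(\bm y)\neq f(\bm y')\}\le \mathbf 1\{f(\bm y)\neq f(\widetilde{\bm y})\}+\mathbf 1\{f(\bm y')\neq f(\widetilde{\bm y})\}$. Since $\bm y'_{-D_j}=\bm y_{-D_j}$ and $\bm y\overset{d}{=}\bm y'$, both terms have the same expectation. Hence $\Pr(f(\bm y)\neq f(\bm y'))\le 2\Pr(f(\bm y)\neq f(\widetilde{\bm y}))$ for any coupling of $\bm y,\bm y'$.

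The right-hand side depends only on the law of $\bm y$. So the total variation guarantee of \Cref{thm:ssm_tv}, not the multiplicative one, moves it to $\bm y\sim\mu$ at additive cost $O(\varepsilon)$ per seed bit. This is why the paper runs $\mathsf{SSMSamp}$ with accuracy $1/(4n)$ here rather than a constant.

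Then $2\Pr_{\bm y\sim\mu}(f(\bm y)\neq f(\widetilde{\bm y}))=\E_{\mu}[\mathrm{Var}(f\mid \bm y_{-D_j})]\le C_{\mathsf{PI}}\sum_{i\in D_j}\mathsf{I}_{i,\mu}[f]$, by the Poincar\'e inequality under the pinning $\bm y_{-D_j}$. This bound has no dependence on $|D_j|$. Your preference for Poincar\'e over canonical paths is right, but no block-to-site loss appears at all. Summing over $j$, with each output lying in at most $\chi$ of the sets $D_j$, gives \Cref{thm:influence-transfer}.
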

The proof of \Cref{thm:low-influence-overview} requires new insights. We employ the local iterative samplers of the previous section as before to try to apply the transference of \Cref{thm:sampler_approx_reduction_informal}. This general reduction works as before, but the key challenge is proving that $\mathcal{F}\circ \mathsf{Samp}$ has low \emph{uniform} influence under the promise that $\mathcal{F}$ has low $\mu$-influence. Our key technical contribution is a new way to prove transference through the Poincar\'e inequality, which is equivalent to $\Theta(1/n)$ spectral gap for the Glauber dynamics:

\begin{theorem}[Influence Transfer, informal \Cref{thm:influence-transfer}]
    \label{thm:influence-transfer-overview}
    Suppose that $\mu$ satisfies a Poincar\'e inequality with constant $C_{\mathsf{PI}}$ for all pinnings. Suppose further there is an approximate sampler $\mathsf{Samp}$ in total variation distance such that each $\mathsf{Samp}_j$ depends on at most $\chi$ input bits. Then 
    \begin{equation*}
        \mathsf{I}[f\circ \mathsf{Samp}]\lesssim \chi\cdot C_{\mathsf{PI}} \cdot \mathsf{I}_{\mu}[f]
    \end{equation*}
\end{theorem}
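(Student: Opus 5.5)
We bound each uniform influence of $f\circ\samp$ coordinate by coordinate and then sum. Write $\bm{x}\sim\mathcal{U}$ for the seed and $\bm{x}^{\oplus i}$ for $\bm{x}$ with bit $i$ flipped. Then
\[
\mathsf{I}[f\circ\samp]=\sum_i \Pr_{\bm{x}}\bigl(f(\samp(\bm{x}))\neq f(\samp(\bm{x}^{\oplus i}))\bigr).
\]
Flipping $x_i$ can only change output coordinates in $D_i:=\{j:\samp_j \text{ depends on } x_i\}$. Since each output depends on at most $\chi$ inputs, double counting gives $\sum_i |D_i|\le \chi n$.

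The plan is to show that for each $i$,
\[
\Pr_{\bm{x}}\bigl(f(\samp(\bm{x}))\neq f(\samp(\bm{x}^{\oplus i}))\bigr)\lesssim C_{\mathsf{PI}}\cdot \frac{|D_i|}{n}\cdot \mathsf{I}_\mu[f]
\]
holds in an averaged sense, so that summing over $i$ yields $\chi C_{\mathsf{PI}}\mathsf{I}_\mu[f]$.

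To get there, I first use that $\bm{x}$ and $\bm{x}^{\oplus i}$ agree outside $D_i$ on the output side. Set $\bm{y}=\samp(\bm{x})$ and $\bm{y}'=\samp(\bm{x}^{\oplus i})$. These share the pinning $\bm{y}_{D_i^c}$, and each is marginally (approximately) distributed as $\mu$, since flipping a uniform bit preserves the uniform law. Hence
\[
\Pr\bigl(f(\bm{y})\neq f(\bm{y}')\bigr)\le 2\,\E_{\bm{y}_{D_i^c}}\Bigl[\Pr_{\bm{w}\sim\mu(\cdot\mid \bm{y}_{D_i^c})}\bigl(f(\bm{w})\neq \E f\bigr)\text{-type variance}\Bigr].
\]
More precisely, for Boolean $f$ with values in $\{\pm1\}$, $\Pr(f(\bm{y})\ne f(\bm{y}'))\le 2\Pr(f(\bm{y})\neq m)+2\Pr(f(\bm{y}')\neq m)$, where $m$ is the conditional majority value given the shared pinning. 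This quantity is controlled by the conditional variance $\mathrm{Var}_{\mu^{\tau}}[f]$ with $\tau=\bm{y}_{D_i^c}$. I would phrase the argument to avoid needing a coupling: by the triangle inequality through the conditional mean, each term is at most a constant times $\E_\tau \mathrm{Var}_{\mu^{\tau}}(f)$, using that conditioned on the seed bits outside $i$, the pair's marginals are mixtures of $\mu^\tau$. This step is the main obstacle. The law of $\bm{y}_{D_i}$ given $\bm{x}_{-i}$ is not $\mu^\tau$ but a two-point mixture inside it. I plan to handle this by noting that the two outcomes are each distributed, after averaging over $\bm{x}_{-i}$ and $x_i$, as $\mu$ jointly with $\tau$, so a Cauchy–Schwarz and total-variation correction (absorbed into the approximation error of $\samp$) reduces to $\E_{\tau\sim\mu}\mathrm{Var}_{\mu^\tau}(f)$.

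Next I apply the Poincaré inequality for the pinned measure $\mu^\tau$ on the $|D_i|$ free coordinates, which holds for all pinnings by assumption:
\[
\mathrm{Var}_{\mu^\tau}(f)\le C_{\mathsf{PI}}\sum_{j\in D_i}\E_{\mu^\tau}\bigl[\mathrm{Var}_{j}(f)\bigr],
\]
where $\mathrm{Var}_j$ is the single-site conditional variance. This is the Glauber Dirichlet form on $D_i$, normalized so that $C_{\mathsf{PI}}$ is $O(1)$ relative to the $1/|D_i|$ per-site rate. Averaging over $\tau\sim\mu$ gives
\[
\E_\tau\mathrm{Var}_{\mu^\tau}(f)\le C_{\mathsf{PI}}\sum_{j\in D_i}\E_\mu[\mathrm{Var}_j f].
\]
For Boolean $f$, $\E_\mu[\mathrm{Var}_j f]\asymp \Pr(f(\bm{y})\neq f(\bm{y}^{(j)}))$, where $\bm{y}^{(j)}$ resamples site $j$.

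Finally I sum over $i$. Each $j$ appears in $D_i$ for at most $\chi$ values of $i$, so
\[
\mathsf{I}[f\circ\samp]\lesssim C_{\mathsf{PI}}\,\chi\sum_j \E_\mu[\mathrm{Var}_j f]\asymp \chi\, C_{\mathsf{PI}}\,\mathsf{I}_\mu[f],
\]
since $\mathsf{I}_\mu[f]=n\cdot\frac1n\sum_j\Pr(f\ne f\text{ after resampling }j)$. A total-variation error term from the approximate sampler appears additively and is negligible.
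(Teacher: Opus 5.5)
Your proposal is correct and follows essentially the same route as the paper. Both arguments go through a per-seed-bit decomposition, pivot through something that depends only on the shared outputs $\bm{y}_{D_i^c}$ (so each term sees only the law of $\samp(\mathcal{U})$, which is why your flagged ``main obstacle'' dissolves), then apply a TV correction, the pinned Poincar\'e inequality on $D_i$, and double counting over outputs $j$; the only difference is cosmetic, in that you pivot through the deterministic conditional majority $m(\bm{y}_{D_i^c})$ using $\min(p,1-p)\le 2p(1-p)$, whereas the paper pivots through a fresh resample $\widetilde{\bm{y}}$ of $D_i$ from $\mu(\cdot\mid\bm{y}_{-D_i})$, giving the exact identity $4\Pr(f(\bm{y})\ne f(\widetilde{\bm{y}}))=2\,\E[\mathsf{Var}_\mu(f\mid\bm{y}_{-D_i})]$ under $\bm{y}\sim\mu$.
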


The idea behind \Cref{thm:influence-transfer-overview} is to track the  effect of re-randomizing a seed bit $x_i$ of the seed $\bm{x}$ in $\mathsf{Samp}$. This re-randomization leads to two highly correlated samples $\bm{y},\bm{y}'$, each marginally close to $\mu$. We decouple these using \emph{fresh} re-reandomization of a subset of the output bits, and carefully apply the Poincar\'e inequality to charge this to the $\mu$-influence of individual output variables. These individual influences can be uniformly amortized under our assumptions. 

For \Cref{thm:low-influence-overview} to be useful, it is important to find interesting $\mathcal{F}$ admitting low $\mu$-influence bounds. Developing new techniques to do so is an interesting direction for future work. In this direction, we show that one obtains similar influence bounds for \emph{monotone functions} for sparse graphical models compared to the uniform distribution, in fact at \emph{any temperature}. Formally, if $G$ is degree at most $d$, then the $\mu$-influence is at most $O(\sqrt{(d+1)n})$, and this dependence is tight (c.f. \Cref{prop:monotone-influence}). Our proof is based on Chatterjee's method of exchangable pairs by controlling the influence in terms of the variance of a sample about the conditional mean given all other spins.

\subsubsection{Agnostically Learning Halfspaces Under Ising Models}
Finally, we prove the existence of low-degree polynomial approximations for linear threshold functions over high-temperature \emph{Ising models}. In this case, these approximations are direct and do not go through samplers via \Cref{thm:sampler_approx_reduction_informal}. Our analysis again opens up existing sampling algorithms and their analyses, but in this case ones that are specific to the Ising model. We show the following:

\begin{theorem}
    Suppose that an Ising model is marginally bounded and is subgaussian.\footnote{This is implied by a modified log-Sobolev inequality for Glauber dynamics, which has been shown in a wide variety of settings.} Let $\mathcal{F}$ be the class of linear threshold functions. Then for all $f\in \mathcal{F}$, and any $\varepsilon>0$, there exists a polynomial $p:\{\pm 1\}^n\to \mathbb{R}$ of degree at most $O(\log^2(1/\varepsilon)/\varepsilon^2)$ such that
\begin{equation*}
        \underset{\bm{y}\sim \mu}{\mathbb{E}}\left[\left\vert f(\bm{y})-p(\bm{y})\right\vert\right]\leq \varepsilon.
    \end{equation*}
\end{theorem}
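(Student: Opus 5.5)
The plan follows the continuous-distribution template of \cite{diakonikolas2010bounded,kane2013learning}: a one-dimensional polynomial approximation of the sign function, transported to the Ising model by concentration and anti-concentration of the linear form $\langle \bm{w},\bm{y}\rangle$. Write $f(\bm{y})=\sign(\langle \bm{w},\bm{y}\rangle-\theta)$ and normalize so that $\mathrm{Var}_\mu(\langle \bm{w},\bm{y}\rangle)\asymp\|\bm{w}\|_2^2=1$. This normalization is justified by marginal boundedness together with a lower bound on the conditional variance of each spin given the rest. We may also assume $|\theta-\E_\mu\langle\bm w,\bm y\rangle|\lesssim\sqrt{\log(1/\varepsilon)}$; otherwise subgaussianity shows $f$ is $\varepsilon$-close to a constant.

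\textbf{Step 1: univariate approximation.} We take the standard univariate polynomial $P$ of degree $O(\log^2(1/\varepsilon)/\varepsilon^2)$ built by Jackson-type approximation or mollification of $\sign(t-\theta)$. It satisfies $|P(t)-\sign(t-\theta)|\le\varepsilon$ for $|t-\theta|\ge\varepsilon$ and $|t|\le L:=O(\sqrt{\log(1/\varepsilon)})$, it is bounded by $O(1)$ on $[-L,L]$, and it grows at most like $|P(t)|\le (O(|t|/L))^{\deg P}$ outside that interval. The candidate approximator is then $p(\bm y)=P(\langle\bm w,\bm y\rangle)$, which has the same degree as $P$.

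\textbf{Step 2: splitting the error.} We split $\E_\mu|f-p|$ into three pieces.
\begin{itemize}
\item The bulk region contributes at most $\varepsilon$.
\item The tail $|\langle\bm w,\bm y\rangle|>L$ is handled by subgaussianity, which is assumed and follows from modified log-Sobolev. A Cauchy--Schwarz estimate against the polynomial growth of $P$ shows this piece is $O(\varepsilon)$. This is where the extra $\log(1/\varepsilon)$ factor in the degree is spent.
\item The strip $|\langle\bm w,\bm y\rangle-\theta|<\varepsilon$ is where $P$ is only bounded. Its contribution is $O(1)\cdot\Pr_\mu(|\langle\bm w,\bm y\rangle-\theta|<\varepsilon)$, so we need anti-concentration.
\end{itemize}

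\textbf{Step 3: anti-concentration (the main obstacle).} We need $\Pr_\mu(|\langle\bm w,\bm y\rangle-\theta|<\varepsilon)\lesssim\varepsilon+(\text{lower order})$, uniformly over $\theta$ and all unit $\bm w$, including sparse or spiky ones. I would use the Hubbard--Stratonovich decomposition. Shifting $A$ by a multiple of the identity so that $-A\succeq0$ with bounded norm (which is possible at high temperature), we write $\mu=\E_{\bm z}[\mu_{\bm z}]$. Here $\bm z$ is Gaussian-like and each $\mu_{\bm z}$ is a \emph{product} measure with fields $\bm h+\bm z$. Marginal boundedness should persist for typical $\bm z$, at the cost of a small bad-$\bm z$ probability controlled by Gaussian tails.

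Conditionally on $\bm z$, the form $\langle\bm w,\bm y\rangle$ is a sum of independent bounded variables with variances bounded below. For such sums, the Berry--Esseen theorem when $\|\bm w\|_\infty$ is small, together with the standard critical-index or Littlewood--Offord argument for the head coordinates, gives small-ball probability $O(\varepsilon+\|\bm w\|_\infty)$. When some weights are large, we condition on the head coordinates (boundedly many, up to $\mathrm{poly}(1/\varepsilon)$ of them) and apply Berry--Esseen to the tail. Averaging over $\bm z$ then gives the bound for $\mu$.

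The delicate points are these. We must verify that conditional variances of the product measures $\mu_{\bm z}$ stay bounded below uniformly for typical $\bm z$. We must also ensure that regularity of $\bm w$ is measured with respect to the product measure rather than $\mu$, and that the head/tail decomposition interacts correctly with the mixture. Combining the three pieces gives $\E_\mu|f-p|\le O(\varepsilon)$, and rescaling $\varepsilon$ finishes the proof.
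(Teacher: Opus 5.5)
\textbf{There is a genuine gap in Step~3, and it changes what your approximator has to look like.}

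You ask for $\Pr_\mu(|\langle\bm w,\bm y\rangle-\theta|<\varepsilon)\lesssim\varepsilon$ uniformly over all unit $\bm w$. This is false for non-regular $\bm w$.
\begin{itemize}
\item With $\bm w=e_1$ and $\theta=1-\varepsilon/2$ we get $f(\bm y)=y_1$. The strip then contains the event $\{y_1=1\}$, which has probability at least $\eta$, and $P(y_1)$ is uncontrolled there.
\item Moving $\theta$ does not rescue this in general. If $\bm w$ has a large head and a regular tail with $\|\bm w_T\|_2\ll\varepsilon$, then given the head the form lies, by subgaussianity, in a window of width $O(\|\bm w_T\|_2)$. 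On that window $f$ can still be non-constant.
\item Conditioning on the head inside the Hubbard--Stratonovich/Berry--Esseen argument only gives $O(|I|/\|\bm w_T\|_2+\varepsilon)$. Your own bound $O(\varepsilon+\|\bm w\|_\infty)$ is likewise not $O(\varepsilon)$ when $\|\bm w\|_\infty$ is large.
\end{itemize}
So anti-concentration at scale $\varepsilon$ holds only for $\varepsilon$-regular $\bm w$. That is exactly what the paper proves, with bound $C(\lambda)(|I|+\varepsilon)$. Consequently no approximator of the form $P(\langle\bm w,\bm y\rangle)$ works for all halfspaces.

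The critical-index analysis of \cite{diakonikolas2010bounded} must change the \emph{polynomial}, not the anti-concentration lemma. The paper uses three constructions.
\begin{itemize}
\item For regular $\bm w$ it takes $\psign((\bm w\cdot\bm x-\theta)/Z)$.
\item For small critical index $H$ it takes $q=\sum_{\bm v\in\{\pm1\}^H}\mathbf{1}\{\bm x_{[H]}=\bm v\}\,q_{\bm v}$. Here $q_{\bm v}$ is the regular-case approximator for the induced halfspace under the \emph{pinned} measure $\mu(\cdot\mid\bm x_{[H]}=\bm v)$, at a cost of $H$ extra degree.
\item For large critical index it takes the junta $\sign(\bm w_{[H]}\cdot\bm x_{[H]}-\theta)$. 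The tail sum is below a threshold $\tau$ w.h.p.\ by subgaussianity. The head sum is $\tau$-far from $\theta$ w.h.p.: for every fixing of the rest of the head, only one assignment to a well-chosen set $G$ of head coordinates is bad, and it has conditional probability at most $\alpha^{|G|}$. This is where marginal boundedness is genuinely used.
\end{itemize}
All three cases need subgaussianity and regular anti-concentration under \emph{every pinning}. For bounded-width Ising models this holds because pinnings are again bounded-width Ising models.

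\textbf{Two further steps also need repair.}

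First, your tail window is too small. Take $L=O(\sqrt{\log(1/\varepsilon)})$, $D=\deg P=\Theta(\log^2(1/\varepsilon)/\varepsilon^2)$ and growth $(O(|t|/L))^D$. Your Cauchy--Schwarz bound then involves subgaussian moments of size $(O(\sqrt D)/L)^{2D}=\exp(\Theta(D\log(1/\varepsilon)))$, which no power of $\varepsilon$ offsets. You need $P$ accurate, hence bounded, out to $L\gtrsim\sqrt D$, i.e.\ $L\asymp\log(1/\varepsilon)/\varepsilon$. This is the paper's scale $Z=\beta\varepsilon/(2a(\varepsilon))$. The ratio of this window to the gap $\varepsilon$ is what forces degree of order $1/\varepsilon^2$ in this scheme; it is not where a spare logarithm is spent.

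Second, for typical $\bm z$ you cannot lower-bound the conditional variances of $\mu_{0,\bm z}$ for all coordinates at once. The largest coordinate of $A^{1/2}\bm g$ grows like $\sqrt{\log n}$, so a union bound fails. The paper instead controls the weighted sum $\sum_i w_i^2|z_i|$:
\begin{itemize}
\item Bounded width gives $\|A\bm\sigma+\bm h\|_\infty\le\lambda$.
\item The map $\bm g\mapsto\sum_i w_i^2|(A^{1/2}\bm g)_i|$ is $\|\bm w\|_4^2\|A\|_{\mathrm{op}}^{1/2}\le\varepsilon\|A\|_{\mathrm{op}}^{1/2}$-Lipschitz by regularity. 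It therefore concentrates with failure probability at most $\varepsilon$.
\item Markov's inequality then gives a set of $\bm w^2$-mass at least $1/2$ on which the fields are bounded. That is enough variance for Berry--Esseen.
\end{itemize}
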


We closely follow the construction of polynomial approximations over the uniform distribution in the important work of~\cite{diakonikolas2010bounded}. It is folklore that their argument constructing polynomial approximations succeeds under sufficient concentration and anti-concentration properties of a distribution $\mu$. In our setting, concentration is immediate from subgaussianity by definition, so our main technical contribution is establishing the required anti-concentration for Ising models (c.f. \Cref{thm:anti_lin}). Our proof relies on a powerful analytic tool, the Hubbard-Stratanovich transform, that has recently been exploited to great effect in sampling. This transformation linearizes the quadratic potential to show that an Ising model can be written as an explicit mixture of product distributions; note that this is the step that is specific to Ising. Under our conditions, we show that good enough probability over this mixture, the resulting product distribution satisfies anti-concentration via the Berry-Esseen theorem. We note that independent work of Daskalakis, Kandiros, and Yao \cite{daskalakis-vardis-yao} recently use the Hubbard-Stratanovich transform to prove other anti-concentration bounds for applications in distribution learning the Ising model.

\section{Related Work}
\label{sec: related work}

\textbf{Hardness of Distribution-Free Learning.}
While the quasipolynomial-time algorithm for learning $\mathsf{AC}^0$ under uniform inputs is believed to be tight by cryptographic lower bounds of Kharitonov~\cite{DBLP:journals/jcss/Kharitonov95}, distribution-free learning is widely believed to be intractable. Under general distributions even for learning DNFs of polynomial size \--- i.e. polynomial size circuits of depth two \--- the best known algorithm of Klivans and Servedio \cite{DBLP:journals/jcss/KlivansS04} runs in time $2^{O(n^{1/3})}$. These bounds have not been improved upon in over two decades. Moreover, there are strong lower bounds against all correlational statistical query algorithms, including $2^{\Omega(n^{1/3})}$ lower bounds for learning DNFs and $2^{\Omega(n^{1-\delta})}$ lower bounds for learning $\mathsf{AC}^0$, again, under general distributions ~\cite{DBLP:journals/jcss/Feldman12,DBLP:journals/siamcomp/Sherstov11,klivans_ac0}. \\

\noindent\textbf{PAC-Learning Beyond the Uniform Distribution.} As we discussed, the only work that attempts to learn at the level of generality we consider is that of Kanade and Mossel~\cite{DBLP:conf/colt/KanadeM15}. Their work shows how to implement LMN under the highly technical assumption that one knows a ``useful basis'' that is (i) computationally efficient to access, and (ii) well-conditioned with respect to the true eigenbasis of the Glauber dynamics transition matrix. Their goal is to replace the monomial basis with implicit representation of the exponential-sized eigenbasis/orthogonal basis for the distribution. Establishing that this assumption holds in any particular model appears extremely challenging, while our approach succeeds unconditionally for a broad class of models. In recent work, Chandrasekaran and Klivans~\cite{gautam_adam} have shown that logarithmic size juntas can be efficiently PAC-learned given samples from a general Markov random field under smoothed external fields, broadening a result of Kalai, Samarodnitsky, and Teng~\cite{DBLP:conf/focs/KalaiST09} that also succeeds for decision trees and DNFs for smoothed product distributions.

The recent work of Heidari and Khardon~\cite{DBLP:conf/colt/HeidariK25} develops an analogue of the standard Fourier expansion for any distribution $\mathcal{D}$ in terms of its representation as a Bayesian network. They then show that given access to both this representation and query access to an underlying function, one can implement the well-known Kushilevitz-Mansour algorithm~\cite{DBLP:journals/siamcomp/KushilevitzM93} for simple DNF formulas to estimate large Fourier coefficients under restrictive necessary conditions on conditional probabilities in the model. In contrast, our algorithm requires only sample access, learns more general function classes, and succeeds under a well-motivated high-temperature assumption with no hard constraint on conditionals.

A much larger line of work has developed methods to computationally extend from the uniform distribution to more general product distributions or mixtures, which is not always trivial. Blais, O'Donnell, and Wimmer~\cite{DBLP:journals/ml/BlaisOW10} provide a beautiful general reduction from product distributions to the uniform case by showing how low-degree concentration for the Low-Degree Algorithm can extend to an arbitrary product. They further provide applications to learning from small mixtures of product distributions.  A recent line of work on lifting theorems for PAC-learning generalizes these results and shows how to use uniform distribution learners to learn over mixtures of sub-cubes \cite{blanc2023lifting,pmlr-v291-blanc25a}.\\ 

\noindent\textbf{Comparison with Furst, Jackson, and Smith~\cite{furst1991improved}}. As described above, Furst, Jackson, and Smith~\cite{furst1991improved} were the first to try to extend LMN beyond uniform. Their work succeeded in extending LMN to general products using $p$-biased Fourier analysis. They also sketch an ``indirect learning'' approach for biased product distributions via sampler-inverter pairs. Their simple observation, independently observed by Vazirani, is that one can potentially learn the composite map $\mathsf{AC}^0\circ \mathsf{Samp}$ over a transformed dataset obtained using the inverter, so long as both can be done in quasipolynomial-time. This observation, while elementary, serves as a major inspiration for our overall approach. We reiterate though that the major challenge, and our main contribution, is designing suitable sampler-inverter pairs for natural distributions.

There are nonetheless important technical and conceptual differences worth highlighting. First, \Cref{thm:sampler_approx_reduction_informal}  is stronger by obtaining low-degree approximators, which are more fundamental objects with broader connections across computer science. Another strength of our reduction is that $\samp$ and $\invsamp$ do not need even need to be explicit; the low-degree algorithm succeeds without needing to actually run them. In contrast, \cite{furst1991improved} must run this randomized inversion both in learning and in inference and hence need an explicit algorithm. This itself requires precise distributional knowledge, while the low-degree algorithm is well-specified for \emph{any} distribution. They in fact conjecture that the low-degree algorithm can succeed in learning $\mathsf{AC}^0$ for many natural distributions; one can view our work as providing a resolution to this conjecture. We also show that this sampler-inverter approach can extend beyond circuits using new  influence transfer bounds.

At a technical level, it is not trivial to handle sampling error in their reduction. Their work ignores this issue for product distributions since designing statistically negligible error inverters using tiny circuits is straightforward; one can pretend that there is actually no error in the analysis. But for more general distributions like graphical models, high-accuracy sampling causes a large blowup in the complexity of the composite map. There turns out to be no way to ignore noticeable statistical error, and so one needs to be able to handle this in learning. By contrast, our analysis completely decouples the error of the sampler, which only needs to be a multiplicative constant, from the error of approximation. This relaxed requirement on the error from \Cref{thm:sampler_approx_reduction_informal} is crucial for us to attain learnability.\\

\noindent\textbf{Sampling from Graphical Models.} There has been an enormous body of work on sampling from spin systems; a comprehensive overview is far beyond the scope of this work (see e.g.~\cite{martinelli,levin2017markov} for standard references). Establishing rapid mixing of the Glauber dynamics has been a central object of study, with many recent breakthroughs obtained via the framework of \emph{spectral/entropic independence}~\cite{DBLP:journals/siamcomp/AnariLG24,DBLP:conf/stoc/AnariJKPV22,DBLP:journals/siamcomp/ChenLV23,DBLP:conf/focs/ChenE22}. While local versions of these Markov chains have been studied,  e.g.~\cite{DBLP:conf/wdag/FischerG18}, there are several barriers to using them for learning applications (recall \Cref{sec:constructing-sampler-overview}).

Our samplers bear some similarity to Anand and Jerrum~\cite{DBLP:journals/siamcomp/AnandJ22}. Their work considers \emph{perfect sampling} from models with strong spatial mixing and subexponential growth even in infinite systems. Their main result is a subroutine for sampling from the conditional distribution for any spin by recursively simulating the distribution of nearby spins. Their key insight is that local simulation can avoid doing too many evaluations by comparison with a subcritical branching process. However, this is not sufficient for our purposes as dependencies are not controlled; indeed, our key contribution is \emph{precisely} in  organizing the randomness towards learning theory applications. We do however use a similar local simulation approach to handle tree-structured models. There has also been recent work on designing faster parallel samplers by discretizing the stochastic localization framework for sampling (see e.g.~\cite{DBLP:conf/colt/AnariCV24,DBLP:conf/approx/ChenLYZ25}); however, these do not seem to satisfy our requirements either.

We remark that from the complexity-theoretic perspective, there has been significant recent work on providing rigorous bounds on the circuit complexity of sampling fundamental combinatorial objects, starting with the work of Viola~\cite{DBLP:journals/siamcomp/Viola12}. However, the focus of these works is somewhat orthogonal to the natural distributions we consider here. 

\section{Preliminaries}
\label{sec:preliminaries}
Recall that for two distributions $\pi$ and $\mu$ defined on a common discrete state space $\mathcal{X}$, the total variation distance $d_{\mathsf{TV}}(\pi,\nu)$ is defined by 
\begin{equation*}
    d_{\mathsf{TV}}(\pi,\nu)=\max_{A\subseteq X}\left\vert\Pr_{\pi}(A)-\Pr_{\nu}(A)\right\vert.
\end{equation*}
It is well-known that $d_{\mathsf{TV}}(\pi,\nu)$ is also the minimum probability that samples from each of the two distributions are not equal under an optimal coupling.
In a slight abuse of notation, we will also write $d_{\mathsf{TV}}(X,Y)$ for random variables $X$ and $Y$ on a common state space. We will write $\mathcal{U}_m$ to denote the uniform distribution on $\{-1,1\}^m$ as well as $\mathcal{U}(A)$ for some set $A$ to denote the uniform distribution on $A$.

We will often consider finite bit strings in $\{-1,1\}^*$ as encoding a binary number in $[0,1)$ as follows. For $\bm{x}\in \{-1,1\}^*$, we define
\begin{equation*}
    [.\bm{x}]_2 := \sum_{i=1}^{\infty} \mathbf{1}\{x_i=1\}2^{-i}\in [0,1),
\end{equation*}
where we interpret all indices beyond the length of $\bm{x}$ to be $-1$. For instance, if $\bm{x}=(1,-1,1)$, then
\begin{equation*}
    [.\bm{x}]_2=(1/2) + (1/8) = .625.
\end{equation*}
These will only be used as discretizations of $\mathcal{U}([0,1])$ random variables, so the reader can safely think of these instead. In the construction of samplers from uniform bits, we will only need to take $s=O(\log(n))$.

\subsection{Graphical Models}
\label{sec:ising}
In this paper, we will consider undirected graphical models, whose dependencies are mediated by a dependence graph $G$ (recall \Cref{def:graphical-models}). A prototypical setting is the Ising model~\cite{ising}, originally introduced in the study of magnetism on the integer lattice, and studied broadly across statistical physics (see e.g.~\cite{friedli_velenik_2017} for a textbook treatment).

\begin{definition}[Ising Models]
\label{def:ising}
    Given a symmetric matrix $A\in \mathbb{R}^{n\times n}$ and a vector of external fields $\bm{h}$, the \textbf{Ising model} $\mu=\mu_{A,\bm{h}}$ is the distribution on $\{-1,1\}^n$ defined by 
    \begin{equation*}
        \mu(\bm{x})\propto \exp\left(\frac{1}{2}\bm{x}^TA\bm{x}+\bm{h}^T\bm{x}\right).
    \end{equation*}
$G=([n],E)$ is the \textbf{dependency graph} of $\mu_{A,\bm{h}}$ where $(i,j)\in E$ if and only if $A_{i,j}\neq 0$. Note that this notion agrees with the above.
\end{definition}

We will consider the following condition on the growth of neighborhoods/metric balls with respect to the graph distance in the dependence graph. For a graph $G=(V,E)$, which we will assume is known from context, we write $d_G(u,v)$ for the graph distance between vertices $u$ and $v$, and more generally $d_G(u,S):=\min_{v\in S} d_G(u,v)$ for the distance from a vertex $u$ to a set $S\subseteq V$. We also write $B_r(u)=\{v\in V: d_G(u,v)\leq r\}$ for the set of vertices with graph distance at most $r$ from $u$.

\begin{definition}[Polynomial Growth]
\label{def:poly-growth}
    A graph $G=(V,E)$ has \textbf{$(C_{\mathsf{GR}},\Delta)$-local growth} if for all $v\in V$ and integers $r\geq 1$,
    \begin{equation*}
        \vert B_{r}(v)\vert\leq C_{\mathsf{GR}}\cdot r^{\Delta}.
    \end{equation*}
    A graphical model $\mu$ has $(C_{\mathsf{GR}},\Delta)$-local growth if the dependency graph of $\mu$ has $(C_{\mathsf{GR}},\Delta)$-local growth.
\end{definition}
For instance, it is easy to see that the integer lattice $\mathbb{Z}^d$ satisfies local growth with $C_{\mathsf{GR}}=O(d)$ and $\Delta=d$. Graphical models of polynomial growth have been of significant interest in both the probability theory and sampling literatures~\cite{weitz2004mixing,DBLP:journals/rsa/DyerSVW04,DBLP:journals/siamcomp/AnandJ22}.

For our learning results, we leverage the following ``high-temperature'' condition for graphical models known as \emph{strong spatial mixing}. It is known that for graphs of subexponential local growth, strong spatial mixing is essentially equivalent to optimal \emph{temporal} mixing for the Glauber dynamics (see e.g. Dyer, et al.~\cite{DBLP:journals/rsa/DyerSVW04,DBLP:journals/siamcomp/ChenLV23}).

\begin{definition}[Strong Spatial Mixing]
\label{def:ssm}
    A graphical model $\mu$ with dependence graph $G=(V,E)$ exhibits $(C_{\mathsf{SM}},\delta)$-\textbf{strong spatial mixing} if for every $v\in V$, boundary set $\Lambda\subseteq V\setminus \{v\}$, and valid pinnings\footnote{By ``valid,'' we mean has positive probability under $\mu$.} $\sigma,\tau:\Lambda\to \{-1,1\}$, it holds that
    \begin{equation*}
        \mathsf{d}_{TV}(\mu_v(\cdot\vert \bm{x}_{\Lambda} = \sigma),\mu_v(\cdot\vert \bm{x}_{\Lambda} = \tau))\leq C_{\mathsf{SM}}\cdot (1-\delta)^{d_G(v,\Lambda_{\sigma,\tau})},
    \end{equation*}
    where $\Lambda_{\sigma,\tau} = \{u\in \Lambda:\sigma(u)\neq \tau(u)\}$.
\end{definition}
In fact, many of our results can be viewed somewhat more generally if one instead views $(\mu,G)$ is an abstract pair that satisfies \Cref{def:ssm} without necessarily adhering to the conditional dependence structure.

We will also require the standard assumption on the conditional probabilities of the model:

\begin{definition}[Marginal Boundedness]
    A distribution $\mu$ on $\{-1,1\}^n$ is $\eta$-\textbf{marginally bounded} if for all $i\in [n]$ and $S\subseteq [n]\setminus \{i\}$ with a valid partial configuration $\bm{\sigma}_{S}$, 
    \begin{equation}
        \Pr_{\mu}(X_i=\sigma_i\vert X_S=\bm{\sigma})<\eta\implies \Pr_{\mu}(X_i=\sigma_i\vert X_S=\bm{\sigma})=0.
    \end{equation}
    That is, if a spin value for $X_i$ has positive probability under some valid partial configuration, then this probability must be at least $\eta$.

    We further impose that for each $i\in [n]$, there exists some \emph{fixed} $\sigma^*_i\in \{\pm 1\}$ such that $\Pr_{\mu}(X_i=\sigma^*_i\vert X_{-i})\geq \eta$ for any valid conditioning of $X_{-i}$.
\end{definition}
The first condition is a standard and mild assumption in both the distribution learning and sampling literatures~\cite{DBLP:conf/focs/KlivansM17,DBLP:journals/siamcomp/ChenLV23} that is often satisfied in graphical models of interest. For instance, it holds for Ising models satisfying the $\ell_1$-width condition that 
\begin{equation*}
    \max_{i\in [n]}\sum_{j\neq i} \vert A_{ij}\vert + \vert h_i\vert\leq \lambda,
\end{equation*}
for some fixed $\lambda\geq0$. 

The second condition is slightly nonstandard; we require it only for our samplers for tree-structured models in \Cref{sec:local_trees} for technical reasons. However, it trivially holds for soft-constrained models like the Ising model and it also holds for the hardcore model for the spin value representing unoccupied. 

\subsection{Learning Algorithms}
In this section, we state classic results that state that low-degree approximation implies learning algorithms. 
\begin{theorem}[$\ell_2$ approximation implies PAC learning, implicit in \cite{lmn}\footnote{For a formal proof, see observation 3 in \cite{kalai2008agnostically}.}]
\label{thm:pac_learning_l2}
Let $D$ be a distribution over $\cube{n}$ and let $\mathcal{F}$ be a function class. Suppose for each $f\in \mathcal{F}$ and any $\varepsilon>0$, there exists a polynomial $p$ of degree $\ell(\varepsilon)$ such that $\E_{D}[(f(\bm{x})-p(\bm{x}))^2]\leq \varepsilon$. Then, there is an algorithm that, given $N=n^{O(\ell(\varepsilon/2))}$ i.i.d samples from $\mathcal{D}$ labeled by $f$, runs in time $\poly(N,n)$ and with probability $0.9$ outputs a classifier $h$ for which 
    \[
    \Pr_{\bm{x}\sim \mathcal{D}}
    (f(\bm{x})\neq h(\bm{x}))
    \leq \varepsilon.
    \]
\end{theorem}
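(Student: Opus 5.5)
The plan is the standard low-degree regression argument of \cite{lmn}. Let $M=\binom{n}{\le \ell}$ with $\ell=\ell(\varepsilon/2)$; note $M\le n^{O(\ell)}$. The algorithm draws $N$ labeled samples $(\bm{x}_i,f(\bm{x}_i))$. It computes the empirical least-squares polynomial
\[
\hat p=\arg\min_{\deg q\le \ell}\frac1N\sum_i (f(\bm{x}_i)-q(\bm{x}_i))^2 ,
\]
which is a linear regression over the $M$ monomials $\chi_S$, $|S|\le \ell$, and is solvable in $\poly(N,n)$ time. It then outputs $h=\sign(\hat p)$.

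The analysis has two steps.

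\textbf{Step 1: Boolean rounding.} For any real $p$ and Boolean $f$, whenever $\sign(p(\bm{x}))\ne f(\bm{x})$ we have $|f(\bm{x})-p(\bm{x})|\ge 1$. Hence
\[
\Pr_D(\sign(p)\ne f)\le \E_D[(f-p)^2].
\]
So it suffices to show $\E_D[(f-\hat p)^2]\le \varepsilon$ with probability $0.9$.

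\textbf{Step 2: generalization.} By hypothesis there is a $p^*$ of degree $\ell$ with $\E_D[(f-p^*)^2]\le\varepsilon/2$.

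The subtlety is that the squared loss is unbounded over all polynomials. I would handle this by regressing against the \emph{clipped} loss, i.e. truncating predictions to $[-1,1]$. This is harmless for the sign and can only decrease the error against a $\pm1$ target. Alternatively, as in Observation~3 of \cite{kalai2008agnostically}, one can use the projection/linear regression view.

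With clipping, the loss is bounded in $[0,4]$ and Lipschitz in the prediction. The class of degree-$\ell$ polynomials is a linear space of dimension $M$. Standard uniform convergence for the clipped squared loss then gives generalization error $O(\sqrt{M\log M/N})$ plus a confidence term. This holds because the pseudo-dimension of clipped linear predictors over $M$ features is $O(M)$.

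Choosing $N=\poly(M/\varepsilon)=n^{O(\ell)}$ makes the deviation at most $\varepsilon/4$ on each side. Since $\hat p$ minimizes the empirical clipped loss, its true clipped loss is at most $\varepsilon/2+\varepsilon/2=\varepsilon$ with probability $0.9$. Step 1 then finishes the argument. The factor $\poly(1/\varepsilon)$ is absorbed into $n^{O(\ell)}$, since $\ell\ge1$ and one may assume $\varepsilon\ge 1/\poly$ or fold the dependence into the exponent.

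\textbf{Main obstacle.} The main obstacle is making the uniform convergence step rigorous for an unbounded loss. The clipping/pseudo-dimension route is what I would try first. The fallback is the $\ell_1$ route of \cite{kalai2008agnostically}: use $\E|f-p|\le \sqrt{\E(f-p)^2}$ and minimize the empirical $\ell_1$ error by linear programming, then apply a random-threshold rounding. That route loses only a square root in $\varepsilon$, which is absorbed into $\ell$.
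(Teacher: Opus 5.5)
Your skeleton is the standard argument behind Observation~3 of \cite{kalai2008agnostically}, which the paper cites without proof: least squares over the $M=n^{O(\ell)}$ monomials of degree at most $\ell$, output $\mathrm{sign}(\hat p)$, and use the pointwise bound $\Pr[\mathrm{sign}(p)\neq f]\le\mathbb{E}[(f-p)^2]$. Step~1 is fine. The gap is in the generalization step.

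\textbf{The gap.} The algorithm you wrote computes the \emph{unclipped} least-squares solution, which minimizes $\widehat{L}(q)=\frac1N\sum_i(f(\bm{x}_i)-q(\bm{x}_i))^2$. It does not minimize the clipped empirical loss $\widehat{L}_c(q)=\frac1N\sum_i(f(\bm{x}_i)-\mathrm{clip}(q(\bm{x}_i)))^2$, so ``since $\hat p$ minimizes the empirical clipped loss'' is false. Nor can you simply change the algorithm to minimize $\widehat{L}_c$: the map $t\mapsto(\mathrm{clip}(t)-y)^2$ is not convex (it has a concave kink at $t=-y$). That problem is no longer a linear regression, and your polynomial running time is no longer justified. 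What ordinary least squares actually gives is
\[
L_c(\hat p)\le\widehat{L}_c(\hat p)+\Delta\le\widehat{L}(\hat p)+\Delta\le\widehat{L}(p^*)+\Delta ,
\]
where $\Delta$ is your uniform-convergence deviation for the bounded clipped class. The last term is the \emph{unclipped} empirical loss of the comparator, and this is exactly where unboundedness bites. Uniform convergence over the clipped class does not control it, since $\widehat{L}_c(p^*)\le\widehat{L}(p^*)$ points the wrong way. Moreover, $(f-p^*)^2$ may be heavy-tailed under $D$. Suppose $p^*$ has squared error $B$ on a single point of mass $q$, with $qB=\varepsilon/2$ and $Nq=1/2$. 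Then $\widehat{L}(p^*)\ge\varepsilon$ with probability about $0.39$, so this chain cannot certify error $\le\varepsilon$ with probability $0.9$.

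\textbf{The repair.} It is short but necessary. Since $p^*$ is fixed before sampling, $\mathbb{E}[\widehat{L}(p^*)]\le\varepsilon/2$. Markov then gives $\widehat{L}(p^*)\le 3\varepsilon/4$ with probability at least $1/3$. Run the procedure on $O(1)$ independent batches and choose among the hypotheses $\mathrm{sign}(\hat p_j)$ using a holdout estimate of their bounded $0$-$1$ error. This gives error $\le\varepsilon$ with probability $0.9$ at sample size $n^{O(\ell(\varepsilon/2))}$. A single run with Markov alone only gives $O(\varepsilon)$, i.e.\ it needs $\ell(\varepsilon/C)$.

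You can also drop clipping altogether. The empirical $0$-$1$ error of $\mathrm{sign}(\hat p)$ is at most $\widehat{L}(\hat p)\le\widehat{L}(p^*)$. Degree-$\ell$ threshold functions have VC dimension $O(M)$, so uniform convergence for the $0$-$1$ loss transfers this bound to $D$. The comparator term still needs the same Markov/repetition treatment.

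\textbf{The $\ell_1$ fallback.} It does not prove the statement as written. Going through $\mathbb{E}|f-p|\le\sqrt{\mathbb{E}(f-p)^2}$ requires degree $\ell(\Theta(\varepsilon^2))$, which is not $O(\ell(\varepsilon/2))$ in general (e.g.\ when $\ell(\varepsilon)\propto 1/\varepsilon$, as for monotone functions). This is precisely the loss that the Remark after \Cref{thm:agnostic_learning_l1} says the $\ell_2$ statement is meant to avoid.
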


We also use the following theorem on the performance of $\ell_1$ regression as an agnostic learner. 
\begin{theorem}[$\ell_1$ approximation implies agnostic learning, Theorem~5 in \cite{kalai2008agnostically}]
\label{thm:agnostic_learning_l1}
Let $D$ be a labeled distribution over $\cube{n}\times \{\pm 1\}$ with marginal $D_{X}$ and let $\mathcal{F}$ be a function class. Suppose for each $f\in \mathcal{F}$ and any $\varepsilon>0$, there exists a polynomial $p$ of degree $\ell(\varepsilon)$ such that $\E_{D_X}[|f(\bm{x})-p(\bm{x})|]\leq \varepsilon$. Then, there is an algorithm that, given $N=n^{O(\ell(\varepsilon))}$ i.i.d samples from $\mathcal{D}$ labeled by $f$, runs in time $\poly(N,n)$ and with probability $0.9$ outputs a classifier $h$ for which 
    \[
    \Pr_{(\bm{x},y)\sim \mathcal{D}}
    [h(\bm{x})\neq y]
    \leq \mathsf{opt}+ \varepsilon
    \]
    where $\mathsf{opt}\coloneq \min_{f\in \mathcal{F}}\E_{(\bm{x},y)\sim D}[f(\bm{x})\neq y]$.
\end{theorem}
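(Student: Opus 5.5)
The plan is the $\ell_1$ polynomial regression algorithm of \cite{kalai2008agnostically}, followed by thresholding at a well-chosen cutoff. Set $\ell=\ell(\varepsilon)$ and draw $N=n^{O(\ell)}$ labeled samples $(\bm{x}_1,y_1),\dots,(\bm{x}_N,y_N)$. Over all polynomials $p$ of degree at most $\ell$, which have at most $n^{O(\ell)}$ monomial coefficients, compute
\[
\hat p=\arg\min_{p}\frac1N\sum_{i=1}^N |p(\bm{x}_i)-y_i| .
\]
This is a linear program of size $\poly(N)$. Then choose $\theta\in[-1,1]$ minimizing the empirical error of $h_\theta(\bm{x})=\sign(\hat p(\bm{x})-\theta)$; only $N+1$ candidate thresholds need to be checked. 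Output $h_\theta$. The running time is clearly $\poly(N,n)$.

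For correctness, the key deterministic fact is the following. For any real $a$ and any $y\in\{\pm1\}$, if $\theta$ is uniform on $[-1,1]$, then $\Pr_\theta[\sign(a-\theta)\neq y]\le |a-y|/2$. This is checked directly by casing on $y$ and on the position of $a$ relative to $[-1,1]$. Averaging over the sample shows that some threshold has empirical error at most $\frac{1}{2N}\sum_i|\hat p(\bm{x}_i)-y_i|$, so the chosen threshold does at least this well.

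Next compare with the optimal concept. Let $f^*\in\mathcal{F}$ attain $\opt$ and let $p^*$ be its degree-$\ell$ approximator with error $\varepsilon$. Since $f^*,y\in\{\pm1\}$, we have $|f^*-y|=2\cdot\1[f^*\neq y]$. Hence
\[
\E|p^*(\bm{x})-y|\le \E|f^*(\bm{x})-y|+\E|p^*(\bm{x})-f^*(\bm{x})|\le 2\opt+\varepsilon .
\]
By optimality of $\hat p$, its empirical $\ell_1$ loss is at most that of $p^*$. Therefore the empirical error of $h_\theta$ is at most half the empirical $\ell_1$ loss of $p^*$, whose expectation is at most $\opt+\varepsilon/2$.

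To pass from empirical to true error, use that $\{\sign(p(\bm{x})-\theta)\}$ over degree-$\ell$ polynomials is a class of halfspaces in the $n^{O(\ell)}$-dimensional monomial feature space. Its VC dimension is therefore $n^{O(\ell)}$. Uniform convergence with $N=n^{O(\ell)}\cdot\poly(1/\varepsilon)$ samples gives that the true error of every hypothesis in the class is within $\varepsilon/4$ of its empirical error, with probability $0.99$.

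The main obstacle is the remaining step: controlling the empirical $\ell_1$ loss of the fixed polynomial $p^*$. Its expectation is fine, but $p^*$ is unbounded and only an $\ell_1$ bound is available, so Chebyshev and Hoeffding do not directly apply. I would first try Markov's inequality applied to the nonnegative excess, combined with running the procedure with $\varepsilon$ replaced by a constant fraction of itself. Then I would boost to probability $0.9$ by repeating $O(1)$ times and selecting the best hypothesis on a fresh holdout set; selection is cheap, since the hypotheses are $\{\pm1\}$-valued. If the Markov argument proves too lossy, the fallback is to truncate: replace $|p^*-y|$ by $\min(|p^*-y|,2)$. This is legitimate because the threshold bound above only uses $\min(|a-y|,2)$. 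The truncated losses are bounded, so Hoeffding applies with no dependence on the size of $p^*$.
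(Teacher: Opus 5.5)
Your algorithm and main chain of inequalities are exactly the standard proof of Theorem~5 in \cite{kalai2008agnostically}: the LP for empirical $\ell_1$ regression, the random-threshold bound $\Pr_\theta[\sign(a-\theta)\neq y]\le |a-y|/2$, the comparison with $p^*$, and the VC bound for degree-$\ell$ threshold functions. The paper only cites that result and gives no proof of its own. Two corrections concern the step you flagged.

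\textbf{Apply Markov to the right quantity.} Split pointwise
\[
|p^*(\bm{x}_i)-y_i|\le |f^*(\bm{x}_i)-y_i|+|p^*(\bm{x}_i)-f^*(\bm{x}_i)|.
\]
Apply Hoeffding to the first term, which is bounded with mean $2\opt$. Apply Markov to the second, which is nonnegative with mean at most $\varepsilon$. Run with $\varepsilon$ replaced by a constant fraction, as you propose. The bad event then has probability bounded by a constant, and $O(1)$ repetitions with holdout selection suffice. You did not mention the Hoeffding step for the $f^*$ part; it is needed but routine.

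If you instead apply Markov to the whole empirical loss $\frac1N\sum_i|p^*(\bm{x}_i)-y_i|$, whose mean is $2\opt+\varepsilon$, the probability of staying within an additive $O(\varepsilon)$ of the mean is only $\Omega(\varepsilon)$ when $\opt$ is a constant. In that version you would need $O(1/\varepsilon)$ repetitions, not $O(1)$.

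\textbf{The truncation fallback does not work.} $\hat p$ minimizes the \emph{untruncated} empirical loss. Optimality of $\hat p$ therefore only bounds its loss by the untruncated loss of $p^*$. Nothing bounds $\frac1N\sum_i\min(|\hat p(\bm{x}_i)-y_i|,2)$ by $\frac1N\sum_i\min(|p^*(\bm{x}_i)-y_i|,2)$. Clipping $p^*$ to $[-1,1]$ does not help either, because the clipped function is no longer a degree-$\ell$ polynomial and so is not a competitor in the LP. To make truncation legitimate you would have to minimize the truncated loss, which is nonconvex and no longer an LP.
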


\begin{remark}
    Note that \Cref{thm:agnostic_learning_l1} implies \Cref{thm:pac_learning_l2} with a worse run time of $n^{\ell(\varepsilon^2)}$. This is because $\E[|p(\bm{x})-f(\bm{x})|]\leq \mathbb{E}[(p(\bm{x})-f(\bm{x}))^2]^{1/2}$. We stated both theorems here to avoid this worse dependence on $\varepsilon$ in our PAC learning statements. 
\end{remark}

\section{From Low-Degree Approximation to Samplers}
\label{sec:low-deg-sampler-reduction}
In this section, we establish sufficient conditions for the existence of low-degree approximations for a function class $\mathcal{F}$ under a distribution $\mu$ via a reduction to certain kinds of sampling algorithms. We provide a simple reduction in \Cref{sec:sufficient-conditions} stated in somewhat abstract terms.
In \Cref{sec:local_samplers}, we then define and prove important properties of a convenient family of samplers that will satisfy these abstract conditions. Our main technical work will show how strong spatial mixing and polynomial growth enable such constructions in \Cref{sec:local-samplers-main}.

\subsection{Sufficient Conditions for Low-Degree Approximation}
\label{sec:sufficient-conditions}

We begin with the main reduction that underlies the existence of low-degree approximators via suitable sampling algorithms:

\begin{theorem}
\label{thm:low_deg_sampler}
     Let $\mu$ be any distribution on $\{\pm 1\}^n$ and let $\Omega$ be a probability space with an associated probability measure $\mathcal{D}$. Let $f:\{\pm 1\}^n\to \{\pm 1\}$ be any function, and suppose that
    \begin{gather*}
        \mathsf{Samp}:\{\pm 1\}^m\to \{\pm 1\}^n\\
        \mathsf{InvSamp}:\{\pm 1\}^n\times \Omega \to \{\pm 1\}^m\cup \{\perp\}
    \end{gather*}
    form a $(C_{\mathsf{samp}},C_{\mathsf{inv}})$ sampler-inverter pair for $\mu$. Moreover, suppose that:
    \begin{enumerate}
        \item There exists a polynomial $g:\{\pm 1\}^m\to \{\pm 1\}$ of degree at most $k$ and $\varepsilon\geq 0$ such that
        \begin{equation*}
  \mathbb{E}_{\bm{x}\sim \mathcal{U}_m}\left[\left\vert f\circ \mathsf{Samp}(\bm{x})-g(\bm{x})\right\vert^p\right]\leq \varepsilon.
        \end{equation*}
        \label{item:low-deg-composition}

        \item The map $\mathsf{InvSamp}(\bm{y}, \bm{r})$ is almost surely Boolean-valued when $\bm{r}\sim \mathcal{D}$, and each output coordinate agrees with a degree at most $t$ function in $\bm{y}$.\footnote{Note that if $\mu$ does not have full support, then we only require the inverter to almost surely have a low-degree polynomial representation on the support. This representation then extends to the entire domain $\{\pm 1\}^n$, though we will not need to consider the behavior on points outside the support. We also allow the error output $\perp$ purely to avoid dealing with the technicality sampler may not terminate on (probability zero) events.} \label{item:well_defined}
    \end{enumerate}

    Then, there exists a polynomial $h:\{\pm 1\}^n\to \mathbb{R}$ with degree at most $kt$ such that

    \begin{equation*}
        \mathbb{E}_{\bm{y}\sim \mu}\left[\left\vert f(\bm{y})-h(\bm{y})\right\vert ^p\right]\leq C_{\mathsf{samp}}C_{\mathsf{inv}}\varepsilon.
    \end{equation*}
    \end{theorem}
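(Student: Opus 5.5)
The plan is the change-of-measure argument sketched after \Cref{thm:sampler_approx_reduction_informal}. Draw $\bm{y}\sim\mu$ and an independent seed $\bm{r}\sim\mathcal{D}$, and set $\bm{x}=\invsamp(\bm{y},\bm{r})$. The candidate polynomial is $h_{\bm{r}}(\bm{y}):=g(\invsamp(\bm{y},\bm{r}))$ for a suitably chosen fixed seed $\bm{r}$. By property 2 of \Cref{def:samp-invsamp-overview}, $\invsamp(\bm{y},\bm{r})\in\samp^{-1}(\bm{y})$ for $\bm{y}\in\mathsf{supp}(\mu)$, so $\samp(\bm{x})=\bm{y}$ and therefore $f(\bm{y})=f\circ\samp(\bm{x})$. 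This gives the pointwise identity
\[
|f(\bm{y})-h_{\bm{r}}(\bm{y})|^p=|f\circ\samp(\bm{x})-g(\bm{x})|^p .
\]
Write $\phi(\bm{x}):=|f\circ\samp(\bm{x})-g(\bm{x})|^p\ge 0$. It then suffices to bound $\mathbb{E}_{\bm{y},\bm{r}}[\phi(\invsamp(\bm{y},\bm{r}))]$ by $C_{\mathsf{samp}}C_{\mathsf{inv}}\,\mathbb{E}_{\mathcal{U}_m}[\phi]$.

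The main step, and the only one with content, is a density comparison between the law $\nu$ of $\bm{x}$ and $\mathcal{U}_m$. For any $\bm{x}$, set $\bm{z}=\samp(\bm{x})$. The event $\invsamp(\bm{y},\bm{r})=\bm{x}$ forces $\bm{y}=\bm{z}$, since the output must lie in $\samp^{-1}(\bm{y})$ and the preimages are disjoint. Hence
\[
\nu(\bm{x})=\Pr_\mu[\bm{y}=\bm{z}]\cdot\Pr_{\bm{r}}[\invsamp(\bm{z},\bm{r})=\bm{x}]\le C_{\mathsf{samp}}\frac{|\samp^{-1}(\bm{z})|}{2^m}\cdot\frac{C_{\mathsf{inv}}}{|\samp^{-1}(\bm{z})|}=\frac{C_{\mathsf{samp}}C_{\mathsf{inv}}}{2^m}.
\]
Here the first inequality uses property 1 written as $\Pr[\samp(\mathcal{U})=\bm{z}]=|\samp^{-1}(\bm{z})|/2^m$, and the second uses property 2. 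If $\bm{z}\notin\mathsf{supp}(\mu)$ then $\nu(\bm{x})=0$. The $\perp$ outcome has probability zero and can be ignored. So $\nu\le C_{\mathsf{samp}}C_{\mathsf{inv}}\,\mathcal{U}_m$ pointwise, and because $\phi\ge 0$,
\[
\mathbb{E}_{\bm{r}}\mathbb{E}_{\bm{y}}\big[|f(\bm{y})-h_{\bm{r}}(\bm{y})|^p\big]=\mathbb{E}_\nu[\phi]\le C_{\mathsf{samp}}C_{\mathsf{inv}}\,\varepsilon .
\]

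Next I average out the seed. Since the expectation over $\bm{r}$ is at most $C_{\mathsf{samp}}C_{\mathsf{inv}}\varepsilon$, there is some $\bm{r}^*$ in the almost-sure set where $\invsamp$ is Boolean-valued and has the low-degree representation of item~\ref{item:well_defined} with $\mathbb{E}_{\bm{y}\sim\mu}|f-h_{\bm{r}^*}|^p\le C_{\mathsf{samp}}C_{\mathsf{inv}}\varepsilon$. Measurability over a general $\Omega$ is a formality: $\bm{y}$ ranges over a finite set, so the integrand is a finite sum of measurable functions of $\bm{r}$.

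Finally I check the degree. On $\mathsf{supp}(\mu)$, each coordinate $\invsamp_j(\cdot,\bm{r}^*)$ equals a multilinear polynomial $q_j$ of degree at most $t$. Let $h$ be the multilinearization over $\{\pm1\}^n$ of $g(q_1(\bm{y}),\dots,q_m(\bm{y}))$. Each monomial of $g$ has at most $k$ factors, so the composition has degree at most $kt$, and multilinearizing does not raise degree. On the support $h$ agrees with $h_{\bm{r}^*}$, since the $q_j$ take $\pm1$ values there and $g$ is evaluated at Boolean points. This yields the claimed bound. The only delicate points are the disjointness step in the density bound and restricting attention to $\mathsf{supp}(\mu)$; neither should be an obstacle.
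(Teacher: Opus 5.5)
Your proposal is correct and follows essentially the same route as the paper. You bound the density of the pushforward law $\nu$ of $\mathsf{InvSamp}$ against $\mathcal{U}_m$ by $C_{\mathsf{samp}}C_{\mathsf{inv}}$ using disjointness of preimages, change measure, fix a good seed $\bm{r}^*$ by averaging, and compose $g$ with the degree-$t$ inverter. Your explicit treatment of multilinearization and of restricting to $\mathsf{supp}(\mu)$ just makes precise steps that the paper handles briefly.
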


    \begin{proof}
        For the proof, let $\nu$ denote the pushforward distribution of $\mathsf{InvSamp}$, that is:
        \begin{equation*}
            \nu :=\mathrm{Law}(\mathsf{InvSamp}(\bm{y}, \bm{r})), \quad\quad \bm{y}\sim \mu, \bm{r}\sim \mathcal{D}.
        \end{equation*} 
        By \Cref{item:well_defined}, the inverter value is Boolean almost surely over $\bm{r}$ for all $\bm{y}\in \mathsf{supp}(\mu)$ and therefore we may consider the likelihood ratio $\mathrm{d}\nu/\mathrm{d}\mathcal{U}_m$ on $\{\pm 1\}^m$. We first claim that under our assumptions, for all $\bm{x}\in \{\pm 1\}^n$,
        \begin{equation}
        \label{eq:bounded-inv-samp}
            \frac{\mathrm{d}\nu}{\mathrm{d}\mathcal{U}_m}(\bm{x})\leq C_{\mathsf{samp}}C_{\mathsf{inv}}.
        \end{equation}
Assuming \Cref{eq:bounded-inv-samp} for now, we evaluate the following expression:

        \begin{equation*}
            \underset{{\bm{y}\sim \mu,\bm{r}\sim \mathcal{D}}}{\mathbb{E}}\left[\left\vert f(\bm{y})-g(\mathsf{InvSamp}(\bm{y},\bm{r}))\right\vert^p\right],
        \end{equation*}
        which again is well-defined for $\bm{y}\in \mathsf{supp}(\mu)$ since the inverter is then Boolean almost surely by \Cref{item:well_defined}. The key observation is that defining $\bm{x}:=\mathsf{InvSamp}(\bm{y},\bm{r}) \in \{\pm 1\}^m$ almost surely, we further have $\bm{y}=\mathsf{Samp}(\bm{x})$ by the sampler-inversion definition in \Cref{def:samp-invsamp-overview}. By definition, the law of $\bm{x}$ is precisely $\nu$. We can now change measure via \Cref{eq:bounded-inv-samp}:
        \begin{align*}
            \underset{{\bm{y}\sim \mu,\bm{r}\sim \mathcal{U}_{\mathbb{N}}}}{\mathbb{E}}\left[\left\vert f(\bm{y})-g(\mathsf{InvSamp}(\bm{y},\bm{r}))\right\vert^p\right]&=\underset{\bm{y}\sim \mu,\bm{r}\sim \mathcal{D}}{\mathbb{E}}\left[\left\vert f(\mathsf{Samp}(\bm{x}))-g(\bm{x})\right\vert^p\right]\\
            &=\underset{\bm{x}\sim \nu}{\mathbb{E}}\left[\left\vert f(\mathsf{Samp}(\bm{x}))-g(\bm{x})\right\vert^p\right]\\
            &=\underset{\bm{x}\sim \mathcal{U}_m}{\mathbb{E}}\left[\left\vert f(\mathsf{Samp}(\bm{x}))-g(\bm{x})\right\vert^p\cdot \frac{\mathrm{d}\nu}{\mathrm{d}\mathcal{U}_m}(\bm{x})\right]\\
            &\leq C_{\mathsf{samp}}C_{\mathsf{inv}}\cdot \underset{\bm{x}\sim \mathcal{U}_m}{\mathbb{E}}\left[\left\vert f(\mathsf{Samp}(\bm{x}))-g(\bm{x})\right\vert^p\right]\\
            &\leq C_{\mathsf{samp}}C_{\mathsf{inv}}\varepsilon.
        \end{align*}
        Since this holds on average over $\bm{r}\sim \mathcal{D}$, there exists a \emph{fixed} $\bm{r}^*$ such  $\mathsf{InvSamp}(\cdot, \bm{r}^*)$ is Boolean on $\mathsf{supp}(\mu)$ and agrees with a degree $t$ polynomial in $\bm{y}$ where this inequality still holds by \Cref{item:well_defined}. We may therefore extend $\mathsf{InvSamp}(\cdot,\bm{r}^*)$ to all of $\{\pm 1\}^n$ via this low-degree polynomial representation.\footnote{Note that there is no guarantee that the function remains Boolean-valued on the domain, but this does not matter for the proof.} For this setting of $\bm{r}^*$, we may then define the function:
        \begin{equation*}
            h(\bm{y}) \triangleq g(\mathsf{InvSamp}(\bm{y},\bm{r}^*)).
        \end{equation*}
        Since we know $\mathsf{InvSamp}(\bm{y},\bm{r}^*)$ is a degree at most $t$ function in $\bm{y}$ and $g$ is degree at most $k$ by \Cref{item:low-deg-composition}, it follows that $h$ is degree at most $kt$ by simply expanding each monomial of $g$ under the composition.

        We now return to verifying \Cref{eq:bounded-inv-samp}. Fix any $\bm{x}\in \{\pm 1\}^m$, so that the likelihood ratio is
    \begin{align*}
        \frac{\mathrm{d}\nu}{\mathrm{d}\mathcal{U}_m}(\bm{x})&=\frac{\underset{\bm{y}\sim \mu,\bm{r}\sim \mathcal{D}}{\Pr}\left(\mathsf{InvSamp}(\bm{y},\bm{r})=\bm{x}\right)}{\Pr_{\bm{z}\sim \mathcal{U}_m}(\bm{z}=\bm{x})}\\
        &=\frac{\underset{\bm{y}\sim \mu}{\Pr}\left(\mathsf{Samp}(\bm{x})=\bm{y}\right)\cdot \underset{\bm{r}\sim \mathcal{D}}{\Pr}\left(\mathsf{InvSamp}(\mathsf{Samp}(\bm{x}),\bm{r})=\bm{x}\right)}{\Pr_{\bm{z}\sim \mathcal{U}_m}(\mathsf{Samp}(\bm{z})=\mathsf{Samp}(\bm{x}))\cdot \Pr_{\bm{z}\sim \mathcal{U}_m}(\bm{z}=\bm{x}\vert \mathsf{Samp}(\bm{x})=\mathsf{Samp}(\bm{z}))}.
    \end{align*}
    The first equality is just by definition of $\nu$. The second equality holds because $\mathsf{InvSamp}(\bm{y},\bm{r})$ almost surely lies in $\mathsf{Samp}^{-1}(\bm{y})$, and therefore the event that we obtain $\bm{x}$ from this procedure is almost surely equal to the event that $\bm{y}=\mathsf{Samp}(\bm{x})$ and $\mathsf{InvSamp}(\mathsf{Samp}(\bm{x}),\bm{r})=\bm{x}$, which are independent. The denominator is a simple rewriting of the probability of obtaining this uniform seed by decomposing into the image of $\mathsf{Samp}$ and then taking the uniform posterior on the preimage.

    The ratio of the first two terms is uniformly bounded by $C_{\mathsf{Samp}}$ by \Cref{def:samp-invsamp-overview}, while the ratio of the second two terms is at most $C_{\mathsf{inv}}$ also by \Cref{def:samp-invsamp-overview}, thus proving \Cref{eq:bounded-inv-samp} and completing the argument.
    \end{proof}

    \begin{remark}
        In all of our constructions, we will be able to take $\Omega=\{\pm 1\}^{\mathbb{N}}$, $\mathcal{D}$ uniform, and $C_{\mathsf{inv}}=1$ via rejection sampling to invert the seed. We state \Cref{thm:low_deg_sampler} in this more general form since the proof is identical and is completely agnostic to the precise form of the auxiliary randomness, as well as the precise computational complexity of doing the inversion with the randomness, so long as the other conditions hold. In fact, $\mathsf{InvSamp}$ can be arbitrarily complex as a function of the auxiliary randomness, so long as it is low-degree in the actual sample almost surely.
    \end{remark}

\subsection{Local Iterative Samplers}
\label{sec:local_samplers}

We now turn to constructing these samplers for \Cref{thm:low_deg_sampler} by carefully imposing locality in the mapping from a uniform seed to the final sample, \emph{in both directions}. Throughout this section, our samplers will take in an input string $\bm{x}\in \{-1,1\}^{s\cdot n}$ where each of the $n$ outputs will naturally be associated with a block of $s$ bits of the input. We will therefore write $\bm{x}=(\bm{z}_1,\ldots,\bm{z}_{n})$ where $\bm{z}_{j} = (x_{(j-1)\cdot s+1},\ldots,x_{j \cdot s})$ is the $j$th block of input bits corresponding to the $j$th bit of the output.

We now turn to formalizing the class of samplers that will be quite useful for establishing low-degree approximations in applications. We make the following definition of a \emph{local iterative samplers}:

\begin{definition}
\label{defn:local_samplers}
    Let $L,K\in \mathbb{N}$ and $\varepsilon>0$. An $(L,K,\varepsilon)$-local iterative sampler for a distribution $\mu$ on $\{-1,1\}^n$ is a function $\mathsf{Samp}:\{-1,1\}^{s\cdot n}\to \{-1,1\}^n$, where $s$ is the local seed length, and indices $1=n_1<n_2<\ldots<n_K<n_{K+1}=n$ such that the following holds for the partition of $[n]$ defined via $S_j:=[n_j,n_{j+1}-1]$, where we also define $S_{<j}:=S_1\cup\ldots\cup S_{j-1}$\footnote{The partition will not naturally be contiguous in our applications, but we will trivially be able to permute to make this so. The above assumption is meant for notational ease.}:
    \begin{enumerate}
        \item For each $i\in S_j$, there is a subset $T_i\subseteq S_{<j}$ with $\vert T_i\vert \leq L$ such that
        \begin{equation*}
            y_i=\mathsf{Samp}_i(\bm{x}) := \mathsf{Samp}_i(\bm{z}_i,\bm{y}_{T_i});
        \end{equation*}
        that is, $y_i$ is a function only of the local seed $\bm{z}_i$ and a fixed size $L$ subset of \emph{previously computed} output variables in $S_{<k}$.

        \item For each $\bm{\sigma}\in \mathsf{supp}(\mu)$ and $i\in [n]$,
        \begin{equation*}
            \underset{\bm{y}\sim \mu}{\Pr}\left(y_i = \sigma_i\vert \bm{y}_{1:i-1}=\bm{\sigma}_{1:i-1} \right)\leq \left(1+\frac{\varepsilon}{n}\right) \Pr_{\bm{z}_i\sim \{\pm 1\}^s}\left(\mathsf{Samp}_i(\bm{z}_i,\bm{\sigma}_{T_i})=\sigma_i\right).
        \end{equation*}
    \end{enumerate}
\end{definition}

A local iterative sampler can be understood in the following way: a standard way to sample from a distribution $\mu$ on a product space is via the iterative sampler that samples coordinates one at a time, conditioning on the values of all previous elements. A local iterative sampler organizes randomness to mimic this process while carefully limiting dependencies across variables. By definition, we may permute and partition the variables such that we can sample all members in the same subset $S_j$ of the partition in parallel (in particular, conditionally independent of each other), and moreover, each such output bit $y_i$ depends \emph{only} on the local seed $\bm{z}_i$ and a small number of previously sampled output bits rather than on all previous sources of randomness. The second item amounts to there being small multiplicative error in this approximation, as we will verify in marginally bounded models.

A first simple, but convenient fact is that local iterative samplers provide a good upper multiplicative approximation of $\mu$ essentially by definition:
\begin{lemma}
\label{lem:LI-multiplicative-approx}
    Assume that $\mathsf{Samp}:\{\pm 1\}^{s\cdot n}\to \{\pm 1\}^n$ is an $(L,K,\varepsilon)$-local iterative sampler for $\mu$. Then for any $\bm{\sigma}\in \{\pm 1\}^n$, it holds that
    \begin{equation*}
        \underset{\bm{y}\sim \mu}{\Pr}\left(\bm{y} = \bm{\sigma} \right)\leq \exp(\varepsilon)\cdot \Pr_{\bm{x}\sim \{\pm 1\}^{s\cdot n}}\left(\mathsf{Samp}(\bm{x})=\bm{\sigma}\right)
    \end{equation*}
\end{lemma}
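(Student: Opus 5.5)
The plan is to factor both sides coordinate by coordinate along the ordering $1,\ldots,n$ and compare the resulting products term by term using the second condition of \Cref{defn:local_samplers}.

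First, if $\bm{\sigma}\notin\mathsf{supp}(\mu)$ the left side is zero and there is nothing to prove, so fix $\bm{\sigma}\in\mathsf{supp}(\mu)$. By the chain rule,
\begin{equation*}
\Pr_{\bm{y}\sim\mu}(\bm{y}=\bm{\sigma})=\prod_{i=1}^n \Pr_{\bm{y}\sim\mu}\left(y_i=\sigma_i\vert \bm{y}_{1:i-1}=\bm{\sigma}_{1:i-1}\right),
\end{equation*}
and every conditioning event here has positive probability.

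Second, I would compute the sampler side exactly. By the first condition of \Cref{defn:local_samplers}, the event $\{\mathsf{Samp}(\bm{x})=\bm{\sigma}\}$ equals the intersection over $i\in[n]$ of the events $E_i:=\{\mathsf{Samp}_i(\bm{z}_i,\bm{\sigma}_{T_i})=\sigma_i\}$. To see this, induct over the blocks $S_1,S_2,\ldots$. Since $T_i\subseteq S_{<j}$ for $i\in S_j$, once all earlier outputs equal $\bm{\sigma}_{S_{<j}}$, the output $y_i$ equals $\sigma_i$ exactly when $E_i$ holds. Each $E_i$ depends only on the block $\bm{z}_i$, since $\bm{\sigma}$ is fixed. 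The blocks $\bm{z}_1,\ldots,\bm{z}_n$ are independent and uniform under $\bm{x}\sim\{\pm1\}^{s\cdot n}$, so
\begin{equation*}
\Pr_{\bm{x}}\left(\mathsf{Samp}(\bm{x})=\bm{\sigma}\right)=\prod_{i=1}^n\Pr_{\bm{z}_i\sim\{\pm1\}^s}\left(\mathsf{Samp}_i(\bm{z}_i,\bm{\sigma}_{T_i})=\sigma_i\right).
\end{equation*}
The one point needing care is this replacement of the random earlier outputs by the fixed $\bm{\sigma}_{T_i}$, which is what turns the joint event into a product.

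Finally, apply the second condition of \Cref{defn:local_samplers} to each factor. Multiplying the $n$ inequalities, all of whose terms are nonnegative, gives a factor $(1+\varepsilon/n)^n\le \exp(\varepsilon)$, which is the claim.
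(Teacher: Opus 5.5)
Your proof is correct and follows the paper's route. You use the chain rule for $\mu$, apply the per-coordinate bound from the second condition of \Cref{defn:local_samplers}, factor the sampler's probability of outputting $\bm{\sigma}$ as $\prod_{i}\Pr_{\bm{z}_i}\left(\mathsf{Samp}_i(\bm{z}_i,\bm{\sigma}_{T_i})=\sigma_i\right)$, and finish with $(1+\varepsilon/n)^n\le\exp(\varepsilon)$. The only cosmetic difference is in the factorization step: you write the event as an intersection of independent events $E_i$, each depending on a single seed block. The paper instead identifies each factor with $\Pr_{\bm{x}}\left(\mathsf{Samp}_i(\bm{x})=\sigma_i\mid \mathsf{Samp}_{1:i-1}(\bm{x})=\bm{\sigma}_{1:i-1}\right)$ and reassembles via the chain rule. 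Your version is slightly more explicit and avoids having to check that those conditioning events have positive probability.
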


\begin{proof}
    It suffices to consider only $\bm{\sigma}\in \mathsf{supp}(\mu)$ since the inequality is trivial otherwise. In this case, 
    \begin{align*}
        \underset{\bm{y}\sim \mu}{\Pr}\left(\bm{y} = \bm{\sigma} \right)&=\prod_{i=1}^n \underset{\bm{y}\sim \mu}{\Pr}\left(y_i = \sigma_i \vert \bm{y}_{1:i-1}=\bm{\sigma}_{1:i-1} \right)\\
        &\leq \left(1+\frac{\varepsilon}{n}\right)^n \prod_{i=1}^n\Pr_{\bm{z}_i\sim \{\pm 1\}^s}\left(\mathsf{Samp}_i(\bm{z}_i,\bm{\sigma}_{T_i})=\sigma_i\right)\\
        &=\left(1+\frac{\varepsilon}{n}\right)^n \prod_{k=1}^K \left(\prod_{i=n_k}^{n_{k+1}-1}\Pr_{\bm{z}_i\sim \{\pm 1\}^s}\left(\mathsf{Samp}_i(\bm{z}_i,\bm{\sigma}_{T_i})=\sigma_i\right)\right)\\
        &=\left(1+\frac{\varepsilon}{n}\right)^n \prod_{k=1}^K \left(\prod_{i=n_k}^{n_{k+1}-1}\Pr_{\bm{x}\sim \{\pm 1\}^{s\cdot n}}\left(\mathsf{Samp}_i(\bm{x})=\sigma_i\vert \mathsf{Samp}_{1:i-1}(\bm{x})=\bm{\sigma}_{1:i-1}\right)\right)\\
        &=\left(1+\frac{\varepsilon}{n}\right)^n \cdot \Pr_{\bm{x}\sim \{\pm 1\}^{s\cdot n}}\left(\mathsf{Samp}(\bm{x})=\bm{\sigma}\right)\\
        &\leq \exp(\varepsilon) \cdot \Pr_{\bm{x}\sim \{\pm 1\}^{s\cdot n}}\left(\mathsf{Samp}(\bm{x})=\bm{\sigma}\right)
    \end{align*}
    The first inequality uses the upper approximation in each coordinate of \Cref{defn:local_samplers}, while the penultimate equality uses the fact that $\mathsf{Samp}_i(\bm{x})$ depends only on the local seed and a subset of previously sampled output bits, so one can freely additionally condition on all of the previously sampled bits and any others in the same part of the partition. 
\end{proof}

Next, we show that under \Cref{defn:local_samplers}, there is a simple \emph{randomized inversion map}, that is also of low-degree in the sample (not necessarily in the auxiliary randomness). The key point is that we only care about the complexity of the inversion as a function of the sampler but are otherwise agnostic to how the inversion uses the auxiliary randomness.
\begin{lemma}
\label{lem:LI-inverse-degree}
    Let $\mathsf{Samp}:\{\pm 1\}^{s\cdot n}\to \{\pm 1\}^n$ be an $(L,K,\varepsilon)$-local iterative sampler for $\mu$. Then there exists a function $\mathsf{InvSamp}:\{\pm 1\}^n\times \{\pm 1\}^{\mathbb{N}}\to \{\pm 1\}^m \cup \{\perp\}$ such that $\mathsf{InvSamp}(\cdot, \bm{r})$ is almost surely a Boolean-valued (partial) function for all $\bm{y}\in \mathsf{supp}(\mu)$ of degree of at most $L$ and satisfies 
    \begin{equation*}
        \mathsf{Samp}(\mathsf{InvSamp}(\bm{y},\bm{r}))=\bm{y}
    \end{equation*}
    for all $\bm{y}\in \mathsf{supp}(\mu)$. 
    
    Finally, for any fixed $\bm{y}\in \mathsf{supp}(\mu)$, the randomized inverter samples from the preimage \emph{exactly}:
        \begin{equation*}
            \underset{\bm{r}\sim \mathcal{D}}{\Pr}\left(\mathsf{InvSamp}(\bm{y},\bm{r})=\bm{x}\right)=  \Pr_{\bm{z}\sim \mathcal{U}_m}(\bm{z}=\bm{x}\vert \mathsf{Samp}(\bm{x})=\mathsf{Samp}(\bm{z}))
        \end{equation*}  
\end{lemma}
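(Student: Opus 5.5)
The plan is to build $\mathsf{InvSamp}$ blockwise by independent rejection sampling and then verify three things: (i) each output block is a function of $\bm{y}_{T_i}\cup\{y_i\}$ only; (ii) the composition recovers $\bm{y}$; (iii) the induced law is exactly uniform on the preimage. Fix $\bm{y}\in\mathsf{supp}(\mu)$ and split the auxiliary string $\bm{r}\in\{\pm1\}^{\mathbb{N}}$ into $n$ independent infinite streams $\bm{r}^{(1)},\ldots,\bm{r}^{(n)}$, each cut into consecutive blocks of $s$ bits. For each $i$, set $\bm{z}_i$ to be the first block $\bm{w}$ of $\bm{r}^{(i)}$ with $\mathsf{Samp}_i(\bm{w},\bm{y}_{T_i})=y_i$. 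If no such block exists, output $\perp$. Then define $\mathsf{InvSamp}(\bm{y},\bm{r})=(\bm{z}_1,\ldots,\bm{z}_n)$.

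First I check that this terminates almost surely. The set $A_i(\bm{y}):=\{\bm{w}:\mathsf{Samp}_i(\bm{w},\bm{y}_{T_i})=y_i\}$ is nonempty, because item 2 of \Cref{defn:local_samplers} gives $\Pr_{\bm{w}}(\bm{w}\in A_i(\bm{y}))\ge (1+\varepsilon/n)^{-1}\Pr_\mu(y_i\mid \bm{y}_{1:i-1})>0$ for $\bm{y}\in\mathsf{supp}(\mu)$. So the geometric search halts with probability one, and the output $\bm{z}_i$ is uniform on $A_i(\bm{y})$. Next, for fixed $\bm{r}$, $\bm{z}_i$ is a deterministic function of $(y_i,\bm{y}_{T_i})$, i.e.\ of at most $L+1$ output bits. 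Any Boolean function of $j$ bits has a multilinear representation of degree $\le j$, which gives the degree bound. Strictly this is $L+1$, so I would either absorb the extra bit into the stated bound or note that the paper's count of $L$ is loose by one.

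For correctness, I induct over the parts $S_1,\ldots,S_K$. Assume the first $j-1$ parts of $\mathsf{Samp}(\bm{x})$ equal $\bm{y}_{S_{<j}}$, where $\bm{x}=\mathsf{InvSamp}(\bm{y},\bm{r})$. For $i\in S_j$ we have $T_i\subseteq S_{<j}$, so $\mathsf{Samp}_i(\bm{x})=\mathsf{Samp}_i(\bm{z}_i,\bm{y}_{T_i})=y_i$ by the choice of $\bm{z}_i$.

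The step I expect to need the most care is exactness: showing that $\mathsf{Samp}^{-1}(\bm{y})=\prod_i A_i(\bm{y})$. The inclusion $\supseteq$ is the induction just given. For $\subseteq$, take $\bm{x}$ with $\mathsf{Samp}(\bm{x})=\bm{y}$. By item 1, $y_i=\mathsf{Samp}_i(\bm{z}_i,\mathsf{Samp}(\bm{x})_{T_i})=\mathsf{Samp}_i(\bm{z}_i,\bm{y}_{T_i})$, so $\bm{z}_i\in A_i(\bm{y})$. Given this product structure, the uniform distribution on the preimage is the product of the uniform distributions on the $A_i(\bm{y})$. Since the streams are independent and each $\bm{z}_i$ is uniform on $A_i(\bm{y})$, $\Pr_{\bm{r}}(\mathsf{InvSamp}(\bm{y},\bm{r})=\bm{x})=|\mathsf{Samp}^{-1}(\bm{y})|^{-1}$ for every $\bm{x}$ in the preimage. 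This is exactly the stated uniform-posterior identity, and it gives $C_{\mathsf{inv}}=1$.
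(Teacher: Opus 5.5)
Your proposal is correct. The inverter you build is the paper's: independent per-coordinate rejection sampling from disjoint streams of $\bm{r}$, outputting $\perp$ on non-termination. The difference is in how exactness is verified.

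The paper argues distributionally, by induction on $K$. It observes that $\bm{z}_{S_{<K}}\to\bm{y}_{S_{<K}}\to\bm{y}_{S_K}$ is a Markov chain, so that conditioned on $\bm{y}$ the last part's seeds are independent of earlier seeds and uniform on their individual preimages. It then peels off one part at a time.

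You instead prove the set identity $\mathsf{Samp}^{-1}(\bm{y})=\prod_i A_i(\bm{y})$ directly:
\begin{itemize}
\item $\supseteq$ comes from your forward induction over the parts;
\item $\subseteq$ comes from item 1 of \Cref{defn:local_samplers} evaluated at $\mathsf{Samp}(\bm{x})=\bm{y}$.
\end{itemize}
Factorization of the uniform measure on the preimage, and hence $C_{\mathsf{inv}}=1$, is then immediate. This is shorter and avoids the conditional-independence bookkeeping. The paper's phrasing mainly buys a tie-in to its intuition that conditioning on the outputs decouples the local seeds.

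Your nonemptiness argument via item 2 also handles $\bm{y}\in\mathsf{supp}(\mu)$ directly. The paper instead routes through $\bm{y}$ lying in the image of $\mathsf{Samp}$.

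Your remark on the degree is also right. The paper asserts that $\bm{z}_i$ depends only on $\bm{y}_{T_i}$, but that cannot hold in general. Since $\mathsf{Samp}_i(\bm{z}_i,\bm{y}_{T_i})=y_i$, it would force $y_i$ to be determined by $\bm{y}_{T_i}$ on the support. The correct bound is therefore $|T_i|+1\le L+1$.

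This is harmless downstream:
\begin{itemize}
\item For $\mathsf{SSMSamp}$, $v\notin T_v$ gives $|T_v|+1\le |B_r(v)|$, which stays within the paper's bound $t=K$.
\item For $\mathsf{TreeSamp}$ it changes $t$ from $1$ to $2$, affecting only constants.
\end{itemize}
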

\begin{proof}
    We define $\mathsf{InvSamp}(\bm{y},\bm{r})$ to be the following rejection sampler. 
    Explicitly, we may view the random string $\bm{r}=(\bm{r}_1,\ldots,\bm{r}_n)$ where each $\bm{r}_i\sim \{\pm 1\}^{\mathbb{N}}$; for instance, let $\bm{r}_i$ to be the substring of indices that are $i\mod n$ taken in $[n]$. Moreover, for each $i\in [n]$, we can further partition each $\bm{r}_i$ into disjoin consecutive blocks of size $s$. We then define $\bm{z}_i\in \{\pm 1\}^s$ to be the first such block of $\bm{r}_i$, if one exists, satisfying
    \begin{equation*}
        \mathsf{Samp}_i(\bm{z}_i,\bm{y}_{T_i})=y_i.
    \end{equation*}
    If no such $\bm{z}_i$ exists for some $i\in [n]$, then we output $\perp$. Otherwise, we set
    \begin{equation*}
        \mathsf{InvSamp}(\bm{y},\bm{r})=(\bm{z}_1,\ldots,\bm{z}_n)\in \{\pm 1\}^{s\cdot n}.
    \end{equation*}

    We establish multiple claims about this rejection sampling construction:
    \begin{enumerate}
        \item First, $\mathsf{InvSamp}(\bm{y},\bm{r})$ is almost surely Boolean-valued for any fixed $\bm{y}$ in the image of $\mathsf{Samp}$, so also for $\mathsf{supp}(\mu)$ by \Cref{defn:local_samplers}. In fact, for $\bm{r}\sim \{\pm 1\}^{\mathbb{N}}$
        \begin{equation}
        \label{eq:preimage-factorization}
            \mathrm{Law}(\mathsf{InvSamp}(\bm{y},\bm{r})) \overset{d}{=} \otimes_{i=1}^n \mathcal{U}(\mathsf{Samp}^{-1}_i(y_i;\bm{y}_{T_i})),
        \end{equation}
        where $\mathsf{Samp}^{-1}_i(y_i;\bm{y}_{T_i})$ denotes the preimage of $y_i$ of the restricted function  of the local seed $\bm{z}_i\mapsto \mathsf{Samp}(\bm{z}_i,\bm{y}_{S_i})$.

        We first claim these preimages are nonempty; $\bm{y}$ is assumed to be in the image of $\mathsf{Samp}$, so it follows from \Cref{defn:local_samplers} that for each $i\in [n]$, there exists some $\bm{z}\in \{\pm 1\}^s$ such that
        \begin{equation*}
            y_i = \mathsf{Samp}_i(\bm{z},\bm{y}_{T_i}).
        \end{equation*}
        Since each block of $\bm{r}_i$ is uniform on $\{\pm 1\}^s$, there is some lower bounded probability that this block is equal to $\bm{w}$; since the blocks are independent, it is clear that almost surely over $\bm{r}_i$, the inverter indeed terminates and outputs uniformly random $\bm{x}_i\in \mathsf{Samp}^{-1}_i(y_i;\bm{y}_{T_i})$ since all such seeds are equally likely under the uniform distribution. Moreover, these outputs are conditionally independent given $\bm{y}$ since this inversion algorithm depends on independent random bits across coordinates.

        \item Next, note that $\mathsf{InvSamp}(\bm{y},\bm{r})$ is almost surely Boolean for $\bm{y}\in \mathsf{supp}(\mu)$ and moreover, $\bm{z}_i$ depends only on $\bm{y}_{T_i}$ for fixed $\bm{r}$. In particular, for fixed $\bm{r}$,  $\bm{z}_i$ agrees with a Boolean \emph{total} function of $\bm{y}_{T_i}$ everywhere by extension, and therefore admits a representation of polynomial degree $\vert T_i\vert\leq L$ in $\bm{y}$.

        \item Next, we claim that in fact for any $\bm{y}$ in the image of $\mathsf{Samp}$,
        \begin{equation}
        \label{eq:preimage-product-factorization}
            \mathcal{U}(\mathsf{Samp}^{-1}(\bm{y})) =\otimes_{i=1}^n \mathcal{U}(\mathsf{Samp}^{-1}_i(y_i,\bm{y}_{T_i}));
        \end{equation}
        that is, the uniform distribution on the preimage factorizes as the product over the uniform distributions over the individual preimages given $y_i$ and $\bm{y}_{T_i}$. This can easily be seen by induction on $K$: this is trivial for $K=1$, since in that case each output is just a function of the local seed, and for larger $K$, it is easy to see directly that
        \begin{equation*}
            \mathcal{U}(\mathsf{Samp}^{-1}(\bm{y}))=\mathcal{U}(\mathsf{Samp}^{-1}_{T_{<K}}(\bm{y}_{1:n_{K}-1}))\otimes \left(\otimes_{i=n_{K}}^n \mathcal{U}(\mathsf{Samp}^{-1}_{i}(y_i; \bm{y}_{T_i}))\right).
        \end{equation*}
        Indeed, by construction, the sequence
        \begin{equation*}
            (\bm{z}_{1},\ldots,\bm{z}_{n_K-1})\to \bm{y}_{1:n_{K}-1}=\mathsf{Samp}_{S_{<K}}(\bm{z})\to \bm{y}_{n_K:n}
        \end{equation*}
        forms a Markov chain, since the output bits of $S_K$ only depend on the local seeds in $S_{<K}$ through the output of $\mathsf{Samp}$ on $S_{<K}$. In particular, conditioned on $\bm{y}$, the uniform distribution over $\bm{z}_{S_K}$ is conditionally independent of $\bm{z}_{S_{<K}}$ and is thus the uniform distribution over local random seeds that output $\bm{y}_{S_k}$, given $\bm{y}_{S_{<K}}$. But this distribution itself factorizes over the product of uniform distributions over the individual random seeds $\bm{z}_i$ consistent with their $\bm{y}_{S_{<K}}$ and $y_i$, since each output bit $i\in S_K$ then depends only on the independent local seed after conditioning. One can then proceed by induction on $K$ since $\bm{y}_{S_{<K}}$ is a deterministic function only of $\bm{z}_{S_{<K}}$. 
    \end{enumerate}

    By \Cref{eq:preimage-factorization} and \Cref{eq:preimage-product-factorization}, we have proven the final statement that $\mathsf{InvSamp}$ provides an exact inversion for any $\bm{y}\in \mathsf{supp}(\mu)$.
\end{proof}

In order to apply \Cref{thm:low_deg_sampler} to establish the existence of low-degree approximations for a given function class $\mathcal{F}$ with respect to a distribution $\mu$, it remains to construct $(L,K,\varepsilon)$-local iterative samplers that \emph{additionally} compose nicely with $\mathcal{F}$ under the uniform distribution. Indeed, \Cref{lem:LI-multiplicative-approx} and \Cref{lem:LI-inverse-degree} establish all of the preconditions of \Cref{thm:low_deg_sampler} except for the existence of low-degree approximations of $f\circ \mathsf{Samp}$, which we will need to argue holds for important function classes of interest.

\section{Local Samplers for Graphical Models}
\label{sec:local-samplers-main}

 We now turn to our main technical achievement: constructing samplers for natural graphical models that satisfy the requirements of \Cref{sec:low-deg-sampler-reduction}. In the remainder of this section, we will construct these local iterative samplers for a large family of graphical models by combining the $L$-locality of our samplers with the fact that the sampler proceeds in $K$ parallel rounds to carefully limit the interactions between input variables. 
 
 In \Cref{sec:local_ssm}, we first show that strong spatial mixing and polynomial growth suffice to establish the existence of these samplers. In \Cref{sec:local_trees}, we extend these results by constructing a conceptually similar sampler for tree-structured models at \emph{any} constant temperature; this class is both of independent technical interest and provides evidence that our analysis may extend more generally. We will show that our samplers imply low-degree approximators for low-depth circuits and low influence functions in \Cref{sec:low-deg-sampler-apps}.

\subsection{Local Samplers Under Strong Spatial Mixing and Polynomial Growth}
\label{sec:local_ssm}

In this section, we show how strong spatial mixing on polynomially growing graphs can be used to construct local iterative samplers as required for \Cref{defn:local_samplers}. The key idea is the following: a consequence of strong spatial mixing is that the conditional distribution of a node only depends on the nodes that we conditioned on in the $O(\log(n))$ size ball around it. At the first step, we can thus sample as many nodes as possible in parallel from the marginal distribution so long as they are separated by $O(\log(n))$ distance. If the graph is only polynomially growing, we have a near-linear number of outputs that only depended on their internal seeds. We can then iterate this procedure on another well-separated set, conditioning only on previous outputs that are close, and so on. 

We now carry out this high-level plan. We will use the following standard result on coloring graphs with degree bounds to partition the variables in the distribution:

\begin{lemma}
\label{lem:partitioning}
    Suppose that $G=(V,E)$ has $(C_{\mathsf{GR}},\Delta)$-local growth. Then for any $r\geq 0$, there is a partition of $V$ into at most $K:= C_{\mathsf{GR}}\cdot r^\Delta+1$ subsets $S_1,\ldots,S_K$ such that for any $k\leq K$ and any pair of elements $u,v\in S_k$, $\mathsf{d}_G(u,v)> r$.
\end{lemma}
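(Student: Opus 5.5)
The plan is to reduce the statement to greedy coloring of the $r$-th power graph. Define $G^r=(V,E^r)$, where $\{u,v\}\in E^r$ if and only if $u\neq v$ and $\mathsf{d}_G(u,v)\leq r$. A partition of $V$ into sets $S_1,\ldots,S_K$ with pairwise distance $>r$ inside each part is exactly a proper vertex coloring of $G^r$ with $K$ colors. It therefore suffices to show that $\chi(G^r)\leq C_{\mathsf{GR}}\cdot r^\Delta+1$.

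The key step is to bound the maximum degree of $G^r$. The neighborhood of $v$ in $G^r$ is $B_r(v)\setminus\{v\}$. For $r\geq 1$, the $(C_{\mathsf{GR}},\Delta)$-local growth condition (\Cref{def:poly-growth}) gives $\vert B_r(v)\vert\leq C_{\mathsf{GR}}\cdot r^\Delta$, so the degree of $v$ in $G^r$ is at most $C_{\mathsf{GR}}\cdot r^\Delta-1$.

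Next, apply the standard greedy coloring. Process the vertices in an arbitrary order and assign each one the smallest color not already used by its previously colored $G^r$-neighbors. Since each vertex has at most $C_{\mathsf{GR}}r^\Delta-1$ such neighbors, at most $C_{\mathsf{GR}}r^\Delta\leq K$ colors are ever needed. Setting $S_k$ to be the $k$-th color class, with empty classes allowed or discarded, gives the required partition. Any two distinct vertices $u,v$ in the same class are non-adjacent in $G^r$, so $\mathsf{d}_G(u,v)>r$. This holds even when $u,v$ lie in different connected components, since then the distance is infinite.

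The only delicate point is the edge case $r=0$, where the growth bound is not stated and $r^\Delta$ may be interpreted as $0^0$. Here the requirement $\mathsf{d}_G(u,v)>0$ just means $u\neq v$, which every partition satisfies trivially; a single part suffices, consistent with $K\geq 1$. I would also note the slack: the $+1$ in $K$ is unnecessary given the degree bound, but it harmlessly covers this degenerate case. I do not expect any substantive obstacle beyond these boundary conventions.
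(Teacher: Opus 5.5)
Your proposal is correct and is essentially the paper's proof: greedily give each vertex the smallest color not used by an earlier vertex within distance $r$, with the growth bound $\vert B_r(v)\vert\leq C_{\mathsf{GR}}\, r^\Delta$ guaranteeing a free color. Your observations that $v\in B_r(v)$ saves one color and that $r=0$ is trivial are harmless refinements the paper omits; for non-integer $r$, as in \Cref{alg:ssm-sampler}, one likewise uses $B_r(v)=B_{\lfloor r\rfloor}(v)$.
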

\begin{proof}
    We use the standard greedy construction: order vertices arbitrarily and assign the next vertex the lowest index in $\{1,\ldots,C_{\mathsf{GR}}\cdot r^\Delta+1\}$ not taken by any previous vertex with $\mathsf{d}_G(u,v)\leq r$. Under local growth, since there are at most $C_{\mathsf{GR}}r^\Delta$ vertices at this distance, there is always a valid index to assign the vertex for this choice of $K$. The partition then is then defined by taking $S_{\ell}$ to be the subset of vertices assigned to $\ell\in\{1,\ldots,C_{\mathsf{GR}}\cdot r^\Delta+1\}$.
\end{proof}

Using the simple partitioning of the vertices of a graph provided in \Cref{lem:partitioning}, we now turn to defining a sampler, $\mathsf{SSMSamp}$ in \Cref{alg:ssm-sampler}, for a graphical model with distribution $\mu$. As in the preceding section, the algorithm takes in $\{-1,1\}^{s\cdot n}$ uniform bits, where each nodes has $s$ local bits of the random seed which are only needed to govern the precision of their sampling. 

The algorithm for sampling $\mu$ is itself very simple: for a suitable choice of $r\in \mathbb{N}$, we sample all the variables in each color class \emph{in parallel} in the for-loop of \Cref{line:rounds_ssm} conditioned on the already-sampled outputs in the balls of radius $r$ around the variable. The explicit mapping from input bits to outputs only compares the input seed, written in binary, to these true conditional probabilities $p_v$ to perform this sampling using uniform bits. Conveniently, the details of computing $p_v$ in \Cref{line:thresh-def} turn out to be completely irrelevant for the analysis; the only important part for our low-degree approximation argument is that this computation be done in a local manner that carefully controls dependencies between variables in the mapping. In particular, neither $\mu$ nor the dependence graph $G$ needs to be known in our applications.

\begin{algorithm}
    \caption{$\bm{y}= \mathsf{SSMSamp}(\bm{x};\mu,\varepsilon)$}
    \label{alg:ssm-sampler}

    \KwIn{Seed variable $\bm{x}\in \{-1,1\}^{s\cdot n}$\\
    \quad\quad\quad Distribution $\mu$ satisfying $(C_{\mathsf{SSM}},\delta)$-SSM, $(C_{\mathsf{GR}},\Delta)$-local growth,
    $\eta$-bounded marginals\\
    \quad\quad\quad Approximation parameter $\varepsilon\in (0,2]$}
    \KwOut{$\bm{y}\in \{-1,1\}^n$ approximately sampled from $\mu$}
    \LinesNumbered

    Define
    \begin{equation*}
        r:=\frac{\log\left(\frac{4C_{\mathsf{SSM}}n}{\eta\cdot \varepsilon}\right)}{\delta};
    \end{equation*}

    Partition $V=S_1\sqcup\ldots\sqcup S_K$ with  \Cref{lem:partitioning} with value $r$ so that
    \begin{equation*}
        K\leq C_{\mathsf{GR}}\cdot r^\Delta+1.
    \end{equation*}

    For each $k=1,\ldots,K$, define 
    \begin{equation*}
        S_{<k}:=\cup_{\ell =1}^{k-1}S_{\ell}
    \end{equation*}

    \For{$k=1,\ldots,K$ \label{line:rounds_ssm}}
    {\For{$v\in S_k$ \emph{in parallel}}{

    \tcc{calculate conditional given outputs in ball of radius $r$}
    Define 
    \begin{equation*}
        T_v = S_{<k} \cap B_r(v) 
    \end{equation*}
    \label{line:Tv-def}
    
    Calculate 
    \begin{equation*}
        p_v=p_v(\bm{y}_{T_v}):=\Pr_{\mu}\left(X_v=1\big\vert X_{T_v}=\bm{y}_{T_v}\right)
    \end{equation*}
    \label{line:thresh-def}

    \tcc{set value of $v$ based on binary representation of $\bm{z}_v$.}\label{line:assign_val_ssm}
    \uIf{$[.\bm{z}_v]_2< p_v$} 
    {Set $y_{v}=1$.}
    \Else{Set $y_{v}=-1$.}}}
\end{algorithm}

We first show that \Cref{alg:ssm-sampler} actually approximately samples from a given graphical model under our assumptions, both multiplicatively and in total variation. The key idea is that these assumptions imply that the sampling in \Cref{line:rounds_ssm} can indeed be done in parallel, with few rounds, and only need to condition on a polylogarithmic size subset of local variables when setting each output bit:

\begin{theorem}
\label{thm:ssm_tv}
    Suppose that $\mu$ satisfies $(C_{\mathsf{SSM}},\delta)$-strong spatial mixing, $(C_{\mathsf{GR}},\Delta)$-local growth, and $\eta$-bounded marginals. If the local seed length satisfies $s\geq O(\log(n/(\eta\varepsilon)))$, then $\mathsf{SSMSamp}(\cdot;\mu,\varepsilon)$ is a $(K,K,\varepsilon)$ local iterative sampler, and moreover,
    \begin{equation*}
        d_{\mathsf{TV}}(\mathsf{SSMSamp}(\mathcal{U}_{s\cdot n};\mu,\varepsilon),\mu)\leq \varepsilon.
    \end{equation*}
\end{theorem}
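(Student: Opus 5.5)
We verify the two conditions of \Cref{defn:local_samplers} for the ordering in which the color classes $S_1,\ldots,S_K$ from \Cref{lem:partitioning} are listed consecutively; vertices inside each $S_k$ are ordered arbitrarily. Then we deduce the total variation bound.

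\textbf{Condition 1 (locality).} This is immediate from \Cref{alg:ssm-sampler}. For $v\in S_k$, the output $y_v$ depends only on $\bm{z}_v$ and $\bm{y}_{T_v}$, where $T_v=S_{<k}\cap B_r(v)\subseteq S_{<k}$. Its size satisfies $|T_v|\le |B_r(v)|\le C_{\mathsf{GR}}r^\Delta<K$, so the locality parameter $L$ can be taken to be $K$.

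\textbf{Condition 2 (multiplicative accuracy).} Fix $\bm{\sigma}\in\mathsf{supp}(\mu)$ and $v\in S_k$. Let $P$ denote the set of vertices preceding $v$ in the order; it is $S_{<k}$ together with some vertices of $S_k$. We must compare $q:=\Pr_\mu(X_v=\sigma_v\mid X_P=\bm{\sigma}_P)$ with $\Pr(\mathsf{Samp}_v(\bm{z}_v,\bm{\sigma}_{T_v})=\sigma_v)$. This comparison has two parts.

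First, we replace conditioning on $P$ by conditioning on $T_v$. By definition of the partition, every vertex of $P\setminus T_v$ is at distance $>r$ from $v$. This holds for the earlier vertices of $S_k$ because they share a color class with $v$, and for $S_{<k}\setminus B_r(v)$ by definition of $T_v$. We write $\Pr_\mu(X_v=\sigma_v\mid X_{T_v}=\bm{\sigma}_{T_v})$ as an average over valid extensions $\tau$ of the configuration on $P\setminus T_v$. Each pair of pinnings on $P$, namely $\bm{\sigma}_P$ and $(\bm{\sigma}_{T_v},\tau)$, disagrees only at distance $>r$. Strong spatial mixing (\Cref{def:ssm}) therefore gives
\[
|q-p'_v|\le C_{\mathsf{SSM}}(1-\delta)^{r}\le C_{\mathsf{SSM}}e^{-\delta r}=\frac{\eta\varepsilon}{4n},
\]
where $p'_v$ is the conditional probability of $\sigma_v$ given $\bm{\sigma}_{T_v}$.

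Second, we discretize. With $s$ bits, the event $[.\bm{z}_v]_2<p_v$ has probability within $2^{-s}\le \eta\varepsilon/(4n)$ of $p_v$, once $s\ge\log_2(4n/(\eta\varepsilon))$. So the sampler outputs $\sigma_v$ with probability at least $q-\eta\varepsilon/(2n)$.

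\textbf{Main obstacle: converting additive error to multiplicative.} Both errors above are additive, but Condition 2 is multiplicative. Since $\bm{\sigma}\in\mathsf{supp}(\mu)$, we have $q>0$, so marginal boundedness gives $q\ge\eta$. Hence
\[
q-\frac{\eta\varepsilon}{2n}\ge q\Big(1-\frac{\varepsilon}{2n}\Big)\ge \frac{q}{1+\varepsilon/n},
\]
using $\varepsilon\le 2$. The constant inside the logarithm defining $r$ is exactly what this step needs. One subtlety arises when $\sigma_v$ has conditional probability zero under some extension $\tau$. SSM still bounds the total variation distance, so the argument above is unaffected; marginal boundedness is used only for the lower bound on $q$ itself.

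\textbf{Total variation bound.} By \Cref{lem:LI-multiplicative-approx}, $\mu(\bm{\sigma})\le e^{\varepsilon}\Pr(\mathsf{Samp}=\bm{\sigma})$. This yields only $d_{\mathsf{TV}}\le 1-e^{-\varepsilon}\le\varepsilon$, because
\[
d_{\mathsf{TV}}=\sum_{\bm{\sigma}}\big(\mu(\bm{\sigma})-\Pr(\mathsf{Samp}=\bm{\sigma})\big)_+\le\sum_{\bm{\sigma}}\mu(\bm{\sigma})\big(1-e^{-\varepsilon}\big).
\]
That suffices for the stated bound. Alternatively, one can couple the sampler with exact sequential sampling of $\mu$ coordinate by coordinate; the per-step discrepancy $\eta\varepsilon/(2n)$ then sums over the $n$ steps to at most $\varepsilon/2$.
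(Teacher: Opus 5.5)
Your proposal is correct and follows the paper's proof essentially step for step. Locality comes from $T_v=S_{<k}\cap B_r(v)$; strong spatial mixing, averaged over extensions on the far-away pinned set (including the earlier vertices of $v$'s own color class), gives the $\eta\varepsilon/(4n)$ additive error; discretization with $s\approx\log_2(4n/(\eta\varepsilon))$ adds another $\eta\varepsilon/(4n)$; and marginal boundedness ($q\ge\eta$) converts the additive error to a multiplicative one. The only minor difference is in the total variation bound: you derive it directly from the multiplicative guarantee via \Cref{lem:LI-multiplicative-approx}, obtaining $d_{\mathsf{TV}}\le 1-e^{-\varepsilon}$, whereas the paper uses the per-coordinate additive bound and a coupling with the exact sequential sampler, which you also give as an alternative; both routes are valid.
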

\begin{proof}
    Recall that from \Cref{lem:partitioning}, there exists a partition $S_1\sqcup \ldots \sqcup S_{K}$ where
    \begin{equation*}
        K\leq C_{\mathsf{GR}}\left(\frac{\log(4C_{\mathsf{SSM}}n/(\eta\cdot\varepsilon))}{\delta}\right)^\Delta+1
    \end{equation*}
    with the property that all vertices in any $S_k$ have graph distance in $G$ at least $r:=\frac{\log(4C_{\mathsf{SSM}}n/(\eta\cdot \varepsilon))}{\delta}$. By simply permuting the variable indices, we may assume that $S_1,\ldots,S_K$ is in the form of \Cref{defn:local_samplers} (i.e. forms increasing contiguous blocks of $[n]$). For each $i\in S_j$, it is easy from \Cref{line:Tv-def} and \Cref{line:thresh-def} that
    \begin{equation*}
        \mathsf{Samp}_i(\bm{x}) = \mathsf{Samp}_i(\bm{z}_i,\bm{y}_{T_i})
    \end{equation*}
    where $T_i\subseteq S_{<j}$ by construction, and
    \begin{equation*}
        \vert T_i\vert \leq K,
    \end{equation*}
    since $K-1$ bounds the size of any ball of radius $r$ under local growth.
    
    We will verify the following two inequalities: for any $\bm{\sigma}\in \mathsf{supp}(\mu)$,
    \begin{gather}
    \label{eq:LI-SSMSamp}
        \underset{\bm{y}\sim \mu}{\Pr}\left(y_i = \sigma_i\vert \bm{y}_{1:i-1}=\bm{\sigma}_{1:i-1} \right)\leq \left(1+\frac{\varepsilon}{n}\right) \Pr_{\bm{z}_i\sim \{\pm 1\}^s}\left(\mathsf{SSMSamp}_i(\bm{z}_i,\bm{\sigma}_{T_i})=\sigma_i\right)\\
        \label{eq:TV-SSMSamp}
        \left\vert \underset{\bm{y}\sim \mu}{\Pr}\left(y_i = \sigma_i\vert \bm{y}_{1:i-1}=\bm{\sigma}_{1:i-1} \right)- \Pr_{\bm{z}_i\sim \{\pm 1\}^s}\left(\mathsf{SSMSamp}_i(\bm{z}_i,\bm{\sigma}_{T_i})=\sigma_i\right)\right\vert \leq \frac{\eta\cdot \varepsilon}{2 n}.
    \end{gather}
    The first inequality \Cref{eq:LI-SSMSamp} finishes the argument that this is indeed a local iterative sampler with the desired parameters. The second inequality \Cref{eq:TV-SSMSamp} implies the total variation bound since it implies we can couple $\mathsf{SSMSamp}$ with natural iterative sampler for $\mu$ that samples coordinates in order conditional on all previous coordinates with failure probability at most $\eta \varepsilon/2<\varepsilon$ by a union bound.

    To establish these inequalities, we will apply strong spatial mixing. For $v\in S_k$, and pinning $\bm{\sigma}_{S_{<k}}\in \{\pm 1\}^{S_{<k}}$, any subset $S\subseteq S_k\setminus \{v\}$, and any configuration $\bm{\sigma}_{S}\in \{\pm 1\}^S$, we claim that \Cref{def:ssm} implies
    \begin{align}
        \nonumber
        \left\vert \Pr_{\mu}\left(X_v=1\vert X_{S_{<k}}=\bm{\sigma}_{S_{<k}},X_{S}=\bm{\sigma}_S\right)-\underbrace{\Pr_{\mu}\left(X_v=1\vert X_{T_v}=\bm{\sigma}_{T_{v}}\right)}_{=p_v}\right\vert&\leq C_{\mathsf{SSM}}(1-\delta)^r\\
        \nonumber
        &\leq C_{\mathsf{SSM}}\exp(-r\delta)\\
        \label{eq:SSSamp-additive}
        &=\frac{\eta\cdot \varepsilon}{4n}.
    \end{align}
    Indeed, $p_v$ can be written as an average of conditional probabilities over pinnings on the remainder of $S_{<k}\cup S$ that all agree with $\bm{\sigma}$ on $T_v\subseteq B_r(v)$. Any such pinning disagrees only at distance at least $r$ from $v$, and hence we may apply \Cref{def:ssm} directly with our chosen value of $r$. In particular, this shows that the sampler computes a close approximation of the true conditional distribution at each step.
    
    We first account for the effect of the seed length $s$ in the discretization since we work with uniform random seeds. Our choice of $s=O(\log(n/(\varepsilon\cdot \eta)))$ can be taken to ensure that for any $\bm{\sigma}\in \mathsf{supp}(\mu)$, we have
    \begin{equation*}
        \left\vert \Pr_{\bm{z}_v\sim \{\pm 1\}^s}\left(\mathsf{SSMSamp}_v(\bm{z}_v,\bm{\sigma}_{T_v})=1\right)-p_v(\bm{\sigma}_{T_v})\right\vert\leq \frac{\eta\cdot \varepsilon}{4n}.
    \end{equation*}
    Combined with \Cref{eq:SSSamp-additive}, we obtain \Cref{eq:TV-SSMSamp} by the triangle inequality with a suitable choice of $S$.
    
    To establish \Cref{eq:LI-SSMSamp} for $\bm{\sigma}\in \mathsf{supp}(\mu)$, suppose that $\sigma_i=1$ for now. It then follows from \Cref{eq:SSSamp-additive} with a suitable choice of $S$ and the previous display that
    \begin{align*}
        \eta&\leq \Pr_{\mu}\left(X_v=\sigma_i\vert X_{1:v-1}=\bm{\sigma}_{1:v-1}\right)\\
        &\leq p_v(\bm{\sigma}_{T_v}) + \frac{\eta\cdot \varepsilon}{4n}\\
        &\leq \Pr_{\bm{z}_v\sim \{\pm 1\}^s}\left(\mathsf{SSMSamp}_v(\bm{z}_v,\bm{\sigma}_{T_v})=\sigma_i\right)+\frac{\eta\cdot \varepsilon}{2n}\\
        &\leq \left(1+\frac{\varepsilon}{n}\right)\Pr_{\bm{z}_v\sim \{\pm 1\}^s}\left(\mathsf{SSMSamp}_v(\bm{z}_v,\bm{\sigma}_{T_v})=\sigma_i\right)
    \end{align*}
    The first inequality here is from marginal boundedness since $\bm{\sigma}$ is supported. The last inequality holds by algebraic manipulation, noting that \Cref{eq:SSSamp-additive} implies that the sampler probability must be at least $\eta/2$ in this case. A nearly identical argument holds if instead $\sigma_i=-1$ by replacing $p_v$ with $1-p_v$. This completes the proof of \Cref{eq:LI-SSMSamp}.
\end{proof}

Now that we have a $(K,K,\varepsilon)$-local iterative sampler in $\mathsf{SSMSamp}(\cdot;\mu,\varepsilon)$ when $s=O(\log(n/(\eta\cdot \varepsilon))$, we now show one further property that will be useful for establishing the existence of low-degree approximations for interesting $\mathcal{F}$: if we trace the precise dependence of $\mathsf{Samp}_i$ on \emph{just} the seed variables, each sampler output depends \emph{polylogarithmically} many input bits. 

\begin{proposition}
\label{prop:sampler-variable-dependence}
    Under the conditions of \Cref{thm:ssm_tv}, $\mathsf{Samp}_v(\bm{x})$ depends on at most 
    \begin{equation*}
        O\left(C_{\mathsf{GR}}^{\Delta}\cdot \left( \frac{\log\left(4 C_{\mathsf{SSM}} n/(\eta\cdot \varepsilon)\right)}{\delta}\right)^{(\Delta+1)^2} \right)
    \end{equation*}
    fixed input bits $x_i$ for $1\leq i\leq s\cdot n$. 
\end{proposition}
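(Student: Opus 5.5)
We prove this by induction on the round index $k$, tracking for each output $v$ the set of \emph{seed blocks} $\bm{z}_u$ on which $\mathsf{Samp}_v$ can depend. The first step is to define, for $v\in S_k$, the dependency set $D(v)\subseteq [n]$ recursively:
\begin{equation*}
D(v)=\{v\}\cup\bigcup_{u\in T_v} D(u).
\end{equation*}
This is well-defined because $T_v\subseteq S_{<k}$. From \Cref{line:thresh-def} and the assignment step, $y_v$ is a function only of $\bm{z}_v$ and $\bm{y}_{T_v}$. Unrolling the recursion therefore shows that $\mathsf{Samp}_v(\bm{x})$ depends only on the input bits in the blocks $\{\bm{z}_u: u\in D(v)\}$. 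Hence the number of input bits is at most $s\cdot|D(v)|$.

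The second step is to bound $D(v)$ geometrically. Each $u\in T_v$ satisfies $d_G(u,v)\le r$ and lies in a strictly earlier round. By induction on $k$, every $u\in D(v)$ is reachable from $v$ by a chain $v=u_0,u_1,\ldots,u_j=u$ in which consecutive elements are at distance at most $r$ and the round indices strictly decrease. The chain length is therefore at most $K-1$. It follows that
\begin{equation*}
D(v)\subseteq B_{r(K-1)}(v).
\end{equation*}
Local growth (\Cref{def:poly-growth}) then gives
\begin{equation*}
|D(v)|\le C_{\mathsf{GR}}(rK)^{\Delta}.
\end{equation*}

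The third step substitutes $K\le C_{\mathsf{GR}}r^\Delta+1$ from \Cref{lem:partitioning}, which gives $rK=O(C_{\mathsf{GR}}r^{\Delta+1})$. We therefore obtain
\begin{equation*}
|D(v)|=O\bigl(C_{\mathsf{GR}}^{\Delta+1}r^{\Delta(\Delta+1)}\bigr).
\end{equation*}
We then multiply by the seed length $s=O(\log(n/(\eta\varepsilon)))$, which is itself $O(\delta r)\le O(r)$ with $r=\log(4C_{\mathsf{SSM}}n/(\eta\varepsilon))/\delta$. This gives a total count of order $C_{\mathsf{GR}}^{O(\Delta)}r^{\Delta(\Delta+1)+1}$. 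Since $\Delta(\Delta+1)+1\le(\Delta+1)^2$ and $r\ge1$, this is within the claimed form $O(C_{\mathsf{GR}}^{\Delta}r^{(\Delta+1)^2})$.

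The constant factor $C_{\mathsf{GR}}$ versus $C_{\mathsf{GR}}^{\Delta+1}$ needs bookkeeping. It can be absorbed because $C_{\mathsf{GR}}\ge1$ appears with exponent at most $\Delta+1$ and the $O(\cdot)$ hides constants; if not, we loosen the stated exponent slightly.

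The main obstacle is the second step. We must make sure the dependence genuinely propagates only through $T_v$, so that no hidden global dependence enters through $p_v$. Here we use the fact that $p_v$ is a \emph{fixed} function of $\bm{y}_{T_v}$ determined by $\mu$ alone. We also use that the partition and the sets $T_v$ are fixed in advance, independent of $\bm{x}$. Both facts make the dependency sets deterministic, which is exactly what the ``fixed input bits'' claim requires.
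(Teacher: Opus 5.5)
Your argument is correct and is essentially the paper's proof. The paper likewise inducts on the round index to show that $\mathsf{Samp}_v$ depends only on seeds $\bm{z}_u$ with $u\in B_{r(k-1)}(v)$, then bounds $|B_{r(K-1)}(v)|\cdot s$ via local growth and substitutes $K\le C_{\mathsf{GR}}r^{\Delta}+1$. Your bookkeeping is in fact more careful than the paper's: that substitution honestly yields a factor $C_{\mathsf{GR}}^{\Delta+1}$, and the stated $C_{\mathsf{GR}}^{\Delta}$ holds only if $C_{\mathsf{GR}}$ is treated as a constant absorbed into the $O(\cdot)$, exactly as you flagged.
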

\begin{proof}
    Let $S_1,\ldots,S_K$ be the partition of $[n]$ used in $\mathsf{SSMSamp}(\cdot;\mu,\varepsilon)$, so that
    \begin{equation*}
        K \leq  C_{\mathsf{GR}}\left(\frac{\log(4C_{\mathsf{SSM}}n/(\eta\cdot\varepsilon))}{\delta}\right)^\Delta+1.
    \end{equation*}
    We show by induction on $k=1,\ldots, K$ that for any $v\in S_k$, all seed bits that $\mathsf{Samp}_v(\bm{x})$ depends on correspond to local seeds $\bm{z}_u$ for $u\in B_{r(k-1)}(v)$. The base case of $k=1$ is trivial: for $v\in S_1$, by construction
    \begin{equation*}
        \mathsf{Samp}_v(\bm{x})=\mathsf{Samp}_v(\bm{z}_v),
    \end{equation*}
    that is, the output is a deterministic function of the local seed. Suppose the statement is true now up to some $k$ and suppose that $v\in S_{k+1}$. Then since
    \begin{equation*}
        \mathsf{Samp}_v(\bm{x}) = \mathsf{Samp}_i(\bm{z}_v,\bm{y}_{T_i}),
    \end{equation*}
    where $T_i\subseteq B_r(v)$, it follows by the induction hypothesis that the set of local seeds $\mathsf{Samp}_v$ depends must be contained in $B_{rk}(v)$ by the triangle inequality for graph distance. This completes the induction.

    As a consequence, the number of input bits any output bit can depend on is bounded by:
    \begin{align*}
        \max_{v\in V}\vert B_{r\cdot (K-1)}(v)\vert \cdot s&\leq O\left(C_{\mathsf{GR}}\cdot ((K-1)r)^{\Delta}\cdot \log(n/(\eta\cdot \varepsilon))\right)\\
        &\leq O\left(C_{\mathsf{GR}}^{\Delta}\cdot \left( \frac{\log\left(4 C_{\mathsf{SSM}} n/(\eta\cdot \varepsilon)\right)}{\delta}\right)^{(\Delta+1)^2} \right) ,
    \end{align*}
    where we apply the growth condition once more and substitute the value of $K$ in the final expression.
\end{proof}

\subsection{Local Samplers for Tree-Structure Graphical Models}
\label{sec:local_trees}
In the previous section, we showed that graphical models satisfying polynomial growth and strong spatial mixing admit local iterative samplers as in \Cref{defn:local_samplers}. In this section, we show that both of these assumptions can be completely relaxed for any \emph{tree-structured} graphical model under just $\eta$-marginal boundedness. For use in \Cref{thm:low_deg_sampler}, we will first analyze the obvious top-down sampler, $\mathsf{TreeSamp}$ in \Cref{alg:tree-sampler}; since there are no cyclic dependencies, we will only need to control the discretization error. Starting at a designated root $v$, we calculate the marginal probability $p_v$ that $z_v=1$ and use the input bits to set the value if above or below this threshold. Once the root of a subtree is assigned, we can recursively do the same process in parallel for each sub-tree rooted at each child; since we are considering trees, the Markov property implies that this is sufficient. This sampler is clearly locally iterative (in fact with $L=1$ but with large $K$ given by the depth of the tree) and therefore is amenable for use in \Cref{thm:low_deg_sampler}.

However, one key issue is that there is no direct analogue of \Cref{prop:sampler-variable-dependence}: if we trace back which input variables a given output $y_u$ may depend on, it could in principle depend on the local seeds on the entire path to the root $v$, and so has arbitrarily bad dependence on $K$ (as in the path graph). For our applications to low-degree approximation, it will be essential to ensure that each output depends on only polylogarithmically many seed bits to ensure the preconditions of \Cref{thm:low_deg_sampler} hold. 

We therefore construct a version of this algorithm, $\mathsf{LocalTreeSamp}$ in \Cref{alg:tree-sampler-local}, that solves this issue as follows. For a suitable value of $r$ that will be logarithmic, we use the same algorithm up to depth $r-1$ as in $\mathsf{TreeSamp}$ to obtain the output up to that depth. For all nodes with depth at least $r$, the algorithm then attempts to \emph{directly reconstruct} the parent value by looking at the input bits of its $r$ ancestors in the tree. In other words, we artificially restrict each output node to try to determine how it should sample in the full algorithm using only nearby local random seeds. If this is possible, then the sampler outputs trivially depend on a polylogarithmically many input bits. We show in \Cref{prop:tree-coupling} that this local reconstruction is successful with high probability under bounded marginals: the outputs $\mathsf{TreeSamp}(\bm{x})$ and $\mathsf{LocalTreeSamp}(\bm{x})$ are equal with high probability over $\bm{x}$ and so differ by a small amount in total variation. We will later use $\mathsf{LocalTreeSamp}$ to construct low-degree approximations for $f\circ \mathsf{TreeSamp}$, but then work directly with $\mathsf{TreeSamp}$ in \Cref{thm:low_deg_sampler} since it is technically much more convenient for the other essential preconditions.

\subsubsection{Preliminaries for Trees}
In this section, we will use the following notation. Given a tree-structured graphical model $\mu$ and a given node $v\in V$, we will write $\mathcal{T}$ for the tree dependency graph rooted at $v$. We also will define $S_i$ for $i=0,\ldots,\text{depth}(\mathcal{T})$ as the set of $u\in \mathcal{T}$ at depth exactly $i$ in the rooted tree. For a node $u\neq v\in \mathcal{T}$, we write $\text{Pa}(u)$ for the parent of $u$. If $w$ is an ancestor of $u$ in $\mathcal{T}$, we write $[w,u]$ for the ordered path along $\mathcal{T}$ from $w$ to $u$ inclusive, with parentheses instead of brackets to omit the endpoints. We also write $\text{Anc}(u,r)$ for the ancestors of $u$ at distance at most $r$ (inclusive) from $u$ in $\mathcal{T}$.

\subsubsection{Samplers for Trees}
We begin by providing the simple, top-down sampler for tree-structured models that leverages the Markov property. This algorithm is $\mathsf{TreeSamp}$ in \Cref{alg:tree-sampler}. We first show that this algorithm is indeed an approximate sampler from the Ising distribution:

\begin{algorithm}
    \caption{$\bm{y}= \mathsf{TreeSamp}(\bm{x};\mu,v)$}
    \label{alg:tree-sampler}

    \KwIn{Seed variable $\bm{x}\in \{-1,1\}^{s\cdot n}$\\
    \quad\quad\quad Arbitrary root vertex $v\in [n]$\\
    \quad\quad\quad Tree-structure distribution $\mu$ with dependency tree $\mathcal{T}$ rooted at $v$,
    $\eta$-bounded marginals}
    \KwOut{$\bm{y}\in \{-1,1\}^n$ approximately sampled from $\mu$}
    \LinesNumbered

    \For{$i=0,\ldots,\mathrm{depth}(\mathcal{T})$}
    {\For{$u\in S_i$ \emph{in parallel}}{
    Set
    \begin{equation*}
        T_u = \{\mathrm{Pa}(u)\}
    \end{equation*}

    Calculate 
    \begin{equation*}
        p_u=p_u(\bm{y}_{T_u}) := \Pr_\mu\left(X_u=1\vert X_{T_u}=\bm{y}_{T_u}\right).
    \end{equation*}
    \label{line:BP1}

    \tcc{set value of $u$ based on binary representation of $\bm{z}_u$.}
    \uIf{$[.\bm{z}_u]_2< p_u$} 
    {Set $y_{u}=1$.}
    \Else{Set $y_{u}=-1$.}
    }}
\end{algorithm}

\begin{lemma}
\label{lem:tree_tv}
    For any tree-structured Ising model $\mu$ with $\eta$-bounded marginals and any root $v\in V$, if $s\geq \log(4n/(\varepsilon/\eta))$, then $\mathsf{TreeSamp}(\cdot;\mu,v)$ is a $(L=1,K=\mathrm{depth}(\mathcal{T}),\varepsilon)$-local iterative sampler, and moreover
    \begin{equation*}
        d_{\mathsf{TV}}(\mathsf{TreeSamp}(\mathcal{U}_{n\cdot \ell};\mu,v,\varepsilon),\mu)\leq \varepsilon.
    \end{equation*}
\end{lemma}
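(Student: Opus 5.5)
The plan follows the template of \Cref{thm:ssm_tv}, except that no strong spatial mixing is needed: on a tree the Markov property makes the conditioning on the parent exact. First I would fix the ordering. Permute the vertices so that the depth classes $S_0,S_1,\ldots,S_{\mathrm{depth}(\mathcal{T})}$ form consecutive blocks, giving the partition required by \Cref{defn:local_samplers}. For $u\in S_i$ with $i\geq 1$, the sampler sets $y_u$ as a function of $\bm{z}_u$ and $\bm{y}_{T_u}$ only, where $T_u=\{\mathrm{Pa}(u)\}\subseteq S_{<i}$ and $\vert T_u\vert\le 1$. For the root $T_v=\emptyset$. This establishes the first condition with $L=1$, with the number of rounds given by the number of depth levels.

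The key identity, and the only place the graph structure enters, is that for any $\bm{\sigma}\in\mathsf{supp}(\mu)$ and any $u\in S_i$,
\begin{equation*}
\Pr_\mu\left(X_u=\sigma_u \,\vert\, X_{1:u-1}=\bm{\sigma}_{1:u-1}\right)=\Pr_\mu\left(X_u=\sigma_u\,\vert\, X_{\mathrm{Pa}(u)}=\sigma_{\mathrm{Pa}(u)}\right).
\end{equation*}
The predecessors of $u$ in the ordering lie in $S_{\le i}\setminus\{u\}$. In the tree, $\mathrm{Pa}(u)$ separates $u$ from every vertex at depth at most $i$ other than $u$ itself. This holds because any path from $u$ to such a vertex must leave $u$'s subtree, and every vertex in that subtree other than $u$ has depth greater than $i$. \Cref{def:graphical-models} then gives the required conditional independence.

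Consequently the threshold $p_u$ computed in \Cref{line:BP1} is exactly the true conditional probability, and the only remaining error comes from discretization. I would use the seed-length assumption $s\ge \log(4n/(\eta\varepsilon))$ to get, as in the proof of \Cref{thm:ssm_tv},
\begin{equation*}
\left\vert \Pr_{\bm{z}_u}\left([.\bm{z}_u]_2<p_u\right)-p_u\right\vert\le 2^{-s}\le \frac{\eta\varepsilon}{4n}.
\end{equation*}
Both the multiplicative and the additive bounds follow from this.

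\textbf{Multiplicative bound (second condition of \Cref{defn:local_samplers}).} Fix a supported $\bm{\sigma}$. By $\eta$-marginal boundedness, the true conditional probability of $\sigma_u$ is at least $\eta$. The sampler's probability is therefore at least $\eta-\eta\varepsilon/(4n)\ge\eta/2$, so the true probability is at most the sampler's probability plus $\eta\varepsilon/(4n)$. This is in turn at most $(1+\varepsilon/n)$ times the sampler's probability. The case $\sigma_u=-1$ is identical with $1-p_u$ in place of $p_u$.

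\textbf{Total variation bound.} Couple $\mathsf{TreeSamp}$ with the exact sequential sampler for $\mu$, run in the same order and with the same parent conditionals. Both use the same uniform seed per coordinate, with the exact sampler comparing a continuous uniform variable whose first $s$ bits are $\bm{z}_u$. At each step the two disagree with probability at most $\eta\varepsilon/(4n)$. A union bound over the $n$ coordinates gives total variation distance at most $\eta\varepsilon/4\le\varepsilon$.

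I expect no serious obstacle here. The only step needing care is the separation argument, namely checking that the parent separates $u$ from all earlier-ordered vertices, including those in the same depth class. The rest is the bookkeeping already carried out in \Cref{thm:ssm_tv}.
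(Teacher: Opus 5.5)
Your proposal is correct and follows the paper's proof. The Markov property on the tree makes the parent conditional $p_u$ exact, so only discretization error remains, and this is handled exactly as in \Cref{thm:ssm_tv}: marginal boundedness gives the multiplicative bound, and a step-by-step coupling with a union bound gives the total variation bound. Your explicit parent-separation argument fills in the step the paper calls immediate. You were also right to read the seed-length hypothesis as $s\geq \log(4n/(\eta\varepsilon))$. The $\varepsilon/\eta$ in the statement is a typo: the paper's proof uses discretization error at most $\eta\varepsilon/(2n)$, and the literal condition would be too weak for the multiplicative bound when $\eta$ is small.
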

\begin{proof}
    We will again relabel vertices so that $v=1$ and more generally, $S_0,\ldots,S_K$ form contiguous blocks. Since $\mu$ is a tree-structured graphical model and the $S_j$ are defined to be the vertices at depth $j$, it is immediate from the Markov property that for any $\bm{\sigma}\in \mathsf{supp}(\mu)$, and any $i\in S_j$,
    \begin{equation}
    \label{eq:treesamp-markov-exact}
        \underset{\bm{y}\sim \mu}{\Pr}\left(y_i = \sigma_i\vert \bm{y}_{1:i-1}=\bm{\sigma}_{1:i-1} \right)=\underbrace{\underset{\bm{y}\sim \mu}{\Pr}\left(y_i = \sigma_i\vert \bm{y}_{\mathrm{Pa}(i)}=\bm{\sigma}_{\mathrm{Pa}(i)} \right)}_{p_i(\bm{\sigma}_{\mathrm{Pa}(i)})}
    \end{equation}
    since $\mathrm{Pa}(i)$ lies in $S_{j-1}$ by definition. In particular, $\mathsf{TreeSamp}$ implements the exact iterative sampler up to the discretization error from using a local seed of length $s$.

    The multiplicative and additive error incurred from discretization is essentially identical to \Cref{thm:ssm_tv}. By our stated choice of $s$, we have
    \begin{equation*}
        \left\vert \Pr_{\bm{z}_i\sim \{\pm 1\}^s}\left(\mathsf{TreeSamp}_i(\bm{z}_i,\bm{\sigma}_{T_i})=\sigma_i\right)-p_i\right\vert \leq \frac{\eta\cdot \varepsilon}{2n}.
    \end{equation*}
    By virtue of \Cref{eq:treesamp-markov-exact} and the preceding display, an identical argument to \Cref{thm:ssm_tv} shows that we satisfy being a $(L,K,\varepsilon)$-local iterative sampler as well as obtaining a $\varepsilon$ total variation approximation from $\mu$ by a simple coupling argument over this process.
\end{proof}

As discussed above, one issue for construction low-degree approximations is that the output variables of $\mathsf{TreeSamp}$ may \emph{a priori} depend on too many seed variables along the path to the root. We thus define a bounded-radius version of the sampler, $\mathsf{LocalTreeSamp}$ in \Cref{alg:tree-sampler-local}, where each output node only considers the input variables sufficiently close to it on the path to the root to reconstruct values with high probability.  We first show that this algorithm yields the same outputs as $\mathsf{TreeSamp}$ whenever the inputs are ``nice'' in a suitable sense:

\begin{algorithm}
    \caption{$\bm{y}= \mathsf{LocalTreeSamp}(\bm{x};\mu,v,\varepsilon')$}
    \label{alg:tree-sampler-local}

    \KwIn{Seed variable $\bm{x}\in \{-1,1\}^{s\cdot n}$, arbitrary root vertex $v\in [n]$, tree-structure distribution $\mu$ with dependency tree $\mathcal{T}$ rooted at $v$,
    $\eta$-bounded marginals, simulation error $\varepsilon'\in (0,2)$}
    \LinesNumbered

    Define 
    \begin{equation*}
        r := \frac{2\log(n/2\varepsilon')}{\eta};
    \end{equation*}

    \tcc{same algorithm for first $r$ levels}
    \For{$i=0,\ldots,r$}
    {\For{$u\in S_i$ \emph{in parallel}}{
    Set
    \begin{equation*}
        T_u = \{\mathrm{Pa}(u)\}
    \end{equation*}

    Calculate 
    \begin{equation*}
        p_u=p_u(\bm{y}_{T_u}) := \Pr_\mu\left(X_u=1\vert X_{T_u}=\bm{y}_{T_u}\right).
    \end{equation*}
    \label{line:BP2}
    \uIf{$[.\bm{z}_u]_2< p_u$} 
    {Set $y_{u}=1$.}
    \Else{Set $y_{u}=-1$.}
    }}
    
    \For{$i=r+1,\ldots,\mathrm{depth}(\mathcal{T})$}
    {\For{$u\in S_i$ \emph{in parallel}}
    {\tcc{search for $r$-ancestor with determined output}
    \If{\label{line:local}$\exists w\in \mathrm{Anc}(u,r)$ such that $[.\bm{z}_w]_2 < \eta$ if $\sigma^*_w=1$ or $[.\bm{z}_w]_2\geq 1-\eta$ if $\sigma^*_w=-1$}
    {Set $y^u_w=\sigma_w^*$ \tcp*{node $w$'s value is fixed by seed} \label{line:fix_ancestor}

    \tcc{recursively compute spins from $w$ to $u$}
    \For{$s\in (w,u]$}
        {Set
    \begin{equation*}
        T_u = \{\mathrm{Pa}(u)\}
    \end{equation*}

    Calculate 
    \begin{equation*}
        p^u_s=p_s(\bm{y}_{T_u}) := \Pr_\mu\left(X_s=1\vert X_{T_s}=\bm{y}_{T_s}\right).
    \end{equation*}

    Set $y_s^u = +1$ if $[.\bm{z}_s]_2<p_s^u$, else set $y_s^u=-1$
    }
    Set $y_u = y^u_u$}

    \tcc{if unable to determine ancestor, output random}
    \Else{Set $y_u = \bm{z}_{s,1}$}}}
\end{algorithm}

\begin{lemma}
\label{lem:tree_good_event}
    Let $\mu$ be a tree-structured Ising model with root $v$ satisfying $\eta$-bounded marginals, with the associated lower-bounded probability sign $\sigma^*_i\in \{\pm 1\}$ for each variable $y_i$ in the definition. Suppose that $\bm{x}=(\bm{z_1},\ldots,\bm{z}_n)$ is such that for all $u\in \mathcal{T}$ of depth at least $r$, there exists $w\in \text{Anc}(u,r)$ such that $[.\bm{z}_w]_2< \eta$ if $\sigma_w=1$ or $[.\bm{z}_w]_2\geq 1- \eta$ if $\sigma_w=-1$. Then $\mathsf{TreeSamp}(\bm{x},v)=\mathsf{LocalTreeSamp}(\bm{x},v)$.
\end{lemma}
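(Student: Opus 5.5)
The plan is to compare the two algorithms node by node, using one key fact about $\mathsf{TreeSamp}$: a \emph{fixing} seed determines a node's output no matter what its parent is. Fix $w$ (not the root), with seed satisfying $[.\bm{z}_w]_2<\eta$ when $\sigma^*_w=1$. By $\eta$-marginal boundedness (second condition), $\Pr_\mu(X_w=\sigma^*_w\mid X_{-w})\geq \eta$ for every valid conditioning. In a tree-structured model, $X_w$ given $X_{-w}$ depends only on the neighbours of $w$, so averaging over the other neighbours gives $\Pr_\mu(X_w=\sigma^*_w\mid X_{\mathrm{Pa}(w)}=\tau)\geq\eta$ for every valid $\tau$. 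Hence $p_w(\tau)\geq \eta>[.\bm{z}_w]_2$, and $\mathsf{TreeSamp}$ sets $y_w=1=\sigma^*_w$ regardless of $\tau$. The case $\sigma^*_w=-1$ is symmetric: $1-p_w\geq\eta$ gives $p_w\leq 1-\eta\leq [.\bm{z}_w]_2$, so $y_w=-1$. For the root, the same bound holds for the unconditional marginal $p_v$, by averaging over everything. In short, whenever the seed of $w$ passes the test in \Cref{line:local}, the $\mathsf{TreeSamp}$ output at $w$ is $\sigma^*_w$.

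With this in hand, I would argue by cases on depth. For nodes of depth at most $r$, the two algorithms run literally the same code with the same seeds, so their outputs agree; I would verify this by induction on depth. Now take $u$ at depth greater than $r$. By hypothesis some $w\in\mathrm{Anc}(u,r)$ passes the test, so $\mathsf{LocalTreeSamp}$ enters the first branch with some such $w$ (whichever it picks). It sets $y^u_w=\sigma^*_w$, which equals $\mathsf{TreeSamp}(\bm{x})_w$ by the key fact. It then recomputes along $(w,u]$ using exactly the $\mathsf{TreeSamp}$ rule: each $p_s$ is conditioned on the parent's already-computed value, and the same seed $\bm{z}_s$ is used. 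A short induction along the path then shows $y^u_s=\mathsf{TreeSamp}(\bm{x})_s$ for every $s\in(w,u]$. In particular $y_u=\mathsf{TreeSamp}(\bm{x})_u$. Because of this, the fallback ``output random'' branch is never triggered under the hypothesis.

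The main obstacle is the first step: justifying that the fixed sign $\sigma^*_w$ has conditional probability at least $\eta$ given only the parent, not just given all of $X_{-w}$. I would handle this by the averaging/Markov argument above. One also needs to check that the parent pinnings arising in $\mathsf{TreeSamp}$ are valid, which holds because the relevant configurations lie in the support. A minor bookkeeping point is that the hypothesis is stated for depth at least $r$, while the local branch starts at depth $r+1$. This is harmless, since depth-$\le r$ nodes are handled by the identical-code case.
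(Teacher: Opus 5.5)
Your proposal is correct and follows essentially the paper's argument. The fixing seed at $w$ forces $\mathsf{TreeSamp}$ to output $\sigma^*_w$ at $w$ regardless of the parent's value (by $\eta$-marginal boundedness), after which the local recomputation along $(w,u]$ reproduces $\mathsf{TreeSamp}$'s values step by step. The differences are cosmetic: you compare the path recomputation directly against $\mathsf{TreeSamp}$ instead of running the paper's global induction on depth against $\mathsf{LocalTreeSamp}$'s earlier outputs, and you spell out the averaging step that transfers the bound $\eta$ from conditioning on $X_{-w}$ to the parent-only conditional $p_w$, which the paper leaves implicit.
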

\begin{proof}
   We show that $\mathsf{TreeSamp}_u(\bm{x},v)=\mathsf{LocalTreeSamp}_u(\bm{x},v)$ for all $u\in \mathcal{T}$ by induction on the depth of $u$. For all $u$ of depth at most $r$, this is trivial since the algorithms are identical up to depth $r$. 
    
    Suppose now that the claim is true for all vertices up to depth $r'\geq r$, and our goal is to show it remains true for some $u\in V$ at depth $r'+1$. By assumption, there exists an ancestor $w$ of $u$ in $\mathcal{T}$ at distance at most $r$ such that $[.\bm{y}_w]_2\leq \eta$ if $\sigma_w^*=1$ and similarly if $\sigma^*_w=-1$. It suffices to show that when entering the local simulation in \Cref{line:local} of $\mathsf{LocalTreeSamp}$, the computation of $z^u_s$ recovers the values of $\mathsf{LocalTreeSamp}_s(\bm{x},v)$ for each $s\in [w,u)$, which were identical to $\mathsf{TreeSamp}_s(\bm{x},v)$ by induction. 
    
    To see that the local simulation succeeds, simply observe that since $[.\bm{z}_w]_2< \eta$ when $\sigma^*_w=1$ (and analogously if $\sigma^*_w=-1$) by assumption, we know $\mathsf{LocalTreeSamp}_w(\bm{x},v)=\mathsf{TreeSamp}_w(\bm{x},v)=+1$ (respectively, $-1$) regardless of the precise value of $p_w$  by the $\eta$-marginal bounded assumption for this spin value; in other words, the value of $\mathrm{Pa}(w)$ is irrelevant for this choice of seed and sampling algorithm. For each subsequent determination of $s\in (w,u]$, we then follow the same steps with same parent values as in $\mathsf{TreeSamp}$ for these seed values to obtain the correct values of $y_s^u$ on this input $\bm{x}$. This extends to $y_u=y^u_u$, which completes the induction.
\end{proof}

We remark that the reason that we imposed that the signs $\sigma_w^*\in \{\pm 1\}$ are fixed is that when doing the local reconstruction, we would otherwise need to always know the parent value to determine which sign has $\eta$ probability. It is then not clear how one can do the reconstruction with a small radius, since there is no reason why one will not to determine all ancestors.

We now use \Cref{lem:tree_good_event} show that with high probability over the input string $\bm{x}\in \{-1,1\}^{n\cdot \ell}$, the two algorithms output the same sample since this ``niceness'' condition is satisfied.

\begin{proposition}
\label{prop:tree-coupling}
    Let $\mu$ be a tree-structured distribution satisfying $\eta$-bounded marginals as in \Cref{lem:tree_good_event}. Then for any local seed length $s\geq \log(4/\eta)$ and all $\varepsilon'>0$, 
    \begin{equation*}
        \Pr_{\bm{x}\sim \{-1,1\}^{s\cdot n}}\left(\mathsf{TreeSamp}(\bm{x};\mu,v,\varepsilon')\neq \mathsf{LocalTreeSamp}(\bm{x};\mu,v)\right)\leq \varepsilon'.
    \end{equation*}
\end{proposition}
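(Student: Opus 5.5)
By \Cref{lem:tree_good_event}, the two samplers agree on every seed $\bm{x}$ for which the ``niceness'' event holds. Let $\mathcal{E}$ denote this event: for every $u\in\mathcal{T}$ of depth at least $r$, some ancestor $w\in\mathrm{Anc}(u,r)$ has a local seed that \emph{fixes} its value, meaning $[.\bm{z}_w]_2<\eta$ if $\sigma^*_w=1$, or $[.\bm{z}_w]_2\geq 1-\eta$ if $\sigma^*_w=-1$. It therefore suffices to show $\Pr_{\bm{x}}(\mathcal{E}^c)\leq\varepsilon'$, and the plan is a union bound over the at most $n$ vertices $u$ of depth at least $r$.

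First I would lower bound the probability that a single fixed vertex $w$ has a fixing seed. Write $A_w$ for the set of seeds $\bm{z}_w$ that fix $w$. Since $\bm{z}_w$ is uniform on $\{\pm1\}^s$, the number $[.\bm{z}_w]_2$ is uniform on the dyadic grid $\{0,2^{-s},\ldots,1-2^{-s}\}$. Hence $\Pr(\bm{z}_w\in A_w)\geq \eta-2^{-s}$ in both sign cases. With $s\geq\log(4/\eta)$ this is at least $3\eta/4\geq\eta/2$.

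Next I would use independence along the ancestor path. For a fixed $u$ at depth at least $r$, the set $\mathrm{Anc}(u,r)$ contains at least $r$ distinct vertices, and their local seeds are independent blocks of $\bm{x}$. So the probability that none of them has a fixing seed is at most $(1-\eta/2)^{r}\leq \exp(-\eta r/2)$. Plugging in $r=2\log(n/2\varepsilon')/\eta$ gives $\exp(-\eta r/2)=2\varepsilon'/n$. A union bound over at most $n$ vertices then yields $\Pr(\mathcal{E}^c)\leq 2\varepsilon'$.

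This is off by a factor of $2$, and closing that gap is the one bookkeeping issue I expect. I would fix it either by using the sharper bound $\Pr(\bm{z}_w\in A_w)\geq 3\eta/4$, or by noting that $\mathrm{Anc}(u,r)$, taken inclusive of $u$, has $r+1$ elements. If neither is enough, I would simply enlarge $r$ by a constant factor, which only affects constants elsewhere. Once $\Pr(\mathcal{E}^c)\leq\varepsilon'$, \Cref{lem:tree_good_event} gives the proposition. The only genuinely substantive point is the independence of the seed blocks across the distinct ancestors, and that holds by construction of $\bm{x}=(\bm{z}_1,\ldots,\bm{z}_n)$.
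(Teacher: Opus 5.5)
This is the paper's argument. You reduce to the niceness event via \Cref{lem:tree_good_event}, bound each ancestor's fixing probability below by $\eta/2$ using $s\geq\log(4/\eta)$, use independence of the seed blocks along $\mathrm{Anc}(u,r)$ to get $(1-\eta/2)^r\leq e^{-\eta r/2}$, and take a union bound over $u$.

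The factor of $2$ you flag is genuine, and the paper's proof glosses over it: it writes $e^{-\eta r/2}=\varepsilon'/n$, but with $r=2\log(n/(2\varepsilon'))/\eta$ this equals $2\varepsilon'/n$. Your first fix closes the gap. The bound $3\eta/4$ gives $n(2\varepsilon'/n)^{3/2}=\varepsilon'\sqrt{8\varepsilon'/n}\leq\varepsilon'$ whenever $n\geq 8\varepsilon'$, which holds for all $n\geq 8$ because the claim is trivial when $\varepsilon'\geq 1$. Your second fix does not work on its own: counting $u$ itself gains only a factor $1-\eta/2$, which exceeds $1/2$ and so cannot absorb the $2$.
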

\begin{proof}
    For simplicity, suppose that $\sigma^*_w=1$ for all $w$; the general case is similar up to slight notational differences. By \Cref{lem:tree_good_event}, it suffices to show that with probability at least $1-\varepsilon$ over $\bm{x}$, every $u\in \mathcal{T}$ of depth at least $r$ has an ancestor $w$ at distance at most $r$ satisfying $[.\bm{z}_w]_2< \eta$. Recall that by definition, $r:=2\log(n/2\varepsilon')/\eta$. By assumption on $s$, every $w\in \mathcal{T}$ satisfies
    \begin{equation*}
        \Pr([.\bm{z}_w]_2< \eta)\geq \eta/2.
    \end{equation*}
    Since $\bm{x}=(\bm{z}_1,\ldots,\bm{z}_n)$ is uniform, it follows that for any such $u\in \mathcal{T}$ of depth at least $r+1$,
    \begin{equation*}
        \Pr\left([.\bm{z}_s]_2\geq \eta,\forall s\in \mathrm{Anc}(u,r)\right)\leq (1-\eta/2)^r\leq \exp(-\eta r/2)=\frac{\varepsilon'}{n}.
    \end{equation*}
    We conclude by a union bound over $u\in [n]$ that with probability at least $1-\varepsilon'$, every $u\in \mathcal{T}$ of depth at least $r$ has an ancestor $w$ at distance at most $r$ satisfying $[.\bm{y}_w]_2< \eta$ and thus the simulation agrees on the seed.
\end{proof}

\section{Low-Degree Approximations from Samplers: Applications}
\label{sec:low-deg-sampler-apps}
In this section, we prove the existence of low-degree approximations for two well-studied function classes over the large classes of graphical models admitting the samplers from \Cref{sec:local-samplers-main}. Beyond being an independently interesting analytical statement, this easily implies agnostic learning algorithms via the standard $L_1$-regression framework. In \Cref{sec:low-deg-low-depth}, we show that our samplers can be used in \Cref{thm:low_deg_sampler} to deduce low-degree approximation for low-depth cicuits of quasipolynomial degree, thus qualitatively matching the state-of-the-art for the uniform distribution. In \Cref{sec:low-deg-low-influence}, we provide a general framework to obtain low-degree approximations for functions of \emph{bounded influence} under $\mu$; as we show, this includes the well-studied class of \emph{monotone functions} with qualitatively optimal degree nearly matching the uniform distribution, a simple result that may be of independent interest. 
\subsection{Application: Low-Depth Circuits}
\label{sec:low-deg-low-depth}

For notation, for a constant $d\in \mathbb{N}$ and a nondecreasing sequence $m\leq s(m)$, define the circuit family
\begin{equation*}
    \mathsf{AC}(d,s(m))=\{f:\{\pm 1\}^m\to \{\pm 1\}: \text{$f$ is computable by a depth $d$ circuit of size $\leq s(n)$}\}.
\end{equation*}
 We will use a result on low-degree approximations for this class under the uniform distribution. These results date back to the breakthrough work of \cite{lmn} and we use the strongest version due to \cite{tal_ac0}.
\begin{theorem}[\cite{tal_ac0}]
\label{fact:lmn_polynomials}
    For every function $f\in \mathsf{AC}(d,s(m))$, there exists a polynomial $p$ of degree $O(\log(s(m))^{d-1}\cdot \log(1/\varepsilon)$ satisfying
    \[\E_{\bm{x} \sim \cube{m}}
    \left[(f(\bm{x})-p(\bm{x}))^2\right]\leq \varepsilon.
    \]
\end{theorem}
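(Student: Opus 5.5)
Since this statement is the Fourier concentration theorem of \cite{tal_ac0}, I would follow the LMN random-restriction template and finish with Parseval. The finishing step is easy. For $f\in\mathsf{AC}(d,s(m))$, set
\begin{equation*}
p(\bm{x}):=\sum_{|S|<k}\widehat{f}_S\chi_S(\bm{x}).
\end{equation*}
By orthonormality of the characters under $\mathcal{U}_m$,
\begin{equation*}
\E_{\bm{x}\sim\cube{m}}[(f(\bm{x})-p(\bm{x}))^2]=\sum_{|S|\geq k}\widehat{f}_S^2=:W^{\geq k}[f].
\end{equation*}
So it suffices to prove a tail bound of the form
\begin{equation*}
W^{\geq k}[f]\leq 2\exp\!\left(-\Omega\!\left(\frac{k}{\log(s)^{d-1}}\right)\right).
\end{equation*}
Choosing $k=C\log(s)^{d-1}\log(1/\varepsilon)$ for a large constant $C$ then makes the right-hand side at most $\varepsilon$, which gives the claimed degree.

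To bound the tail I would first use the standard relation between Fourier weight and random restrictions. Let $\rho$ be a $p$-random restriction. Each monomial of degree at least $k$ keeps at least $pk/2$ live variables with probability at least $1/2$, for $pk\gtrsim 1$, by a Chernoff bound. Averaging over the fixed coordinates then gives
\begin{equation*}
W^{\geq k}[f]\leq 2\,\E_\rho\!\left[W^{\geq pk/2}[f_\rho]\right].
\end{equation*}
If $f_\rho$ is computed by a decision tree of depth $t<pk/2$, then $f_\rho$ has degree at most $t$ and this weight vanishes. Since $|f_\rho|\leq 1$, this reduces the problem to
\begin{equation*}
W^{\geq k}[f]\leq 2\Pr_\rho\!\left[\mathrm{DT}(f_\rho)\geq pk/2\right].
\end{equation*}

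The core step, and the main obstacle, is bounding this decision-tree-depth probability. I would take $p=1/O(\log s)^{d-1}$. The naive approach applies H\aa stad's switching lemma layer by layer with a union bound over all $s$ gates. That yields a failure probability of order $s\cdot 2^{-t}$, which only gives degree $\log(s)^{d-1}(\log s+\log(1/\varepsilon))$: the LMN/H\aa stad bound, with an extra additive $\log s$. To remove it, I would follow Tal and use H\aa stad's multi-switching lemma. It states that a whole layer of $M$ width-$w$ DNFs, under a $p$-random restriction, admits a common partial decision tree of depth $t$ whose leaves each reduce every formula to a depth-$w$ decision tree. The failure probability is $M^{\lceil t/w\rceil}(O(pw))^{t}$, so the union bound over gates enters only once per $w$ levels of the tree.

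Applying this layer by layer with $w=O(\log s)$, through $d-1$ layers, composes into a single depth bound: $f_\rho$ has decision-tree depth at most $t$ except with probability $\exp(-\Omega(t))$, uniformly in $s$ once $p$ is as above. The bottom fan-in is handled by a preliminary restriction that shrinks wide bottom gates, together with a constant-factor bookkeeping of $p$ across layers. I expect this composition of partial decision trees across layers (depth accounting, and restricting independently below each leaf) to be the delicate part. Plugging $t=pk/2$ into the previous display gives the tail bound with exponent $\Omega(k/\log(s)^{d-1})$, which finishes the proof.
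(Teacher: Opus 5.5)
The paper gives no proof of this statement. It is quoted from \cite{tal_ac0}, so the relevant comparison is with Tal's argument, and your outline follows it. Truncating the Fourier expansion and applying Parseval reduces the claim to a tail bound $W^{\geq k}[f]\leq 2\exp(-\Omega(k/\log(s)^{d-1}))$. The restriction identity $\E_\rho[W^{\geq m}[f_\rho]]=\sum_U\widehat{f}_U^{\,2}\Pr[|U\cap J|\geq m]$, with $J$ the live set, gives $W^{\geq k}[f]\leq 2\Pr_\rho[\mathrm{DT}(f_\rho)\geq pk/2]$ once $pk\geq 8$. The remaining input is H\aa{}stad's multi-switching lemma. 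All of that is correct.

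The gap is in the core estimate, which your sketch only asserts. The bound $\Pr_\rho[\mathrm{DT}(f_\rho)\geq t]\leq 2^{-\Omega(t)}$ for every $t$, with no factor of $s$, does not follow from your two ingredients as you combine them. It breaks at both ends of the range of $t$.

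\textbf{Large $t$.} Suppose the preliminary restriction that shrinks the bottom fan-in is analysed by a union bound over the $s$ bottom gates.
\begin{itemize}
\item It then fails with probability of order $s\cdot 2^{-\Omega(w)}=s^{-\Omega(1)}$. This is a floor that does not decay with $t$.
\item You only get $W^{\geq k}[f]\lesssim s^{-\Omega(1)}+2^{-\Omega(k/\log(s)^{d-1})}$, which is useless for $\varepsilon<s^{-\Omega(1)}$.
\item Lowering the floor to $\varepsilon$ forces $w\approx\log(s/\varepsilon)$, which replaces a factor $\log s$ in $1/p$ by $\log(s/\varepsilon)$ and reintroduces exactly the extra factor you set out to remove.
\end{itemize}
The fix is to put the bottom layer through the multi-switching lemma too, viewing each bottom gate as a width-$1$ CNF or DNF. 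A small constant $p$ then suffices at that layer, and its failure probability is $s^{\lceil t/\ell\rceil}(O(p))^{t}\leq 2^{-\Omega(t)}$ for $t\geq\ell=\Theta(\log s)$.

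\textbf{Small $t$.} Here the problem is reversed. With $M=s$, the multi-switching bound equals $s\,(O(pw))^{t}$ whenever $t\leq w$. This exceeds $1$ unless $t\gtrsim\log s/\log(1/(pw))$. So ``once per $w$ levels'' still charges a full factor $s$ for the first block.

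This is the main regime, not a corner case. Since $t=pk/2=\Theta(\log(1/\varepsilon))$, every $\varepsilon\geq s^{-O(1)}$, including constant $\varepsilon$, has $t\lesssim\log s$. There you need the classical argument instead:
\begin{itemize}
\item Switch every non-output gate individually to a depth-$O(\log s)$ decision tree by the ordinary switching lemma and a union bound. The failure probability is $s^{-\Omega(1)}\leq 2^{-\Omega(t)}$, precisely because $t\lesssim\log s$.
\item Handle the output gate, now a single $O(\log s)$-DNF or CNF, with the ordinary switching lemma, which has no factor of $M$.
\end{itemize}
Treat the two regimes separately. Absorb the union bound over the at most $2^{T}$ leaves of the partial tree built so far by taking $pw$ a sufficiently small constant depending on $d$. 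This yields $\Pr_\rho[\mathrm{DT}(f_\rho)\geq t]\leq 2^{-\Omega(t)}$ for all $t$, and your argument then goes through.

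A minor point: because of the factor $2$, your choice of $k$ only works for $\varepsilon$ bounded away from $1$. This cannot be avoided. Parity on $\Theta(\log(s)^{d-1})$ bits lies in the class, and every polynomial of lower degree has squared error at least $1$ against it. The $\log(1/\varepsilon)$ should be read as $\log(2/\varepsilon)$, as the paper does downstream.
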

We now turn to showing the existence of low-degree approximators for these circuits classes for graphical models with strong spatial mixing and polynomial growth, and then for tree-structured models. We begin with the former:

\begin{theorem}
\label{thm:low-deg-ac0}
    Suppose that $\mu$ satisfies $(C_{\mathsf{SSM}},\delta)$-strong spatial mixing, $(C_{\mathsf{GR}},\Delta)$-local growth, and $\eta$-bounded marginals. For any non-decreasing size sequence\footnote{This assumption is just to simplify the resulting expressions. We are most interested in the polynomial size case, but if the circuit size was already a larger quasipolynomial to dominate the size of the sampler, there is essentially no loss in our reduction.} $s(n)\leq n^{\log^{o(1)}(n)}$ and any $f\in \mathsf{AC}(d,s(n))$, there exists a polynomial $q:\{\pm 1\}^n\to \mathbb{R}$ such that
    \begin{equation*}
        \underset{\bm{y}\sim \mu}{\mathbb{E}}\left[\left(q(\bm{y})-f(\bm{y})\right)^2\right]\leq \varepsilon,
    \end{equation*}
    whose polynomial degree satisfies:
    \begin{equation*}
        \mathrm{deg}(q)\leq O\left(C_{\mathsf{GR}}^{\Delta}\cdot \left( \frac{\log\left(C_{\mathsf{SSM}} n/\eta\right)}{\delta}\right)^{(\Delta+1)^2} \right)^{d+2}\cdot \log(2/\varepsilon).
    \end{equation*}
\end{theorem}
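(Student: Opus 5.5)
We instantiate \Cref{thm:low_deg_sampler} with $p=2$, using $\mathsf{Samp}:=\mathsf{SSMSamp}(\cdot;\mu,\varepsilon_0)$ for a fixed constant $\varepsilon_0$ (say $\varepsilon_0=1$), so that the sampler error enters only as a multiplicative constant and not in the approximation error. By \Cref{thm:ssm_tv}, with local seed length $s=O(\log(n/\eta))$, $\mathsf{SSMSamp}$ is a $(K,K,1)$-local iterative sampler. Here
\begin{equation*}
K\leq C_{\mathsf{GR}}r^\Delta+1,\qquad r=\log(4C_{\mathsf{SSM}}n/\eta)/\delta.
\end{equation*}
\Cref{lem:LI-multiplicative-approx} then gives $C_{\mathsf{samp}}=e$. \Cref{lem:LI-inverse-degree} supplies an exact rejection-sampling inverter, so $C_{\mathsf{inv}}=1$; it is almost surely Boolean on $\mathsf{supp}(\mu)$, and each output coordinate $\bm{z}_i$ is a function of $\bm{y}_{T_i}$ with $|T_i|\leq K$. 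Hence Condition~\ref{item:well_defined} holds with $t\leq K$.

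It remains to verify Condition~\ref{item:low-deg-composition}: $f\circ\mathsf{Samp}$ must have a good $\ell_2$ approximator under $\mathcal{U}_{sn}$. By \Cref{prop:sampler-variable-dependence}, each output bit $\mathsf{Samp}_v$ depends on at most
\begin{equation*}
\chi=O\!\left(C_{\mathsf{GR}}^{\Delta}\left(\log(C_{\mathsf{SSM}}n/\eta)/\delta\right)^{(\Delta+1)^2}\right)
\end{equation*}
input bits. Any function of $\chi$ bits can be written as a DNF (and, for its negation, a CNF) of size at most $2^\chi$. Replacing each input literal of $f$'s circuit, including negated literals, by the corresponding depth-two circuit, and merging the bottom layer of the gadget with the adjacent layer of $f$ when gate types match, we obtain $f\circ\mathsf{Samp}\in\mathsf{AC}(d+2,\,s(n)+2n2^\chi)$ on $m=sn$ inputs. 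Even without merging, depth $d+2$ suffices for the stated bound. By \Cref{fact:lmn_polynomials} there is then a polynomial $g$ of degree
\begin{equation*}
k=O\!\left(\log(s(n)+2n2^{\chi})^{d+1}\right)\log(2/\varepsilon)
\end{equation*}
with $\mathbb{E}_{\mathcal{U}}[(f\circ\mathsf{Samp}-g)^2]\leq\varepsilon/e$. Since $s(n)\leq n^{\log^{o(1)}n}$ and $\chi$ is at least a constant power of $\log n$ (we may assume $\chi\geq\log n$), we have $\log(s(n)+2n2^\chi)=O(\chi)$. This gives $k=O(\chi^{d+1})\log(2/\varepsilon)$; the constant $e$ is absorbed into the $\log(2/\varepsilon)$ factor up to constants.

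\Cref{thm:low_deg_sampler} then yields $q$ of degree at most $kt\leq O(\chi^{d+1})\cdot K\cdot\log(2/\varepsilon)$ with error $\leq\varepsilon$. Since $K\leq\chi$ (compare the exponents $\Delta$ and $(\Delta+1)^2$), the degree is $O(\chi)^{d+2}\log(2/\varepsilon)$, as claimed.

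The main obstacle is the bookkeeping in the circuit composition step. One must check that hard-coding each $\mathsf{Samp}_v$ really adds only two layers and a quasipolynomial size blow-up, so that the $\log$ of the size is $O(\chi)$, and that the size assumption on $s(n)$ keeps it from dominating. A second point is that choosing $\varepsilon_0$ constant, rather than tied to $\varepsilon$, is what keeps $\chi$ and $K$ independent of $\varepsilon$. This is legitimate because \Cref{thm:low_deg_sampler} needs only a multiplicative-constant guarantee from the sampler, not a small total-variation error.
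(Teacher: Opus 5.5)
Your proposal is correct and follows the paper's proof essentially step for step. Like the paper, you instantiate $\mathsf{SSMSamp}$ with a constant accuracy parameter (the paper takes $1/2$, giving $C_{\mathsf{samp}}\le 2$ rather than your $e$), hard-code each $\chi$-local output bit as a depth-two circuit so that $f\circ\mathsf{SSMSamp}$ has depth $d+2$ and quasipolynomial size, apply Tal's bound, and multiply by the inverter degree $t=K\le\chi$; your explicit handling of negated literals and of $K\le\chi$ just spells out bookkeeping the paper leaves implicit.
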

\begin{proof}
    Under the stated assumptions there exists a sampler $\mathsf{SSMSamp}:\{\pm 1\}^{s\cdot n}\to \{\pm 1\}^n$ for $s=O(\log(n/\eta))$ satisfying the following properties:
    \begin{enumerate}
        \item By applying \Cref{thm:ssm_tv} with $\varepsilon'=1/2$,  $\mathsf{SSMSamp}(\cdot;\mu,\varepsilon):\{\pm 1\}^{s\cdot n}\to \{\pm 1\}^n$ is a $(K,K,1/2)$-local iterative sampler for $\mu$ for
        \begin{equation*}
            K\leq O\left(C_{\mathsf{GR}}\left(\frac{\log(C_{\mathsf{SSM}}n/\eta)}{\delta}\right)^\Delta\right)
        \end{equation*}
        \item By \Cref{prop:sampler-variable-dependence}, each output bit $\mathsf{SSMSamp}_i$ depends on at most 
        \begin{equation*}
            \chi\leq O\left(C_{\mathsf{GR}}^{\Delta}\cdot \left( \frac{\log\left( C_{\mathsf{SSM}} n/\eta\right)}{\delta}\right)^{(\Delta+1)^2} \right)
        \end{equation*}
        many input bits of the input $\bm{x}$.
        \label{item:low-depth}
    \end{enumerate}

    We now verify the conditions of \Cref{thm:low_deg_sampler} for $\mathsf{SSMSamp}$. First, by the locality in \Cref{item:low-depth}, each function $\mathsf{SSMSamp}_i(\cdot)$ is a Boolean function that depends on at most $\chi$ input bits, and therefore can be represented by a depth-$2$ circuit of size at most $2^{\chi}$, which is quasipolynomial. For any $f\in \mathsf{AC}(d,s(n))$, we may compose these circuits to see that
    \begin{equation*}
        f\circ \mathsf{SSMSamp}\in \mathsf{AC}(d+2, s(n)+n\cdot 2^{\chi}).
    \end{equation*}
    Since $s(n)\leq n^{\log^{o(1)}(n)}$, it immediately follows from \Cref{fact:lmn_polynomials} that there is a polynomial $p:\{\pm 1\}^{s\cdot n}\to \mathbb{R}$ of degree at most 
    \begin{align*}
        k &\leq O\left(\log^{d+1}\left(n^{\log^{o(1)}(n)}+n\cdot2^{\chi}\right)\right)\cdot \log(1/\varepsilon)\\
        &=O\left(\chi\right)^{d+1}\cdot \log(1/\varepsilon)\\
        &=O\left(C_{\mathsf{GR}}^{\Delta}\cdot \left( \frac{\log\left(C_{\mathsf{SSM}} n/\eta\right)}{\delta}\right)^{(\Delta+1)^2} \right)^{d+1}\cdot \log(1/\varepsilon)
    \end{align*}
    satisfying
    \begin{equation*}
        \underset{\bm{x}\sim \{\pm 1\}^{s\cdot n}}{\mathbb{E}}\left[(f\circ \mathsf{Samp}(\bm{x})-p(\bm{x}))^2\right]\leq \varepsilon.
    \end{equation*}
    While the expression is somewhat cumbersome, we stress that the degree $k$ remains polylogarithmic in $n$ so long as the model parameters and circuit depth are constant. This verifies the first condition of \Cref{thm:low_deg_sampler}.

    For the second condition, the fact that $\mathsf{SSMSamp}$ is $(K,K,2)$-locally iterative implies that $C_{\mathsf{samp}}\leq \exp(1/2)\leq 2$ by \Cref{lem:LI-multiplicative-approx}. Finally, the existence of the desired $\mathsf{InvSamp}$ map with $C_{\mathsf{inv}}=1$ satisfying the remaining preconditions is furnished by \Cref{lem:LI-inverse-degree} with polynomial degree $t=L=K$ in this case. By slightly adjusting the error, we deduce from \Cref{thm:low_deg_sampler} that for any $\varepsilon>0$, there exists a polynomial $q:\{\pm 1\}^{s\cdot n}\to \{\pm 1\}$ satisfying $\underset{\bm{y}\sim \mu}{\mathbb{E}}\left[\left(q(\bm{y})-f(\bm{y})\right)^2\right]\leq \varepsilon$ and:
    \begin{align*}
        \mathrm{deg}(q) &\leq K\cdot O\left(C_{\mathsf{GR}}^{\Delta}\cdot \left( \frac{\log\left(C_{\mathsf{SSM}} n/\eta\right)}{\delta}\right)^{(\Delta+1)^2} \right)^{d+1}\cdot \log(2/\varepsilon)\\
        &\leq O\left(C_{\mathsf{GR}}^{\Delta}\cdot \left( \frac{\log\left(C_{\mathsf{SSM}} n/\eta\right)}{\delta}\right)^{(\Delta+1)^2} \right)^{d+2}\cdot \log(2/\varepsilon).
    \end{align*}
\end{proof}

Our theorem on learning these functions now immediately follows from \Cref{thm:pac_learning_l2}.
\begin{theorem}
\label{thm:pac_learning_ac0_full}
      Suppose that $\mu$ satisfies $(C_{\mathsf{SSM}},\delta)$-strong spatial mixing, $(C_{\mathsf{GR}},\Delta)$-local growth, and $\eta$-bounded marginals. 
      Then, for any  $\varepsilon>0$, there is an algorithm $\mathcal{A}$ that given $N(\varepsilon)$ samples $(\bm{x}_i,f(\bm{x}_i))$ where $\bm{x}_i\sim \mu$ and $f\in \mathsf{AC}(d,n^{\log^{o(1)}(n)})$, runs in $\mathsf{poly}(N(\varepsilon),n)$ time and outputs a hypothesis $h:\{-1,1\}^n\to \{-1,1\}$ such that 
    \begin{equation*}
        \Pr_{\bm{x}\sim\mu}(h(\bm{x}))\neq f(\bm{x}))\leq \varepsilon.
    \end{equation*}Here, $N(\varepsilon)=n^{O\left(C_{\mathsf{GR}}^{\Delta}\cdot \left( \frac{\log\left(C_{\mathsf{SSM}} n/\eta\right)}{\delta}\right)^{(\Delta+1)^2} \right)^{d+2}\cdot \log(2/\varepsilon)}$.
\end{theorem}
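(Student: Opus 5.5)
This follows directly by combining \Cref{thm:low-deg-ac0} with the generic reduction \Cref{thm:pac_learning_l2}. Fix the target accuracy $\varepsilon>0$ and a circuit $f\in \mathsf{AC}(d,n^{\log^{o(1)}(n)})$. The size bound matches the hypothesis $s(n)\leq n^{\log^{o(1)}(n)}$ of \Cref{thm:low-deg-ac0}. So for every $\varepsilon'>0$ that theorem gives a polynomial $q$ with $\E_{\bm{y}\sim\mu}[(q(\bm{y})-f(\bm{y}))^2]\leq \varepsilon'$ and degree
\begin{equation*}
\ell(\varepsilon') := O\left(C_{\mathsf{GR}}^{\Delta}\cdot \left( \frac{\log\left(C_{\mathsf{SSM}} n/\eta\right)}{\delta}\right)^{(\Delta+1)^2} \right)^{d+2}\cdot \log(2/\varepsilon').
\end{equation*}
This is exactly the hypothesis of \Cref{thm:pac_learning_l2} with $D=\mu$ and $\mathcal{F}=\mathsf{AC}(d,n^{\log^{o(1)}(n)})$. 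The degree bound is uniform over the class: it depends only on $d$, the model parameters and $\varepsilon'$, not on the particular circuit.

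Next I would invoke \Cref{thm:pac_learning_l2}. The algorithm is the standard low-degree ($\ell_2$ polynomial regression) algorithm over monomials of degree at most $\ell(\varepsilon/2)$. It uses $n^{O(\ell(\varepsilon/2))}$ samples, runs in polynomial time in that quantity and $n$, and outputs $h$ with $\Pr_{\bm{x}\sim\mu}(h(\bm{x})\neq f(\bm{x}))\leq\varepsilon$ with probability $0.9$. Substituting $\varepsilon/2$ changes $\log(2/\varepsilon')$ to $\log(4/\varepsilon)=O(\log(2/\varepsilon))$ for $\varepsilon\le 2$, and the hidden constant absorbs this. That yields the stated $N(\varepsilon)$.

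The only points needing care are bookkeeping. First, the $O(\cdot)$ in the exponent of $N(\varepsilon)$ must absorb both the constant from \Cref{thm:pac_learning_l2} and the change from $\varepsilon$ to $\varepsilon/2$. Second, the confidence $0.9$ from \Cref{thm:pac_learning_l2} is implicit in the statement, and can be boosted by standard repetition and validation if desired. Third, the algorithm needs no knowledge of $\mu$, the sampler, or $G$. This is because \Cref{thm:low-deg-ac0} is a pure existence statement, and regression only needs the degree bound. I would state this explicitly, since it is why the nonexplicit samplers in \Cref{sec:local-samplers-main} suffice. There is no substantive obstacle; the main work was already done in \Cref{thm:low-deg-ac0}.
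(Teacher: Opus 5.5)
Your proposal is correct and matches the paper's proof, which is the same one-line combination of \Cref{thm:low-deg-ac0} with \Cref{thm:pac_learning_l2}. One small bookkeeping fix: $\log(4/\varepsilon)=O(\log(2/\varepsilon))$ holds for $\varepsilon\le 1$ rather than $\varepsilon\le 2$, but the case $\varepsilon\ge 1$ is trivial since any hypothesis then has error at most $1\le\varepsilon$.
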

\begin{proof}
    Immediate from \Cref{thm:low-deg-ac0} and \Cref{thm:pac_learning_l2}.
\end{proof}
We now prove an analogue for tree-structured graphical models. The main technical difference is that we will use $\mathsf{LocalTreeSamp}$ to transfer a low-degree approximation to $\mathsf{TreeSamp}$ and then work with the latter for the rest of the proof.

\begin{theorem}
\label{thm:ac0-trees-approx}
    Suppose that $\mu$ is a tree-structured graphical model with $\eta$-bounded marginals. For any non-decreasing size sequence $s(n)\leq n^{\log^{o(1)}(n)}$ and any $f\in \mathsf{AC}(d,s(n))$, there exists a polynomial $q:\{\pm 1\}^n\to \mathbb{R}$ such that
    \begin{equation*}
        \underset{\bm{y}\sim \mu}{\mathbb{E}}\left[\left(q(\bm{y})-f(\bm{y})\right)^2\right]\leq \varepsilon,
    \end{equation*}
    whose polynomial degree satisfies:
    \begin{equation*}
        \mathrm{deg}(q)\leq O\left(\frac{\log^2(n/\varepsilon)}{\eta}\right)^{d+1}\cdot \log(8/\varepsilon).
    \end{equation*}
\end{theorem}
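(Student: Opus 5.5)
The plan is to apply \Cref{thm:low_deg_sampler} with $\mathsf{Samp}=\mathsf{TreeSamp}(\cdot;\mu,v)$. By \Cref{lem:tree_tv}, $\mathsf{TreeSamp}$ is a $(1,\mathrm{depth}(\mathcal{T}),1/2)$-local iterative sampler once $s=O(\log(n/\eta))$. \Cref{lem:LI-multiplicative-approx} then gives $C_{\mathsf{samp}}\le e^{1/2}\le 2$. \Cref{lem:LI-inverse-degree} supplies an exact inverter with $C_{\mathsf{inv}}=1$, each of whose seed blocks depends only on the single parent output, so the inverter degree is $t=L=1$. The only remaining precondition is \Cref{item:low-deg-composition}: a low-degree $\ell_2$ approximator $g$ for $f\circ\mathsf{TreeSamp}$ under the uniform seed distribution. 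Since $t=1$, the final degree equals $\deg g$ and the final error is at most $2\cdot\mathrm{err}(g)$. I will therefore aim for $\mathrm{err}(g)\le\varepsilon/2$.

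To build $g$, I will not work with $\mathsf{TreeSamp}$ directly, since its outputs can depend on a whole root path. Instead I approximate $f\circ\mathsf{LocalTreeSamp}(\cdot;\mu,v,\varepsilon')$ with $\varepsilon'=\varepsilon/16$.

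\textbf{Locality.} In $\mathsf{LocalTreeSamp}$ each output $y_u$ is computed from seeds in a bounded set. At depth at most $r$ it uses the seeds on its root path, which has length at most $r+1$. At depth above $r$ it uses the seeds of $\mathrm{Anc}(u,r)$. In both cases $y_u$ depends on at most $\chi\le (r+1)s$ input bits, where $r=2\log(2n/\varepsilon)/\eta$ and $s=O(\log(n/(\eta\varepsilon)))$. This gives $\chi=O(\log^2(n/\varepsilon)/\eta)$, after absorbing $\log(1/\eta)$ into the bound.

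\textbf{Circuit approximation.} Hardcoding each output bit as a depth-2 circuit of size $2^{\chi}$ places $f\circ\mathsf{LocalTreeSamp}$ in $\mathsf{AC}(d+2,\,s(n)+n2^{\chi})$. \Cref{fact:lmn_polynomials} then yields a polynomial $g$ with $\mathbb{E}[(f\circ\mathsf{LocalTreeSamp}-g)^2]\le\varepsilon/8$. Its degree is $O(\chi)^{d+1}\log(8/\varepsilon)$, using $\log(s(n))=o(\chi)$ or at most $O(\chi)$; this matches the claimed bound.

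\textbf{Transfer to $\mathsf{TreeSamp}$.} Let $E$ be the event that the two samplers differ. By \Cref{prop:tree-coupling}, $\Pr(E)\le\varepsilon'$. Split the error of $g$ against $f\circ\mathsf{TreeSamp}$ as
\[
\mathbb{E}[(f\circ\mathsf{TreeSamp}-g)^2]\le 2\,\mathbb{E}[(f\circ\mathsf{LocalTreeSamp}-g)^2]+2\,\mathbb{E}[(f\circ\mathsf{TreeSamp}-f\circ\mathsf{LocalTreeSamp})^2].
\]
The first term is at most $\varepsilon/4$. In the second, $f$ is $\pm1$-valued, so the integrand is at most $4\cdot\mathbf{1}_E$ and the term is at most $8\varepsilon'=\varepsilon/2$. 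That total is too large for the target, so I will retune the constants: take $\varepsilon'=\varepsilon/32$ and $g$ with error $\varepsilon/16$. Then the total is at most $\varepsilon/8+\varepsilon/4\le\varepsilon/2$. This changes only the constants inside the logarithms, so the degree stays $O(\log^2(n/\varepsilon)/\eta)^{d+1}\log(8/\varepsilon)$.

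\textbf{Main obstacle.} The delicate step is the locality count for $\mathsf{LocalTreeSamp}$. The fallback branch, where $y_u$ is set from its own seed, and the search over $\mathrm{Anc}(u,r)$ must both read only seeds within distance $r$ of $u$ along the root path. I expect this to follow by inspecting \Cref{alg:tree-sampler-local}. A second point to check is that \Cref{thm:low_deg_sampler} is applied to $\mathsf{TreeSamp}$, not $\mathsf{LocalTreeSamp}$. This is legitimate because condition (1) of that theorem only requires some low-degree $g$ approximating $f\circ\mathsf{Samp}$ under uniform seeds, and the coupling step above provides exactly that.
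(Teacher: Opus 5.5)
Your proposal is correct and follows the paper's proof essentially step for step. You apply \Cref{thm:low_deg_sampler} to $\mathsf{TreeSamp}$ with $C_{\mathsf{samp}}\le 2$ and $C_{\mathsf{inv}}=1$, build the uniform-seed approximator from $f\circ\mathsf{LocalTreeSamp}$ via hardcoded depth-2 circuits and Tal's bound, and transfer it to $f\circ\mathsf{TreeSamp}$ using $(a+b)^2\le 2a^2+2b^2$ and \Cref{prop:tree-coupling}; your retuned constants are actually more consistent than the paper's. One small remark: for fixed auxiliary randomness, the inverter's block for $u$ depends on $y_u$ as well as $y_{\mathrm{Pa}(u)}$, so strictly $t\le 2$ rather than $1$ (the paper's \Cref{lem:LI-inverse-degree} has the same slip), but this only changes the hidden constant in the degree bound.
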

\begin{proof}
    Fix $\varepsilon>0$. Fix some root $v$ for the model with tree $\mathcal{T}$. We instantiate our samplers as follows: first, take the seed length $s=O(\log(n/\eta))$ in \Cref{lem:tree_tv} to obtain a sampler $\mathsf{TreeSamp}(\cdot;\mu,v)$ which is an $(L=1,K = \mathsf{depth}(\mathcal{T}),1/2)$-local iterative sampler for $\mu$. Next, instantiate \Cref{prop:tree-coupling} with $\varepsilon'=\varepsilon/8$ to obtain a function $\mathsf{TreeSamp}(\cdot;\mu,v,\varepsilon')$ satisfying:
    \begin{equation}
    \label{eq:localtree-tv-close}
        \Pr_{\bm{x}\sim \{-1,1\}^{s\cdot n}}\left(\mathsf{TreeSamp}(\bm{x};\mu,v,\varepsilon')\neq \mathsf{LocalTreeSamp}(\bm{x};\mu,v)\right)\leq \varepsilon/32
    \end{equation}
    By construction, each output bit $\mathsf{LocalTreeSamp}_i$ for this setting of parameters depends on at most
    \begin{equation*}
        \chi:=r\cdot s=O\left(\frac{\log^2(n/\varepsilon)}{\eta}\right)
    \end{equation*}
    input bits, coming from the local simulation that looks at the $s$ seed bits for each of the at most $r$ ancestors. Therefore, $\mathsf{LocalTreeSamp}_i$ can be represented as a depth-2 circuit of size $O(2^{\chi})$.

    Let $f\in\mathsf{AC}(d,s(n))$; we now construct a low-degree approximation for $f\circ \mathsf{TreeSamp}$. By the above, 
    \begin{equation*}
        f\circ \mathsf{TreeSamp}\in \mathsf{AC}(d+2,s(n)+O(n\cdot 2^{\chi})).
    \end{equation*}
    By our assumption on $s(n)$, the composed function has size at most $O(n\cdot 2^{\chi})$, and hence by \Cref{fact:lmn_polynomials}, there exists a polynomial $p:\{\pm 1\}^{s\cdot n}\to \{\pm 1\}$ satisfying
    \begin{equation}
    \label{eq:localtree-low-deg-approx}
        \underset{\bm{x}\sim \{\pm 1\}^{s\cdot n}}{\mathbb{E}}\left[(f\circ \mathsf{LocalTreeSamp}(\bm{x})-p(\bm{x}))^2\right]\leq \varepsilon/8
    \end{equation}
    with degree at most
    \begin{align*}
        k&\leq O\left(\log^{d+1}\left(O(n\cdot 2^{\chi}\right)\right)\cdot \log(8/\varepsilon)\\
        &\leq O\left(\chi\right)^{d+1}\cdot \log(8/\varepsilon).
    \end{align*}
    We now claim that $p$ is a $\varepsilon/2$-good approximation for $\mathsf{TreeSamp}$. Indeed, we may bound
    \begin{align*}
        \underset{\bm{x}\sim \{\pm 1\}^{s\cdot n}}{\mathbb{E}}\left[(f\circ \mathsf{TreeSamp}(\bm{x})-p(\bm{x}))^2\right]&\leq 2\cdot \underset{\bm{x}\sim \{\pm 1\}^{s\cdot n}}{\mathbb{E}}\left[\left(f\circ \mathsf{TreeSamp}(\bm{x})-f\circ \mathsf{LocalTreeSamp}(\bm{x})\right)^2\right]\\
        &+2\cdot \underset{\bm{x}\sim \{\pm 1\}^{s\cdot n}}{\mathbb{E}}\left[(f\circ \mathsf{LocalTreeSamp}(\bm{x})-p(\bm{x}))^2\right]\\
        &\leq 2\cdot 4\cdot (\varepsilon/32) +2\cdot (\varepsilon/8)\\
        &=\varepsilon/2,
    \end{align*}
    where we apply \Cref{eq:localtree-tv-close} (with the fact $f(\bm{y})\in \{\pm 1\}$) and \Cref{eq:localtree-low-deg-approx} in the final step. This verifies the first condition of \Cref{thm:low_deg_sampler} with the adjusted value of $\varepsilon/2$.

    For the second condition, the fact that $\mathsf{TreeSamp}$ is $(1,\mathrm{depth}(\mathcal{T}),1/2)$-locally iterative implies that $C_{\mathsf{samp}}\leq \exp(1/2)\leq 2$ as before. Similarly, the existence of the desired $\mathsf{InvSamp}$ map with $C_{\mathsf{inv}}=1$ satisfying the remaining preconditions is furnished by \Cref{lem:LI-inverse-degree} with polynomial degree $t=L=1$ in this case. We deduce from \Cref{thm:low_deg_sampler} that for any $\varepsilon>0$, there exists a polynomial $q:\{\pm 1\}^{s\cdot n}\to \{\pm 1\}$ satisfying $\underset{\bm{y}\sim \mu}{\mathbb{E}}\left[\left(q(\bm{y})-f(\bm{y})\right)^2\right]\leq C_{\mathsf{samp}}\cdot \varepsilon/2=\varepsilon$ with:
    \begin{align*}
        \mathrm{deg}(q) &\leq t\cdot k\\
        &\leq  O\left(\chi\right)^{d+1}\cdot \log(8/\varepsilon)\\
        &=O\left(\frac{\log^2(n/\varepsilon)}{\eta}\right)^{d+1}\cdot \log(8/\varepsilon).
    \end{align*}
\end{proof}

We now state our result on learning $\mathsf{AC}^{0}$ over these distributions. 
\begin{theorem}
\label{thm:pac_learning_ac0_trees}
      Suppose that $\mu$ satisfies $(C_{\mathsf{SSM}},\delta)$-strong spatial mixing, $(C_{\mathsf{GR}},\Delta)$-local growth, and $\eta$-bounded marginals. Then there is a constant $C>0$ such that given $\varepsilon>0$, there is an algorithm $\mathcal{A}$ that given $N=n^{O\left(C_{\mathsf{GR}}^{\Delta}\cdot \left( \frac{\log\left(C_{\mathsf{SSM}} n/\eta\right)}{\delta}\right)^{(\Delta+1)^2} \right)^{d+2}\cdot \log(2/\varepsilon)}$ samples $(\bm{x}_i,f(\bm{x}_i))$ where $\bm{x}_i\sim \mu$ and $f\in \mathsf{AC}(d,n^{\log^{o(1)}(n)})$, runs in $\mathsf{poly}(N,n)$ time and outputs a hypothesis $h:\{-1,1\}^n\to \{-1,1\}$ such that 
    \begin{equation*}
        \Pr_{\bm{x}\sim\mu}(h(\bm{x}))\neq f(\bm{x}))\leq \varepsilon.
    \end{equation*}
\end{theorem}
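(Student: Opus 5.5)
This statement follows by the same route as \Cref{thm:pac_learning_ac0_full}, applying \Cref{thm:pac_learning_l2} to a low-degree $\ell_2$-approximation guarantee. Notice that, despite its label, the hypotheses are exactly those of \Cref{thm:low-deg-ac0}: strong spatial mixing, local growth, and bounded marginals. So the plan is to invoke \Cref{thm:low-deg-ac0} directly. If one instead wanted the tree-structured version, the identical argument would use \Cref{thm:ac0-trees-approx}.

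The approximation step goes as follows. Fix $f\in \mathsf{AC}(d,n^{\log^{o(1)}(n)})$ and a target accuracy $\varepsilon'$. \Cref{thm:low-deg-ac0} supplies a polynomial $q$ with $\E_{\bm{y}\sim\mu}[(q(\bm{y})-f(\bm{y}))^2]\le \varepsilon'$ and
\[
\ell(\varepsilon') := O\left(C_{\mathsf{GR}}^{\Delta}\left(\frac{\log(C_{\mathsf{SSM}}n/\eta)}{\delta}\right)^{(\Delta+1)^2}\right)^{d+2}\log(2/\varepsilon').
\]
The size condition $s(n)\le n^{\log^{o(1)}(n)}$ needed by that theorem holds because of the class we are learning.

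Next, apply \Cref{thm:pac_learning_l2} with $D=\mu$ and this degree function $\ell(\cdot)$. It yields an algorithm that uses $N=n^{O(\ell(\varepsilon/2))}$ samples, runs in time $\poly(N,n)$, and with probability $0.9$ outputs $h$ with $\Pr_{\mu}(h\ne f)\le\varepsilon$. Since $\log(4/\varepsilon)\le 2\log(2/\varepsilon)$ for $\varepsilon\le 2$, replacing $\varepsilon$ by $\varepsilon/2$ changes $\ell$ only by a constant factor. That factor is absorbed into the $O(\cdot)$ in the exponent, which gives the stated $N$. The constant $C$ in the statement can be taken to be the hidden constant from these $O(\cdot)$ bounds; the argument has no other use for it.

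There is no real obstacle here. The only points needing care are bookkeeping. First, the exponent of $N$ must be $\ell(\varepsilon/2)$ rather than $\ell(\varepsilon)$, and the constant-factor loss has to be absorbed. Second, the success probability $0.9$ from \Cref{thm:pac_learning_l2} is implicit in the statement and can be amplified by standard repetition and validation if desired.
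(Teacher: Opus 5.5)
Your proposal is correct and is the same two-step argument as the paper (a low-degree $\ell_2$ approximation fed into \Cref{thm:pac_learning_l2}), though with a different approximation theorem: the paper's one-line proof cites \Cref{thm:ac0-trees-approx}, whereas the statement as written assumes strong spatial mixing, local growth and bounded marginals rather than tree structure, and its sample bound is exactly the degree bound of \Cref{thm:low-deg-ac0}, so your choice of \Cref{thm:low-deg-ac0} is the one that matches both the hypotheses and the claimed $N$. One small nit: $\log(4/\varepsilon)\le 2\log(2/\varepsilon)$ holds only for $\varepsilon\le 1$, not $\varepsilon\le 2$; this is harmless, since for $\varepsilon\ge 1$ any hypothesis already has error at most $\varepsilon$.
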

\begin{proof}
    Immediate from \Cref{thm:ac0-trees-approx} and \Cref{thm:pac_learning_l2}.
\end{proof}
\subsection{Application: Monotone and Bounded Influence Functions}
\label{sec:low-deg-low-influence}

We now turn to applications of our samplers to proving low-degree approximations for the class of \emph{low influence} functions. 
To set up definitions and notation for this section, for any distribution $\mu$, the Glauber dynamics is the Markov chain $\mathsf{P}_{\mathsf{GD}}$ where
\begin{equation*}
    \mathsf{P}_{\mathsf{GD}}=\frac{1}{n}\sum_{i=1}^n \mathsf{P}_i,
\end{equation*}
where $\mathsf{P}_i$ is the $i$'th rerandomization operator that resamples the $i$'th spin given the rest of the configuration.

\begin{definition}[Influences]
\label{defn:ising_influence}
The \textbf{$\mu$-influence} of a function $f:\{-1,1\}^n\to \mathbb{R}$ with respect to the distribution $\mu$ and variable $j$ is defined via
    \begin{align*}
        \mathsf{I}_{j,\mu}[f]:=\frac{1}{2}\mathbb{E}_{X\sim \mu}\left[\left(f(X)-f(X^j)\right)^2\right],
    \end{align*}
    where $X\sim \mu$ and then $X^j$ is obtained by rerandomizing the $j$th coordinate.

    The \textbf{$\mu$-influence} of a function $f:\{-1,1\}^n\to \mathbb{R}$ with respect to $\mu$ is then defined via
    \begin{align*}
        \mathsf{I}_{\mu}[f]&:=\sum_{j=1}^n \mathsf{I}_{j,\mu}[f]\\
        &=\frac{n}{2}\underset{(X,X')}{\mathbb{E}}\left[\left(f(X)-f(X')\right)^2\right],
    \end{align*}
    where $X'$ is obtained from $X\sim \mu$ by applying a single step of Glauber dynamics.
\end{definition}

When we do not specify or use a subscript, we will mean the influence with respect to the uniform distribution. We also require the \emph{Poincar\'e inequality} for Glauber dynamics:

\begin{definition}[Poincar\'e Inequality]
A graphical model $\mu$
satisfies a \textbf{Poincar\'e inequality} with constant $C_{\mathsf{PI}}\geq 1$ if for any function $f:\{-1,1\}^n\to \mathbb{R}$,
\begin{equation*}
\mathsf{Var}_{\mu}(f)\leq C_{\mathsf{PI}}\sum_{j=1}^n \mathsf{I}_{j,\mu}[f].
\end{equation*}
\end{definition} 
When $\mu$ is the uniform distribution, it is elementary that one may take $C_{\mathsf{PI}}=1$; as we sketch in \Cref{prop:ssm-implies-Poincar\'e}, the Poincar\'e inequality will hold under our conditions. It is generally equivalent to a $\Theta(1/n)$ spectral gap for the Glauber dynamics, which again holds in most high-temperature models of interest that rapidly mix.

Our goal is to establish a general, sufficient condition under which a composite function $f\circ \mathsf{Samp}$ inherits low \emph{Boolean} influence if $f$ has low $\mu$-influence. If so, one can construct low-degree approximations via standard facts given below. We show the following transference theorem:

\begin{theorem}[Influence Transfer]
\label{thm:influence-transfer}
    Let $\mu$ be a distribution satisfying a Poincar\'e inequality with uniform constant $C_{\mathsf{PI}}$ under all valid pinnings of subsets. Let $\mathsf{Samp}:\{\pm 1\}^m\to \{\pm 1\}^n$ be a sampler such that
    \begin{enumerate}
        \item $d_{\mathsf{TV}}(\mathsf{Samp}(\mathcal{U}_m),\mu)\leq \varepsilon$ for some $\varepsilon\geq 0$, and
        \item For each $i\in [m]$, let $D_i$ denote the set of output variables $\mathsf{Samp}_j(\cdot)$ that depend on $x_i$. Then for all $j\in [m]$, it holds that
        \begin{equation*}
            \vert \{i\in [n]:j\in D_i\}\vert \leq \chi.
        \end{equation*}
    \end{enumerate}
    
    Then for any Boolean function $f:\{-1,1\}^n\to \{-1,1\}$,
    \begin{equation*}
        \mathsf{I}[f\circ S]\leq 2\cdot \chi\cdot C_{\mathsf{PI}} \cdot \mathsf{I}_{\mu}[f] +4\cdot n\cdot\varepsilon.
    \end{equation*}
\end{theorem}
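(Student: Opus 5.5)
The plan is a hybrid argument applied to one seed bit at a time. The bridge between the two samples is a \emph{fresh} rerandomization from $\mu$ of exactly the output coordinates that the seed bit can affect. I read the second hypothesis as saying two things. First, $D_i$ is the set of outputs depending on $x_i$. Second, each output coordinate $j\in[n]$ lies in at most $\chi$ of the sets $D_i$, i.e.\ each output depends on at most $\chi$ seed bits, as in \Cref{prop:sampler-variable-dependence}.

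Write $g=f\circ\mathsf{Samp}$. Fix a seed coordinate $i\in[m]$. Let $\bm{x}\sim\mathcal{U}_m$, let $\bm{x}^{(i)}$ be $\bm{x}$ with $x_i$ rerandomized, and set $\bm{y}=\mathsf{Samp}(\bm{x})$ and $\bm{y}'=\mathsf{Samp}(\bm{x}^{(i)})$. Then $\mathsf{I}_i[g]$ is, up to the normalization convention, $\Pr(f(\bm{y})\neq f(\bm{y}'))$. The key structural fact is that $\bm{y}$ and $\bm{y}'$ agree outside $D_i$.

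Now draw $\bm{w}$ by keeping $\bm{y}_{-D_i}$ and resampling $\bm{w}_{D_i}\sim\mu(\cdot\mid \bm{x}_{-D_i}=\bm{y}_{-D_i})$. Because $\bm{y}_{-D_i}=\bm{y}'_{-D_i}$, this $\bm{w}$ is equally a fresh rerandomization of $D_i$ starting from $\bm{y}'$. The triangle inequality then gives
\[
\Pr(f(\bm{y})\neq f(\bm{y}'))\le \Pr(f(\bm{y})\neq f(\bm{w}))+\Pr(f(\bm{w})\neq f(\bm{y}')).
\]
Since $\bm{y}$ and $\bm{y}'$ are each distributed as $\mathsf{Samp}(\mathcal{U}_m)$, both terms equal $\Pr_{\bm{y}\sim\mathsf{Samp}(\mathcal{U}_m)}(f(\bm{y})\neq f(\mathsf{R}_{D_i}\bm{y}))$, where $\mathsf{R}_{D_i}$ denotes the block rerandomization. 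This decoupling step is where the $2$ in the bound comes from.

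Next I replace $\mathsf{Samp}(\mathcal{U}_m)$ by $\mu$. The event is a function of $\bm{y}$ together with independent fresh randomness, so the swap costs at most $\varepsilon$ in total variation. Under $\bm{y}\sim\mu$ and $f\in\{\pm1\}$ we have
\[
\Pr(f(\bm{y})\neq f(\mathsf{R}_{D_i}\bm{y}))=\tfrac14\mathbb{E}\big[(f(\bm{y})-f(\mathsf{R}_{D_i}\bm{y}))^2\big]=\tfrac12\mathbb{E}_{\bm{y}_{-D_i}}\big[\mathsf{Var}_{\mu(\cdot\mid\bm{y}_{-D_i})}(f)\big].
\]
Applying the Poincaré inequality for the pinned measure bounds the conditional variance by $C_{\mathsf{PI}}\sum_{j\in D_i}\mathsf{I}_{j,\mu(\cdot\mid \bm{y}_{-D_i})}[f]$. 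A single-site rerandomization at $j$ under the pinned measure coincides with the global operator $\mathsf{P}_j$. Averaging over the pinning therefore yields exactly $\sum_{j\in D_i}\mathsf{I}_{j,\mu}[f]$. The result is a per-bit bound
\[
\mathsf{I}_i[g]\lesssim C_{\mathsf{PI}}\sum_{j\in D_i}\mathsf{I}_{j,\mu}[f]+2\varepsilon.
\]

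Finally I sum over $i\in[m]$ and exchange the order of summation. Each output $j$ is charged at most $\chi$ times, so the main term is at most $2\chi C_{\mathsf{PI}}\mathsf{I}_{\mu}[f]$ once the factor-$\tfrac12$ conventions for $\mathsf{I}_{j,\mu}$ versus Boolean influence are matched.

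The step I expect to be the main obstacle is the accounting of the total-variation error. Done naively as above, it costs $2\varepsilon$ per seed bit, i.e.\ $2m\varepsilon$ in total, whereas the statement has $4n\varepsilon$. I would first check whether the constant can be absorbed, using $m=s\cdot n$ and rescaling $\varepsilon$ in the applications. If the sharper form is really needed, I would instead charge the error per output coordinate. For each output $j$, I would group together the seed bits whose $D_i$ contains $j$, or else couple $\mathsf{Samp}(\mathcal{U}_m)$ to $\mu$ once, globally, so that a single bad event of probability at most $\varepsilon$ is paid for at most $n$ times rather than $m$ times. A secondary technical point is ensuring that the pinning $\bm{y}_{-D_i}$ is valid, so that the conditional Poincaré inequality applies. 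This holds on $\mathsf{supp}(\mu)$ after the swap to $\mu$, and that is exactly why I perform the total-variation swap before invoking Poincaré.
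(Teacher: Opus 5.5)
Your argument is the paper's proof, step for step. You fix a seed bit $i$ and use a fresh $\mu$-rerandomization of the fixed block $D_i$ as the bridge between $\mathsf{Samp}(\bm{x})$ and $\mathsf{Samp}(\bm{x}^{(i)})$. The triangle inequality plus equality of laws gives twice a single term. You pay $\varepsilon$ to pass to $\mu$, rewrite as an averaged conditional variance, apply the pinned Poincar\'e inequality, and double count using that each output lies in at most $\chi$ of the $D_i$. In the paper's normalization $\mathsf{I}_i[g]=2\Pr(f(\bm{y})\neq f(\bm{y}'))$, this yields exactly the paper's per-bit bound $\mathsf{I}_i[f\circ\mathsf{Samp}]\le 2C_{\mathsf{PI}}\sum_{j\in D_i}\mathsf{I}_{j,\mu}[f]+4\varepsilon$.

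On the error term, your suspicion is correct, and you should drop your second remedy.

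\textbf{The $4n\varepsilon$ form is false.} Summing the per-bit bound over $i\in[m]$ gives $4m\varepsilon$. The paper's final display writes $4n\varepsilon$ without justification. The statement with $4n\varepsilon$, or indeed any $O(n\varepsilon)$, is false, so no per-output charging or global coupling can produce it. Take $n=1$, $\mu=\delta_{+1}$, $f(y)=y$, and $\mathsf{Samp}(\bm{x})=-1$ iff $x_1=\dots=x_k=-1$. Then:
\begin{itemize}
\item Poincar\'e holds trivially and $\mathsf{I}_\mu[f]=0$.
\item $\varepsilon=2^{-k}$.
\item $\mathsf{I}[f\circ\mathsf{Samp}]=k\,2^{-(k-1)}=2k\varepsilon$, which exceeds $4\varepsilon$ for $k\ge 3$.
\end{itemize}
Letting $\mu(-1)\to 0$ instead gives a full-support counterexample.

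\textbf{What per-output grouping does give.} Seed bits with $D_i=\emptyset$ have zero influence and need no total-variation swap. Moreover, $|\{i:D_i\neq\emptyset\}|\le\sum_{j}|\{i:j\in D_i\}|\le\chi n$. So the correct error term is $4\min(m,\chi n)\,\varepsilon$.

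\textbf{The application is unaffected.} This discrepancy is harmless for the low-influence application. For example, run $\mathsf{SSMSamp}$ with total-variation error $1/(4sn)$ instead of $1/(4n)$. This affects $s$, $K$ and $\chi$ only through $\log(1/\varepsilon)$, hence by lower-order factors.
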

\begin{proof}
    We will consider the individual influences, and then add them up at the end. Fix any coordinate $i\in [m]$. We will write $\bm{y}=\mathsf{Samp}(\bm{x})$ and $\bm{y}^i=\mathsf{Samp}(\bm{x}^i)$ where we resample the $i$th input bit of $\bm{x}$ but keep the others the same. Since $f$ is Boolean-valued,
    \begin{equation*}
        \mathsf{I}_j[f\circ S]=2\cdot \Pr\left(f(\bm{y})\neq f(\bm{y}^j)\right)=2\cdot \mathbb{E}[\mathbf{1}\{f(\bm{y})\neq f(\bm{y}^j)\}].
    \end{equation*}
    Since the set of indices where $\bm{y}$ and $\bm{y}^i$ disagree is surely contained in $D_i$, we have
    \begin{align*}
        2\mathbb{E}[\mathbf{1}\{f(\bm{y})\neq f(\bm{y}^j)\}]
        &\leq 2\mathbb{E}[ (\mathbf{1}\{f(\bm{y})\neq f(\widetilde{\bm{y}})\}+\mathbf{1}\{f(\bm{y}^j)\neq f(\widetilde{\bm{y}})\})],
    \end{align*}
    where $\widetilde{\bm{y}}=(\widetilde{\bm{y}}_{D_i},\bm{y}_{-D_i})$ is obtained by resampling the variables in $D_i$ according to the true conditional distribution of $\mu$ given $\bm{y}_{-D_i}$, since this inequality holds pointwise. It follows that
    \begin{align*}
         2\mathbb{E}[(\mathbf{1}\{f(\bm{y})\neq f(\widetilde{\bm{y}})\}+\mathbf{1}\{f(\bm{y}^j)\neq f(\widetilde{\bm{y}})\})]
        =4\Pr\left(f(\bm{y})\neq f(\widetilde{\bm{y}})\right),
    \end{align*}
    since $\bm{y}$ and $\bm{y}^j$ have identical laws since $\bm{x},\bm{x}^j$ are both marginally uniform. By the total variation bound, we may further bound:

\begin{equation*}
    4\Pr\left(f(\bm{y})\neq f(\widetilde{\bm{y}})\right)\leq 4\Pr_{\bm{y}\sim \mu}(f(\bm{y})\neq f(\widetilde{\bm{y}}))+4 \varepsilon,
\end{equation*}
since we can couple the resampling perfectly, so the only error comes from incorrectly sampling in the coordinates outside of $D_i$. Since now $\bm{y},\widetilde{\bm{y}}$ are obtained by sampling from $\mu$ and then resampling the coordinates in $D_i$,
\begin{align*}
    4\Pr_{\bm{y}\sim \mu}(f(\bm{y})\neq f(\widetilde{\bm{y}}))&=2\mathbb{E}_{\bm{y}\sim \mu}[\mathsf{Var}_{\mu}(f\vert \bm{y}_{-D_i})]\\
    &\leq 2 C_{\mathsf{PI}}\sum_{j\in D_i} \mathsf{I}_{j,\mu}[f],
\end{align*}
where we apply the Poincar\'e inequality conditional on $\bm{y}_{-D_i}$; here, it is essential that $D_i$ is a fixed set, independent of the realization of $\bm{x}$, to ensure that $\bm{y}_{-D_i}$ is sampled by the marginal on $\mu$.

Putting this all back together, we find that
\begin{equation*}
    \mathsf{I}_i[f\circ S]\leq 2C_{\mathsf{PI}} \sum_{j\in D_i} \mathsf{I}_{i,\mu}[f] +4\varepsilon.
    \end{equation*}

    Summing this over all coordinates $i\in [m]$ and then using the uniformity condition implies that
    \begin{align*}
        \sum_{i=1}^m \mathsf{I}_i[f\circ S]&\leq 2C_{\mathsf{PI}} \sum_{i=1}^m \sum_{j\in D_i} \mathsf{I}_{j,\mu}[f] +4n\varepsilon\\
        &\leq 2\cdot \chi\cdot C_{\mathsf{PI}} \sum_{j=1}^n \mathsf{I}_{j,\mu}[f] +4n\varepsilon.
    \end{align*}
\end{proof}

We now can finish the proof of existence of low-degree polynomials for low $\mu$-influence functions. We need the following two well-known facts:

\begin{proposition}[Proposition 3.2 of~\cite{o2021analysis}]
\label{prop:influence-markov}
    For any function $g:\{\pm 1\}^m\to \{\pm 1\}$ and $\varepsilon>0$, there is a polynomial $p:\{\pm 1\}^m\to \mathbb{R}$ of degree at most $\mathsf{I}[g]/\varepsilon$ satisfying
    \begin{equation*}
        \underset{\bm{x}\sim \{\pm 1\}^{m}}{\mathbb{E}}\left[(g(\bm{x})-p(\bm{x}))^2\right]\leq \varepsilon.
    \end{equation*}
\end{proposition}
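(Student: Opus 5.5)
The plan is to go through the Fourier expansion over the uniform distribution. Write $g=\sum_{S\subseteq[m]}\widehat{g}(S)\chi_S$ and take $p$ to be the truncation $p:=\sum_{|S|\le k}\widehat{g}(S)\chi_S$ with $k:=\lfloor \mathsf{I}[g]/\varepsilon\rfloor$. By Parseval, the squared error of this truncation equals the Fourier weight above level $k$:
\begin{equation*}
\underset{\bm{x}\sim\{\pm1\}^m}{\mathbb{E}}\left[(g(\bm{x})-p(\bm{x}))^2\right]=\sum_{|S|>k}\widehat{g}(S)^2 .
\end{equation*}
It therefore suffices to show that this high-degree tail is at most $\varepsilon$.

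For the tail bound, I will use the standard spectral formula for total influence,
\begin{equation*}
\mathsf{I}[g]=\sum_{S}|S|\,\widehat{g}(S)^2 .
\end{equation*}
This comes from $\mathsf{I}_i[g]=\mathbb{E}[(D_ig)^2]=\sum_{S\ni i}\widehat{g}(S)^2$, where $D_i$ is the $i$th discrete derivative, followed by summing over $i$. I should first check normalization. With the convention $\mathsf{I}_{j}[g]=\tfrac12\mathbb{E}[(g(X)-g(X^j))^2]$, where $X^j$ rerandomizes coordinate $j$, the coordinate flips only with probability $1/2$, so the flip event contributes $4\cdot\tfrac12\cdot\tfrac12\Pr[g(\bm{x})\neq g(\bm{x}^{\oplus j})]=\Pr[g(\bm{x})\neq g(\bm{x}^{\oplus j})]$. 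This is exactly the classical influence, so the formula above holds with this paper's definition.

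Markov's inequality, applied to the spectral sample, then gives
\begin{equation*}
\sum_{|S|>k}\widehat{g}(S)^2\le \frac{1}{k+1}\sum_{S}|S|\,\widehat{g}(S)^2=\frac{\mathsf{I}[g]}{k+1}<\varepsilon ,
\end{equation*}
since $k+1>\mathsf{I}[g]/\varepsilon$. This shows that $p$ has degree at most $\mathsf{I}[g]/\varepsilon$ and the stated error.

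There is no real obstacle here. The only point needing care is confirming that the paper's rerandomization-based normalization of influence matches the classical $\sum_S|S|\widehat{g}(S)^2$. The computation above shows that it does; had it differed by a factor of two, this would only change the degree bound by a constant.
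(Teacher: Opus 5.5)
Your proof is correct and is the standard argument behind the cited Proposition 3.2 of O'Donnell, which the paper invokes without proof: truncate the Fourier expansion at degree $\lfloor \mathsf{I}[g]/\varepsilon\rfloor$, bound the error by Parseval, and combine $\mathsf{I}[g]=\sum_S |S|\,\widehat{g}(S)^2$ with Markov's inequality. Your normalization check is also right: under the rerandomization convention, $\tfrac12\mathbb{E}[(g(\bm{x})-g(\bm{x}^j))^2]$ equals the classical flip-influence, so the degree bound $\mathsf{I}[g]/\varepsilon$ holds exactly as stated.
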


\begin{proposition}
\label{prop:ssm-implies-Poincar\'e}
    Suppose that a graphical model $\mu$ satisfies $(C_{\mathsf{SSM}},\delta)$-strong spatial mixing, has $(C_{\mathsf{GR}},\Delta)$-local growth, and is $\eta$-marginally bounded. Then $\mu$ satisfies the Poincar\'e inequality for some constant $C_{\mathsf{PI}}\geq 1$ that depends only on $C_{\mathsf{SSM}},\delta,C_{\mathsf{GR}},\Delta,\eta$.
\end{proposition}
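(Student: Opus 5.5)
We will derive the Poincaré inequality from strong spatial mixing via a block-dynamics comparison, which is the classical route of Martinelli--Olivieri and Dyer--Sinclair--Vigoda--Weitz. The argument has three steps.

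\textbf{Step 1: block dynamics.} Fix $r=\Theta(\log(C_{\mathsf{SSM}}C_{\mathsf{GR}})/\delta)$, a \emph{constant} depending only on the model parameters. Consider the heat-bath block dynamics whose blocks are the balls $B_{r}(v)$ for $v\in V$. Each step picks a uniform $v$ and resamples $\bm{y}_{B_r(v)}$ from $\mu(\cdot\mid \bm{y}_{V\setminus B_r(v)})$.

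\textbf{Step 2: contraction of the block dynamics.} We run the standard disagreement-percolation / path-coupling argument. Take two configurations differing at a single site $u$, and couple the block update at $v$ optimally. The update creates new disagreement only inside $B_r(v)$, and only when $u$ lies on the outer boundary of $B_r(v)$. By Definition~\ref{def:ssm}, the expected number of disagreements created is at most $\sum_{w\in B_r(v)} C_{\mathsf{SSM}}(1-\delta)^{d_G(w,u)}$, which is $O(C_{\mathsf{SSM}}C_{\mathsf{GR}}r^{\Delta}(1-\delta)^{\Omega(r)})$ summed over the at most $C_{\mathsf{GR}}r^\Delta$ boundary-adjacent blocks. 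The disagreement at $u$ itself is removed by any of the $\ge 1$ blocks containing $u$ in its interior.

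Strictly, SSM bounds single-site marginals. To bound the expected Hamming distance we sum single-site TV bounds, and each $w$ can be coupled using a monotone-style sequential coupling: reveal sites of the block one by one, applying Definition~\ref{def:ssm} with the revealed sites added to the pinning. This is legitimate because SSM is stated for all pinnings. With $r$ chosen large, the net expected change is at most $-c/n$, so the block dynamics contracts in Wasserstein distance. Contraction implies a spectral gap of order $1/n$ for the block chain, i.e.\ a block Poincaré inequality
\[
\mathsf{Var}_\mu(f)\le C_1\sum_{v}\mathbb{E}_\mu\bigl[\mathsf{Var}_\mu(f\mid \bm{y}_{V\setminus B_r(v)})\bigr].
\]

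\textbf{Step 3: comparison within a block.} Each conditional measure on a block of size $|B_r(v)|\le C_{\mathsf{GR}}r^\Delta=O(1)$ satisfies a Poincaré inequality with constant depending only on that size and $\eta$. This follows from a crude bound: by marginal boundedness, each single-site Glauber update within the block has transition probabilities at least $\eta$ on allowed values, giving constant $\le (1/\eta)^{O(|B_r|)}$. Hard constraints need care, since the block-internal chain must be irreducible. Here we use the fixed spin $\sigma^*$ from the definition of marginal boundedness: every configuration can be moved to the all-$\sigma^*$ configuration on the block and back, so a canonical-paths argument gives the bound. Plugging this in, we get $\mathsf{Var}_\mu(f\mid\cdot)\le C_2\sum_{j\in B_r(v)}\mathbb{E}[\mathsf{I}_{j,\mu}[f]\mid\cdot]$. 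Summing over $v$, each $j$ appears in at most $C_{\mathsf{GR}}r^\Delta$ balls, giving $C_{\mathsf{PI}}=C_1C_2C_{\mathsf{GR}}r^\Delta$.

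The same argument applies under any pinning, since SSM and the growth bound are inherited by pinned models. This yields the uniform-over-pinnings version needed in Theorem~\ref{thm:influence-transfer}.

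The main obstacle is Step 2: turning the single-site TV statement of SSM into a genuine contraction bound for block couplings. The delicate points are the boundary-effect counting under polynomial growth, and ensuring the chosen $r$ makes $C_{\mathsf{GR}}r^{\Delta}(1-\delta)^{r}$ small. My first attempt is to cite the Dyer et al.\ equivalence of SSM and optimal mixing for subexponential growth directly, and to verify the hypotheses.
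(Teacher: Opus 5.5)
Your route (block dynamics with path coupling, followed by a block-to-site comparison) is different from the paper's. Step 2, however, has a real gap.

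\textbf{The per-block estimate is false.} If $u$ lies on the outer boundary of $B_r(v)$, some $w\in B_r(v)$ has $d_G(w,u)=1$. So $\sum_{w\in B_r(v)}C_{\mathsf{SSM}}(1-\delta)^{d_G(w,u)}$ is of order one: it is at most $C_{\mathsf{SSM}}C_{\mathsf{GR}}\sum_{k\ge1}k^\Delta(1-\delta)^k$ and does not decay in $r$. Sites next to $u$ may disagree with constant probability.

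Your sequential coupling does not even deliver this sum, because it only controls the \emph{first} disagreement. Once a revealed site $w'$ disagrees, later sites see pinnings that also differ at $w'$. \Cref{def:ssm} then only gives $C_{\mathsf{SSM}}(1-\delta)^{d_G(w,\{u,w'\})}$, so without monotonicity disagreements can cascade through the block. What the coupling does give is a bound on the probability of \emph{any} disagreement in a subset. The best you can extract from that is a Dyer et al.-style truncation: concede $B_\ell(u)$ and bound the rest. This gives $|B_\ell(u)|+|B_r(v)|^2C_{\mathsf{SSM}}(1-\delta)^{\ell}=O(\mathrm{polylog}\,r)$ new disagreements per boundary block.

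\textbf{Contraction therefore has to come from geometry, which the hypotheses do not supply.} Let $S_{r+1}(u)$ denote the sphere of radius $r+1$. The $|B_r(u)|$ blocks containing $u$ must outnumber the $|S_{r+1}(u)|$ blocks having $u$ on their boundary by a $\mathrm{polylog}(r)$ factor, uniformly in $u$, at the single chosen radius. Cubes in $\mathbb{Z}^d$ have this boundary-versus-volume property, and that is what Dyer et al.\ rely on for general non-monotone systems. But $|B_r(v)|\le C_{\mathsf{GR}}r^\Delta$ is only an upper bound and does not give it. For $\Delta\ge 2$, take a bounded-degree tree rooted at $u$ whose level sizes are roughly $(r+1)^\Delta/(r+1-j)^\Delta$ for $j\le r$, followed by a final branching of large constant degree. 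It satisfies $|B_k(x)|=O(k^\Delta)$ around every vertex $x$, yet $|S_{r+1}(u)|$ exceeds $|B_r(u)|$ by a constant factor.

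So Step 2 does not follow from the stated hypotheses, and citing Dyer et al.\ does not cover this setting. Steps 1 and 3, including your use of $\sigma^*$ for irreducibility within blocks, are fine.

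\textbf{The paper's route.} The paper avoids couplings and isoperimetry entirely. Applying \Cref{def:ssm} with $\Lambda=\{i\}$, under any pinning, bounds the pairwise influence $|\Pr(X_j=1\mid X_i=1)-\Pr(X_j=1\mid X_i=-1)|\le C_{\mathsf{SSM}}(1-\delta)^{d_G(i,j)}$. An \emph{upper} bound on ball sizes is exactly what makes the influence-matrix row sums at most $C_{\mathsf{SSM}}C_{\mathsf{GR}}\sum_{k\ge1}k^\Delta(1-\delta)^k=O(1)$. This is spectral independence under all pinnings. Together with $\eta$-marginal boundedness and maximum degree at most $C_{\mathsf{GR}}$, the Chen--Liu--Vigoda local-to-global theorem then yields the Poincar\'e inequality, uniformly over pinnings.
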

\begin{proof}[Proof Sketch]
    First, it can be shown that strong spatial mixing and polynomial growth implies that all pinnings of $\mu$ are $\kappa$-\emph{spectrally independent}~\cite{DBLP:journals/siamcomp/AnariLG24} for some constant $\kappa=\kappa(C_{\mathsf{SSM}},\delta,C_{\mathsf{GR}},\Delta)$, see for instance the argument of Equation (7.2) of Liu's thesis~\cite{LiuKuikui2023SIaN}. This is well-known to imply  the Poincar\'e inequality for a $C_{\mathsf{PI}}$ depending on $C_{\mathsf{SSM}},\delta,C_{\mathsf{GR}},\Delta,\eta$, see e.g. Theorem 1.12 of~\cite{DBLP:journals/siamcomp/ChenLV23} (note a slight difference in notation).
\end{proof}

We may now show that any such graphical model admits low-degree approximation for bounded $\mu$-influence functions:

\begin{theorem}
\label{thm:low-deg-low-influence}
    Suppose that $\mu$ satisfies $(C_{\mathsf{SSM}},\delta)$-strong spatial mixing, $(C_{\mathsf{GR}},\Delta)$-local growth, and $\eta$-bounded marginals. Let $\mathcal{F}$ be any class of functions such that $\mathsf{I}_{\mu}[f]\leq \Gamma$ for some $\Gamma\geq 0$. Then for all $f\in \mathcal{F}$, there exists a polynomial $q:\{\pm 1\}^n\to \mathbb{R}$ such that
    \begin{equation*}
        \underset{\bm{y}\sim \mu}{\mathbb{E}}\left[\left(q(\bm{y})-f(\bm{y})\right)^2\right]\leq \varepsilon,
    \end{equation*}
    whose polynomial degree satisfies:
    \begin{equation*}
        \mathrm{deg}(q)\leq  O\left(C_{\mathsf{GR}}^{\Delta+1}\cdot \left( \frac{\log\left( C_{\mathsf{SSM}} n/\eta\right)}{\delta}\right)^{(\Delta+2)^2}\cdot C_{\mathsf{PI}}\cdot \Gamma/\varepsilon\right),
    \end{equation*}
    where $C_{\mathsf{PI}}\geq 1$ depends only on $C_{\mathsf{SSM}},\delta,C_{\mathsf{GR}},\Delta,\eta$.
\end{theorem}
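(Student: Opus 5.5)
The proof follows the template of \Cref{thm:low-deg-ac0}, with \Cref{thm:influence-transfer} and \Cref{prop:influence-markov} replacing the circuit-composition step. We instantiate $\mathsf{SSMSamp}(\cdot;\mu,\varepsilon')$ from \Cref{thm:ssm_tv} with a small total-variation parameter $\varepsilon'$. The natural choice is $\varepsilon'=\Theta(\varepsilon/n)$, or smaller if the count of seed bits requires it. The sampler then has local seed length $s=O(\log(n/(\eta\varepsilon)))$, is a $(K,K,\varepsilon')$-local iterative sampler, and satisfies $d_{\mathsf{TV}}(\mathsf{SSMSamp}(\mathcal{U}_{sn}),\mu)\le\varepsilon'$. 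The dependence of $r$ and $K$ on $\log(1/\varepsilon)$ can be absorbed, or one can simply pick $\varepsilon'$ polynomially small in $n$. Taking $\varepsilon'\le 1/2$ gives $C_{\mathsf{samp}}\le 2$ by \Cref{lem:LI-multiplicative-approx}. \Cref{lem:LI-inverse-degree} then supplies an inverter with $C_{\mathsf{inv}}=1$ and degree $t=L=K$.

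Next we apply \Cref{thm:influence-transfer}, for which we need the fan-out bound. For each seed bit $x_j$ (which lies in $\bm{z}_u$), the set $D_j$ of outputs depending on it must have size at most $\chi$. The induction in \Cref{prop:sampler-variable-dependence} shows that $\mathsf{Samp}_v$ depends only on seeds $\bm{z}_u$ with $u\in B_{r(K-1)}(v)$. Since graph distance is symmetric, $D_j\subseteq B_{r(K-1)}(u)$, so $|D_j|\le C_{\mathsf{GR}}(r(K-1))^{\Delta}$. This yields
\[
\mathsf{I}[f\circ\mathsf{SSMSamp}]\le 2\chi C_{\mathsf{PI}}\Gamma+4n\varepsilon'.
\]
Note that the error term of the transfer theorem should be summed over the $m=sn$ seed coordinates, so really it is $4sn\varepsilon'$. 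We therefore choose $\varepsilon'\le \Gamma/(sn)$, or $\le\varepsilon/(sn)$, so that this term is dominated. The constant $C_{\mathsf{PI}}$ under all pinnings comes from \Cref{prop:ssm-implies-Poincar\'e}; its proof sketch goes through spectral independence for all pinnings, so it gives the uniform version we need.

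We then apply \Cref{prop:influence-markov} to $g=f\circ\mathsf{SSMSamp}$ with error $\varepsilon/2$. This produces a polynomial of degree at most $O(\chi C_{\mathsf{PI}}\Gamma/\varepsilon)$ (plus a lower-order term) that approximates $g$ in $\ell_2$ under the uniform distribution. \Cref{thm:low_deg_sampler} with $p=2$ then gives $q$ with error $C_{\mathsf{samp}}\cdot\varepsilon/2\le\varepsilon$ and degree at most $K\cdot O(\chi C_{\mathsf{PI}}\Gamma/\varepsilon)$. Substituting $K\le O(C_{\mathsf{GR}}(\log(C_{\mathsf{SSM}}n/\eta)/\delta)^{\Delta})$ and $\chi\le O(C_{\mathsf{GR}}^{\Delta}(\cdot)^{(\Delta+1)^2})$ gives a total exponent $(\Delta+1)^2+\Delta\le(\Delta+2)^2$ and prefactor $C_{\mathsf{GR}}^{\Delta+1}$, matching the stated bound.

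The main obstacle is bookkeeping rather than ideas. First, the fan-out $\chi$ must be the number of outputs per seed bit, not the number of seed bits per output; symmetry of the ball inclusion handles this. Second, the additive $4sn\varepsilon'$ term must be controlled, which forces $\varepsilon'$ to be polynomially small. This only changes $r$ and $K$ by constant factors inside the logarithm, so the logarithm becomes $\log(C_{\mathsf{SSM}}n/(\eta\varepsilon))$. If $\varepsilon$ is at least inverse-polynomial this is absorbed into $\log(C_{\mathsf{SSM}}n/\eta)$; otherwise the statement implicitly tolerates the extra $\log(1/\varepsilon)$.
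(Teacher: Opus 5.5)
Your proposal follows the paper's proof step for step. You instantiate $\mathsf{SSMSamp}$, take $\chi$ from \Cref{prop:sampler-variable-dependence}, pass the $\mu$-influence bound through \Cref{thm:influence-transfer}, get a uniform-distribution approximator from \Cref{prop:influence-markov}, and finish with \Cref{thm:low_deg_sampler} ($C_{\mathsf{samp}}\le 2$, $C_{\mathsf{inv}}=1$, $t=K$), multiplying $K\cdot\chi$ to obtain the $(\Delta+2)^2$ exponent. The argument is correct; two bookkeeping remarks are in order.

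\textbf{Fan-in, not fan-out.} Your fan-out comment is backwards. The only place $\chi$ enters the proof of \Cref{thm:influence-transfer} is the exchange
$\sum_{i\in[m]}\sum_{j\in D_i}\mathsf{I}_{j,\mu}[f]\le \chi\sum_{j\in[n]}\mathsf{I}_{j,\mu}[f]$.
This needs, for each output $j$, a bound on the number of seed bits $i$ with $j\in D_i$. That is the fan-in, which is exactly what \Cref{prop:sampler-variable-dependence} gives. The fan-out bound you derive via symmetry of balls is true but is not what the argument uses.

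\textbf{The additive error term.} You are right that summing the per-coordinate error $4\varepsilon'$ over the $m=sn$ seed bits gives $4sn\varepsilon'$, not $4n\varepsilon'$ as written in the paper. This does not, however, force $\varepsilon'$ to depend on $\varepsilon$. The additive term enters the degree as (term)$/\varepsilon$, exactly like $\chi C_{\mathsf{PI}}\Gamma/\varepsilon$, so it only has to be compared with $\chi C_{\mathsf{PI}}\Gamma$.

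With the paper's choice $\varepsilon'=1/(4n)$, the corrected term is $s=O(\log(n/\eta))$. The bound on $\chi$ in \Cref{prop:sampler-variable-dependence} already carries the factor $s$, so this term is absorbed whenever $C_{\mathsf{PI}}\Gamma\ge 1$. That is the same implicit regime the paper relies on; for example, $\Gamma=\Theta(\sqrt n)$ for monotone functions.

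Taking $\varepsilon'\le\varepsilon/(sn)$ instead inserts $\log(1/\varepsilon)$ into $r$, $K$ and $\chi$. The stated degree bound does not permit this, so the claim that the statement ``implicitly tolerates'' it is not right. Keep $\varepsilon'$ independent of $\varepsilon$.
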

\begin{proof}
    The proof is quite similar to that of low-depth circuits. Under the stated assumptions there exists a sampler $\mathsf{SSMSamp}:\{\pm 1\}^{s\cdot n}\to \{\pm 1\}^n$ for $s=O(\log(n/\eta))$ satisfying the following properties:
    \begin{enumerate}
        \item By applying \Cref{thm:ssm_tv} with $\varepsilon'=1/4n$,  $\mathsf{SSMSamp}(\cdot;\mu,\varepsilon'):\{\pm 1\}^{s\cdot n}\to \{\pm 1\}^n$ is a $(K,K,1/4n)$-local iterative sampler for $\mu$ for
        \begin{equation*}
            K\leq O\left(C_{\mathsf{GR}}\left(\frac{\log(C_{\mathsf{SSM}}n/\eta)}{\delta}\right)^\Delta\right),
        \end{equation*}
        and moreover,
        \begin{equation*}
            d_{\mathsf{TV}}(\mathsf{SSMSamp}(\mathcal{U}_{s\cdot n}),\mu)\leq \frac{1}{4n}.
        \end{equation*}
        \item By \Cref{prop:sampler-variable-dependence}, each output bit $\mathsf{SSMSamp}_i$ depends on at most 
        \begin{equation*}
            \chi\leq O\left(C_{\mathsf{GR}}^{\Delta}\cdot \left( \frac{\log\left( C_{\mathsf{SSM}} n/\eta\right)}{\delta}\right)^{(\Delta+1)^2} \right)
        \end{equation*}
        many input bits of the input $\bm{x}$.
        \label{item:low-depth-inf}
    \end{enumerate}

    We now verify the conditions of \Cref{thm:low_deg_sampler} for $\mathsf{SSMSamp}$ for any $f\in \mathcal{F}$. First, by the locality in \Cref{item:low-depth-inf}, each function $\mathsf{SSMSamp}_i(\cdot)$ is a Boolean function that depends on at most $\chi$ input bits. It immediately follows from \Cref{thm:influence-transfer} that
    \begin{equation*}
        \mathsf{I}[f\circ\mathsf{SSMSamp}]\leq O\left(\chi\cdot C_{\mathsf{PI}}\cdot \Gamma\right),
    \end{equation*}
    where $C_{\mathsf{PI}}$ is a constant depending only on $C_{\mathsf{SSM}},\delta,C_{\mathsf{GR}},\Delta,\eta$ by \Cref{prop:ssm-implies-Poincar\'e}.
    
    We conclude from \Cref{prop:influence-markov} that there exists polynomial $p:\{\pm 1\}^{s\cdot n}\to \mathbb{R}$ of degree at most 
    \begin{align*}
        k &\leq O\left(\chi\cdot C_{\mathsf{PI}}\cdot \Gamma/\varepsilon\right)
    \end{align*}
    satisfying
    \begin{equation*}
        \underset{\bm{x}\sim \{\pm 1\}^{s\cdot n}}{\mathbb{E}}\left[(f\circ \mathsf{Samp}(\bm{x})-p(\bm{x}))^2\right]\leq \varepsilon/2.
    \end{equation*}

    For the second condition, the fact that $\mathsf{SSMSamp}$ is $(K,K,1/4n)$-locally iterative implies that $C_{\mathsf{samp}}\leq \exp(1/4n)\leq 2$ by \Cref{lem:LI-multiplicative-approx}. Finally, the existence of the desired $\mathsf{InvSamp}$ map with $C_{\mathsf{inv}}=1$ satisfying the remaining preconditions is furnished by \Cref{lem:LI-inverse-degree} with polynomial degree $t=L=K$ in this case. We deduce from \Cref{thm:low_deg_sampler} that for any $\varepsilon>0$, there exists a polynomial $q:\{\pm 1\}^{s\cdot n}\to \{\pm 1\}$ satisfying $\underset{\bm{y}\sim \mu}{\mathbb{E}}\left[\left(q(\bm{y})-f(\bm{y})\right)^2\right]\leq \varepsilon$ and:
    \begin{align*}
        \mathrm{deg}(q) &\leq K\cdot O\left(\chi\cdot C_{\mathsf{PI}}\cdot \Gamma/\varepsilon\right)\\
        &\leq O\left(C_{\mathsf{GR}}^{\Delta+1}\cdot \left( \frac{\log\left( C_{\mathsf{SSM}} n/\eta\right)}{\delta}\right)^{(\Delta+2)^2}\cdot C_{\mathsf{PI}}\cdot \Gamma/\varepsilon\right).
    \end{align*}
    Again, while cumbersome notationally, we stress that this is a polylogarithmic blowup in the degree; since essentially every interesting function class has at least logarithmic, the degree of the low-degree approximation remains polylogarithmic.
\end{proof}

\subsubsection{Influence of Monotone Functions}
For the uniform distribution, there are many analytical ways to bound the influence of a function class~\cite{o2021analysis}. For graphical models, even at high-temperature, determining ways to bound the influence of interesting function classes is an fascinating direction; the lack of product structure makes many natural approaches much more challenging to implement. Nonetheless, in this section, we establish an $O(\sqrt{n})$ influence bound for the class $\mathcal{F}$ of \emph{unate} Boolean functions on $\{\pm 1\}^n$. This immediately implies nontrivial polynomial approximations under our high-temperature conditions via \Cref{thm:low-deg-low-influence}.

Let $f:\{\pm 1\}^n \to \{\pm 1\}$ be a unate function; we will assume without loss of generality that $f$ is in fact monotone (increasing). We now claim the following universal bound on the influence of monotone Boolean functions for \emph{any} graphical model of bounded degree:

\begin{proposition}
\label{prop:monotone-influence}
    Let $\mu$ be a graphical model whose dependency graph $G$ has degree at most $D$. Then for any monotone function $f:\{\pm 1\}^n\to \{\pm 1\}$, 
    \begin{equation*}
        \mathsf{I}_{\mu}[f]\leq \sqrt{2(1+D)n}.
    \end{equation*}
\end{proposition}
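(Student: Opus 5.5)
The plan is to write the total $\mu$-influence as a single correlation $\mathbb{E}[f(X)W]$ with a centered statistic $W$, in the spirit of Chatterjee's exchangeable pairs, and then bound $\mathbb{E}[W^2]$ using the sparsity of $G$.

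\textbf{Step 1 (linearize each influence via monotonicity).} Fix $j$, let $X\sim\mu$, and let $X^j$ be obtained by resampling coordinate $j$ from $\mu_j(\cdot\mid X_{-j})$. Since $f$ is $\{\pm1\}$-valued and monotone increasing, $f(X)-f(X^j)\in\{0,\pm 2\}$. Whenever it is nonzero, $X_j\neq X^j_j$ and $f(X)-f(X^j)$ has the same sign as $X_j-X^j_j$. Hence, pointwise,
\begin{equation*}
(f(X)-f(X^j))^2=(f(X)-f(X^j))(X_j-X^j_j).
\end{equation*}
Expanding the product, and using that $(X,X^j)$ is exchangeable and that $X^j_j$ is a fresh conditional sample given $X_{-j}$, gives
\begin{equation*}
\mathsf{I}_{j,\mu}[f]=\tfrac12\,\mathbb{E}[(f(X)-f(X^j))(X_j-X^j_j)]=\mathbb{E}\big[f(X)\,(X_j-m_j(X))\big],
\end{equation*}
where $m_j(X):=\mathbb{E}_\mu[X_j\mid X_{-j}]$. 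By the Markov property, $m_j$ depends only on $X_{\mathcal{N}(j)}$. Summing over $j$ yields $\mathsf{I}_\mu[f]=\mathbb{E}[f(X)W]$ with $W:=\sum_j (X_j-m_j(X))$.

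\textbf{Step 2 (Cauchy--Schwarz).} Since $|f|=1$, we get $\mathsf{I}_\mu[f]\le \sqrt{\mathbb{E}[W^2]}$. It therefore suffices to show $\mathbb{E}[W^2]\le 2(1+D)n$; in fact the argument below gives $(1+D)n$.

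\textbf{Step 3 (sparse covariance structure).} Write $a_j:=X_j-m_j(X)$ and expand $\mathbb{E}[W^2]=\sum_{j,k}\mathbb{E}[a_ja_k]$.
\begin{itemize}
\item For $k\notin \{j\}\cup\mathcal{N}(j)$: the variable $a_j$ depends only on $X_j$ and $X_{\mathcal{N}(j)}$, so it is a function of $X_{-k}$. Meanwhile $\mathbb{E}[a_k\mid X_{-k}]=0$. Conditioning on $X_{-k}$ therefore kills the term, so $\mathbb{E}[a_ja_k]=0$.
\item For the at most $(1+D)n$ remaining ordered pairs: use $|\mathbb{E}[a_ja_k]|\le \tfrac12(\mathbb{E}a_j^2+\mathbb{E}a_k^2)$, together with $\mathbb{E}a_j^2=\mathbb{E}[\mathsf{Var}(X_j\mid X_{-j})]\le 1$. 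Each such term is at most $1$.
\end{itemize}
This gives $\mathbb{E}[W^2]\le (1+D)n$, and hence the claimed bound.

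\textbf{Main obstacle.} The delicate point is Step 1: the identity $(f-f')^2=(f-f')(X_j-X_j')$ must hold pointwise, including on the events $X_j=X^j_j$ (where both sides vanish) and where $f$ is unchanged. The exchangeability computation also needs care in showing that the cross term equals $\mathbb{E}[f(X)m_j(X)]$. Step 3 only uses that $m_j$ depends on the closed neighborhood of $j$, which is exactly why no temperature assumption is needed.
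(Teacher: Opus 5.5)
Your proof is correct. Steps 1--2 coincide with the paper's argument. The paper also reaches $\mathsf{I}_{j,\mu}[f]=\mathbb{E}[f(X)(X_j-m_j(X))]$ via monotonicity and exchangeability, starting from the one-sided identity $\mathsf{I}_{j,\mu}[f]=\mathbb{E}[(f(X)-f(X^j))X_j]$ rather than your symmetrized form, and then applies Cauchy--Schwarz.

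The genuine difference is Step 3. The paper bounds $\mathbb{E}[W^2]$ with Chatterjee's exchangeable-pair variance identity. It takes a single Glauber step $(X,X')$ and the antisymmetric function $H(X,X')=\sum_i(X_i-X_i')$, and notes that $h(X)=\mathbb{E}[H(X,X')\mid X]=W/n$. Since one flip changes one coordinate and at most $D$ conditional means, it gets $|h(X)-h(X')|\le 2(1+D)/n$. Chatterjee's identity then gives $\mathsf{Var}(h)\le 2(1+D)/n$, i.e. $\mathbb{E}[W^2]\le 2(1+D)n$.

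You instead expand $\mathbb{E}[W^2]=\sum_{j,k}\mathbb{E}[a_ja_k]$ directly. You kill every pair with $k\notin\{j\}\cup\mathcal{N}(j)$ using $\mathbb{E}[a_k\mid X_{-k}]=0$ together with the Markov property, and bound the at most $(1+D)n$ survivors by $1$ each. This is more elementary, needs no external identity, and yields the slightly sharper $\mathsf{I}_\mu[f]\le\sqrt{(1+D)n}$.

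What the paper's formulation buys is flexibility. Phrasing everything through the Lipschitz constant of $h$ under one Glauber move is what the paper points to for extending beyond bounded degree: there one replaces the neighbor count by influence-matrix or width bounds on $\sum_k|m_k(X)-m_k(X')|$. Your expansion can be adapted in the same spirit, by bounding each $|\mathbb{E}[a_ja_k]|$ by the influence of $X_k$ on $m_j$. As written, however, it is tailored to the sparse case.
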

\begin{proof}
    We first rewrite the influence in a convenient way. First, note that for monotone $f$, we can write
    \begin{equation*}
        \mathsf{I}_{j,\mu}[f]= \underset{X\sim \mu}{\mathbb{E}}\left[(f(X)-f(X^j))\cdot X_j\right].
    \end{equation*}
    To see this, suppose that $j$ is not pivotal for $f$ on a string $X$. In this case, the inner term is trivially zero. If it is pivotal, then  the inner part expression evaluates to 2 if $X$ and $X^j$ differ on the $j$th coordinate by monotonicity, regardless of the order. We may equivalently rewrite this as
    \begin{equation*}
        \mathsf{I}_{j,\mu}[f]= \underset{X\sim \mu}{\mathbb{E}}\left[f(X)\cdot (X_j-X_j^j)\right]=\underset{X\sim \mu}{\mathbb{E}}\left[f(X)\cdot (X_j-\mathbb{E}[X_j^j\vert X])\right]
    \end{equation*}
    as can easily be verified by exchangeability of $(X_j,X_j^j)$ conditional on $X_{-j}$. Therefore, 
    \begin{align}
        \nonumber
        \mathsf{I}_{\mu}[f] &= \underset{X\sim \mu}{\mathbb{E}}\left[f(X)\cdot \sum_{j=1}^n(X_j-\mathbb{E}[X_j^j\vert X])\right]\\
        \label{eq:monotone-inf-ub}
        &\leq \sqrt{\underset{X\sim \mu}{\mathbb{E}}\left[\left(\sum_{j=1}^n(X_j-\mathbb{E}[X_j^j\vert X])\right)^2\right]},
    \end{align}
    by Cauchy-Schwarz using the fact $f$ is $\pm 1$ valued. 

    To bound this term, we use Chatterjee's technique of proving concentration via exchangeable pairs~\cite{chatterjee2006stein}. If $(X,X')$ denotes the exchangeable pair obtained by taking $X\sim \mu$ and applying one Glauber step, we may define
    \begin{gather*}
        H(X,X') = \sum_{i=1}^n (X_i-X_i')\\
        h(X) = \mathbb{E}_{X'}[F(X,X')]=\frac{1}{n}\sum_{i=1}^n (X_i-\mathbb{E}[X_i\vert X_{-i}])
    \end{gather*}
    Since $\vert H(X,X')\vert\leq 2$ surely,
    \begin{equation}
    \label{eq:chatterjee-bound}
        \vert h(X)-h(X')\vert \leq \frac{2}{n} +\frac{1}{n}\sum_{i=1}^n \vert \mathbb{E}[X_i\vert X_{-i}]-\mathbb{E}[X_i'\vert X_{-i}']\vert.
    \end{equation}
    Now, since the dependence graph of $G$ has degree at most $D$, and $X$ and $X'$ differ on at most one coordinate, it follows that at most $D$ elements of the sum are nonzero and each is bounded by $2$. Hence,
    \begin{equation*}
        \vert h(X)-h(X')\vert\leq \frac{2(1+D)}{n}.
    \end{equation*}
    By the variance identity of Theorem 1.5. of Chatterjee~\cite{chatterjee2006stein}, we obtain:
    \begin{equation*}
        \mathsf{Var}(h(X))=\frac{1}{2}\mathbb{E}\left[(h(X)-h(X'))\cdot H(X)\right]\leq \frac{2(1+D)}{n}.
    \end{equation*}
    Since $h(X)$ is clearly mean zero, this implies that
    \begin{equation}
    \label{eq:inf-var-ub}
        \underset{X\sim \mu}{\mathbb{E}}\left[\left(\sum_{j=1}^n(X_j-\mathbb{E}[X_j^j\vert X])\right)^2\right] = n^2 \mathsf{Var}(h(X))\leq 2(1+D)n,
    \end{equation}
    and we conclude via \Cref{eq:monotone-inf-ub} and \Cref{eq:inf-var-ub} the desired inequality.
\end{proof}

It is not difficult to extend \Cref{prop:monotone-influence} to allow for unbounded degrees, as in mean-field models; this can be done by bounding \Cref{eq:chatterjee-bound} more tightly, using e.g. suitable bounds on influence matrices for $\mu$ or the $\ell_1$-width condition for Ising models. This latter assumption is essentially the content of Chatterjee's bound on the fluctuations of the magnetization of low-temperature Curie-Weiss (Proposition 1.3. of \cite{chatterjee2006stein}). 
Nonetheless, we remark that at the level of generality of \Cref{prop:monotone-influence}, the bound is sharp up to constants, even in ``high-temperature'' models.
To see this, consider the hardcore model with fugacity $\lambda=1/(D+1)$ on the disjoint union of $n/(D+1)$ cliques on $D+1$ vertices where we assume $n/(D+1)$ is an even integer. Note that this is significantly below the \emph{tree-uniqueness} regime corresponding to $\lambda_c\approx \frac{e}{D-1}$ for large constant $D$ and hence inherits e.g. optimal temporal mixing and various functional inequalities. With this choice of graph and parameters, this is a product distribution on the cliques such that each clique is empty with probability $1/2$ and otherwise is uniform on singletons.

For each $j = 1,\ldots,n/(D+1)$, let $f_j(\bm{y})$ be the (monotone) indicator that the $j$th clique has an occupied vertex, and then define
\begin{equation*}
    f(\bm{y})=\mathsf{MAJ}(f_1(\bm{y}),\ldots,f_{n/(D+1)}(\bm{y}));
\end{equation*}
that is, $f$ evaluates to $1$ if a strict majority of the individual functions evaluate to $1$ and is zero otherwise which is clearly monotone. Since the spins for distinct cliques are independent, it is easy to see by the above considerations and by standard estimates on the central binomial coefficient that exactly $ n/(2(D+1))$ of the $f_j$ evaluate to $0$ with probability at least  $\Omega(\sqrt{D/n})$ (and therefore $f$ also evaluates to $0$). On this event, for each such $j$ where $f_j(\bm{y})=0$, each vertex of the clique is pivotal for $f_j$, and therefore pivotal for $f$ as well. If such a vertex is chosen for the Glauber update, the chance of flipping to occupied is $1/2$. Since the probability of selecting such a vertex for updating is $\Omega(1)$, we conclude that 
\begin{equation*}
    \mathsf{I}_{\mu}[f]\geq \Omega\left(n\cdot \sqrt{D/n}\right)= \Omega\left(\sqrt{D\cdot n}\right).
\end{equation*}
The proof on learning these functions over the graphical models is now immediate. 
\begin{theorem}
\label{thm:pac_learning_monotone_full}
      Suppose that $\mu$ satisfies $(C_{\mathsf{SSM}},\delta)$-strong spatial mixing, $(C_{\mathsf{GR}},\Delta)$-local growth, and $\eta$-bounded marginals. 
      Then, for any  $\varepsilon>0$, there is an algorithm $\mathcal{A}$ that given $N(\varepsilon)$ samples $(\bm{x}_i,f(\bm{x}_i))$ where $\bm{x}_i\sim \mu$ and $f$ is a monotone function, runs in $\mathsf{poly}(N(\varepsilon),n)$ time and outputs a hypothesis $h:\{-1,1\}^n\to \{-1,1\}$ such that 
    \begin{equation*}
        \Pr_{\bm{x}\sim\mu}(h(\bm{x})\neq f(\bm{x}))\leq \varepsilon.
    \end{equation*}Here $C_{\mathsf{PI}}\geq 1$ depends only on $C_{\mathsf{SSM}},\delta,C_{\mathsf{GR}},\Delta,\eta$ and 
    \begin{equation*}
        N(\varepsilon)=\exp\left(O\left(C_{\mathsf{GR}}^{\Delta+1}\cdot \left( \frac{\log\left( C_{\mathsf{SSM}} n/\eta\right)}{\delta}\right)^{(\Delta+2)^2+1}\cdot C_{\mathsf{PI}}\cdot \sqrt{n}/\varepsilon\right)\right).
    \end{equation*}
\end{theorem}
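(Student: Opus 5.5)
The plan is to reduce \Cref{thm:pac_learning_monotone_full} to \Cref{thm:pac_learning_l2}. That requires, for every monotone $f$, an $\ell_2$-approximating polynomial of the right degree under $\mu$. I will obtain it by composing \Cref{prop:monotone-influence} with \Cref{thm:low-deg-low-influence}.

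\textbf{Influence bound.} Local growth with $r=1$ gives $|B_1(v)|\le C_{\mathsf{GR}}$. So the dependency graph has maximum degree $D\le C_{\mathsf{GR}}-1$. \Cref{prop:monotone-influence} then yields, for every monotone $f$,
\[
\mathsf{I}_{\mu}[f]\le \sqrt{2 C_{\mathsf{GR}}\, n}=:\Gamma .
\]
Thus the class of monotone functions is a bounded-influence class in the sense of \Cref{thm:low-deg-low-influence}. The hypotheses there (SSM, local growth, bounded marginals) coincide with ours. The Poincar\'e constant $C_{\mathsf{PI}}$ comes from \Cref{prop:ssm-implies-Poincar\'e}; the influence-transfer step needs it for all pinnings, and the spectral-independence argument cited there holds uniformly under pinnings.

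\textbf{Approximating polynomial.} Applying \Cref{thm:low-deg-low-influence} with accuracy $\varepsilon/2$ gives a polynomial $q$ with $\mathbb{E}_\mu[(q-f)^2]\le \varepsilon/2$ and
\[
\deg(q)\le \ell(\varepsilon/2)=O\!\left(C_{\mathsf{GR}}^{\Delta+1}\left(\tfrac{\log(C_{\mathsf{SSM}}n/\eta)}{\delta}\right)^{(\Delta+2)^2} C_{\mathsf{PI}}\sqrt{C_{\mathsf{GR}}}\,\sqrt n/\varepsilon\right).
\]
The extra $\sqrt{C_{\mathsf{GR}}}$ factor is absorbed into the constants; if needed, it can be charged against $C_{\mathsf{GR}}^{\Delta+1}$ after a slight adjustment of the stated exponent.

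\textbf{Learning.} \Cref{thm:pac_learning_l2} then gives an algorithm using $N=n^{O(\ell(\varepsilon/2))}=\exp(O(\ell(\varepsilon/2)\log n))$ samples and running in $\poly(N,n)$ time. The $\log n$ factor raises the exponent of the polylog term from $(\Delta+2)^2$ to $(\Delta+2)^2+1$, since $\log n\le \log(C_{\mathsf{SSM}}n/\eta)/\delta$ up to constants. This yields exactly the stated $N(\varepsilon)$.

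The main obstacle is bookkeeping rather than anything conceptual. First, the degree bound $D$ must be read off from the growth condition. Second, $C_{\mathsf{PI}}$ must be verified to hold uniformly over pinnings, as \Cref{thm:influence-transfer} requires; I would justify this by noting that pinnings of a model with SSM and local growth inherit the same SSM and growth parameters. Everything else is direct substitution.
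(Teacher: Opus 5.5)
Your proposal is correct and is exactly the paper's argument: the paper proves this theorem in one line by chaining \Cref{prop:monotone-influence}, \Cref{thm:low-deg-low-influence}, and \Cref{thm:pac_learning_l2}. The details you supply are all left implicit in the paper and you handle them appropriately: reading $D\le C_{\mathsf{GR}}-1$ off $|B_1(v)|\le C_{\mathsf{GR}}$, the extra $\log n$ factor that produces the exponent $(\Delta+2)^2+1$, uniformity of $C_{\mathsf{PI}}$ over pinnings (pinned models inherit the SSM, growth, and marginal bounds), and the stray $\sqrt{C_{\mathsf{GR}}}$, which is most cleanly absorbed into the loosely specified $C_{\mathsf{PI}}$.
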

\begin{proof}
    Immediate from \Cref{prop:monotone-influence}, \Cref{thm:low-deg-low-influence} and \Cref{thm:pac_learning_l2}.
\end{proof}

\section{Polynomial Approximations for Halfspaces over Ising Models}
\label{sec:halfspaces}
In this section, we prove our final set of results on the existence of low-degree approximations, and therefore learning algorithms, for the class of halfspaces. Compared to the previous section, our construction is quite direct and can be done without the use of samplers to transfer analytic properties from product distributions. Our key observation is that existing constructions of low-degree approximators for halfspaces over the uniform distribution rely only on minimal distributional properties, namely, suitable concentration and anti-concentration of linear forms. The former can be readily established in high-temperature graphical models via well-studied analytical techniques like the modified log-Sobolev inequality, which are widely studied and established for applications to rapid mixing.

A subtle distinction of this section, though, is that our proof of anti-concentration is curiously unique to the Ising model rather than general graphical models. We employ the well-known \emph{Hubbard-Stratanovich transform}, a remarkable linearization trick that removes the quadratic interactions and decomposes Ising models into a mixture of product distributions. In certain settings, this trick implies a natural sampling algorithm itself via sampling the external field (if it is, e.g. log-concave, as is the case when the spectral diameter of the interactions is at most $1$), and then trivially sampling the product distribution conditioned on the field. We instead use this construction from the sampling literature to \emph{transfer} subtle anticoncentration properties of regular linear forms from products to the Ising model. It is this step of the proof that does not readily extend to other models, as the technique seems specialized to the Ising setting. It would be interesting to develop more general techniques for the required anti-concentration.

\subsection{Concentration and Anti-Concentration}

For our applications, we will impose the following assumptions on the Ising model:

\begin{assumptions}
\label{assumption:bounded}
    Suppose that the Ising model $\mu_{A,\bm{h}}$ with PSD $A\succeq 0$ satisfies the following conditions:
    \begin{enumerate}
        \item (Bounded Width~\cite{DBLP:conf/focs/KlivansM17}) There exists a constant $\lambda>0$ such that
        \begin{equation*}
            \max_{i\in [n]} \left\{\sum_{j\neq i} \vert A_{ij}\vert+\vert h_i\vert\right\}\leq \lambda.
        \end{equation*}
        \item (Subgaussianity) For any subset $S\subseteq [n]$ and any fixing of the variables $\bm{\sigma}_{-S}$, the random vector $\bm{\sigma}_s\sim \mu$ conditioned on $\bm{\sigma}_{-S}$ is $C_{\mathsf{SG}}$-subgaussian: it holds for all $\bm{z}\in \mathbb{R}^n$ that
    \begin{equation*}
        \mathbb{E}_{\bm{\sigma}}\left[\exp\left(\bm{z}^T(\bm{\sigma}-\mathbb{E}[\bm{\sigma}])\right)\right]\leq \exp\left(C_{\mathsf{sg}}^2\|\bm{z}\|_2^2/2\right).
    \end{equation*}
    \end{enumerate}
\end{assumptions}

It is well-known (see e.g. Chapter 2 of~\cite{vershynin}) that the latter assumption translates directly to the following concentration result for linear forms: for any conditioning and any unit vector $\bm{z}\in \mathbb{R}^{\vert S\vert}$,
\begin{equation}
\label{eq:concentration-lin-forms}
    \Pr\left(\left\vert\bm{z}^T\left(\bm{\sigma}-\mathbb{E}[\bm{\sigma}]\right)\right\vert>t\right)\leq 2\exp\left(\frac{-t^2}{2 C_{\mathsf{SG}}^2}\right).
\end{equation}
Subgaussianity (for a given conditioning) is a consequence of the modified log-Sobolev inequality (MLSI, see e.g. Chapter 3 of~\cite{vanhandel}), which in fact applies more generally for any Lipschitz function (with an appropriate metric) through the well-known Herbst argument, see e.g.~\cite{vanhandel,DBLP:conf/focs/Cryan0M19}. These functional inequalities have been established via rapid mixing analyses in a wide variety of Ising models. For instance, if the Ising model is \emph{entropically independent}~\cite{DBLP:conf/stoc/AnariJKPV22,DBLP:conf/focs/ChenE22},as is known in nearly all high-temperature settings, an MLSI and thus subgaussianity hold for all conditionings. Under the more restrictive setting of the Dobrushin regime~\cite{dobrushin} where $\lambda\leq 1-\zeta$, it has long been known that an MLSI holds:
\begin{fact}[see e.g.~\cite{Marton_2019,DBLP:conf/soda/BlancaCCPSV22}]
\label{fact:dobrushin}
    Suppose that an Ising model $\mu_{A,\bm{h}}$ has width bounded by $\lambda\leq 1-\zeta$ for some $\zeta>0$. Then there is a constant $C_{\mathsf{MLSI}}=C_{\mathsf{MLSI}}(\zeta)$ such that the Glauber dynamics under any pinning satisfies a modified log-Sobolev inequality depending only on $\zeta$, and therefore by the Herbst argument, all conditionings are $C_{\mathsf{SG}}$-subgaussian for a constant depending only on $\zeta$.
\end{fact}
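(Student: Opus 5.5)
The plan is to deduce the statement in three reductions: first show that the Dobrushin condition survives pinning, then invoke a known entropy-factorization result for Dobrushin-type measures to get an MLSI, and finally run the Herbst argument on linear forms. The $\zeta$-dependence of all constants must come out uniform in $n$ and in the pinning.

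\textbf{Step 1: pinning preserves the hypotheses.} Fix a set $S$ and a pinning $\bm{\sigma}_{-S}$. The conditional law of $\bm{\sigma}_S$ is again an Ising model, with the following data.
\begin{itemize}
\item Its interaction matrix is the principal submatrix $A_{SS}$.
\item Its fields are $h_i+\sum_{j\notin S}A_{ij}\sigma_j$.
\end{itemize}
The pinned fields can be large, so the width bound itself is not inherited. What I need instead is the Dobrushin \emph{influence} bound. The conditional law of $\sigma_i$ given its neighbours has mean $\tanh\bigl(\sum_j A_{ij}\sigma_j+h_i'\bigr)$, and $\tanh$ is $1$-Lipschitz, so the influence of $j$ on $i$ is at most $|A_{ij}|$ regardless of the fields. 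Hence the Dobrushin matrix $R_{ij}\le |A_{ij}|$ of every pinned model has $\ell_\infty$-norm at most $\lambda\le 1-\zeta$.

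Marginal boundedness, however, does depend on the fields. For the original model, width $\le\lambda$ gives $\eta\ge e^{-2\lambda}/(e^{2\lambda}+e^{-2\lambda})$, uniformly over all conditionings, because each conditional probability is a logistic function of a local field of size at most $\lambda$ (the pinned spins contribute to that field, but its total size is still bounded by the width). So every pinned model has single-site conditionals bounded below by some $\eta(\zeta)>0$ and Dobrushin constant at most $1-\zeta$.

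\textbf{Step 2: from Dobrushin to MLSI.} This is the step I expect to be the main obstacle, and I would not reprove it. I would cite Marton's result~\cite{Marton_2019}: a measure whose Dobrushin interdependence matrix has norm $\le 1-\zeta$ and whose single-site conditionals are bounded below satisfies \emph{approximate tensorization of entropy},
\[
\mathrm{Ent}_\mu(g)\le C(\zeta,\eta)\sum_i \mathbb{E}_\mu\bigl[\mathrm{Ent}_{\mu}(g\mid \sigma_{-i})\bigr].
\]
The alternative source is the block-factorization/path-coupling framework of~\cite{DBLP:conf/soda/BlancaCCPSV22}. By the standard comparison, approximate tensorization with constant $C$ is equivalent to an MLSI for Glauber dynamics with rate $1/(Cn)$. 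By Step 1, this applies verbatim to every pinning with the same $C(\zeta)$. The only thing to check here is that the constants in the cited theorem depend only on $\zeta$ and $\eta$, and not on $n$ or on the fields.

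\textbf{Step 3: Herbst.} Apply the Herbst argument to the exponential tilts $g=e^{tF}$ with $F(\bm{\sigma})=\bm{z}^T\bm{\sigma}$. Approximate tensorization bounds $\mathrm{Ent}(e^{tF})$ by the sum of single-site entropies. Each single-site entropy is at most $\tfrac{t^2}{8}(\mathrm{osc}_iF)^2\,\mathbb{E}[e^{tF}]$, using Hoeffding's lemma for a two-point variable, and here $\mathrm{osc}_iF=2|z_i|$. This yields
\[
\mathrm{Ent}(e^{tF})\le \frac{C t^2}{2}\|\bm{z}\|_2^2\,\mathbb{E}[e^{tF}].
\]
Integrating the resulting differential inequality for $t^{-1}\log\mathbb{E}e^{tF}$ gives
\[
\log\mathbb{E}\, e^{t(F-\mathbb{E}F)}\le \frac{Ct^2\|\bm{z}\|_2^2}{2},
\]
which is $C_{\mathsf{SG}}$-subgaussianity with $C_{\mathsf{SG}}^2=C(\zeta)$, uniformly over all pinnings.
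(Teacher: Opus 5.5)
This is essentially the paper's route: the paper gives no argument beyond citing Marton and Blanca et al.\ for an MLSI (via approximate tensorization) uniformly over pinnings and then invoking Herbst, and your Steps 1--3 fill this in correctly. Your Step 3 runs Herbst directly from approximate tensorization, which is valid; note that approximate tensorization only implies the Glauber MLSI and is not equivalent to it. One correction to Step 1: the width bound \emph{is} inherited under pinning, since $|h_i+\sum_{j\notin S}A_{ij}\sigma_j|\le |h_i|+\sum_{j\notin S}|A_{ij}|$, which is exactly the fact you then use for marginal boundedness. Also, because $|A|$ is symmetric with row sums at most $\lambda$, your entrywise bound $R_{ij}\le |A_{ij}|$ gives $\|R\|\le\lambda$ in the $\ell_1$, $\ell_2$ and $\ell_\infty$ operator norms alike, so whichever form of the Dobrushin condition the cited theorem requires is covered.
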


Our argument for halfspaces only requires concentration for linear functions, so we write \Cref{assumption:bounded} in this form.

We now turn to the anti-concentration result we require to construct low-degree approximations for halfspaces. To prove the result, we will require the following well-known decomposition of Ising models into mixtures of product distributions that has been instrumental in establishing rapid mixing:

\begin{proposition}[Hubbard-Stratanovich~\cite{hubbard}, see e.g.  Theorem 3.12 of~\cite{DBLP:conf/focs/LiuMRW24}]
\label{prop:hs}
For any Ising model $\mu_{A,\bm{h}}$ with PSD interaction matrix $A$, there is a measure decomposition
\begin{equation*}
    \mu_{A,\bm{h}} = \mathbb{E}_{\bm{z}}[\mu_{0,\bm{z}}],
\end{equation*}
where $\bm{z}\overset{\mathrm{d}}{=}A\bm{\sigma}+A^{1/2}\bm{g}+\bm{h}$ where $\bm{\sigma}\sim \mu_{A,\bm{h}}$ and $\bm{g}\sim \mathcal{N}(0,I)$ are independent.
\end{proposition}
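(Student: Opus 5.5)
The plan is to verify the decomposition directly at the level of probability mass functions. We compute the joint law of $(\bm{\sigma},\bm{z})$ with $\bm{\sigma}\sim\mu_{A,\bm{h}}$ and $\bm{z}=A\bm{\sigma}+A^{1/2}\bm{g}+\bm{h}$. Then we show that the conditional law of $\bm{\sigma}$ given $\bm{z}$ is the product measure $\mu_{0,\bm{z}}$. Once this holds, averaging over the marginal of $\bm{z}$ returns $\mu_{A,\bm{h}}$ by the tower property, which is exactly the claimed identity $\mu_{A,\bm{h}}=\mathbb{E}_{\bm{z}}[\mu_{0,\bm{z}}]$.

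First reduce to $A\succ 0$. If $A$ is only PSD, we work on $\bm{h}+\mathrm{range}(A)$, where $\bm{z}$ lives, and replace Lebesgue densities by densities on that affine subspace. Alternatively, we perturb $A\to A+\tau I$: this changes $\mu$ only by the constant factor $e^{\tau n/2}$ since $\bm{\sigma}^T\bm{\sigma}=n$, and we then take $\tau\to 0$ using weak continuity of both sides. I expect the perturbation route to be cleaner.

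With $A\succ 0$, the conditional density of $\bm{z}$ given $\bm{\sigma}$ is Gaussian with mean $A\bm{\sigma}+\bm{h}$ and covariance $A$. So the joint weight is
\begin{equation*}
\propto \exp\left(\tfrac12\bm{\sigma}^TA\bm{\sigma}+\bm{h}^T\bm{\sigma}\right)\exp\left(-\tfrac12(\bm{z}-\bm{h}-A\bm{\sigma})^TA^{-1}(\bm{z}-\bm{h}-A\bm{\sigma})\right).
\end{equation*}
Expanding the quadratic form gives three pieces: $-\tfrac12(\bm{z}-\bm{h})^TA^{-1}(\bm{z}-\bm{h})$, the cross term $+(\bm{z}-\bm{h})^T\bm{\sigma}$, and $-\tfrac12\bm{\sigma}^TA\bm{\sigma}$. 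The last piece cancels the Ising quadratic term exactly, and the cross term's $-\bm{h}^T\bm{\sigma}$ cancels the external field. What remains is
\begin{equation*}
\exp\left(-\tfrac12(\bm{z}-\bm{h})^TA^{-1}(\bm{z}-\bm{h})\right)\exp(\bm{z}^T\bm{\sigma}).
\end{equation*}
For fixed $\bm{z}$, this is proportional in $\bm{\sigma}$ to $\exp(\bm{z}^T\bm{\sigma})$, which is precisely $\mu_{0,\bm{z}}$. This establishes the conditional-law claim.

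The only step with real content is handling a singular $A$, since $A^{-1}$ appears in the computation. The perturbation argument disposes of it, because the added diagonal term is constant on $\{\pm1\}^n$. We must also check that the law of $\bm{z}$ under $A+\tau I$ converges to the stated law as $\tau\to 0$. This holds because $(A+\tau I)^{1/2}\to A^{1/2}$ and $\mu$ is unchanged. Finally, $\bm{z}\mapsto\mu_{0,\bm{z}}$ is bounded and continuous, so the mixtures converge, which completes the proof.
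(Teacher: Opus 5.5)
Your proof is correct: computing the joint law of $(\bm{\sigma},\bm{z})$ and completing the square shows $\bm{\sigma}\mid\bm{z}\sim\mu_{0,\bm{z}}$, and averaging over $\bm{z}$ then gives the identity. This is the standard Hubbard--Stratonovich computation behind the result the paper cites; the paper invokes the proposition by reference and gives no proof of its own. Your treatment of singular $A$ is also sound: replacing $A$ by $A+\tau I$ leaves $\mu_{A,\bm{h}}$ unchanged because $\bm{\sigma}^T\bm{\sigma}=n$ on the cube, and $\tau\to 0$ is justified since $\bm{z}_\tau\to\bm{z}$ almost surely and $\bm{z}\mapsto\mu_{0,\bm{z}}(\bm{\sigma}')$ is bounded and continuous.
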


We now prove anti-concentration of linear forms for the set of \emph{regular} vectors:
\begin{definition}[$\varepsilon$-regular vector]
A unit vector $\bm{w}$ is $\varepsilon$-regular if $\norm{\bm{w}}_{\infty}\leq \varepsilon$.
\end{definition}

For such vectors, we can employ \Cref{prop:hs} to obtain anti-concentration of the form we need for low-degree approximation. We remark that the utility of this measure decomposition for proving anti-concentration bounds, with applications in distribution learning, was independently observed in the recent work of Daskalakis, Kandiros, and Yao~\cite{daskalakis-vardis-yao}.

\begin{theorem}
\label{thm:anti_lin}
   Suppose that the Ising model $\mu=\mu_{A,h}$ satisfies the bounded width condition of \Cref{assumption:bounded}.
   Then there is a constant $C=C(\lambda)>0$ such that for any $\varepsilon$-regular vector $\bm{w}\in \mathbb{R}^n$ and any interval $I\subseteq \mathbb{R}$,
    \begin{equation*}
        \underset{\bm{\sigma}\sim \mu}{\Pr}(\bm{w}^T\bm{\sigma}\in I )\leq C(\vert I\vert + \varepsilon).
    \end{equation*}
\end{theorem}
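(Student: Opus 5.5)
We will prove the bound by conditioning on the latent field in the Hubbard--Stratonovich decomposition of \Cref{prop:hs} and applying Berry--Esseen to the resulting product measure. Writing $\mu=\mathbb{E}_{\bm{z}}[\mu_{0,\bm{z}}]$, we have
\begin{equation*}
\Pr_{\bm{\sigma}\sim\mu}(\bm{w}^T\bm{\sigma}\in I)=\mathbb{E}_{\bm{z}}\left[\Pr_{\bm{\sigma}\sim \mu_{0,\bm{z}}}(\bm{w}^T\bm{\sigma}\in I)\right].
\end{equation*}
Under $\mu_{0,\bm{z}}$ the coordinates are independent with $\Pr(\sigma_i=1)=e^{z_i}/(e^{z_i}+e^{-z_i})$. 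Thus $\mathsf{Var}(\sigma_i)=\mathrm{sech}^2(z_i)$, and each summand $w_i\sigma_i$ is bounded by $|w_i|\leq\varepsilon$. Berry--Esseen for independent bounded summands then gives, for each fixed $\bm{z}$ with $V(\bm{z}):=\sum_i w_i^2\,\mathrm{sech}^2(z_i)$,
\begin{equation*}
\Pr_{\mu_{0,\bm{z}}}(\bm{w}^T\bm{\sigma}\in I)\leq \frac{|I|}{\sqrt{2\pi V(\bm{z})}}+O\!\left(\frac{\sum_i |w_i|^3\mathbb{E}|\sigma_i-\mathbb{E}\sigma_i|^3}{V(\bm{z})^{3/2}}\right)\leq \frac{|I|}{\sqrt{V(\bm{z})}}+O\!\left(\frac{\varepsilon}{\sqrt{V(\bm{z})}}\right),
\end{equation*}
where the last step uses $\mathbb{E}|\sigma_i-\mathbb{E}\sigma_i|^3\leq 2\,\mathsf{Var}(\sigma_i)$, so that the numerator is at most $2\varepsilon V(\bm{z})$. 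It therefore suffices to show that $V(\bm{z})\geq c(\lambda)$ with probability at least $1/2$ over $\bm{z}$. On the complementary event we use the trivial bound $1$, but this alone costs an additive constant rather than $C(|I|+\varepsilon)$.

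The fix is to use $V\geq c$ on an event $E$ of probability close to $1$, and to show that $\Pr(E^c)$ can be made small. This is the main obstacle. A cleaner route is to bound the expectation $\mathbb{E}_{\bm{z}}[\min(1,(|I|+\varepsilon)/\sqrt{V(\bm{z})})]$ directly, which needs control of small values of $V$ rather than just a constant-probability event.

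To control $V$, write $z_i=(A\bm{\sigma})_i+h_i+(A^{1/2}\bm{g})_i$. Bounded width gives $|(A\bm{\sigma})_i+h_i|\leq\lambda$. The Gaussian part $(A^{1/2}\bm{g})_i$ has variance $A_{ii}$. We may take $A_{ii}$ bounded: for $\pm1$ spins the diagonal only shifts $\bm{x}^TA\bm{x}$ by a constant, so it can be set to a fixed $O(\lambda)$ value that keeps $A$ PSD (for instance $A_{ii}=\sum_{j\neq i}|A_{ij}|$, which makes $A$ diagonally dominant, hence PSD, with $A_{ii}\leq\lambda$). Then $\mathrm{sech}^2(z_i)\geq \mathrm{sech}^2(\lambda+|(A^{1/2}\bm{g})_i|)$. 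Since $\sum_i w_i^2=1$, by Jensen/Markov the quantity $\sum_i w_i^2\,|(A^{1/2}\bm{g})_i|$ has expectation $O(\sqrt{\lambda})$. Hence, with probability at least $1-\tau$, a $w^2$-weighted majority of coordinates have $|z_i|\leq \lambda+O(\sqrt\lambda/\tau)$, giving $V\geq \tfrac12\mathrm{sech}^2(\lambda+O(\sqrt{\lambda}/\tau))$.

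For the lower tail I would aim for a bound $\Pr(V<t)\lesssim$ something integrable against $t^{-1/2}$. The Gaussian tails of $(A^{1/2}\bm{g})_i$ give $\mathbb{E}[\mathrm{sech}^{-1}(z_i)]\leq\mathbb{E}[e^{|z_i|}]=O_\lambda(1)$. Jensen/convexity on the weighted average then gives $\mathbb{E}[V^{-1/2}]\leq\left(\sum_i w_i^2\cosh^{-2}(z_i)\right)^{-1/2}\ldots$; concretely, $x\mapsto x^{-1/2}$ is convex, and $V\geq \exp(\sum_i w_i^2\log\mathrm{sech}^2 z_i)$ by AM--GM. This yields $\mathbb{E}[V^{-1/2}]\leq \mathbb{E}[\prod_i\cosh(z_i)^{w_i^2}]\leq \sum_i w_i^2\,\mathbb{E}[\cosh z_i]=O_\lambda(1)$ by weighted AM--GM again. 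Plugging into the Berry--Esseen bound gives $\Pr(\bm{w}^T\bm{\sigma}\in I)\leq O(|I|+\varepsilon)\cdot\mathbb{E}[V^{-1/2}]=C(\lambda)(|I|+\varepsilon)$. I expect this moment bound on $V^{-1/2}$ — in particular justifying the diagonal normalization so that the Gaussian field has $O_\lambda(1)$ variance — to be the delicate step.
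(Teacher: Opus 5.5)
Your proof is correct, but it handles the mixture over the Hubbard--Stratonovich field differently from the paper.

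\textbf{The paper's route.} It works on a good event. It shows that $\Phi(\bm g)=\sum_i w_i^2|(A^{1/2}\bm g)_i|$ is $\varepsilon\|A\|_{\mathsf{op}}^{1/2}$-Lipschitz, which uses regularity a second time via $\|\bm w\|_4^2\le\varepsilon$. Gaussian concentration then gives $\sum_i w_i^2|z_i|=O_\lambda(1)$ outside an event of probability $\varepsilon$. Markov yields a set $S$ with $\sum_{i\in S}w_i^2\ge 1/2$ and bounded fields, hence conditional variance at least $c_1(\lambda)$. Berry--Esseen is applied on that event, and the bad event is charged to the additive $\varepsilon$.

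\textbf{Your route.} You keep the per-$\bm z$ bound in the scale-free form $(|I|/\sqrt{2\pi}+O(\varepsilon))\,V(\bm z)^{-1/2}$. This holds for every $\bm z$, because $\mathbb{E}|\sigma_i-m_i|^3\le 2\,\mathsf{Var}(\sigma_i)$ bounds the Lyapunov ratio by $2\varepsilon/\sqrt{V}$ with no lower bound on $V$ needed. You then simply average. Convexity of $x\mapsto x^{-1/2}$ with weights $w_i^2$ gives $V(\bm z)^{-1/2}\le\sum_i w_i^2\cosh z_i$ pointwise. Moreover $\mathbb{E}\cosh z_i\le e^{2\lambda}\,\mathbb{E}\,e^{|(A^{1/2}\bm g)_i|}\le 2e^{5\lambda/2}$.

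\textbf{What each buys.} Your argument needs no concentration of measure, no Lipschitz or operator-norm estimate, and no heavy-set step. Regularity is used only inside Berry--Esseen, and the constant is explicit. It also applies verbatim to any mixture of product measures whose fields have uniformly bounded exponential moments. The paper's route gives an additional structural statement: all but an $\varepsilon$-fraction of mixture components are well conditioned. Your argument controls $V^{-1/2}$ only on average. Your explicit choice $A_{ii}=\sum_{j\neq i}|A_{ij}|$ (diagonally dominant, hence PSD, with $A_{ii}\le\lambda$) is also a cleaner justification of the diagonal normalization than the paper's shift to $\lambda_{\min}(A)=0$.

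\textbf{Cleanup for the write-up.}
\begin{itemize}
\item Delete the abandoned ``probability $1/2$'' and ``$1-\tau$'' attempts.
\item Delete the garbled display before ``concretely''; one application of Jensen replaces both AM--GM steps.
\item Write $\cosh z_i$ instead of $\mathrm{sech}^{-1}(z_i)$.
\item With your diagonal, the drift satisfies $|(A\bm\sigma)_i+h_i|\le 2\lambda$, not $\lambda$. This only affects constants.
\end{itemize}
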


\begin{proof}
    For concreteness, we assume that $\lambda_{\min}(A)=0$ after shifting, in which case we may assume the diagonal entries (and top eigenvalue) are at most $\lambda$ since this ensures diagonal dominance.
    
    The idea is to apply the measure decomposition of \Cref{assumption:bounded} and then argue that with high probability over the mixture, a conditional form of the Berry-Esseen Theorem holds. We will use the following version:

    \begin{theorem}
    \label{thm:berry}
        Let $Y_1,\ldots,Y_n$ be independent random variables such that $\mathbb{E}[Y_i]=0$, $\mathbb{E}[Y_i^2]=\sigma_i^2$ and $\mathbb{E}[\vert Y_i\vert^3]=\rho_i$. Then the Kolmogorov distance between $\mathcal{N}(0,1)$ and the random variable 
        \begin{equation*}
            Z = \frac{\sum_{i=1}^n Y_i}{\sqrt{\sum_{i=1}^{n} \sigma_i^2}}
        \end{equation*}
        is bounded by $\sum_{i=1}^n \rho_i/(\sum_{i=1}^n \sigma_i^2)^{3/2}$.
    \end{theorem}
    
    For our purposes, we will apply this to the independent random variables we obtain in the measure decomposition. First, consider the function
    \begin{equation*}
        \Phi(\bm{g}):= \sum_{i=1}^n w_i^2\vert \langle (A^{1/2})_i, \bm{g}\rangle\vert.
    \end{equation*}
    The Lipschitz constant of this function can be bounded using:
    \begin{align*}
        \vert \Phi(\bm{g})-\Phi(\bm{g}')\vert &\leq \sum_{i=1}^n w_i^2\left(\vert \langle (A^{1/2})_i, \bm{g}\rangle\vert-\vert \langle (A^{1/2})_i, \bm{g}'\rangle\vert\right)\\
        &\leq \sum_{i=1}^n w_i^2\left(\vert \langle (A^{1/2})_i, \bm{g}-\bm{g}'\rangle\vert\right)\\
        &=\langle \bm{w}^2,A^{1/2}(\bm{g}-\bm{g}')\rangle\\
        &\leq \|\bm{w}^2\|_2\cdot  \|A\|_{\mathsf{op}}^{1/2}\cdot  \|\bm{g}-\bm{g}'\|\\
        &=\|\bm{w}\|_4^2\cdot  \|A\|_{\mathsf{op}}^{1/2}\cdot  \|\bm{g}-\bm{g}'\|\\
        &\leq \varepsilon \cdot \|A\|_{\mathsf{op}}^{1/2}\cdot  \|\bm{g}-\bm{g}'\|.
    \end{align*}
    Here, we use the notation $\bm{w}^2$ to denote the entrywise square of $\bm{w}$. In the second step, we use the reverse triangle inequality while the final step uses $\varepsilon$-regularity of $\bm{w}$.

    Now recall from \Cref{prop:hs} that we may write $\mu$ as a mixture of product distributions whose external fields are of the form:
    \begin{equation*}
        \bm{z}:=A\bm{\sigma}+A^{1/2}\bm{g}+\bm{h}, \quad\quad \bm{\sigma}\sim \mu,\bm{g}\sim \mathcal{N}(0,I).
    \end{equation*}
    Note that $\|A\bm{\sigma}+\bm{h}\|_{\infty}\leq \lambda$ by assumption. From standard Gaussian concentration of Lipschitz functions, 
    \begin{equation*}
        \Pr\left(\Phi(\bm{g})\geq \mathbb{E}[\Phi(\bm{g})]+t\right)\leq \exp\left(\frac{-t^2}{2\varepsilon^2 \|A\|_{\mathsf{op}}}\right);
    \end{equation*}
    for $t = \varepsilon\sqrt{2\|A\|_{\mathsf{op}}\log(1/\varepsilon)}$, the deviation event has probability at most $\varepsilon$. On this event, we see that
    \begin{equation*}
        \sum_{i=1}^n w_i^2\vert z_i\vert\leq \Phi(\bm{g})+\sum_{i=1}^{n}w_i^2\cdot \lambda \leq \mathbb{E}[\Phi(\bm{g})]+\lambda + t.
    \end{equation*}
    It is easy to see that each entry of $A^{1/2}\bm{g}$ is marginally mean-zero Gaussian with variance at most $\|A\|_{\mathsf{op}}$, and therefore,
    \begin{equation*}
        \mathbb{E}[\vert (A^{1/2}\bm{g})_i\vert ]\leq \|A\|_{\mathsf{op}}^{1/2}\cdot \sqrt{\frac{2}{\pi}},
    \end{equation*}
    and so on this event,
    \begin{equation*}
        \sum_{i=1}^n w_i^2\vert z_i\vert\leq \|A\|_{\mathsf{op}}^{1/2}\cdot \sqrt{\frac{2}{\pi}}+\lambda+t.
    \end{equation*}
    To simplify this, one can verify that $t\leq \|A\|_{\mathsf{op}}^{1/2}/\sqrt{ e}\leq \sqrt{2\lambda/e}$; we use the fact that the diagonal of $A$ is at most $\lambda$ and otherwise use the width to bound the operator norm by the $\ell_1$ norm of any row. As such, our final bound on this event becomes
    \begin{equation}
    \label{eq:w_potential}
        \sum_{i=1}^n w_i^2\vert z_i\vert\leq \lambda + 2\sqrt{\lambda}.
    \end{equation}
    Since $\bm{w}$ is a unit vector, we may view $\bm{w}^2$ as a distribution, in which case \Cref{eq:w_potential} implies via Markov that there is a subset $S\subseteq [n]$ with $\sum_{i\in S} w_i^2\geq 1/2$ such that $\vert z_i\vert\leq 2\lambda + 4 \sqrt{\lambda}$ for all $i\in S$.

    We now apply \Cref{thm:berry} to the random vector we obtain from this product distribution:
    \begin{equation*}
        X := (X_1,\ldots,X_n)\sim \mu_{0,\bm{z}}.
    \end{equation*}
    Let $m_i=\tanh(z_i)$ be the mean of $X_i$ in this product distribution. Then defining $Y_i:=w_i(X_i - m_i)$, the variables are independent, and
    \begin{gather*}
        \mathbb{E}[Y_i] = 0\\
        \mathbb{E}[Y_i^2]:=\sigma_i^2 = w_i^2(1-\tanh^2(z_i))\\
        \mathbb{E}[\vert Y_i\vert^3]\leq w_i^3
    \end{gather*}
    Now, by standard bounds on the Ising model, it is easy to see that 
    \begin{equation*}
        \sigma_i^2=w_i^2(1-\tanh^2(z_i))\geq w_i^2 \frac{\exp(-4\vert z_i\vert)}{4}
    \end{equation*}
    In particular, it follows that
    \begin{equation*}
        \sum_{i=1}^n \sigma_i^2\geq \sum_{i\in S} w_i^2 \frac{\exp(-4(2\lambda+4\sqrt{\lambda}))}{4}\geq \underbrace{\frac{\exp(-4(2\lambda+4\sqrt{\lambda})}{8}}_{:=c_1(\lambda)}.
    \end{equation*}
    On the other hand, we know from regularity that
    \begin{equation*}
        \sum_{i=1}^n \rho_i\leq \sum_{i=1}^n w_i^3\leq \varepsilon.
    \end{equation*}
    From \Cref{thm:berry}, we conclude that the Kolmogorov distance between $\mathcal{N}(0,1)$ and 
    \begin{equation*}
        Z:=\frac{\sum_{i=1}^n Y_i}{\sqrt{\sum_{i=1}^{n} \sigma_i^2}}
    \end{equation*}
    is bounded by 
    \begin{equation*}
        \underbrace{\left(8^{3/2}\exp(12(\lambda+2\sqrt{\lambda}))\right)}_{:=C_2(\lambda)}\cdot \varepsilon.
    \end{equation*}
    We conclude via shifting and rescaling that for any $\kappa\geq 0$,
    \begin{align*}
        \sup_{I:\vert I\vert=\kappa}\Pr_{X\sim \mu_{0,\bm{z}}}\left(\sum_{i=1}^n w_i^2 X_i\in I\right)&\leq \sup_{I:\vert I\vert=\kappa/c_1(\lambda)}\Pr_{X\sim \mu_{0,\bm{z}}}\left(Z\in I\right)\\
        &\leq \sup_{I:\vert I\vert=\kappa/c_1(\lambda)}\Pr_{g\sim \mathcal{N}(0,1)}\left(g\in I\right)+2C_2(\lambda)\cdot \varepsilon\\
        &\leq O(\kappa/c_1(\lambda))+2C_2(\lambda)\cdot \varepsilon.
    \end{align*}
    Since this holds with probability at least $1-\varepsilon$ over $\bm{z}$, we conclude from the measure decomposition that
    \begin{equation*}
        \sup_{I:\vert I\vert=\kappa}\Pr_{\bm{\sigma}\sim \mu_{A,h}}\left(\sum_{i=1}^n w_i^2 \sigma_i\in I\right)\leq O(\vert I\vert/c_1(\lambda))+(2C_2(\lambda)+1)\cdot \varepsilon:=C(\lambda)\cdot (\vert I\vert+\varepsilon).
    \end{equation*}
\end{proof}

\begin{remark}
    \Cref{thm:anti_lin} in fact holds under the relaxed spectral condition that $A$ has spectral diameter at most $1$ if $\bm{h}=0$. The reason is that in this case, the random field $\bm{z}=A\bm{\sigma}+A^{1/2}\bm{g}$ is known to be \emph{log-concave}~\cite{bauerschmidt,DBLP:conf/focs/LiuMRW24} and hence satisfies Lipschitz concentration with the $\ell_2$ metric. For our applications, we would require this localization proof to hold for \emph{most conditionings} of certain head variables, which requires that the remaining variables do not become too biased. The difficulty arises from a mismatch between what it means to be Lipschitz in Hamming distance compared to the $\ell_2$ metric; if this could be resolved, one could then likely extend our low-degree approximation results to hold for spectrally bounded models as well.
\end{remark}

\subsection{Constructing the Polynomial Approximators}
We now construct polynomial approximators for Ising models that satisfy \Cref{assumption:bounded} (for all pinnings) and have bounded marginals. We first state a set of assumptions on distributions on the boolean hypercube that are sufficient to construct approximators for halfspaces. Our proof will follow the argument of \cite{diakonikolas2010bounded} and we will mainly highlight the important changes. 

\begin{assumptions}[Assumptions for halfspaces approximators]

\label{assumption: halfspaces}
 Support that a distribution $\mu$ on $\cube{n}$ satisfies the following: there exists positive constants $\alpha<1,\beta,\gamma$, such that, for all subsets $S\subseteq[n]$ and pinnings $v$, it holds that 
     \begin{enumerate}
        \item For all subsets $T\subseteq [n]\setminus S$ and strings $w$, $\Pr_{\mu(.\mid \bm{x}_S=v)}[\bm{x}_T=w]\leq \alpha^{|T|}$
         \item $\mu(.\mid \bm{x}_S=v)$ is $\beta$-subgaussian. 
         \item For $\varepsilon$-regular vectors $\bm{w}\in \mathbb{R}^{n-\vert S\vert}$ and intervals $I\subseteq \mathbb{R}$, it holds that
         \[
         \Pr_{\mu(.\mid \bm{x}_S=v)}[\bm{w}\cdot \bm{x}_{-S}\in I]\leq \gamma(|I|+\varepsilon).
         \]
     \end{enumerate}
\end{assumptions}

It is easy to see that these conditions hold under \Cref{assumption:bounded}.

\begin{corollary}
\label{cor:dobrushin}
    Suppose that the Ising model $\mu=\mu_{A,\bm{h}}$ satisfies \Cref{assumption:bounded}. Then \Cref{assumption: halfspaces} holds for $\mu$ with constants that depend only on the parameters in \Cref{assumption:bounded}. In particular, this assumption holds with constants depending only on $\zeta>0$ for any Ising model in the Dobrushin regime with width at most $1-\zeta$.
\end{corollary}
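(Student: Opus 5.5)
The plan is to verify the three items of \Cref{assumption: halfspaces} one at a time. The key observation is that \Cref{assumption:bounded} is essentially closed under pinning. Fix $S\subseteq[n]$ and a valid pinning $v$ of $\bm{x}_S$. The conditional law $\mu(\cdot\mid \bm{x}_S=v)$ is again an Ising model on $[n]\setminus S$. Its interaction matrix is the principal submatrix $A_{-S,-S}$, and its external field is $\bm{h}_{-S}+A_{-S,S}v$. The principal submatrix is still PSD. The width of the pinned model is at most $\max_i \sum_{j\notin S, j\neq i}|A_{ij}|+|h_i|+\sum_{j\in S}|A_{ij}|\leq \lambda$, so bounded width is inherited with the same $\lambda$ (this also covers the Dobrushin case $\lambda\le 1-\zeta$). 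Thus every pinned model satisfies the hypotheses of \Cref{thm:anti_lin} with the same constant, and we can reason about a single generic pinned model.

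\emph{Item 1 (small-ball / marginal bound).} Bounded width gives a uniform bound on single-site conditionals: for any $i$ and any conditioning of the remaining spins, $\Pr(X_i=\pm1\mid X_{-i})=\frac{e^{\pm\ell_i}}{e^{\ell_i}+e^{-\ell_i}}$ with $|\ell_i|\le\lambda$. Note that the diagonal contributes only a constant to the Hamiltonian on $\{\pm1\}^n$, so it can be ignored. Hence each such conditional probability is at most $\alpha:=\frac{1}{1+e^{-2\lambda}}<1$. To bound $\Pr(\bm{x}_T=w\mid \bm{x}_S=v)$, order $T=\{t_1,\dots,t_k\}$ and apply the chain rule. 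Each factor $\Pr(x_{t_j}=w_j\mid \bm{x}_S, x_{t_1..t_{j-1}})$ is an average of full single-site conditionals, each of which is at most $\alpha$. This gives the bound $\alpha^{|T|}$.

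\emph{Item 2 (subgaussianity).} This is literally the second part of \Cref{assumption:bounded}, which is stated for every $S$ and every fixing of $\bm{\sigma}_{-S}$. So we take $\beta=C_{\mathsf{SG}}$. For the Dobrushin statement, \Cref{fact:dobrushin} supplies $C_{\mathsf{SG}}$ depending only on $\zeta$.

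\emph{Item 3 (anti-concentration).} Apply \Cref{thm:anti_lin} to the pinned Ising model described above, with the $\varepsilon$-regular vector $\bm{w}\in\mathbb{R}^{n-|S|}$. Since the pinned model has PSD interactions and width at most $\lambda$, the theorem yields $\gamma=C(\lambda)$ uniformly over all pinnings. One small point to check is that \Cref{thm:anti_lin} shifts $A$ so that $\lambda_{\min}=0$; this shift is harmless because it changes the Hamiltonian only by a constant on the cube, and we apply it to the submatrix directly.

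The main obstacle is making sure the pinned model genuinely satisfies the exact hypotheses of \Cref{thm:anti_lin}, namely PSD-ness together with the width bound that the proof uses for the diagonal and operator-norm estimates. Once the closure-under-pinning computation above is written out, everything else follows by direct citation. For the Dobrushin corollary, all constants $\alpha,\beta,\gamma$ then depend only on $\zeta$ through $\lambda\le1-\zeta$ and \Cref{fact:dobrushin}.
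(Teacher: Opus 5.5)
Your proposal is correct and follows the same route as the paper. Item 1 comes from the uniform single-site conditional bound via the chain rule, item 2 is read off directly from the subgaussianity hypothesis with $\beta=C_{\mathsf{SG}}$, and item 3 is \Cref{thm:anti_lin} with $\gamma=C(\lambda)$; you additionally spell out that the width bound and PSD-ness survive pinning, which the paper leaves implicit.

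Your constant $\alpha=1/(1+e^{-2\lambda})$ is the correct one. The paper's $\alpha=1-e^{-2\lambda}$ (derived from $\eta=e^{-2\lambda}$) is slightly too small and degenerates to $0$ at $\lambda=0$, though this does not affect the qualitative claim.
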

\begin{proof}
    For the first condition, it is well-known and elementary that Ising models satisfying the bounded width condition are $\eta$-marginally bounded with $\eta=\exp(-2\lambda)$. It follows that we may take $\alpha:=(1-\exp(-2\lambda))$ for the first assumption. The second item of \Cref{assumption:bounded} exactly corresponds to the second condition here with $\beta = C_{\mathsf{SG}}$.
    The final condition is shown by \Cref{thm:anti_lin} for $\gamma = C(\lambda)$ for some absolute constant depending on the width.
\end{proof}

We are now ready to construct the polynomial approximators. Let the halfspace we are trying to approximate be the function $h(\bm{x})\coloneq \sign (\bm{w}\cdot \bm{x}-\theta)$. The following is the main result of this section.
\begin{theorem}
    \label{thm:poly_approx_halfspace_general}
    Let $\mu$ be a distribution satisfying \Cref{assumption: halfspaces}. Let  $h(\bm{x})\coloneq \sign(\bm{w}\cdot \bm{x}-\theta)$ be a halfspace. Then, there exists a polynomial $q$ of degree $C(\beta\gamma\log(10\beta/\varepsilon))^2/(\varepsilon^2\log(1/\alpha))$ such that 
    \[
    \E_{\mu}[|q(\bm{x})-h(\bm{x})|]\leq \varepsilon
    \]
    where $C$ is a universal constants and $\alpha,\beta,\gamma$ are the constants in \Cref{assumption: halfspaces}. 
\end{theorem}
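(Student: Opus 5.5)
We will follow the critical-index argument of \cite{diakonikolas2010bounded}, checking that each step uses only the three properties in \Cref{assumption: halfspaces}. Normalize $\|\bm{w}\|_2=1$ and order coordinates so that $|w_1|\geq |w_2|\geq\cdots$. Define the $\tau$-critical index $c(\bm{w},\tau)$ as the least $k$ such that the tail $\bm{w}_{>k}$ is $\tau$-regular after normalization, i.e. $|w_{k+1}|\leq \tau\|\bm{w}_{>k}\|_2$. We will set $\tau\approx \varepsilon/(\gamma\log(\beta/\varepsilon))$ and threshold $L:=C\log(\beta/\varepsilon)/\tau^2$, up to the factor $1/\log(1/\alpha)$, and then split into two cases depending on whether $c(\bm{w},\tau)\leq L$.

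\textbf{Case 1 ($c\leq L$).} Let $H$ be the first $c$ coordinates. For each pinning $\bm{x}_H=v$, the restricted function $\sign(\bm{w}_{>c}\cdot\bm{x}_{>c}-\theta_v)$ is a halfspace with a $\tau$-regular weight vector under the conditional measure $\mu(\cdot\mid \bm{x}_H=v)$. Properties 2 and 3 hold for this conditional measure by assumption, so we can build a one-dimensional polynomial $P_v(t)$ approximating $\sign(t-\theta_v)$. We will use the standard construction: a polynomial of degree $O(\log^2(1/\varepsilon)/\varepsilon^2)$ that is $\varepsilon$-close to the sign outside a window of width $\approx\varepsilon/\gamma$ around the threshold, and bounded by $\exp$-small growth on $[-B,B]$. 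Anti-concentration (property 3, with regularity $\tau$ and interval of width $\tau$-ish) bounds the mass in the window by $O(\varepsilon)$. Subgaussian concentration (property 2) controls the contribution from $|t|$ large, where $P_v$ may blow up polynomially. Composing with the linear form centered by its conditional mean gives an $\ell_1$-error of $O(\varepsilon)$ under each conditional. The full approximator is $q(\bm{x})=\sum_v \1[\bm{x}_H=v]\,P_v(\bm{w}_{>c}\cdot\bm{x}_{>c})$, and the indicators are degree $c\leq L$. The total degree is $L$ plus the univariate degree, which fits the claimed bound.

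\textbf{Case 2 ($c>L$).} The weights decay geometrically on the first $L$ coordinates: $\|\bm{w}_{>L}\|_2\leq (1-\tau^2)^{L/2}$, which is at most $\varepsilon^{C'}/\beta$ times $|w_L|$-scale quantities. For most pinnings $v$ of $H=[L]$, the head value $\bm{w}_H\cdot v-\theta$ has magnitude exceeding $\beta\sqrt{\log(1/\varepsilon)}\,\|\bm{w}_{>L}\|_2$. Conditioned on such a pinning, the tail cannot flip the sign except with probability $\varepsilon$, by subgaussianity of the conditional. So on those pinnings the halfspace is $\varepsilon$-close to the junta $\sign(\bm{w}_H\cdot v-\theta)$, and we take $q=\sum_v \1[\bm{x}_H=v]\sign(\bm{w}_H\cdot v-\theta)$, of degree $L$.

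\textbf{Main obstacle.} The step I expect to be hardest is showing that the bad pinnings have small probability. In \cite{diakonikolas2010bounded} this uses uniformity of the head bits: the head weights form a geometrically decreasing sequence, so the $2^{L}$ values $\bm{w}_H\cdot v$ are well spread, and few $v$ land near $\theta$. Here we replace uniformity with property 1. We pick a subsequence of head indices, spaced by a constant, whose weights each dominate the sum of all later weights. Then the set of $v$ with $|\bm{w}_H\cdot v-\theta|$ small forces the bits on roughly $k$ of these indices to a unique string, and property 1 gives probability at most $\alpha^{k}$. Making $\alpha^{k}\leq\varepsilon$ needs $k\approx\log(1/\varepsilon)/\log(1/\alpha)$ such indices. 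This is exactly where the $1/\log(1/\alpha)$ enters the degree, so this counting argument is the step to set up carefully.
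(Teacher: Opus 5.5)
Your plan matches the paper's proof: split on the critical index; in the small case pin the head and run the regular-halfspace construction under each conditional (anti-concentration for the window, conditional subgaussianity against the growth of $p_{\mathrm{sign}}$); in the large case show the head fixes the sign, with property~1 turning the unique bad assignment of a super-decreasing subsequence into an $\alpha^{t}$ bound with $t\approx\log(10/\varepsilon)/\log(1/\alpha)$. Two small repairs are needed: (i) the univariate degree must carry the $(\beta\gamma)^2$ factor: the paper runs the regular construction at accuracy $\varepsilon/(C'\beta\gamma)$, so a window of width $\approx\varepsilon/\gamma$ for a $\beta$-subgaussian form costs degree $O((\beta\gamma)^2\log^2(\beta\gamma/\varepsilon)/\varepsilon^2)$, not $O(\log^2(1/\varepsilon)/\varepsilon^2)$; and (ii) in Case~2, subgaussianity only controls the tail around its (pinning-dependent) mean, so you must first absorb $\bm{w}_{>L}\cdot\E_\mu[\bm{x}_{>L}]$ into $\theta$ and then bound the tail deviation unconditionally via a union bound, as the paper does.
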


Without loss of generality, throughout this section, we will assume that the weights of $\bm{w}$ are sorted in non-increasing order based on their absolute values: that is, $|\bm{w}_i|>|\bm{w}_j|$ for all $i<j$. For any index $i$, let $\bm{w}_{\geq i}$ be the vector formed by the indices greater than $i$ and similarly define $\bm{w}_{<i}$.

\begin{definition}[Critical Index]
\label{defn:critical_index}
    For any vector $\bm{w}$ (with non-increasing weights), the $\varepsilon$-critical index $H(\bm{w},\varepsilon)$ is defined as the smallest index i such that $\bm{w}_{\geq i}/\twonorm{\bm{w}_{\geq i}}$ is $\varepsilon$-regular.  
\end{definition}

We will use the following theorem on polynomial approximators for the sign function throughout this section. 

\begin{theorem}[Theorem~3.5 from \cite{diakonikolas2010bounded}]
\label{thm:univariate_sign_approx}
    For any $\varepsilon>0$, there exists a univariate polynomial $\psign$  of degree $K(\varepsilon)$ such that 
    \begin{enumerate}
        \item $|\psign(t-\sign(t))|\leq \varepsilon$ for $t\in [-1/2,-2a(\varepsilon)] \cup [0,1/2]$;
        \item $\psign(t)\in [-1,1+\varepsilon]$ for $t\in (-2a(\varepsilon),0)$;
        \item $|\psign(t)|\leq 2\cdot (4t)^{K(\varepsilon)}$ for all $|t|\geq 1/2$,
    \end{enumerate}

    where $K(\varepsilon)\leq C{\log^2(1/\varepsilon)}/{\varepsilon^2}$ and $a(\varepsilon) \coloneq C'\varepsilon^2/(\log(1/\varepsilon))$ where $C,C'$ are some universal constants.  
\end{theorem}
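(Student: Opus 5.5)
The plan is to build $\psign$ from a near-optimal uniform approximation of the sign function on two intervals separated by a small gap. Shifting and rescaling then places the gap at $(-2a(\varepsilon),0)$ and the approximation region at $[-1/2,1/2]$. The growth bound outside $[-1/2,1/2]$ will come from the extremal property of Chebyshev polynomials.

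\textbf{Step 1 (base approximant).} I would invoke the Eremenko--Yuditskii theorem on best uniform approximation of $\sign$ on $[-1,-\rho]\cup[\rho,1]$. It says the error of the best degree-$K$ approximant decays like $\exp(-cK\rho)$, so degree $K=O(\log(1/\varepsilon)/\rho)$ gives error $\varepsilon$. I take $\rho\asymp a(\varepsilon)=C'\varepsilon^2/\log(1/\varepsilon)$, which yields $K=O(\log^2(1/\varepsilon)/\varepsilon^2)$, matching the claimed $K(\varepsilon)$. A cruder route, integrating the kernel $(1-s^2)^k$, only gives degree $\log(1/\varepsilon)/\rho^2$. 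That is too large, so the sharp two-interval result is genuinely needed.

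\textbf{Step 2 (shift and rescale).} I set $\psign(t)=P(\lambda(t+a))$ for a constant $\lambda\in(1,2]$, for example $\lambda=1$ with $\rho=a$ after minor adjustments of constants. Then $t\in[0,1/2]$ maps into $[\rho,1]$, and $t\in[-1/2,-2a]$ maps into $[-1,-\rho]$. This gives item~1. Degree is unchanged under this substitution.

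\textbf{Step 3 (the gap).} This step needs $P$ to stay in $[-1,1+\varepsilon]$ on the gap $(-\rho,\rho)$. I expect it to be the main obstacle, since a best approximant is not a priori controlled there. My first attempt is to use the known structure of the Eremenko--Yuditskii extremal polynomial: it is monotone across the gap and equioscillates with amplitude $\le\varepsilon$, which bounds it by $1+\varepsilon$ on all of $[-1,1]$. If that is unavailable, I would instead compose with a fixed low-degree clipping polynomial $r(u)$ that is close to the identity on $[-1-\varepsilon,1+\varepsilon]$ and maps $[-M,M]$ into $[-1,1+\varepsilon]$. Before doing so, I would first secure an a priori bound $|P|\le M$ on the gap, via Markov/Remez-type inequalities. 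The price is a constant-factor degree increase, and the lower endpoint $-1$ needs the one-sided version.

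\textbf{Step 4 (growth).} $\psign$ has degree $K$ and is bounded by $2$ on $[-1/2,1/2]$. Rescaling $[-1/2,1/2]$ to $[-1,1]$ and using $|p(x)|\le \max_{[-1,1]}|p|\cdot|T_K(x)|\le 2\,(2|x|)^K$ for $|x|\ge1$ yields $|\psign(t)|\le 2(4|t|)^K$ for $|t|\ge1/2$, which is item~3. The Chebyshev extremal bound applies because Step~3 ensures boundedness on the whole interval, including the gap.
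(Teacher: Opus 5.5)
Your main route is correct and is essentially the argument behind the cited result. That argument takes the Eremenko--Yuditskii approximant on $[-1,-a]\cup[a,1]$, which stays bounded on all of $[-1,1]$, shifts it to $S(t+a)$, uses an $\varepsilon/2$-approximant plus a constant offset $\varepsilon/2$ to get $p_{\mathrm{sign}}\ge -1$ on the gap, and finishes with the Chebyshev growth bound exactly as in your Step~4. The gap-boundedness you attribute to the extremal polynomial does hold: for the odd best approximant, alternation counting puts all $\deg P-1$ critical points at interior alternation points, so $P$ is monotone on $(-\rho,\rho)$. This makes your clipping fallback unnecessary, and that fallback would not cost only a constant factor anyway: Remez or Bernstein--Walsh bounds alone give only $M=\varepsilon^{-\Omega(1)}$ on the gap, and a polynomial bounded on $[-M,M]$ that maps values near $\pm1$ to near $\pm1$ needs degree $\Omega(M)$.
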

\subsection{Approximators for Regular Halfspaces}
We start with the case where $\bm{w}$ is $\varepsilon$-regular. We will prove the following theorem.
\begin{theorem}
\label{thm:halfspace_approx_regular}
    Let $\mu$ be a distribution satisfying \Cref{assumption: halfspaces}. Then, for any halfspace $h(\bm{x})\coloneq \sign(\bm{w}\cdot \bm{x}-\theta)$ where $\bm{w}$ is $\varepsilon$-regular, there exists a polynomial $q$ of degree $C \log^2(1/\varepsilon)/\varepsilon^2$ such that 
    \[
    \E_{\mu}[|q(\bm{x})-h(\bm{x})|]\leq C'\beta\gamma \varepsilon
    \]
    where $C,C'$ are universal constants and $\beta,\gamma$ are the constants in \Cref{assumption: halfspaces}.
\end{theorem}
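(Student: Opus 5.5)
The plan is to follow \cite{diakonikolas2010bounded} for regular halfspaces, using only concentration (item 2) and anti-concentration (item 3) of \Cref{assumption: halfspaces}. Normalize so that $\twonorm{\bm{w}}=1$. Let $\bm{m}=\E_\mu[\bm{x}]$. Set $R:=\beta\sqrt{2\log(1/\varepsilon)}$; it suffices to take the scale a constant multiple of this, up to $\log$ factors absorbed in $C$. We may assume $|\theta-\bm{w}\cdot\bm{m}|\le 2R$. Otherwise, subgaussian concentration \Cref{eq:concentration-lin-forms} shows $h$ is $\varepsilon$-close to a constant in $\ell_1$, and the constant polynomial suffices.

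Define $q(\bm{x}):=\psign\big((\bm{w}\cdot\bm{x}-\theta)/(8R)\big)$, where $\psign$ is the polynomial of \Cref{thm:univariate_sign_approx} with accuracy parameter $\varepsilon$. Its degree is $K(\varepsilon)\le C\log^2(1/\varepsilon)/\varepsilon^2$, since composing with an affine form does not raise degree. Write $t(\bm{x}):=(\bm{w}\cdot\bm{x}-\theta)/(8R)$ and split $\E_\mu|q-h|$ into three regions.

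\textbf{Good region.} On $\{t\in[-1/2,-2a(\varepsilon)]\cup[0,1/2]\}$, item 1 gives error at most $\varepsilon$ pointwise.

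\textbf{Ambiguous strip.} On $\{t\in(-2a(\varepsilon),0)\}$, item 2 bounds the error by $2+\varepsilon$. The probability of this strip is $\Pr(\bm{w}\cdot\bm{x}\in I)$ for an interval $I$ of length $16Ra(\varepsilon)$. By item 3 of \Cref{assumption: halfspaces} (with $S=\emptyset$), this is at most $\gamma(16Ra(\varepsilon)+\varepsilon)$. Since $a(\varepsilon)=C'\varepsilon^2/\log(1/\varepsilon)$ and $R=O(\beta\sqrt{\log(1/\varepsilon)})$, we get $Ra(\varepsilon)\le O(\beta\varepsilon)$, so this contribution is $O(\beta\gamma\varepsilon)$.

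\textbf{Tail region.} On $\{|t|\ge1/2\}$ we use item 3 of \Cref{thm:univariate_sign_approx}: $|q-h|\le 1+2(4|t|)^{K}$. Since $|\theta-\bm{w}\cdot\bm{m}|\le 2R$, the event $|t|\ge s$ for $s\ge1/2$ forces $|\bm{w}\cdot(\bm{x}-\bm{m})|\ge 8Rs-2R\ge 6Rs$. Subgaussianity then gives tail probability at most $2\exp(-18R^2s^2/\beta^2)$. Integrating, the contribution is bounded by
\[
\sum_{j\ge0}\Pr\left(|t|\ge 2^{j-1}\right)\cdot\left(1+2\cdot 2^{(j+2)K}\right),
\]
a dyadic sum whose $j$-th term is about $\exp\big(-c\,4^{j}\log(1/\varepsilon)\cdot(\text{scale}/\beta)^2\big)\,2^{(j+2)K}$.

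The main obstacle is this tail step. The growth factor $2^{O(jK)}$ with $K\approx\log^2(1/\varepsilon)/\varepsilon^2$ must be beaten by the Gaussian decay. With scale only $R\approx\beta\sqrt{\log(1/\varepsilon)}$, the decay is not enough for small $j$. I would therefore enlarge the scale to $R'=\Theta(\beta\sqrt{K\log(1/\varepsilon)})$, or equivalently rescale as in \cite{diakonikolas2010bounded}, who normalize $t$ by a bound valid with high probability and use the exponentially decaying tail. I expect the right bookkeeping to be this: the degree-$K$ blowup $(4|t|)^K$ only matters when $|t|\ge1/2$, which requires a deviation of $\Omega(\text{scale})$. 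Choosing the scale at least $\beta\sqrt{K}$ times a constant makes $\Pr(|t|\ge s)\le e^{-\Omega(Ks^2)}$, which dominates $(4s)^K$ and sums to $O(\varepsilon)$.

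Enlarging the scale shrinks the strip's relative width, but the strip length is $16\cdot\text{scale}\cdot a(\varepsilon)$. To keep it $O(\beta\varepsilon)$, I would instead apply \Cref{thm:univariate_sign_approx} with accuracy parameter $\varepsilon'$ chosen polynomially smaller, then re-balance. Checking that the resulting degree stays $O(\log^2(1/\varepsilon)/\varepsilon^2)$ up to the constants claimed is the delicate calculation. Adding the three contributions gives $O(\beta\gamma\varepsilon)$.
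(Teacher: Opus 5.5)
Your outline matches the paper's proof. You center at $\bm{m}=\E_\mu[\bm{x}]$, handle a far-away threshold with a constant polynomial via subgaussian tails, and otherwise compose $\psign$ with a rescaled linear form. You then split into the good region, the ambiguous strip (anti-concentration, item 3 of the assumption), and the tail (subgaussian decay against $|\psign(t)|\le 2(4|t|)^{K}$). The gap is in the one step with real quantitative content: the choice of scale. You leave it unresolved, and the remedy you propose would not give the claimed degree.

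Your first scale $R\approx\beta\sqrt{\log(1/\varepsilon)}$ fails in the tail, as you note. Your replacement $R'=\Theta(\beta\sqrt{K\log(1/\varepsilon)})$ overshoots. With $K(\varepsilon)$ of order $\log^2(1/\varepsilon)/\varepsilon^2$, the strip becomes an interval of length $\Theta(R'a(\varepsilon))=\Theta(\beta\varepsilon\sqrt{\log(1/\varepsilon)})$. Repairing that by running $\psign$ at a smaller accuracy $\varepsilon'$ forces $\varepsilon'\lesssim\varepsilon/\sqrt{\log(1/\varepsilon)}$, let alone polynomially smaller. That gives degree at least of order $\log^3(1/\varepsilon)/\varepsilon^2$, not $C\log^2(1/\varepsilon)/\varepsilon^2$.

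No rebalancing is needed. The missing observation is that the sign approximator's parameters satisfy $a(\varepsilon)\sqrt{K(\varepsilon)}=O(\varepsilon)$, since $a(\varepsilon)=C'\varepsilon^2/\log(1/\varepsilon)$ and $K(\varepsilon)\le C\log^2(1/\varepsilon)/\varepsilon^2$. So a single scale serves both regions.

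The paper takes $Z:=\beta\varepsilon/(2a(\varepsilon))=\Theta(\beta\log(1/\varepsilon)/\varepsilon)$. The strip $\{t\in(-2a(\varepsilon),0)\}$ is then an interval of length exactly $2a(\varepsilon)Z=\beta\varepsilon$, so its mass is at most $\gamma(\beta+1)\varepsilon$. At the same time, $(Z/\beta)^2$ is of the same order as $K(\varepsilon)$, with constants tuned as in \cite{diakonikolas2010bounded}. Hence on $\{2^{j-1}\le|t|<2^{j}\}$ the subgaussian bound $\exp(-\Omega(4^{j}Z^2/\beta^2))$ beats the growth $2^{(j+2)K}$, and the dyadic tail sum is $O(\varepsilon)$.

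So the scale is tied to the strip width, not to a high-probability bound on $|\bm{w}\cdot\bm{x}|$ as you suggest. Your centering and your cutoff $|\theta-\bm{w}\cdot\bm{m}|\le 2R$ remain valid with this $Z$, since $2R\ll Z$.
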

The main idea of the proof of the above theorem already appears in the argument of Theorem~3.2 in \cite{diakonikolas2010bounded}. Related variants have since appeared in subsequent works, including \cite{kane2013learning,chandrasekaran_smoothed_2024}. As the exact statement does not seem to have been recorded in this form, we include the proof for completeness. We will borrow the notation of \cite{diakonikolas2010bounded} unless specified otherwise. To avoid repeating steps, we will only give details on parts that are different and sketch the similar parts. We recommend that the reader reads the two proofs side by side. We note that we can skip over some technical points in their paper, as they construct a stronger object (they construct sandwiching polynomials).
\begin{proof}[Proof of \Cref{thm:halfspace_approx_regular}]
We first argue that we can assume $\E_{\mu}[\bm{x}]=0$ without loss of generality. To see this, consider a polynomial approximator $q'$ for $h'(\bm{y})=\sign(\bm{w}\cdot \bm{y}-\theta+\bm{w}\cdot \E_{\mu}[\bm{x}])$ with respect to the distribution $\mu'=\mu-\E_{\mu}[\bm{x}]$ that has error $\varepsilon$.
Now, we have that 
\[
\E_{\mu}[|q'(\bm{z}-\E_{\mu}[\bm{x}])-\sign(\bm{w}\cdot\bm{z}-\theta)|]=\E_{\mu'}[|q'(\bm{y})-\sign(\bm{w}\cdot \bm{y}-\theta+\bm{w}\cdot \E_{\mu}[\bm{x}])|]\leq \varepsilon.
\] Thus, $q'(\bm{z}-\E_{\mu}[x])$ is a polynomial approximator for $h$ over $\mu$. 

We now proceed with the proof by splitting into two cases based on the magnitude of $|\theta|$. Similar to \cite{diakonikolas2010bounded}, define $Z\coloneq \frac{\beta\varepsilon }{2a(\varepsilon)}=\frac{C\beta \log (1/\varepsilon)}{2\varepsilon}$ (recall $a(\varepsilon)$ is defined in \Cref{thm:univariate_sign_approx}). The difference in this step from \cite{diakonikolas2010bounded} is the $\beta$ factor in the numerator. This is required as we only have $\beta$ subgaussian tails (as opposed to $1$-subgaussian tails in the uniform case). $C$ will be some large universal constant that we will choose later. 
\paragraph{Case 1: $|\theta|\leq  Z/4$}
Our approximator will be constructed using the sign approximator from \Cref{thm:univariate_sign_approx}. Formally, the polynomial $q$ that we use will be the polynomial $q(\bm{x})\coloneq \psign(\frac{\bm{w}\cdot \bm{x}-\theta}{Z})$. Let $H(\bm{x)}\coloneq\frac{\bm{w}\cdot \bm{x}-\theta}{Z}$. Similar to \cite{diakonikolas2010bounded}, we will bound the error by splitting into three cases, based on $H(\bm{x})$.

First, we consider the case where $H(\bm{x})\in [-\beta \varepsilon/Z,0]$. In this case, we will use the anti-concentration property (\Cref{assumption: halfspaces} (2)) to bound the error. Observe from the second item in \Cref{thm:univariate_sign_approx}, that $|p_\sign(H(\bm{x})-h(\bm{x})|\leq 2+\varepsilon$ whenever $H(\bm{x})\in [-\beta\varepsilon/Z, 0]$ ( as $H(\bm{x})\in [-1/2,0])$. Also, from anti-concentration, we have that $\Pr_{\mu}[H(\bm{x})\in [-\beta\varepsilon/Z,0]]\leq C'\gamma\beta\varepsilon$ where $C'$ is some large universal constant. Thus, 
\[
\E_{\mu}[|q(\bm{x}-h(\bm{x})|\1\{H(\bm{x})\in [\beta\varepsilon/Z,0]\}]\leq C''\beta\gamma\varepsilon
\] for some large universal constant $C''$.

Next, we consider the case where $|H(\bm{x}|\leq 1/2$ and the previous case does not hold. Thus, we have that $H(\bm{x})\in [-1/2,-2a]\cup [0,1/2]$ and hence the error of the approximator is at most $\varepsilon$ in this region from \Cref{thm:univariate_sign_approx}

Finally, we consider the case where $|H(\bm{x})|\geq 1/2$. Here, we will use the sub-gaussian tail of $\mu$ and property (3) in \Cref{thm:univariate_sign_approx}, in exactly the same way as \cite{diakonikolas2010bounded}. We will only highlight the changes required to their proof. The only difference is that now, we have a weaker tail bound. In their case, they have a tail bound of the form $e^{-t^2/2}$ for the event that $\bm{w}\cdot \bm{x}>t$ when $\bm{x}$ is uniform. For us, we only have a weaker tail bound of the form $e^{-t^2/2\beta^2}$ for the same event. This is where our larger choice of $Z$ (scaled by $\beta$ helps us. In their case, they choose $Z\coloneq \varepsilon/2a$, whereas we choose $Z\coloneq \beta\varepsilon/2a$. They crucially need to bound the probability of events of the form $\{\bm{w}\cdot \bm{x}> \frac{jZ}{4}\}$ for various positive integers $j$. Our scaled value of $Z$ allows us to achieve the same tail probabilities that they require to complete the proof. We refer the reader to Case 3 of the proof of the Lemma~3.6 in \cite{diakonikolas2010bounded} for more details.

Combining the three possible cases for $H(\bm{x})$ that were discussed above, we obtain the final error guarantee.
\paragraph{Case 2: $|\theta|\geq Z/4$} This case is easier to handle. Without loss of generality, assume $\theta\geq Z/4$ (the negative case is handled symmetrically). We claim that $q(\bm{x})=-1$ is a good $\varepsilon$-approximator. To see this, note that $q(\bm{x})\neq h(\bm{x})$ if and only if $\bm{w}\cdot \bm{x}>\theta\geq Z/4$ and the probability of this event is 
\[
\Pr_{\bm{x}\sim \mu}[(\bm{w}\cdot \bm{x})\geq \frac{C\beta\log (1/\varepsilon)}{2\varepsilon}]\leq \varepsilon/2 
\] for appropriately chosen $C$. This tail bound is due to the fact that $\mu$ has zero mean and is $\beta$ subgaussian. Also, $|q(\bm{x})-h(\bm{x})|\leq 2$ for all $\bm{x}$, thus the total error is $\varepsilon$.
\end{proof}

\subsection{Approximators for General Halfspaces}

We are now ready to prove the general polynomial approximation statement. Recall the definition of critical index in \Cref{defn:critical_index}. We prove two lemmas, based on the value of the critical index. 

First, we consider the case where the critical index is small. In this case, we will use the following theorem. 
\begin{theorem}
\label{thm:small_critical_index}
    Let $\mu$ be a distribution satisfying \Cref{assumption: halfspaces}. Let  $h(\bm{x})\coloneq \sign(\bm{w}\cdot \bm{x}-\theta)$ be a halfspace where $H(\bm{w},\varepsilon)=H+1$. Then, there exists a polynomial $q$ of degree $H+C\log^2(1/\varepsilon)/\varepsilon^2$ such that 
    \[
    \E_{\mu}[|q(\bm{x})-h(\bm{x})|]\leq C'\beta\gamma\varepsilon 
    \]
    where $C,C'$ are universal constants and $\beta,\gamma$ are the constants in \Cref{assumption: halfspaces}.
\end{theorem}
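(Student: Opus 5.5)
The plan is to follow the small-critical-index case of \cite{diakonikolas2010bounded}. Since $H(\bm{w},\varepsilon)=H+1$, the tail $\bm{w}_{>H}$, normalized, is $\varepsilon$-regular. Write $\bm{x}=(\bm{x}_{\le H},\bm{x}_{>H})$. For each fixed head assignment $\bm{v}\in\{\pm1\}^H$, the restricted function
\[
h_{\bm v}(\bm{x}_{>H})=\sign\bigl(\bm{w}_{>H}\cdot \bm{x}_{>H}-(\theta-\bm{w}_{\le H}\cdot \bm{v})\bigr)
\]
is a regular halfspace on the tail coordinates.

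Consider the conditional distribution $\mu(\cdot\mid \bm{x}_{\le H}=\bm v)$ for a valid pinning $\bm v$. \Cref{assumption: halfspaces} is closed under pinning: any further pinning of the conditional is itself a pinning of $\mu$. So this conditional distribution satisfies \Cref{assumption: halfspaces} with the same $\alpha,\beta,\gamma$. By homogeneity of $\sign$, we may rescale the tail vector to unit norm. \Cref{thm:halfspace_approx_regular}, applied to the conditional law, then gives a polynomial $q_{\bm v}$ in $\bm{x}_{>H}$ of degree $C\log^2(1/\varepsilon)/\varepsilon^2$ with
\[
\E_{\mu(\cdot\mid \bm{x}_{\le H}=\bm v)}\bigl[|q_{\bm v}(\bm{x}_{>H})-h(\bm x)|\bigr]\le C'\beta\gamma\varepsilon .
\]
The recentering step inside that proof is done with respect to the conditional mean, which is a constant for fixed $\bm v$, so nothing changes there. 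For invalid $\bm v$, set $q_{\bm v}=0$.

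Glue the pieces with head indicators:
\[
q(\bm x)=\sum_{\bm v\in\{\pm1\}^H}\1\{\bm{x}_{\le H}=\bm v\}\,q_{\bm v}(\bm{x}_{>H}).
\]
Each indicator is a polynomial of degree $H$ in the head coordinates, and $q_{\bm v}$ involves only tail variables, so $\deg q\le H+C\log^2(1/\varepsilon)/\varepsilon^2$.

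The error is an average over the head:
\[
\E_\mu|q-h|=\sum_{\bm v}\Pr_\mu(\bm{x}_{\le H}=\bm v)\,\E_{\mu(\cdot\mid\bm v)}|q_{\bm v}-h_{\bm v}|\le C'\beta\gamma\varepsilon .
\]

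The main obstacle is the legitimacy of invoking \Cref{thm:halfspace_approx_regular} under each conditional law. I would check two things. First, the subgaussianity and anti-concentration hypotheses are stated for every pinning, and anti-concentration is required exactly for $\varepsilon$-regular vectors on the unpinned coordinates, which matches the normalized tail. Second, the constants coming out of the regular case must be uniform in $\bm v$, which holds because they depend only on $\beta,\gamma$. Condition (1) of \Cref{assumption: halfspaces} is not needed in this case; it is reserved for the large-critical-index case.
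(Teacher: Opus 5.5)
Your proposal is correct and matches the paper's proof. Like the paper, you condition on the head coordinates $\bm{x}_{\le H}$, apply the regular-case theorem to each conditional law, and glue the resulting $q_{\bm v}$ together with the degree-$H$ head indicators. Your additional checks fill in details the paper leaves implicit: closure of the assumption under pinning, rescaling the tail vector to unit norm, recentering at the conditional mean, and zero-probability pinnings.
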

\begin{proof}
    The proof of this case is relatively simple given \Cref{thm:halfspace_approx_regular}. The main observation is that upon fixing the first $H$ coordinates, the induced halfspace is regular and hence we can use the regular halfspace approximator. Formally, for any pinning $\bm{v}$, define the halfspace $h_{\bm{v}}(\bm{x})\coloneq \sign(\bm{w}_{\geq H}\cdot \bm{x}_{\geq H+1}-\theta+\bm{w}_{[H]}\cdot \bm{v})$. Clearly, $h_{\bm{v}}$ agrees with $h(\bm{x})$ whenever $\bm{x}_{[H]}=\bm{v}$. Also, observe that $h_{\bm{v}}$ is an $\varepsilon$-regular halfspace acting on bits in $[N]\setminus [H]$. Recall from \Cref{assumption: halfspaces} that $\mu(.\mid \bm{x}_{[H]}=\bm{v})$ is anti-concentrated and is $\beta$-subgaussian. From \Cref{thm:halfspace_approx_regular}, there exists a polynomial $q_{\bm{v}}$ acting on $\bm{x}_{[n]\setminus [H]}$ such that 
    \begin{equation}
    \label{eqn:error_regular}
    \Pr_{\mu(.\mid \bm{x}_{[H]}=\bm{v})}[|q_{\bm{v}}(\bm{x})-h(\bm{x})|]\leq C'\beta\gamma\varepsilon.
    \end{equation}
    We now define our final polynomial $q(\bm{x})$. Let $q(\bm{x})\coloneq \sum_{\bm{v}\in \cube{H}}\1\{\bm{x}_{[H]}=\bm{v}\}\cdot q_{\bm{v}}(\bm{x})$. The final error guarantee follows from \Cref{eqn:error_regular}. The degree of $q$ is $H$ more than the degree of the regular halfspace approximator as we the indicator function is also a polynomial.
\end{proof}

Finally, we consider the case where $H(\vw,\varepsilon)$ is large. In this case, we will use the following theorem. This is the only place where the $\alpha$ parameter from \Cref{assumption: halfspaces} is used in the proof. The proof follows the proof of Theorem~5.4 from \cite{diakonikolas2010bounded} and we only highlight the main differences. 

\begin{theorem}
    \label{thm:large_critical_index}
    Let $\mu$ be a distribution satisfying \Cref{assumption: halfspaces}. Let  $h(\bm{x})\coloneq \sign(\bm{w}\cdot \bm{x}-\theta)$ be a halfspace where $H(\bm{w},\varepsilon)=H\geq C''\log^2(10\beta/\varepsilon)/(\log(1/\alpha)\varepsilon^2)$.  Then, there exists a polynomial $q$ of degree $H$ such that 
    \[
    \E_{\mu}[|q(\bm{x})-h(\bm{x})|]\leq \varepsilon
    \]
    where $C,C'$ are universal constants and $\beta,\gamma$ are the constants in \Cref{assumption: halfspaces}. 
\end{theorem}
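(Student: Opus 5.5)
The plan is to follow the large-critical-index argument of Theorem~5.4 in \cite{diakonikolas2010bounded}. When the critical index is large, the head weights decay geometrically, so a short prefix of the head already essentially determines $h$. Concretely, I would set $L:=C''\log^2(10\beta/\varepsilon)/(\log(1/\alpha)\varepsilon^2)\le H$ and work with the first $L$ coordinates.

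First I invoke the deterministic structural lemma about vectors with large critical index, which is purely about $\bm{w}$ and does not involve the distribution. Since none of the indices $i<H$ are $\varepsilon$-regular, each step satisfies $|w_i|>\varepsilon\|\bm{w}_{\geq i}\|_2$, hence $\|\bm{w}_{\geq i+1}\|_2^2\le(1-\varepsilon^2)\|\bm{w}_{\geq i}\|_2^2$. Iterating over blocks of length $O(\log(1/\varepsilon)/\varepsilon^2)$ yields a subsequence of head indices $i_1<\dots<i_M\le L$. The tail norm $\sigma_L:=\|\bm{w}_{>L}\|_2$ is then exponentially smaller than every $|w_{i_j}|$, and there are $M=\Omega(\log(10\beta/\varepsilon)/\log(1/\alpha))$ such indices.

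Next, for each pinning $\bm{v}\in\cube{L}$ of the head, the threshold becomes $\theta_{\bm v}=\theta-\bm{w}_{\le L}\cdot\bm{v}$. I call $\bm v$ \emph{good} if $|\theta_{\bm v}|\ge t\,\sigma_L$ with $t=\beta\sqrt{2\log(4/\varepsilon)}$. For a good pinning, the conditional law $\mu(\cdot\mid\bm{x}_{[L]}=\bm v)$ is $\beta$-subgaussian by \Cref{assumption: halfspaces}(2), so $h$ equals the constant $\sign(-\theta_{\bm v})$ except with probability $\le\varepsilon/2$. One point needs care: subgaussianity is stated about the conditional mean, so the conditional mean of $\bm{w}_{>L}\cdot\bm{x}$ must be absorbed into $\theta_{\bm v}$. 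To handle this I define goodness relative to the shifted threshold $\theta_{\bm v}-\bm{w}_{>L}\cdot\mathbb{E}[\bm{x}_{>L}\mid\bm v]$, and this definition is still a function of $\bm v$ alone.

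The polynomial is then $q(\bm x)=\sum_{\bm v\text{ good}}\1\{\bm x_{[L]}=\bm v\}\sign(-\theta'_{\bm v})$, which has degree $L\le H$. Its error is at most $\varepsilon/2+2\Pr_\mu[\bm x_{[L]}\text{ bad}]$, where the bad-pinning mass is counted with $\|q-h\|\le 2$ on that event.

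The main obstacle is bounding the probability of bad pinnings using only \Cref{assumption: halfspaces}(1) in place of uniformity. In \cite{diakonikolas2010bounded} this is an anti-concentration statement for $\bm{w}_{\le L}\cdot\bm{x}$ over a window of width $2t\sigma_L$. Their key fact is that, by the geometric separation, for any fixed values of the non-selected head coordinates, at most one assignment of $(x_{i_1},\dots,x_{i_M})$ can land in the window: flipping the top-most differing $x_{i_j}$ moves the sum by more than $2t\sigma_L$ plus the sum of the lower selected weights. I will condition sequentially, first on the non-selected head coordinates and then on the selected ones. \Cref{assumption: halfspaces}(1), applied with $S$ equal to the non-selected coordinates and $T=\{i_1,\dots,i_M\}$, bounds the probability of any single string on $T$ by $\alpha^{M}$. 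Choosing $C''$ large makes $\alpha^M\le\varepsilon/10\beta$, which finishes the bound after averaging over the pinnings. The delicate part I expect is that the shifted threshold also depends on $\bm v$ through the conditional mean of the tail. I would handle this by bounding that mean shift by $O(\beta\sigma_L)$ via subgaussianity, then enlarging the window by a constant factor, which the exponential separation absorbs.
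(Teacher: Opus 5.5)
Your skeleton is the paper's adaptation of the large-critical-index argument of \cite{diakonikolas2010bounded}. It has the same ingredients: geometrically separated head indices $i_1<\dots<i_M$, the ``unique adversarial assignment'' bounded by $\alpha^M$ via item~(1) of \Cref{assumption: halfspaces} with $S$ the non-selected head coordinates, and a $\beta$-subgaussian tail bound for $\bm{w}_{>L}\cdot\bm{x}_{>L}$.

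The step that fails as written is the one you flagged. Write $m(\bm v)=\bm{w}_{>L}\cdot\mathbb{E}[\bm{x}_{>L}\mid\bm{x}_{[L]}=\bm v]$ and let $\bar m$ be the unconditional mean. The shift $m(\bm v)-\bar m$ is \emph{not} uniformly $O(\beta\sigma_L)$. For a counterexample, take a zero-field Dobrushin Ising model in which each head coordinate $i\le L$ is coupled with strength $\lambda$ only to its own tail partner. Put weight $\sigma_L/\sqrt{L}$ on each of the $L$ partners, with geometrically decaying head weights. Then $\beta=O(1)$ and $\bar m=0$, but $m(\mathbf{1})=\tanh(\lambda)\sqrt{L}\,\sigma_L$. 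So a constant-factor enlargement of a fixed-center window does not capture all bad pinnings. Nor can the uniqueness argument be run directly on a window whose center moves with the selected coordinates.

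The repair is short. Since $m(\bm v)-\bar m=\mathbb{E}[\bm{w}_{>L}\cdot\bm{x}_{>L}-\bar m\mid\bm v]$, Jensen plus item~(2) with $S=\emptyset$ show that $m(\bm v)-\bar m$ is itself $\beta\sigma_L$-subgaussian over $\bm v\sim\mu$. Add the pinnings with $|m(\bm v)-\bar m|\gtrsim\beta\sigma_L\sqrt{\log(1/\varepsilon)}$ (mass $O(\varepsilon)$) to the bad set. Then run uniqueness on a fixed-center window of half-width $O(\beta\sigma_L\sqrt{\log(1/\varepsilon)})$, which the separation absorbs.

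Simpler still is the paper's route, which never conditions on the head:
\begin{itemize}
\item Item~(2) with $S=\emptyset$ gives $|\bm{w}_{>L}\cdot\bm{x}_{>L}-\bar m|<\tau$ except with probability $\varepsilon/4$.
\item The constant $\bar m$ is absorbed into $\theta$.
\item Item~(1) gives $|\bm{w}_{\le L}\cdot\bm{x}_{\le L}-\theta+\bar m|\ge\tau$ except with probability $\alpha^M\le\varepsilon/4$.
\item A union bound shows that $q=\sign(\bm{w}_{\le L}\cdot\bm{x}_{\le L}-\theta+\bar m)$ works.
\end{itemize}
No $\bm v$-dependent threshold ever arises.

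One smaller point: you want $|w_{i_M}|/\sigma_L\gtrsim\beta\sqrt{\log(1/\varepsilon)}$ regardless of how $\beta$ compares to $1/\varepsilon$. For that, take blocks of length $\Theta(\log(10\beta/\varepsilon)/\varepsilon^2)$ rather than $O(\log(1/\varepsilon)/\varepsilon^2)$. This is exactly why $\beta$ sits inside the logarithm in the hypothesis on $H$.
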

\begin{proof}
    The main idea of this proof is to argue that with high probability, the choice of variables in $[H]$ fixes the value of the halfspace. This is proved in \cite{diakonikolas2010bounded} in two steps. For some appropriate threshold $\tau$, they argue that (1) $|\bm{w}_{[H]}\cdot \bm{x}_{[H]}-\theta|\geq \tau$ with high probability and (2) $|\bm{w}_{[n]\setminus [H]}\cdot \bm{x}_{[n]\setminus [H]}|<\tau$ with high probability. Since the choice of the first $H$ variables fixes the halfspace with high probability, one can approximate it by a degree $H$ polynomial ($\sign(\bm{w}_{[H]}\cdot \bm{x}_{[H]}-\theta)$). We implement the same idea to prove our theorem. 

    We recall some of the notation from the proof of Theorem~5.4 in \cite{diakonikolas2010bounded}. First, define $T=[n]\setminus [H]$. Also, $\sigma_T=\twonorm{\bm{w}_{T}}$. 

    We first give the argument for step (2) in the plan by highlighting the relevant changes to the proof of \cite{diakonikolas2010bounded}.
    Notice that our choice of the threshold for $H(\bm{w},\varepsilon)$ in the theorem statement slightly differs from the proof in \cite{diakonikolas2010bounded} as we have a $\beta$ in the log and an additional $\log(1/\alpha)$ in the denominator. We now explain the reason for this. Notice their Claim~5.6. This $\bm{w}_{k_t}$ is the quantity that plays the role of $\tau$ that we sketched above. They show that $\tau=|\bm{w}_{k_t}|\geq \sigma_T/\varepsilon$ and this suffices to prove tail bounds when the random variable is $1$-subgaussian. For us, we need something slightly stronger as $\mu$ is only $\beta$-subgaussian. Thus, we require that $\tau\geq \beta\sigma_T/\varepsilon$ and that is what we achieve by the adding $\beta$ inside the log in the theorem statement. To see why this choice works, we refer the reader to the proof of Claim 5.6 in \cite{diakonikolas2010bounded}. Given this lower bound on $\tau$, it immediately follows from subgaussianity of $\mu$ that $\Pr_\mu[{|\bm{w}_T\cdot \bm{x}_T|\geq \tau}]\leq \varepsilon$

    We now give the argument for the proof of step (1). This is the only place that the quantity $\alpha$ from \Cref{assumption: halfspaces} is important. Again we refer heavily to the proof of \cite{diakonikolas2010bounded} and only highlight the changes. The only step to change is the proof of Lemma~5.8 in \cite{diakonikolas2010bounded}. Again, we recap some of their notation. They define a set $G=\{k_{i_1},\ldots,k_{i_t}\}\subset [H]$ such that for all fixings of variables in $[H]\setminus G$,  there is only one adversarial choice of the variables in $G$ that makes the property $|\bm{w}_{[H]}\cdot \bm{x}_{[H]}-\theta|$ fail. They then argue that the probability that the distribution makes this adversarial choice is at most $(1/2)^t$. In our case, we have a weaker bound. From property (1) in \Cref{assumption: halfspaces}, we have that for any fixing of variables in $[H]\setminus G$, the probability that the conditional distribution on $\mu$ puts on this adversartial choice is at most $\alpha^t$. From our choice of $H$ in the theorem statement, we can find a larger set $G$ than in \cite{diakonikolas2010bounded}. In particular, we can choose $t=\log(10/\varepsilon)/\log(1/\alpha)$ and thus get $\alpha^t<\varepsilon/10$ which is the same error bound they achieve. The rest of the proof is exactly the same. 
\end{proof}

Now, we are ready to prove the main result of this section, \Cref{thm:poly_approx_halfspace_general}:

\begin{proof}[Proof of \Cref{thm:poly_approx_halfspace_general}]
Let $\varepsilon'=\varepsilon/(\beta \gamma C')$ where $C'$ is the constant in \Cref{thm:small_critical_index}. The proof follows by splitting into two cases based on $H(\bm{w},\varepsilon')$. If $H(\bm{w},\varepsilon')> C''\log^2(10\beta/\varepsilon')/(\log(1/\alpha)(\varepsilon')^2) $, then the proof follows from \Cref{thm:large_critical_index}. Otherwise, the proof follows from \Cref{thm:small_critical_index}.
\end{proof}

The following result on approximating halfspaces over Ising models in the Dobrushin regime is now immediate. 
\begin{theorem}
\label{thm:halfspace_approx_dobrushin}
    Let $\mu$ be an Ising model in the Dobrushin regime with width $1-\zeta$. Then, for any halfspace $f(\bm{x})=\sign(\bm{w}\cdot \bm{x}-\theta)$, there exists a polynomial $p$ of degree $C_{\zeta}\cdot \log^2(1/\varepsilon)/\epsilon^2$ such that
    \[
    \E_{\mu}[|p(\bm{x})-q(\bm{x})|]\leq \varepsilon
    \]
    where $C_{\zeta}$ is a constant that only depends on $\zeta$.
\end{theorem}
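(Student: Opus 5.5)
The plan is to derive \Cref{thm:halfspace_approx_dobrushin} from the general statement \Cref{thm:poly_approx_halfspace_general}. That requires checking that an Ising model in the Dobrushin regime satisfies \Cref{assumption: halfspaces}, with constants $\alpha,\beta,\gamma$ depending only on $\zeta$. Everything routes through \Cref{cor:dobrushin}, so the main work is confirming that its hypotheses, namely \Cref{assumption:bounded} for all pinnings, hold in this regime.

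First, the bounded width condition holds with $\lambda\le 1-\zeta<1$ by assumption. Marginal boundedness with $\eta=\exp(-2\lambda)\ge e^{-2}$ follows, giving $\alpha=1-e^{-2}$. For any set $T$ we then chain conditionals: each conditional probability of a given spin is at most $1-\eta$, even after pinning, so the probability of any fixed string on $T$ is at most $\alpha^{|T|}$. Second, \Cref{fact:dobrushin} supplies an MLSI for Glauber dynamics under every pinning, with constant depending only on $\zeta$. The Herbst argument then gives $C_{\mathsf{SG}}(\zeta)$-subgaussianity of every conditional law, so $\beta=C_{\mathsf{SG}}(\zeta)$.

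Third, anti-concentration comes from \Cref{thm:anti_lin}. This is the step I expect to be the main obstacle. That theorem requires a PSD interaction matrix, whereas a Dobrushin model need not have one, and we need it for every pinned sub-model. I would handle this as follows. Pinning $\bm{x}_S=v$ yields again an Ising model on $[n]\setminus S$ with interaction $A_{-S,-S}$ and modified field $h_{-S}+A_{-S,S}v$. The width of the pinned model is at most the original width bound, because the new field absorbs exactly the removed row mass. We then add $\lambda I$ to the interaction matrix. This changes $\mu$ only by the constant $\tfrac{\lambda}{2}\|\bm{x}\|^2=\tfrac{\lambda n}{2}$ on the cube, so the distribution is unchanged. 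The shifted matrix is diagonally dominant, hence PSD. Its diagonal is at most $\lambda$ and its row sums are at most $2\lambda$, and the proof of \Cref{thm:anti_lin} tolerates this; indeed its first line already performs this shift. This yields $\gamma=C(2\lambda)=C'(\zeta)$.

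Finally, with $\alpha,\beta,\gamma$ all functions of $\zeta$, \Cref{thm:poly_approx_halfspace_general} gives a degree
\[
C\,(\beta\gamma\log(10\beta/\varepsilon))^2/(\varepsilon^2\log(1/\alpha)) = C_\zeta\,\log^2(1/\varepsilon)/\varepsilon^2
\]
for $\varepsilon$ small. Larger $\varepsilon$ is absorbed into $C_\zeta$, or handled by a constant polynomial. This completes the plan.
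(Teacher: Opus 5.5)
Your proposal is correct and follows the paper, whose proof is exactly the combination of \Cref{thm:poly_approx_halfspace_general} with \Cref{cor:dobrushin}. Your extra checks (pinning preserves the width bound, and adding $\lambda I$ makes the interaction matrix PSD without changing $\mu$ on the cube, at the cost of replacing $\lambda$ by $2\lambda$ in \Cref{thm:anti_lin}) only make explicit what the paper leaves implicit; one harmless slip, shared with the paper, is that the sharp conditional lower bound is $1/(1+e^{2\lambda})$ rather than $e^{-2\lambda}$, so take e.g.\ $\alpha = e^{2}/(1+e^{2})$ instead of $1-e^{-2}$, which changes nothing since any $\alpha<1$ depending only on $\zeta$ suffices.
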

\begin{proof}
Immediate from  \Cref{thm:poly_approx_halfspace_general} and \Cref{cor:dobrushin}.
\end{proof}
Our main theorem on learning halfspaces over Ising models in the Dobrushin regime is now immediate. 

\begin{theorem}
\label{thm:dobrushin_halfspaces_full}
Suppose $\mu$ is an Ising model in the Dobrushin regime with width $1-\zeta$ for some constant $\zeta>0$. Suppose $D$ is a distribution on $\cube{n}\times \{\pm 1\}$ with marginal $\mu$ and let $\mathcal{F}$ be the class of halfspaces. Then, for any $\varepsilon>0$, there is an algorithm $\mathcal{A}$ that given $N=n^{C_{\zeta}\log^2(1/\epsilon)/\epsilon^2}$ samples $(\bm{x}_i,f(\bm{x}_i))_{i\in [N]}$, where $\bm{x_i}\sim \mu$ and $f$ is a halfspace, runs in $\poly(N,n)$ time and outputs a hypothesis $h:\cube{n}\to \{\pm 1\}$ such that 
\[
\Pr_{\bm{x}\sim \mu}(h(\bm{x})\neq f(\bm{x}))\leq \mathsf{\opt}+\varepsilon
\] where $C_{\zeta}$ is a constant only depending on $\zeta$ and $\mathsf{opt}\coloneq \min_{g\in \mathcal{F}}\Pr_{(\bm{x},y)\sim D}(g(\bm{x}\neq y))$.
\end{theorem}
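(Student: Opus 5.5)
This follows by combining \Cref{thm:halfspace_approx_dobrushin} with the $\ell_1$-regression agnostic learner of \Cref{thm:agnostic_learning_l1}. Although the statement is phrased with labels $f(\bm{x}_i)$, the guarantee $\mathsf{opt}+\varepsilon$ with $\mathsf{opt}$ defined through $D$ is the agnostic setting. So I will run the learner on samples $(\bm{x}_i,y_i)\sim D$. The realizable case, where $y_i=f(\bm{x}_i)$ and $\mathsf{opt}=0$, is a special case.

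The first step is to check the hypothesis of \Cref{thm:agnostic_learning_l1}. The marginal $D_X=\mu$ is an Ising model in the Dobrushin regime with width $1-\zeta$. By \Cref{thm:halfspace_approx_dobrushin}, every halfspace $g\in\mathcal{F}$ has, for every $\varepsilon>0$, a polynomial $p_g$ of degree $\ell(\varepsilon)\le C_\zeta\log^2(1/\varepsilon)/\varepsilon^2$ with $\E_\mu[|g-p_g|]\le\varepsilon$. This uses \Cref{cor:dobrushin} together with \Cref{fact:dobrushin}, which supply \Cref{assumption: halfspaces} with constants depending only on $\zeta$ under every pinning. The degree bound is uniform over the class, which is exactly what \Cref{thm:agnostic_learning_l1} needs.

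The second step is to invoke \Cref{thm:agnostic_learning_l1} with this $\ell(\varepsilon)$. It returns an algorithm that uses $N=n^{O(\ell(\varepsilon))}=n^{C'_\zeta\log^2(1/\varepsilon)/\varepsilon^2}$ samples, runs in $\poly(N,n)$ time, and with probability $0.9$ outputs $h$ with $\Pr_D(h(\bm{x})\ne y)\le\mathsf{opt}+\varepsilon$. The factor from the $O(\cdot)$ is absorbed into $C_\zeta$.

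There is no real obstacle, only bookkeeping. The approximation bound must hold under every pinning, because the critical-index argument conditions on head variables. This is why \Cref{fact:dobrushin} and the bounded-width condition are required for all pinnings, and both are stable under pinning in the Dobrushin regime. The constant-probability success guarantee is implicit in the statement.
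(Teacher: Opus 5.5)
Your proposal is correct and follows the paper's proof exactly: the paper also derives this immediately by combining \Cref{thm:halfspace_approx_dobrushin}, which gives $\ell_1$-approximators of degree $C_\zeta\log^2(1/\varepsilon)/\varepsilon^2$ under every pinning, with the $\ell_1$-regression guarantee of \Cref{thm:agnostic_learning_l1}. Your remarks on the agnostic reading of the labels and on stability of the width condition under pinning are accurate bookkeeping that the paper leaves implicit.
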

\begin{proof}
    Immediate from \Cref{thm:halfspace_approx_dobrushin} and \Cref{thm:agnostic_learning_l1}.
\end{proof}

\bibliographystyle{alpha}
\bibliography{refs}
\end{document}